\newcommand{\BEAS}{\begin{eqnarray*}}
\newcommand{\EEAS}{\end{eqnarray*}}
\newcommand{\BEA}{\begin{eqnarray}}
\newcommand{\EEA}{\end{eqnarray}}
\newcommand{\BEQ}{\begin{equation}}
\newcommand{\EEQ}{\end{equation}}
\newcommand{\BIT}{\begin{itemize}}
\newcommand{\EIT}{\end{itemize}}
\newcommand{\BNUM}{\begin{enumerate}}
\newcommand{\ENUM}{\end{enumerate}}
\newcommand{\BA}{\begin{array}}
\newcommand{\EA}{\end{array}}
\newcommand{\tr}{\mathop{ \rm tr}}
\newcommand{\idm}{I}
\newcommand{\rb}{\mathbb{R}}
\newcommand{\BlackBox}{\rule{1.5ex}{1.5ex}}  % end of proof
\newenvironment{proof}{\par\noindent{\bf Proof.\ }}{\hfill\BlackBox\\[2mm]}
\newtheorem{lemma}{Lemma}
\newtheorem{theorem}{Theorem}
\newtheorem{proposition}{Proposition}
\newtheorem{corollary}{Corollary}
\newcommand{\mysec}[1]{Section~\ref{sec:#1}}
\newcommand{\eq}[1]{Eq.~(\ref{eq:#1})}
\newcommand{\myfig}[1]{Figure~\ref{fig:#1}}
\newcommand{\defeq}{\stackrel{\rm def}{=}}
\newcommand{\rmd}{\mathrm{d}}
\newcommand{\eqsp}{\;}
\def \E{{\mathbb E}}
\def \P{{\mathbb P}}
\def \E{{\mathbb E}}
\def \P{{\mathbb P}}
\def \F{{\mathcal F}}
\def \H{{\mathcal H}}
\newcommandx\CE[4][1=,2=]{\ensuremath{{\mathbb E}_{#1}^{#2}\left[  #3 \bigm| #4 \right]}}
\newcommandx{\PVar}[1][1=]{\ensuremath{\operatorname{Var}_{#1}}}
\newcommandx{\PCov}[1][1=]{\ensuremath{\operatorname{Cov}}_{#1}}
\title{Non-strongly-convex smooth stochastic approximation
\\ with convergence rate $O(1/n)$}
\author{
Francis Bach\\
INRIA - Sierra Project-Team\\
Ecole Normale Sup\'erieure \\
Paris, France\\
\texttt{francis.bach@ens.fr} \\
\and
Eric Moulines\\
LTCI\\
Telecom ParisTech \\
Paris, France\\
\texttt{eric.moulines@enst.fr} 
}
\date{\today}
\begin{document}
\maketitle

\begin{abstract}

We consider the stochastic approximation problem where a convex function has to be minimized, given only the knowledge of unbiased estimates of its gradients at certain points, a framework which includes machine learning methods based on the minimization of the empirical risk. We focus on problems without strong convexity, for which all previously known algorithms achieve a convergence rate for function values of $O(1/\sqrt{n})$. We consider and analyze two algorithms that achieve a rate of $O(1/n)$ for classical   supervised learning problems. For least-squares regression, we show that   \emph{averaged} stochastic gradient descent \emph{with constant step-size} achieves the desired rate.  For logistic regression, this is achieved by a simple novel stochastic gradient algorithm that (a) constructs successive local quadratic approximations of the loss functions, while (b) preserving the same running time complexity as stochastic gradient descent. For these algorithms, we provide a non-asymptotic analysis of the generalization error (in expectation, and also in high probability for least-squares), and run extensive experiments
on standard machine learning benchmarks
showing that they often outperform existing approaches.

\end{abstract}

\section{Introduction}

Large-scale machine learning problems are becoming ubiquitous in many areas of science and engineering. Faced with large amounts of data, practitioners typically prefer algorithms that process each observation  only once,  or a few times. Stochastic approximation algorithms such as stochastic gradient descent (SGD) and its variants, although introduced more than 60 years ago~\cite{robbins1951stochastic}, still remain the most widely used
and studied method in this context (see, e.g.,~\cite{polyak1992acceleration,bottou-bousquet-2008b,shalev2007pegasos,nemirovski2009robust,gradsto,adagrad}).

We consider minimizing convex functions $f$, defined on a Euclidean space $\H$, given by $f(\theta) = \E \big[ \ell(y, \langle \theta, x \rangle ) \big]$, where  $(x,y)\in \H \times \rb$  denotes the data and $\ell$ denotes a loss function that is convex with respect to the second variable. This includes logistic and least-squares regression. In the stochastic approximation framework, independent and identically distributed pairs $(x_n,y_n)$ are observed sequentially and the predictor defined by $\theta$ is updated after each pair is seen.

 We partially understand the properties of $f$ that affect the problem difficulty. \emph{Strong convexity} (i.e., when $f$ is twice differentiable, a uniform strictly positive lower-bound  $\mu$ on Hessians of $f$) is a key property. Indeed, after $n$ observations and with the proper step-sizes, averaged SGD achieves the rate of $O(1/ \mu n)$ in the strongly-convex case~\cite{nemirovski2009robust,shalev2007pegasos}, while it achieves only $O(1/\sqrt{n})$ in the non-strongly-convex case~\cite{nemirovski2009robust}, with matching lower-bounds~\cite{nemirovsky1983problem,agarwal2012information}.

The main issue with strong convexity is that typical machine learning problems are high dimensional and have correlated variables so that the strong convexity constant $\mu$ is zero or very close to zero, and in any case smaller than $O(1/\sqrt{n})$. This then makes the non-strongly convex methods better.
% (note that some methods, such as averaged SGD with step-size proportional to $1/\sqrt{n}$ is adaptive to the presence or absence of strong convexity~\cite{bach2013adaptivity}).
In this paper, we aim at obtaining algorithms that may deal with arbitrarily small strong-convexity constants, but still achieve a rate of $O(1/n)$.

\emph{Smoothness} plays a central role in the context of deterministic optimization. The known convergence rates for smooth optimization are   better than for non-smooth optimization~(e.g., see~\cite{nesterov2004introductory}). However, for stochastic optimization the use of smoothness only leads to improvements on constants (e.g., see~\cite{lan2010optimal}) but not on the rate itself, which remains $O(1/\sqrt{n})$ for non-strongly-convex problems.

 We show that for the square loss and for the logistic loss, we may use the smoothness of the loss and obtain algorithms that have a convergence rate of $O(1/n)$ without any strong convexity assumptions. More precisely, for least-squares regression, we show  in \mysec{square} that   \emph{averaged} stochastic gradient descent \emph{with constant step-size} achieves the desired rate. For logistic regression this is achieved by a novel stochastic gradient algorithm that (a) constructs successive local quadratic approximations of the loss functions, while (b) preserving the same running time complexity as stochastic gradient descent (see \mysec{logistic}). For these algorithms, we provide a non-asymptotic analysis of their generalization error (in expectation, and also in high probability for least-squares), and run extensive experiments on standard machine learning benchmarks showing in \mysec{experiments} that they often outperform existing approaches.

\section{Constant-step-size least-mean-square algorithm}
\label{sec:square}

In this section, we consider   stochastic approximation for least-squares regression, where SGD is often referred to as the least-mean-square (LMS) algorithm. The novelty of our convergence result is the use of the constant step-size with averaging, leading to $O(1/n)$ rate without strong convexity.

\subsection{Convergence in expectation}
\label{sec:exp}

We make the following assumptions:
 
\begin{list}{\labelitemi}{\leftmargin=1.9em}
   \addtolength{\itemsep}{-.00\baselineskip}
\item[\textbf{(A1)}]  $\H$ is a $d$-dimensional Euclidean space, with $d \geqslant 1$.

\item[\textbf{(A2)}] The observations $(x_n,z_n) \in \H \times \H$ are independent and identically  distributed.

\item[\textbf{(A3)}] $\E \| x_n \|^2$  and  $\E \| z_n\|^2$ are finite. Denote by $H = \E ( x_n \otimes x_n)$ the covariance operator from $\H$ to $\H$. Without loss of generality, $H$ is assumed invertible (by projecting onto the minimal subspace where $x_n$ lies almost surely). However, its eigenvalues may be arbitrarily small.

\item[\textbf{(A4)}] The global minimum of $f(\theta) = (1/2) \E \big[  \langle \theta, x_n \rangle^2 - 2 \langle \theta, z_n \rangle \big]$ is attained at a certain $\theta_\ast \in \H$. We denote by $\xi_n = z_n - \langle \theta_\ast, x_n \rangle x_n $ the residual. We have
$\E \big[ \xi_n \big]= 0$, but in general, it is not true that $\CE{\xi_n}{ x_n} = 0$ (unless the model is well-specified).

\item[\textbf{(A5)}]  We study the stochastic gradient (a.k.a.~least mean square) recursion defined as
\BEQ
\label{eq:SGD-basic-recursion}
\theta_n = \theta_{n-1} - \gamma (    \langle \theta_{n-1}, x_n \rangle   x_n  - z_n)
= ( \idm - \gamma x_n \otimes x_n) \theta_{n-1} + \gamma z_n,
\EEQ started from  $\theta_0 \in \H$.
 We also consider the averaged iterates $\bar{\theta}_n = (n+1)^{-1} \sum_{k=0}^{n} \theta_k$.

 \item[\textbf{(A6)}] There exists $R >0$ and $\sigma >0$ such that
   $\E \big[ \xi_n \otimes \xi_n \big]\preccurlyeq \sigma^2 H$ and $
 \E \big( \|x_n\|^2 x_n \otimes x_n \big) \preccurlyeq R^2 H$,
where $\preccurlyeq$ denotes the the order between self-adjoint operators, i.e., $A \preccurlyeq B$ if and only if $B-A$ is positive semi-definite.
\end{list}

\paragraph{Discussion of assumptions.} Assumptions \textbf{(A1-5)} are standard in stochastic approximation~(see, e.g.,~\cite{kushner:yin:2003,gradsto}). Note that for least-squares problems, $z_n$ is of the form $y_n x_n$, where $y_n \in \rb$ is the response to be predicted as a linear function of $x_n$. We consider a slightly more general case than least-squares because we will need it for the quadratic approximation of the logistic loss in \mysec{log}. Note that in assumption \textbf{(A4)}, we do not assume that the model is well-specified.

Assumption \textbf{(A6)} is true for least-square regression with almost surely bounded data, since, if $\| x_n \|^2 \leqslant R^2$ almost surely, then $ \E \big( \|x_n\|^2 x_n \otimes x_n \big) \preccurlyeq \E \big( R^2 x_n \otimes x_n \big)  = R^2 H$; a similar inequality holds for the output variables $y_n$. Moreover, it also holds for data with infinite supports, such as Gaussians or mixtures of Gaussians
(where all covariance matrices of the mixture components are lower and upper bounded by a constant times the same matrix).
 Note that the finite-dimensionality assumption could be relaxed, but this would require notions similar to degrees of freedom~\cite{gu2002smoothing}, which is outside of the scope of this paper.

The goal of this section is to provide a bound on the expectation
$	
\E \big[ f( \bar{\theta}_n) - f(\theta_\ast) \big] ,
$
that (a) does not depend on the smallest non-zero eigenvalue of $H$ (which could be arbitrarily small) and (b) still scales as $O(1/n)$.

\begin{theorem}
\label{theo:lms}
Assume \textbf{(A1-6)}. For any constant step-size $\gamma < \frac{1}{R^2}$, we have

\BEQ
\label{eq:lms}
 \E \big[ f( \bar{\theta}_{n-1} ) - f(\theta_\ast) \big]
 \leqslant  \frac{1}{2 n} \bigg[
\frac{  \sigma \sqrt{d}  }{ 1- \sqrt{ \gamma R^2 }}+
 {R \| \theta_0 - \theta_\ast \| }   \frac{1}{  \sqrt{\gamma R^2}}  \bigg]^2.
\EEQ

When $\gamma = 1/(4R^2)$, we obtain $
 \E \big[ f( \bar{\theta}_{n-1} ) - f(\theta_\ast) \big]
 \leqslant
  \frac{2}{  n} \Big[   \sigma \sqrt{d}   + {R \| \theta_0 - \theta_\ast \| }   \Big]^2.
$
\end{theorem}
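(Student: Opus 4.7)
The plan is to follow the classical bias--variance decomposition for stochastic approximation. Let $\eta_n = \theta_n - \theta_\ast$; then \eq{SGD-basic-recursion} rewrites as $\eta_n = (I - \gamma x_n \otimes x_n)\eta_{n-1} + \gamma \xi_n$. Since this recursion is affine in the state and additive in the noise, I would decompose $\eta_n = \alpha_n + \beta_n$ where $\alpha_n$ satisfies the noise-free recursion $\alpha_n = (I - \gamma x_n \otimes x_n)\alpha_{n-1}$ starting from $\alpha_0 = \theta_0 - \theta_\ast$ (bias part), and $\beta_n$ satisfies the full recursion starting from $\beta_0 = 0$ (variance part). Because $f$ is a quadratic with Hessian $H$, one has $f(\bar\theta_{n-1}) - f(\theta_\ast) = \tfrac{1}{2}\langle \bar\eta_{n-1}, H\bar\eta_{n-1}\rangle$, so Minkowski's inequality in $L^{2}$ reduces the proof to bounding $\sqrt{\E\langle \bar\alpha_{n-1}, H\bar\alpha_{n-1}\rangle}$ and $\sqrt{\E\langle \bar\beta_{n-1}, H\bar\beta_{n-1}\rangle}$ separately; adding the two bounds and squaring should give the bracket on the right-hand side of \eq{lms}.

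For the bias part, Assumption \textbf{(A6)} gives $\E[\|x_n\|^2 x_n \otimes x_n] \preccurlyeq R^2 H$, which is precisely the smoothness/co-coercivity inequality that makes the noise-free recursion monotone: taking the expected squared norm yields
\BEAS
\E\|\alpha_n\|^2 \leqslant \E\|\alpha_{n-1}\|^2 - \gamma(2-\gamma R^2)\, \E\langle \alpha_{n-1}, H \alpha_{n-1}\rangle.
\EEAS
Telescoping from $0$ to $n-1$, dividing by $n$, and using Jensen's inequality on the Cesaro average then bounds $\E\langle \bar\alpha_{n-1}, H\bar\alpha_{n-1}\rangle$ by $\|\theta_0 - \theta_\ast\|^2 /(\gamma n)$ up to a factor from $(2-\gamma R^2)^{-1}$, producing the $R\|\theta_0-\theta_\ast\|/\sqrt{\gamma R^2}$ contribution in \eq{lms}.

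For the variance part, I would exploit the summation-by-parts identity $\gamma (x_n \otimes x_n)\beta_{n-1} = \beta_{n-1} - \beta_n + \gamma \xi_n$. Summing over $n=1,\dots,N$ (with $\beta_0=0$), substituting $x_n \otimes x_n = H + (x_n \otimes x_n - H)$, and rearranging expresses $H\bar\beta_{N-1}$ as $\frac{1}{N}\sum_n \xi_n$ plus a ``multiplicative noise'' martingale and a $-\beta_N/(\gamma N)$ remainder. Controlling $\|\bar\beta_{N-1}\|_H$ then reduces to controlling $N^{-1}\|\sum_n H^{-1/2}\xi_n\|$, whose expected square equals $N^{-1}\tr(H^{-1}\E[\xi \otimes \xi]) \leqslant \sigma^2 d/N$ by \textbf{(A6)}; this gives the leading $\sigma\sqrt{d}/\sqrt{N}$ term. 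The multiplicative-noise remainder is handled by a recursive/fixed-point bound on the covariance operator $\E[\beta_n \otimes \beta_n]$ that uses \textbf{(A6)} again to convert an $H$-weighted norm back into operator norm; the condition $\gamma R^2 < 1$ is what makes the geometric series converge, and the blow-up of its sum as $\gamma R^2 \to 1$ is exactly what yields the $1/(1-\sqrt{\gamma R^2})$ factor.

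The main obstacle is the variance term: the vectors $\beta_{n-1}$ and the multiplicative noise $x_n \otimes x_n$ are strongly coupled, so one cannot treat $\sum_n \xi_n$ as a random walk independent of the iterates. The control of the multiplicative-noise term through a covariance-level fixed-point inequality, carefully tuned so that the constant in front of the $\sigma\sqrt d/\sqrt{N}$ martingale bound is exactly $(1-\sqrt{\gamma R^2})^{-1}$, is the delicate step; everything else is telescoping and Minkowski/Jensen. Plugging $\gamma = 1/(4R^2)$ gives $\sqrt{\gamma R^2} = 1/2$, and the two bracketed terms simplify to $2\sigma\sqrt d + 2R\|\theta_0 - \theta_\ast\|$, yielding the stated corollary.
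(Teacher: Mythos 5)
Your bias--variance split (noise-free recursion started at $\theta_0-\theta_\ast$, noisy recursion started at $0$, combined by Minkowski in $L^2$) is exactly the paper's starting point, and your bias argument is the paper's (telescoping $\E\|\alpha_n\|^2$ with $\E[\|x_n\|^2 x_n\otimes x_n]\preccurlyeq R^2H$ and convexity). The gap is in the variance term, and it is not just missing bookkeeping: the route you sketch runs into concrete obstructions. After summation by parts you must apply $H^{-1/2}$ to every piece of $H\bar\beta_{N-1}$. The endpoint piece then contributes $\frac{1}{\gamma N}(\E\langle\beta_N,H^{-1}\beta_N\rangle)^{1/2}$; since the stationary covariance of the iterates is of order $\gamma\sigma^2\idm$ (isotropic, \emph{not} proportional to $H$), the crude bound on this term is of order $\frac{1}{\gamma N}\sqrt{\gamma\sigma^2\tr H^{-1}}$, which blows up as $\lambda_{\min}(H)\to 0$ --- precisely the dependence the theorem is designed to remove. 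Avoiding this requires a spectral (eigenvalue-by-eigenvalue, $\min(n\gamma\lambda,1)$-type) analysis of the covariance of the \emph{true} multiplicatively-perturbed process, which is the hard part, not a footnote. The multiplicative-noise martingale has the same problem one level down: its $H^{-1}$-weighted quadratic variation needs an operator-norm bound on $\E[\beta_{n-1}\otimes\beta_{n-1}]$, and your ``fixed-point inequality'' for it is asserted rather than derived; closing it is obstructed by the cross terms between $\xi_n$ and $x_n\otimes x_n$ (the model is not assumed well-specified, so $\E[\xi_n\mid x_n]\neq 0$). Finally, the constant $(1-\sqrt{\gamma R^2})^{-1}$ has no visible source in a one-level decomposition; you state it appears but give no mechanism.

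What the paper actually does is an iterated perturbation expansion (after Aguech--Moulines--Priouret): $\eta_n=\sum_{r\geqslant 0}\eta_n^r+(\text{remainder})$, where $\eta_n^0=(\idm-\gamma H)\eta_{n-1}^0+\gamma\xi_n$ and each $\eta_n^r$ follows the same \emph{deterministic}-$H$ recursion driven by $(H-x_n\otimes x_n)\eta_{n-1}^{r-1}$. Because the contraction is $\idm-\gamma H$, the averaged iterate has a closed form involving $[\idm-(\idm-\gamma H)^{n-j}](\gamma H)^{-1}$, whose action on an eigenvalue $\lambda$ saturates at $n-j$ when $\gamma\lambda$ is small; this is what eliminates $\lambda_{\min}(H)^{-1}$ and yields $\tr(CH^{-1})/n\leqslant \sigma^2 d/n$ per level (Lemma~2). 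An induction gives $\E[\eta_n^r\otimes\eta_n^r]\preccurlyeq\gamma^{r+1}R^{2r}\sigma^2\idm$, so level $r$ contributes $(\gamma R^2)^{r/2}\sigma\sqrt{d}/\sqrt{n}$, the remainder after $r$ levels is killed by Lemma~1 as $r\to\infty$, and $(1-\sqrt{\gamma R^2})^{-1}$ is exactly the sum of this geometric series. To repair your argument you would need to keep the endpoint term on the favorable side of the inequality (as in Lemma~1) instead of inverting $H$ against it, and carry out the covariance induction --- at which point you have reconstructed the paper's expansion.
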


\paragraph{Proof technique.} We adapt and extend a proof technique from~\cite{aguech2000perturbation} which is based on non-asymptotic expansions in powers of $\gamma$. We also use a result from~\cite{polyak1992acceleration} which studied the recursion in \eq{SGD-basic-recursion}, with $x_n \otimes x_n$ replaced by its expectation $H$. See the appendix for details.

\paragraph{Optimality of bounds.} Our bound in \eq{lms} leads to a rate of $O(1/n)$, which is known to be optimal for least-squares regression (i.e., under reasonable assumptions, no algorithm, even  more complex than averaged SGD can have a better dependence in $n$)~\cite{tsybakov2003optimal}. The term $\sigma^2 d / n$ is also unimprovable.

\paragraph{Initial conditions.} If $\gamma$ is small, then the initial condition is forgotten more slowly. Note that with additional strong convexity assumptions, the initial condition would be forgotten faster (exponentially fast without averaging), which is one of the traditional uses of constant-step-size LMS~\cite{macchi1995adaptive}.

\paragraph{Specificity of constant step-sizes.} The non-averaged iterate sequence $(\theta_n)$ is a homogeneous Markov chain; under appropriate technical conditions,
this Markov chain has a unique stationary (invariant) distribution and  the sequence of iterates $(\theta_n)$ converges in distribution to this invariant distribution; see \citep[Chapter~17]{meyn:tweedie:2009}. Denote by $\pi_\gamma$ the invariant distribution. Assuming that the Markov Chain is Harris recurrent, the ergodic theorem for Harris Markov chain shows that $\bar{\theta}_{n-1} = n^{-1} \sum_{k=0}^{n-1} \theta_k$ converges almost-surely to $\bar{\theta}_\gamma \defeq \int \theta \pi_\gamma(\rmd \theta)$, which is the mean of the stationary distribution. Taking the expectation on both side of \eq{SGD-basic-recursion},
we get $\E[ \theta_n ] - \theta_* = (\idm - \gamma H) ( \E[ \theta_{n-1} ] - \theta_*)$, which shows, using that $\lim_{n \to \infty} \E[\theta_n]= \bar{\theta}_\gamma$ that $H \bar{\theta}_\gamma = H \theta_*$ and therefore $\bar{\theta}_\gamma= \theta_*$ since $H$ is invertible. Under slightly stronger assumptions, it can be shown that
\[
\textstyle
\lim_{n \to \infty} n \E [ (\bar{\theta}_n - \theta_*)^2]= \PVar[\pi_\gamma](\theta_0) + 2 \sum_{k=1}^\infty \PCov[\pi_\gamma](\theta_0,\theta_k) \eqsp,
\]
where $\PCov[\pi_\gamma](\theta_0,\theta_k)$ denotes the covariance of $\theta_0$ and $\theta_k$ when the Markov chain is started from stationarity. This implies that $\lim_{n \to \infty} n \E[ f(\bar{\theta}_n) - f(\theta_*)]$ has a finite limit. Therefore, this interpretation explains  why the averaging produces a sequence of estimators which converges to the solution~$\theta_*$ pointwise, and that the rate of convergence of $\E[ f(\theta_n) - f(\theta_*)]$ is of order $O(1/n)$. Note that for other losses than quadratic, the same properties hold \emph{except} that the mean under the stationary distribution does not coincide with $\theta_\ast$ and its distance to $\theta_\ast$ is typically of order $\gamma^2$ (see \mysec{logistic}).

\subsection{Convergence in higher orders}

\label{sec:high}

   We are now going to consider an extra assumption in order to bound the $p$-th moment of the excess risk and then get a high-probability bound. Let $p$ be a real number greater than $1$.
   \begin{list}{\labelitemi}{\leftmargin=1.9em}
   \addtolength{\itemsep}{-.05\baselineskip}
   \item[\textbf{(A7)}] There exists $R >0$, $
   \kappa>0$ and $\tau \geqslant \sigma >0$ such that, for all $n \geqslant 1$, $\| x_n\|^2 \leqslant R^2$ \mbox{ a.s.}, and
 \BEQ  \E \| \xi_n\|^p \leqslant \tau^p R^p   \quad \mbox{ and  } \quad  \E \big[ \xi_n \otimes \xi_n \big]\preccurlyeq \sigma^2 H  , \EEQ
\BEQ
\label{eq:kurtosis}
\text{$\forall z \in \H$},  \quad  \E \langle z, x_n \rangle^4 \leqslant \kappa \langle z, H z \rangle^2.
\EEQ
\end{list}
The last condition in \eq{kurtosis} says that the \emph{kurtosis} of the projection of the covariates $x_n$ on any direction $z \in \H$ is bounded. Note that computing the constant~$\kappa$ happens to be equivalent to the  optimization problem solved by the FastICA algorithm~\cite{hyvarinen1997fast}, which thus provides an estimate of $\kappa$. In Table~\ref{tab:data}, we provide such an estimate for the non-sparse datasets which we have used in experiments, while we consider only directions $z$ along the axes for high-dimensional  sparse datasets. For these datasets where a given variable is equal to zero except for a few observations, $\kappa$ is typically quite large. Adapting and analyzing normalized LMS techniques~\cite{nmls} to this set-up is likely to improve the theoretical robustness of the algorithm (but note that results in expectation from Theorem~\ref{theo:lms} do not use $\kappa$).
The next theorem provides a bound for the $p$-th moment of the excess risk.
\begin{theorem}
\label{theo:lmsp}

Assume \textbf{(A1-7)}. For any real $p \geqslant 1$, and for a   step-size $\gamma \leqslant 1/(12p \kappa R^2)$, we have:

\BEQ
\big( \E \big| f( \bar{\theta}_{n-1} ) - f(\theta_\ast) \big|^p \big)^{1/p}
 \leqslant  \frac{  p }{2 n }  \bigg(
 7 \tau \sqrt{d}  +  R \| \theta_0 - \theta_\ast\| \sqrt{ 3 + \frac{2}{\gamma p R^2} }
\bigg)^2.
  \EEQ
For $\gamma = 1/(12p \kappa R^2)$, we get:
$
\big( \E \big| f( \bar{\theta}_{n-1} ) - f(\theta_\ast) \big|^p \big)^{1/p}
 \leqslant  \frac{  p }{2 n }  \big(
 7 \tau \sqrt{d}  + 6 \sqrt{ \kappa} R \| \theta_0 - \theta_\ast\|
\big)^2
.
  $
\end{theorem}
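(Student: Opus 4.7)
The plan is to retrace the bias/variance decomposition underlying Theorem~\ref{theo:lms} and propagate $L^p$ bounds at each step in place of $L^2$ bounds. Write $\eta_n = \theta_n - \theta_\ast$ and split $\eta_n = \alpha_n + \beta_n$, where $\alpha_n = (\idm - \gamma x_n \otimes x_n)\alpha_{n-1}$ with $\alpha_0 = \theta_0-\theta_\ast$ carries the initial condition, and $\beta_n = (\idm - \gamma x_n \otimes x_n)\beta_{n-1} + \gamma\xi_n$ with $\beta_0 = 0$ carries the noise. Since $f(\bar{\theta}_{n-1}) - f(\theta_\ast) = \tfrac12 \|H^{1/2}\bar\eta_{n-1}\|^2$, Minkowski's inequality in $L^{2p}$ gives
\[
\bigl( \E |f(\bar\theta_{n-1})-f(\theta_\ast)|^p \bigr)^{1/(2p)} \leqslant \tfrac{1}{\sqrt 2}\Bigl[ \bigl(\E\|H^{1/2}\bar\alpha_{n-1}\|^{2p}\bigr)^{1/(2p)} + \bigl(\E\|H^{1/2}\bar\beta_{n-1}\|^{2p}\bigr)^{1/(2p)}\Bigr],
\]
which reduces the problem to matching the two terms inside the square of the target bound (one per component) and then squaring.

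For the bias term I would use the Aguech--Moulines--Priouret perturbative expansion cited in the proof of Theorem~\ref{theo:lms}: decompose $\idm - \gamma x_k \otimes x_k = (\idm-\gamma H) + \gamma(H - x_k \otimes x_k)$ and expand the resulting product. The deterministic piece $(\idm - \gamma H)^n(\theta_0-\theta_\ast)$, after averaging and projection through $H^{1/2}$, contributes an $H$-norm of order $\|\theta_0-\theta_\ast\|/\sqrt{n\gamma}$, matching the $2/(\gamma p R^2)$ summand under the square root. The stochastic corrections are multilinear expressions in the martingale increments $H - x_k \otimes x_k$; their $L^p$ norm is controlled by iterating the kurtosis bound \eq{kurtosis} through a Burkholder--Rosenthal type estimate, producing the residual $3R^2\|\theta_0-\theta_\ast\|^2$ contribution with a $p$ factor arising from the Rosenthal constant. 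The step-size restriction $\gamma \leqslant 1/(12p\kappa R^2)$ is precisely what keeps this perturbation series convergent in $L^p$, in direct analogy with the condition $\gamma < 1/R^2$ for $L^2$-convergence in Theorem~\ref{theo:lms}.

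For the variance term I would represent $\beta_n = \gamma \sum_{k=1}^n \bigl[\prod_{j=k+1}^n (\idm - \gamma x_j \otimes x_j)\bigr] \xi_k$ and average. Expanding each product around $\idm - \gamma H$ and summing the leading geometric series isolates a dominant term that, after projection by $H^{1/2}$, behaves like $\tfrac{1}{n}\sum_{k=1}^n H^{-1/2}\xi_k$. Its $L^{2p}$ norm is controlled by Rosenthal's inequality for iid zero-mean sums: the second-moment part uses $\E[\xi_n\otimes\xi_n]\preccurlyeq \sigma^2 H$ to yield a $\sqrt d\sigma/\sqrt n$ contribution, while the $p$-th moment part uses $\E\|\xi_n\|^p \leqslant \tau^p R^p$, together giving the $7\tau\sqrt d$ prefactor and an overall $\sqrt p$ scaling (which becomes the $p$ after squaring). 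The stochastic remainders from the $\gamma$-expansion are absorbed into constants as in the bias analysis.

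The main technical obstacle will be the uniform-in-$n$ $L^p$ control of the random operator product $\prod_k (\idm - \gamma x_k \otimes x_k)$ in the $H$-geometry, which is what simultaneously forces the kurtosis condition \eq{kurtosis} and tightens the step-size bound by the factor $12p\kappa$ relative to Theorem~\ref{theo:lms}. Once that recursive $p$-th moment estimate is in place, the remainder of the argument is bookkeeping that parallels the second-moment proof.
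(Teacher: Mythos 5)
Your plan for the noise contribution is essentially the paper's proof: the same perturbative expansion of the random product around $(\idm-\gamma H)^{n-k}$, the same use of a Burkholder--Rosenthal--Pinelis inequality on the leading term (second moment via $\E[\xi_n\otimes\xi_n]\preccurlyeq\sigma^2 H$ giving $\sqrt{pd}\,\sigma/\sqrt{n}$, $p$-th moment via $\E\|\xi_n\|^p\leqslant\tau^pR^p$), and the same mechanism by which the kurtosis condition and the restriction $\gamma\leqslant 1/(12p\kappa R^2)$ make the correction series geometrically summable in $L^p$. However, you have named rather than resolved the one step that carries all the difficulty: the uniform-in-$n$ recursive $L^p$ estimate on the corrections. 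In the paper this is the bound $A_r\leqslant\big(\sqrt{p\gamma R^2\kappa}+p\gamma R^2+p(\gamma R^2)^{1-1/p}\big)A_{r-1}$ on $A_r=\sup_{\tr M=1}\big\|\sup_k\|M^{1/2}\eta_k^r\|\big\|_p$, obtained by a careful BRP application in which the kurtosis hypothesis is used to dominate $\E\big[(H-x_k\otimes x_k)N(H-x_k\otimes x_k)\big]$-type quadratic forms and the maximal term is handled separately; the constant $12$ in the step-size condition comes precisely from making this contraction factor at most $8/10$. Until that recursion is actually carried out, the argument is a plan, not a proof, and the claimed constants ($7\tau\sqrt{d}$, the factor $3+2/(\gamma pR^2)$) cannot be verified.

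On the bias term you also depart from the paper in a way that costs you. The paper does \emph{not} run the $\gamma$-expansion on the initial-condition process: it observes that $\|\eta_n\|^2+\gamma\sum_{k\leqslant n}\langle\eta_{k-1},H\eta_{k-1}\rangle$ is bounded by $\|\eta_0\|^2$ plus a martingale whose increments are controlled almost surely by $\gamma R^2\|\eta_0\|^2$ (using only $\|x_n\|^2\leqslant R^2$, not the kurtosis), and a single BRP application plus solving a quadratic inequality yields $\|H^{1/2}\bar\eta_{n-1}\|_p\leqslant\frac{\|\eta_0\|}{\sqrt{n\gamma}}\sqrt{2+3p\gamma R^2}$ directly. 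Your proposal to instead expand the product acting on $\theta_0-\theta_\ast$ and control the multilinear corrections through the kurtosis bound could in principle be made to work under the same step-size condition, but it is strictly harder, invokes a hypothesis the paper does not need for this term, and there is no reason to expect it to reproduce the explicit constant $\sqrt{3+2/(\gamma pR^2)}$ in the statement. I would redirect you to the paper's direct martingale argument for the initial conditions and ask you to execute the recursive contraction estimate for the noise expansion in full.
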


Note that to control the $p$-th order moment, a smaller step-size is needed, which scales as $1/p$. We can now provide a high-probability bound; the tails decay polynomially as $1/(n \delta^{12 \gamma  \kappa R^2})$ and the smaller the step-size $\gamma$, the lighter the tails.
\begin{corollary}
\label{cor:bound}
For any step-size such that $\gamma \leqslant 1/(12 \kappa R^2)$, any $\delta \in (0,1)$,

\BEQ
\P \bigg(
f( \bar{\theta}_{n-1} ) - f(\theta_\ast)  \geqslant \frac{1}{n \delta^{12 \gamma  \kappa R^2}} \frac{\big[ 7 \tau \sqrt{d} + R \| \theta_0 - \theta_\ast\| ( \sqrt{3} + \sqrt{24 \kappa} ) \big]^2}{24 \gamma \kappa R^2 }
\bigg) \leqslant \delta \eqsp .
\EEQ
\end{corollary}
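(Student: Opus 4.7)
The plan is to obtain the tail bound from the $p$-th moment bound of Theorem~\ref{theo:lmsp} by a standard Markov-style argument, with $p$ chosen to saturate the step-size constraint $\gamma \leqslant 1/(12 p \kappa R^2)$.

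First, I would apply Markov's inequality to the non-negative random variable $Z = f(\bar\theta_{n-1}) - f(\theta_\ast)$: for any $t > 0$ and any $p \geqslant 1$ admissible in Theorem~\ref{theo:lmsp},
\[
\P\bigl( Z \geqslant t \bigr) \leqslant \frac{\E Z^p}{t^p}
 \leqslant \left( \frac{p}{2 n t} \right)^{p} \!
\left( 7 \tau \sqrt{d} + R \| \theta_0 - \theta_\ast\|\sqrt{ 3 + \tfrac{2}{\gamma p R^2} } \right)^{2p}.
\]
Next, given $\gamma \leqslant 1/(12 \kappa R^2)$, I would choose $p$ as large as permitted, namely $p = 1/(12 \gamma \kappa R^2) \geqslant 1$. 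With this choice one has $\gamma p R^2 = 1/(12\kappa)$, so $2/(\gamma p R^2) = 24\kappa$, and therefore using the elementary bound $\sqrt{a+b}\leqslant \sqrt{a}+\sqrt{b}$,
\[
\sqrt{3 + \tfrac{2}{\gamma p R^2}} = \sqrt{3 + 24\kappa} \leqslant \sqrt{3} + \sqrt{24\kappa}.
\]
Writing $C = \bigl[ 7\tau\sqrt{d} + R\|\theta_0-\theta_\ast\|(\sqrt{3}+\sqrt{24\kappa}) \bigr]^{2}$ and $\tfrac{p}{2n} = \tfrac{1}{24 \gamma \kappa R^2 n}$, the previous display becomes
\[
\P\bigl( Z \geqslant t \bigr) \leqslant \left( \frac{ C }{ 24 \gamma \kappa R^2 \, n \, t } \right)^{p}.
\]

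Finally, I would solve for the threshold $t$ that makes the right-hand side equal to $\delta$: set $\bigl( C /( 24 \gamma \kappa R^2 n t)\bigr)^p = \delta$, so $t = \frac{C}{24\gamma\kappa R^2 \, n \, \delta^{1/p}}$. Since $1/p = 12 \gamma \kappa R^2$, we get $\delta^{1/p} = \delta^{12\gamma\kappa R^2}$, which yields exactly the threshold stated in the corollary. There is no real obstacle here; the only mild subtleties are verifying that the choice $p = 1/(12\gamma \kappa R^2)$ is admissible (this is precisely the hypothesis $\gamma \leqslant 1/(12\kappa R^2)$ guaranteeing $p \geqslant 1$, so Theorem~\ref{theo:lmsp} applies) and applying the subadditivity of $\sqrt{\cdot}$ to match the stated constants.
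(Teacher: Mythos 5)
Your proposal is correct and follows essentially the same route as the paper: apply Markov's inequality to the $p$-th moment bound of Theorem~\ref{theo:lmsp}, choose $p = 1/(12\gamma\kappa R^2)$ to saturate the step-size constraint (so that $\sqrt{3+2/(\gamma p R^2)} = \sqrt{3+24\kappa} \leqslant \sqrt{3}+\sqrt{24\kappa}$ and $p/(2n) = 1/(24\gamma\kappa R^2 n)$), and solve for the threshold making the bound equal to $\delta$. The paper merely organizes the algebra slightly differently (splitting the square root before substituting $p$), but the argument and constants are identical.
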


\section{Beyond least-squares: M-estimation}
\label{sec:logistic}

In \mysec{square}, we have shown that for least-squares regression, averaged SGD achieves a convergence rate  of $O(1/n)$ with no assumption regarding strong convexity. For all losses,  with a constant step-size $\gamma$,  the stationary distribution $\pi_\gamma$ corresponding to the homogeneous Markov chain $(\theta_n)$ does always satisfy
 $\int f'(\theta) \pi_\gamma(\rmd \theta)= 0$, where $f$ is the generalization error. When the gradient $f'$ is linear (i.e., $f$ is quadratic), then this implies that $f' ( \int \theta \pi_\gamma(\rmd \theta)) \!=\! 0$, i.e., the averaged recursion converges pathwise to $\bar{\theta}_\gamma = \int \theta \pi_\gamma(\rmd \theta)$ which coincides with the optimal value $\theta_\ast$ (defined through $f'(\theta_\ast)\!=\!0$).
When the gradient~$f'$ is no longer linear, then $\int f'(\theta) \pi_\gamma(\rmd \theta) \neq f'(\int \theta \pi_\gamma(\rmd \theta) )$. Therefore, for general $M$-estimation problems we should
expect that the averaged sequence still converges at rate $O(1/n)$ to the mean of the stationary distribution $\bar{\theta}_\gamma$, but not to the optimal predictor~$\theta_\ast$. Typically, the average distance between $\theta_n$ and $\theta_\ast$ is of order $\gamma$ (see \mysec{simulations} and~\cite{nedic2001convergence}), while for the averaged iterates that converge pointwise to $\bar{\theta}_\gamma$, it is of order
$\gamma^2$ for strongly convex problems under some additional smoothness conditions on the loss functions (these are satisfied, for example, by the logistic loss~\cite{bach2013adaptivity}).

Since quadratic functions may be optimized with rate $O(1/n)$ under weak conditions, we are going to use a quadratic approximation around a well chosen support point, which shares some similarity with the Newton procedure  (however, with a non trivial adaptation to the stochastic approximation framework). The  Newton step for $f$ around a certain point $\tilde{\theta}$ is equivalent to minimizing a quadratic surrogate $g$ of $f$ around $\tilde{\theta}$, i.e.,
$g(\theta) = f(\tilde{\theta}) + \langle f'(\tilde{\theta}),  {\theta} - \tilde{\theta} \rangle  + \frac{1}{2} \langle  {\theta} - \tilde{\theta}, f''(\tilde{\theta}) (  {\theta} - \tilde{\theta} ) \rangle$. If $f_n(\theta) \defeq \ell(y_n, \langle \theta, x_n \rangle)$, then  $g(\theta) = \E g_n( {\theta})$,
with $g_n ( {\theta}) = f(\tilde{\theta}) + \langle f_n'(\tilde{\theta}),  {\theta} - \tilde{\theta} \rangle  + \frac{1}{2} \langle  {\theta} - \tilde{\theta}, f_n''(\tilde{\theta}) (  {\theta} - \tilde{\theta} ) \rangle$; the Newton step may thus be solved approximately with stochastic approximation (here constant-step size LMS), with the following recursion:
\BEQ
\label{eq:newt}
\theta_n =
\theta_{n-1} - \gamma g_n' ( {\theta_{n-1}}) =
\theta_{n-1} - \gamma \big[ f_n'(\tilde{\theta})
+ f_n''(\tilde{\theta}) ( \theta_{n-1} - \tilde{\theta} ) \big].
\EEQ
This is equivalent to replacing the gradient $f_n'(\theta_{n-1})$ by its first-order approximation around $\tilde{\theta}$.
A crucial point is that for machine learning scenarios where $f_n$ is a loss associated to a single data point,  its complexity is only twice the complexity of a regular stochastic approximation step, since, with $f_n(\theta) = \ell( y_n , \langle x_n , \theta \rangle )$,   $f''_n(\theta) $ is a rank-one matrix.

\paragraph{Choice of support points for quadratic approximation.} An important aspect is the choice of the support point $\tilde{\theta}$. In this paper, we consider two strategies:

\begin{list}{\labelitemi}{\leftmargin=1.1em}
   \addtolength{\itemsep}{-.1\baselineskip}

\item[--] \textbf{Two-step procedure}:
for convex losses, averaged SGD with a step-size decaying at $O(1/\sqrt{n})$ achieves  a rate (up to logarithmic terms) of $O(1/\sqrt{n})$~\cite{nemirovski2009robust,gradsto}. We may thus use it to obtain a first decent estimate. The two-stage procedure is as follows (and uses $2n$ observations):
$n$ steps of averaged SGD with constant step size $\gamma \propto 1/{\sqrt{n}}$ to obtain $\tilde{\theta}$, and then averaged LMS for the Newton step around $\tilde{\theta}$. As shown below, this algorithm achieves the rate $O(1/n)$ for logistic regression. However, it is not the most efficient in practice.

\item[--] \textbf{Support point = current average iterate}: we simply consider the current averaged iterate $\bar{\theta}_{n-1}$ as the support point $\tilde{\theta}$, leading to the recursion:
\BEQ
\label{eq:E}
\theta_{n} = \theta_{n-1} - \gamma \big[ f_n'(\bar{\theta}_{n-1})
+ f_n''(\bar{\theta}_{n-1}) ( \theta_{n-1} - \bar{\theta}_{n-1} ) \big].
\EEQ
Although this algorithm has shown to be the most efficient in practice (see \mysec{simulations}) we currently have no proof of convergence. Given that  the behavior of the algorithms does not change much when the support point is updated less frequently than each iteration, there may be some connections to two-time-scale algorithms (see, e.g.,~\cite{borkar1997stochastic}). In \mysec{simulations}, we also consider several other strategies based on doubling tricks.

\end{list}

Interestingly, for non-quadratic functions, our algorithm imposes a new bias (by replacing the true gradient by an approximation which is only valid close to $\bar{\theta}_{n-1}$) in order to reach faster convergence (due to the linearity  of the underlying gradients).

\paragraph{Relationship with one-step-estimators.}
One-step estimators~(see, e.g., \cite{van2000asymptotic}) typically take any estimator with $O(1/n)$-convergence rate, and make a full Newton step to obtain  an efficient estimator (i.e., one that achieves the Cramer-Rao lower bound). Although our novel algorithm is largely inspired by one-step estimators, our situation is slightly different since our first estimator has only convergence rate $O(n^{-1/2})$ and is estimated on different observations.

 \subsection{Self-concordance and logistic regression}
\label{sec:log}

We make the following assumptions:

\begin{list}{\labelitemi}{\leftmargin=1.9em}
   \addtolength{\itemsep}{-.05\baselineskip}

\item[\textbf{(B1)}]  $\H$ is a $d$-dimensional Euclidean space, with $d \geqslant 1$.

\item[\textbf{(B2)}] The observations $(x_n,y_n) \in \H \times \{-1,1\}$ are independent and identically distributed.

\item[\textbf{(B3)}] We consider  $f(\theta)  = \E \big[ \ell( y_n , \langle x_n , \theta \rangle ) \big]$, with the following assumption on the loss function~$\ell$ (whenever we take derivatives of $\ell$, this will be with respect to the second variable):
$$
\forall (y,\hat{y}) \in \{-1,1\} \times \rb,  \ \ \   \ell'(y,\hat{y}) \leqslant 1, \ \ \ell''(y,\hat{y}) \leqslant 1/4,  \  \ |\ell'''(y,\hat{y})| \leqslant  \ell''(y,\hat{y}).
$$
We denote by $\theta_\ast$ a global minimizer of $f$, which we thus assume to exist, and we denote  by $H = f''(\theta_\ast)$ the Hessian operator at a global optimum $\theta_\ast$.

 \item[\textbf{(B4)}] We assume that there exists $R >0$, $\kappa>0$ and $\rho >0$ such that $ \| x_n\|^2 \leqslant R^2 \mbox{ almost surely}$, and
 \BEQ   \E \big [ x_n \otimes x_n \big] \preccurlyeq  \rho
 \E \big [ \ell''(y_n, \langle \theta_\ast, x_n \rangle)  x_n \otimes x_n \big] =
 \rho H ,
 \EEQ
 \BEQ
 \label{eq:kappalog}
\forall z \in \H, \theta \in \H, \ \E \big[ \ell''(y_n, \langle \theta, x_n \rangle)^2 \langle z, x_n \rangle^4  \big]\leqslant \kappa \big(
\E \big[ \ell''(y_n, \langle \theta, x_n \rangle)  \langle z, x_n \rangle^2  \big] \big)^2. % = \kappa \langle z, f''(\theta) z \rangle^2 .
\EEQ

\end{list}

Assumption \textbf{(B3)} is satisfied for the logistic loss and extends to all generalized linear models~(see more details in~\cite{bach2013adaptivity}), and the relationship between the third derivative and second derivative of the loss $\ell$ is often referred to as \emph{self-concordance}~(see~\cite{self,bach2010self} and references therein). Note moreover that we must have $\rho \geqslant 4$ and $\kappa \geqslant 1$.

A loose upper bound for $\rho$
is $1 / \inf_n \ell''(y_n,\langle \theta_\ast, x_n \rangle)$ but in practice, it is typically much smaller (see Table~\ref{tab:data}). The condition in \eq{kappalog}  is hard to check because it is uniform in $\theta$. With a slightly more complex proof, we could restrict $\theta$ to be close to $\theta_\ast$; with such constraints, the value of $\kappa$ we have found is close to the one from \mysec{high} (i.e., without the terms in $\ell''(y_n, \langle \theta, x_n \rangle)$).

\begin{theorem}
\label{theo:log}
Assume \textbf{(B1-4)}, and consider the vector $\zeta_n$ obtained as follows: (a) perform   $n$ steps of averaged stochastic gradient descent with constant step size $1/2R^2 \sqrt{n}$, to get $\tilde{\theta}_n$, and (b) perform $n$ step of averaged LMS with constant step-size $1/R^2$ for the quadratic approximation of $f$ around~$\tilde{\theta}_n$. If $n \geqslant ( 19  +  9 R \| \theta_0 - \theta_\ast\| )^4$, then

\BEQ
\E f(\zeta_n) - f(\theta_\ast) \leqslant \frac{\kappa^{3/2} \rho^3 d   }{n}    (    16 R  \| \theta_0 - \theta_\ast\| + 19)^4  .
\EEQ

\end{theorem}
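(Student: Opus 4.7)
The plan is to treat the two-stage procedure as a stochastic version of a one-step Newton estimator. Define
\[
g(\theta) = f(\tilde\theta_n) + \langle f'(\tilde\theta_n),\theta - \tilde\theta_n\rangle + \tfrac12\langle \theta - \tilde\theta_n, f''(\tilde\theta_n)(\theta - \tilde\theta_n)\rangle,
\]
which is exactly the quadratic on which stage~(b) runs averaged constant-step-size LMS, and let $\eta_\ast = \tilde\theta_n - f''(\tilde\theta_n)^{-1} f'(\tilde\theta_n)$ denote its minimizer. Conditionally on $\tilde\theta_n$, stage~(b) is a least-squares stochastic approximation problem of exactly the form analyzed in \mysec{square}: the per-sample ``design'' is $\sqrt{\ell''(y_n,\langle x_n,\tilde\theta_n\rangle)}\, x_n$, with expected covariance $f''(\tilde\theta_n)$. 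The proof is then a composition of (i)~an $O(1/\sqrt{n})$ bound for stage~(a), (ii)~Theorems~\ref{theo:lms}/\ref{theo:lmsp} applied conditionally to stage~(b), and (iii)~self-concordance to transfer from $g$ back to $f$ and from $\eta_\ast$ back to $\theta_\ast$.

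Concretely, I would proceed in four steps. First, use a standard non-strongly-convex averaged-SGD bound with step $1/(2R^2\sqrt n)$ to obtain $\E\|\tilde\theta_n - \theta_\ast\|_H^{2k} = O(n^{-k/2})$ for small $k$, converting excess risk to Hessian-weighted distance via self-concordance. Second, conditionally on $\tilde\theta_n$ lying in a self-concordant neighborhood of~$\theta_\ast$, apply Theorem~\ref{theo:lms} to $g$: the hypotheses (A6)/(A7) for the rescaled design follow from $\E[\ell''(y_n,\langle x_n,\theta\rangle)\, x_n\otimes x_n]\preccurlyeq \rho H$ and~\eq{kappalog}, yielding
\[
\E\bigl[g(\zeta_n) - g(\eta_\ast) \,\big|\, \tilde\theta_n\bigr] \leq \frac{C}{n}\bigl[ d\,\sigma_g^2 + R^2 \|\theta_0 - \eta_\ast\|^2\bigr],
\]
where the $\sigma_g^2$ term is comparable to the variance at $\theta_\ast$ up to a factor of~$\rho$. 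Third, decompose $f(\zeta_n) - f(\theta_\ast) = [g(\zeta_n) - g(\eta_\ast)] + [f(\zeta_n) - g(\zeta_n)] + [g(\eta_\ast) - f(\theta_\ast)]$ and bound the two ``bias'' terms by self-concordance; each is cubic in $\|\tilde\theta_n - \theta_\ast\|_H$ times $\|\zeta_n - \tilde\theta_n\|_H$, becoming $O(1/n)$ in expectation by the first step. Fourth, Newton-like quadratic convergence $\|\eta_\ast - \theta_\ast\|_H \lesssim \rho\kappa^{1/2}\|\tilde\theta_n - \theta_\ast\|_H^2$ (again from self-concordance) controls both $\sigma_g^2$ and the initial distance $\|\theta_0 - \eta_\ast\|$ in the stage~(b) bound. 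Assembling these yields the claimed scaling $\kappa^{3/2}\rho^3$: the $\rho^3$ tracks the conversions between Hessian-norms at $\theta_\ast$ and $\tilde\theta_n$ together with the Newton contraction, and $\kappa^{3/2}$ combines the kurtosis factor appearing in the LMS bound with the one in the Newton contraction estimate.

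The main obstacle is making self-concordance quantitative on the random preliminary estimate. Unlike classical Newton analysis, $\tilde\theta_n$ is only $O(n^{-1/4})$-close to $\theta_\ast$ in norm, so all three self-concordant assertions (closeness of $f$ to $g$, quadratic Newton contraction, and equivalence of Hessians at $\tilde\theta_n$ and $\theta_\ast$) must hold on a shrinking but stochastic neighborhood. The uniform-in-$\theta$ kurtosis assumption \eq{kappalog} is tailored precisely for this: it lets the variance and kurtosis constants entering Theorems~\ref{theo:lms}/\ref{theo:lmsp} for the quadratic surrogate be bounded \emph{independently} of the random $\tilde\theta_n$. The lower bound $n \geq (19 + 9R\|\theta_0 - \theta_\ast\|)^4$ quantifies when $\tilde\theta_n$ lies in the self-concordant ball with high enough probability; off this good event we fall back on a high-moment (Markov-type) control of the first-stage SGD, so that the low-probability tails contribute at most $O(1/n)$ to the final expected excess risk.
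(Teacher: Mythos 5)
Your overall architecture---the quadratic surrogate $g$ around $\tilde{\theta}_n$, the conditional application of Theorem~\ref{theo:lms} to the rescaled design $\sqrt{\ell''(y_n,\langle x_n,\tilde{\theta}_n\rangle)}\,x_n$, the Newton-type quadratic contraction, and a good-event/bad-event split for the first stage---is exactly the paper's. The gap is in your step (iii). Writing $\delta = \tilde{\theta}_n - \theta_\ast$, the first stage only gives $\|\delta\|_H = O(n^{-1/4})$, and $\zeta_n - \tilde{\theta}_n$ has the same magnitude (since $\zeta_n$ is within $O(n^{-1/2})$ of $\theta_\ast$ in $H$-norm). Hence each of your two ``bias'' terms is a genuine third-order Taylor remainder in a displacement of size $n^{-1/4}$: $f(\zeta_n)-g(\zeta_n)$ behaves like $\frac{1}{6}f'''(\tilde{\theta}_n)[(\zeta_n-\tilde{\theta}_n)^{\otimes 3}] = O(n^{-3/4})$, and $g(\eta_\ast)-f(\theta_\ast) = -\big(f(\theta_\ast)-g(\theta_\ast)\big) + \big(g(\eta_\ast)-g(\theta_\ast)\big)$ contains the analogous cubic remainder in $\delta$, again $O(n^{-3/4})$. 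Bounded separately by self-concordance, as you propose, neither term is $O(1/n)$; your description of each as ``cubic in $\|\tilde{\theta}_n-\theta_\ast\|_H$ times $\|\zeta_n-\tilde{\theta}_n\|_H$'' does not match the actual structure of these remainders. The decomposition could be rescued only by exploiting the cancellation between the two cubic pieces (the leading parts $\frac{1}{6}f'''[(\zeta_n-\tilde{\theta}_n)^{\otimes 3}]$ and $-\frac{1}{6}f'''[(\theta_\ast-\tilde{\theta}_n)^{\otimes 3}]$ differ by $O(\|\delta\|_H^2\,\|\zeta_n-\theta_\ast\|_H)=O(1/n)$, and $g(\eta_\ast)\leqslant g(\theta_\ast)$), but you neither state nor use this cancellation, so as written the argument only delivers $O(n^{-3/4})$.

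The paper sidesteps the issue by never comparing $f$ and $g$ at $\zeta_n$: in Prop.~\ref{prop:approx_newton} (case \eq{almost2}) it bounds the weighted distance $\|\zeta_n-\theta_\ast\|_H \leqslant \|\zeta_n-\theta_2\|_H + \|\theta_2-\theta_\ast\|_H$, where the first term is $O(\sqrt{\varepsilon_2})=O(n^{-1/2})$ from the LMS guarantee and the second is $O(\sqrt{\kappa\rho}\,\varepsilon_1)=O(n^{-1/2})$ because the Newton decrement at $\theta_2$ is quadratically contracted (Prop.~\ref{prop:effect_newton}); it then converts distance back to excess risk via $f(\zeta_n)-f(\theta_\ast)\lesssim \|\zeta_n-\theta_\ast\|_H^2$ (Corollary~\ref{prop:value_from_opt}). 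A secondary point: on the complement of the good event the self-concordance constants carry factors of the form $e^{c\sqrt{\kappa\rho\,\varepsilon_1}}$, and polynomial moments of the first-stage excess risk do not suffice to integrate these; the paper needs the exponential-moment bound of Prop.~\ref{prop:exp}, and this is precisely where the condition $n\geqslant(19+9R\|\theta_0-\theta_\ast\|)^4$ enters. Your ``high-moment Markov'' fallback on the bad event should be upgraded accordingly.
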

We get an $O(1/n)$ convergence rate without assuming strong convexity, even locally, thus improving on results from~\cite{bach2013adaptivity} where the the rate is proportional to $ 1 / ( n \lambda_{\min}(H) )$. The proof    relies on self-concordance properties and the sharp analysis of the Newton step (see appendix).

\section{Experiments}
\label{sec:simulations}
\label{sec:experiments}

\subsection{Synthetic data}

\paragraph{Least-mean-square algorithm.}
We consider normally distributed inputs, with  covariance matrix~$H$ that has random eigenvectors and eigenvalues $1/k$, $k =1,\dots,d$. The outputs are generated from a linear function with homoscedastic noise with unit signal to noise-ratio. We consider $d=20$ and the least-mean-square algorithm with several settings of the step size $\gamma_n$, constant or proportional to $1/\sqrt{n}$. Here $R^2$ denotes the \emph{average radius of the data}, i.e., $R^2 = \tr H$.
In the left plot of \myfig{logistictoy}, we show the results, averaged over 10 replications.

 Without averaging, the algorithm with constant step-size does not converge pointwise (it oscillates), and its average excess risk decays as a linear function of $\gamma$ (indeed, the gap between each values of the constant step-size is close to $\log_{10}(4)$, which corresponds to a linear function in $\gamma$).

With averaging, the algorithm with constant step-size does  converge  at rate $O(1/n)$, and for all values of the constant $\gamma$, the rate is actually the same. Moreover (although it is not shown in the plots), the standard deviation is much lower.

With decaying step-size $\gamma_n = 1/(2 R^2 \sqrt{n})$ and without averaging, the convergence  rate is $O(1/\sqrt{n})$, and improves to $O(1/n)$ with averaging.

\paragraph{Logistic regression.}
We consider the same input data as for least-squares, but now generates outputs from the logistic probabilistic model. We compare several algorithms and display the results in \myfig{logistictoy} (middle and right plots).

On the middle plot, we consider SGD. Without averaging, the algorithm with constant step-size does not converge and its average excess risk reaches a constant value which is a linear function of $\gamma$ (indeed, the gap between each values of the constant step-size is close to $\log_{10}(4)$). With averaging, the algorithm does converge, but as opposed to least-squares, to a point which is not the optimal solution, with an error proportional to $\gamma^2$ (the gap between curves is twice as large).

On the right plot, we consider various variations of our Newton-approximation scheme. The ``2-step'' algorithm is the one for which our convergence rate holds ($n$ being the total number of examples, we perform $n/2$ steps of averaged SGD, then $n/2$ steps of LMS). Not surprisingly, it is not the best in practice (in particular at $n/2$, when starting the constant-size LMS, the performance worsens temporarily). It is classical to use doubling tricks to remedy this problem while preserving convergence rates~\cite{hazanbeyond}, this is done in ``2-step-dbl.'', which avoids the previous erratic behavior.

 We have also considered getting rid of the first stage where plain averaged stochastic gradient is used to obtain a support point for the quadratic approximation. We now consider only Newton-steps but change only these support points.
 We consider updating the support point at every iteration, i.e., the recursion from \eq{E}, while we also consider updating it every dyadic point (``dbl.-approx'').  The last two algorithms perform very similarly and achieve the $O(1/n)$ early. In all experiments on real data, we have considered the simplest variant (which corresponds to \eq{E}).

%Averaged SGD with constant step-size $\gamma$ does  converge to the wrong point. The greater the $\gamma$, the fastest the convergence but the further away the converging point. Note that the Newton approximation  seems to follow the minimum of all the curves for averaged SGD with constant step-size.

\begin{figure}

\begin{center}
\hspace*{-.5cm}
\includegraphics[scale=.475]{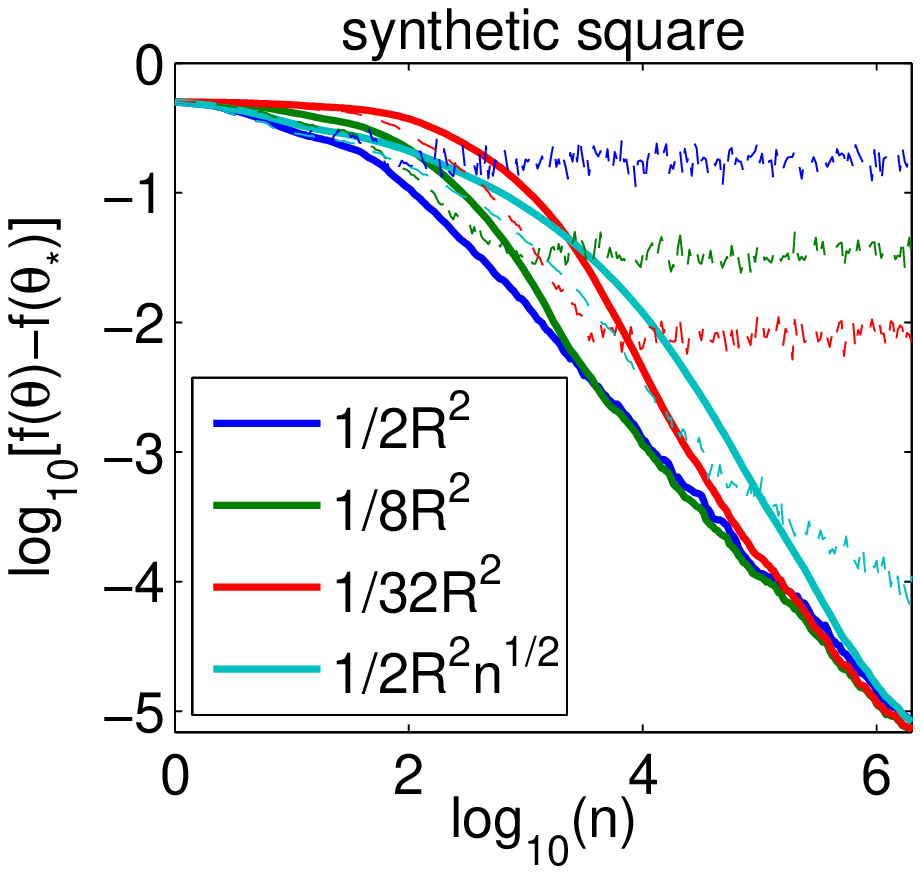}
\hspace*{.85cm}
\includegraphics[scale=.475]{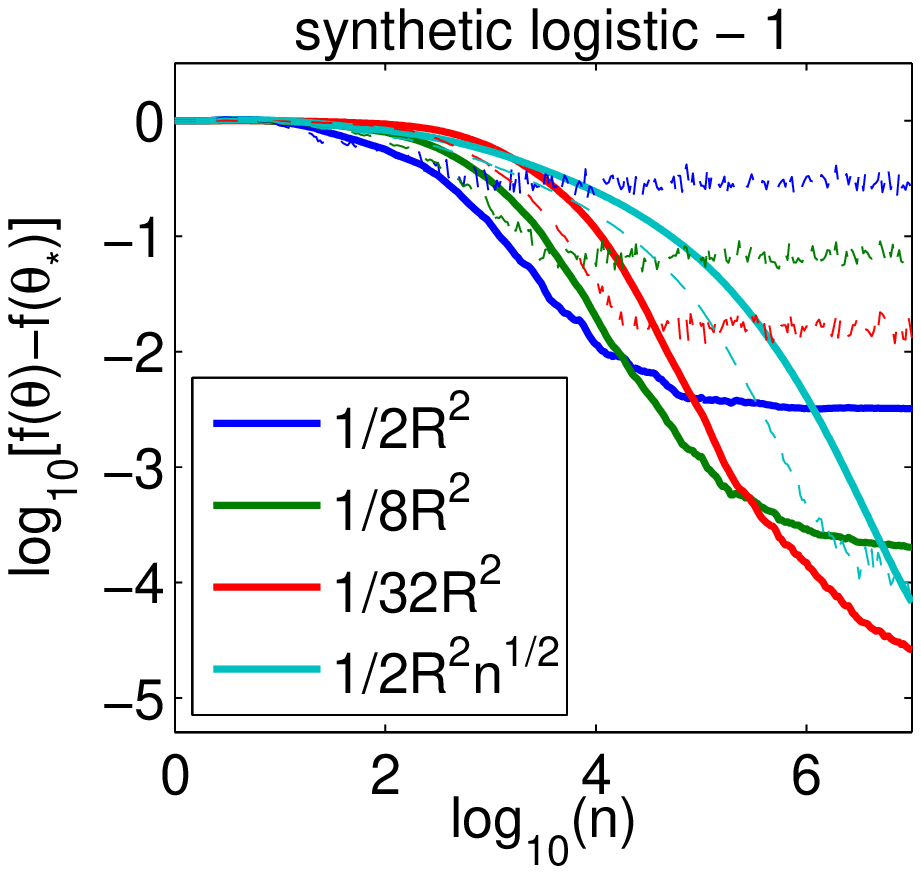}
\hspace*{.25cm}
\includegraphics[scale=.475]{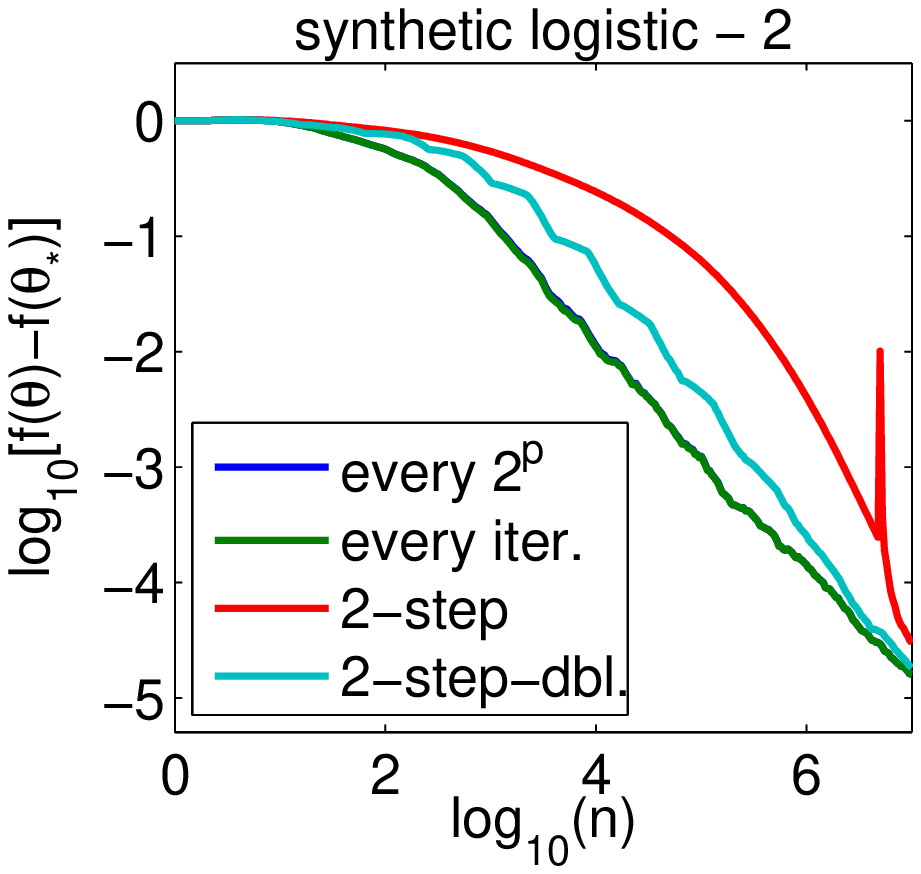}
\hspace*{-.5cm}
\end{center}

\caption{Synthetic data. Left: least-squares regression. Middle:  logistic regression with averaged SGD with various step-sizes, averaged  (plain) and non-averaged (dashed). Right: various Newton-based schemes for the same logistic regression problem. Best seen in color. See text for details.
\label{fig:logistictoy} }

\end{figure}

\subsection{Standard benchmarks}

We have considered 6 benchmark datasets which are often used in comparing large-scale optimization methods. The datasets are described in Table~\ref{tab:data} and vary in values of $d$, $n$ and sparsity levels. These are all \emph{finite} binary classification datasets with outputs in $\{-1,1\}$. For least-squares and logistic regression, we have   followed the following experimental protocol: (1) remove all outliers (i.e., sample points $x_n$ whose norm is greater than 5 times the average norm), (2) divide the dataset in two equal parts, one for training, one for testing, (3) sample within the training dataset with replacement, for 100 times the number of observations in the training set (this corresponds to $100$ effective passes; in all plots, a black dashed line marks the first effective pass), (4) compute averaged cost on training and testing data (based on 10 replications). All the costs are shown in log-scale, normalized to that the first iteration leads to $f(\theta_0) - f(\theta_\ast) = 1$.

All algorithms that we consider (ours and others) have a step-size, and typically a theoretical value that ensures convergence. We consider two settings: (1) one when this theoretical value is used, (2) one with the best testing error after one effective pass through the data (testing powers of $4$ times the theoretical step-size).

Here, we only consider \emph{covertype}, \emph{alpha}, \emph{sido} and \emph{news}, as well as test errors. For all training errors and the two other datasets (\emph{quantum}, \emph{rcv1}), see the appendix.

\urlstyle{same}

\paragraph{Least-squares regression.}
We compare three algorithms: averaged SGD with constant step-size, averaged SGD with   step-size decaying as $C/R^2 \sqrt{n}$, and the stochastic averaged gradient (SAG) method which is dedicated to finite training data sets~\cite{sag}, which has shown state-of-the-art performance in this set-up\footnote{The original algorithm from~\cite{sag} is considering only strongly convex problems, we have used the step-size of $1/16R^2$, which achieves fast convergence rates in all situations (see \url{http://research.microsoft.com/en-us/um/cambridge/events/mls2013/downloads/stochastic_gradient.pdf}).}. We show the results in the two left plots of \myfig{test1} and \myfig{test2}.

Averaged SGD with decaying step-size equal to $C/R^2 \sqrt{n}$ is slowest (except for \emph{sido}). In particular, when the best constant $C$ is used (right columns), the performance typically starts to increase significantly. With that step size, even after 100 passes, there is no sign of overfitting, even for the high-dimensional sparse datasets.

SAG and constant-step-size averaged SGD exhibit the best behavior, for the theoretical step-sizes and the best constants, with a significant advantage for constant-step-size SGD. The non-sparse datasets do not lead to overfitting, even close to the global optimum of the (unregularized) training objectives, while the sparse datasets do exhibit some overfitting after more than 10 passes.

\paragraph{Logistic regression.}
We also compare two additional algorithms: our Newton-based technique and ``Adagrad''~\cite{adagrad}, which is a stochastic gradient method with a form a diagonal scaling\footnote{Since a bound on $\|\theta_\ast\|$ is not available, we have used step-sizes proportional to $1 / \sup_n \| x_n \|_\infty$.} that allows to reduce the convergence rate (which is still in theory proportional to $O(1/\sqrt{n})$).
We show results in the two right plots of \myfig{test1} and \myfig{test2}.

 Averaged SGD with decaying step-size proportional to $1/R^2 \sqrt{n}$ has the same behavior than for least-squares (step-size harder to tune, always inferior performance except for \emph{sido}).

  SAG, constant-step-size SGD and the novel Newton technique tend to behave similarly (good with theoretical step-size, always among the best methods). They differ notably in some aspects: (1) SAG converges quicker for the training errors (shown in the appendix) while it is a bit slower for the testing error, (2) in some instances, constant-step-size averaged SGD does underfit (\emph{covertype}, \emph{alpha}, \emph{news}), which is consistent with the lack of convergence to the global optimum mentioned earlier, (3) the novel Newton approximation is consistently better.

 On the non-sparse datasets, Adagrad performs similarly to the Newton-type method (often better in early iterations and worse later), except for the \emph{alpha} dataset where the step-size is harder to tune (the best step-size tends to have early iterations that make the cost go up significantly). On sparse datasets like \emph{rcv1}, the performance is essentially the same as Newton. On the \emph{sido} data set, Adagrad (with fixed steps size, left column) achieves a good testing loss quickly then levels off, for reasons we cannot explain. On the \emph{news} dataset, it is inferior without parameter-tuning and a bit better with. Adagrad uses a diagonal rescaling; it could be combined with our technique, early experiments show that it improves results but that it is more sensitive to the choice of step-size.

 Overall, even with $d$ and $\kappa$ very large (where our bounds are vacuous), the performance of our algorithm still achieves the state of the art, while being more robust to the selection of the step-size: finer quantities likes degrees of freedom~\cite{gu2002smoothing} should be able to quantify more accurately the quality of the new algorithms.

\begin{table}

\caption{Datasets used in our experiments\label{tab:data}. We report the proportion of non-zero entries, as well as estimates for the constant $\kappa$ and $\rho$ used in our theoretical results, together with  the non-sharp constant which is typically used in analysis of logistic regression and which our analysis avoids (these are computed for non-sparse datasets only).}
\begin{center}
\begin{tabular}{|l|r|r|r|l|l|l|}
\hline
Name & $d$ & $n$ & sparsity &  $ {\kappa}$ & $ {\rho}$ & $1 / \inf_n \ell''(y_n,\langle \theta_\ast, x_n \rangle)$\\
\hline
\emph{quantum} & 79 & 50 000 &    100 \% & 5.8  $\times 10^{2}$ & 16  & 8.5 $\times 10^2$ \\
\emph{covertype} & 55 & 581 012 &   100 \% &  9.6 $\times  10^{2}$ & 160 & 3 $\times 10^{12}$ \\
\emph{alpha} & 501 & 500 000 &   100 \% &  6 & 18  & 8 $\times 10^4$\\
\emph{sido} &   4 933& 12 678 &    10 \% &  1.3 $\times 10^{4}$ & $\times $ & $\times$\\
\emph{rcv1} &         47 237& 20 242 &   0.2 \%  & 2 $\times 10^{4}$ & $\times $& $\times$\\
\emph{news} &       1 355 192 & 19 996  &   0.03 \% &    2 $\times 10^{4}$ & $\times $& $\times$\\
\hline
\end{tabular}

\end{center}
\end{table}

\begin{figure}

\begin{center}

\hspace*{-.5cm}
\includegraphics[scale=.43]{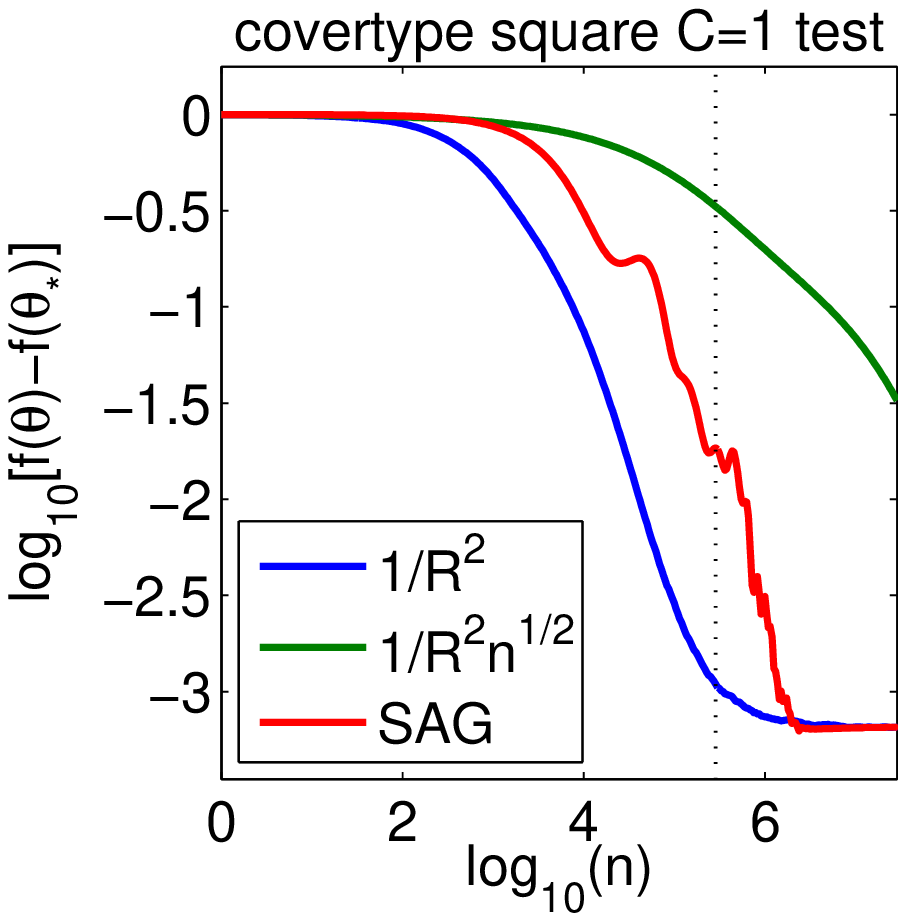}
\hspace*{-.25cm}
\includegraphics[scale=.43]{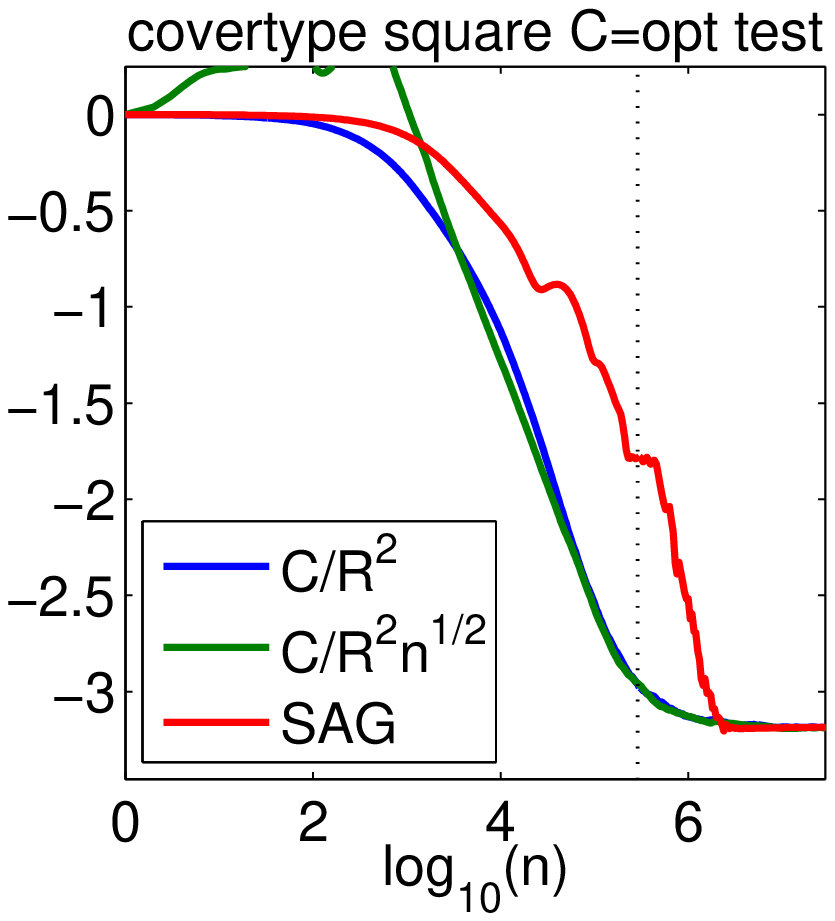}
\includegraphics[scale=.43]{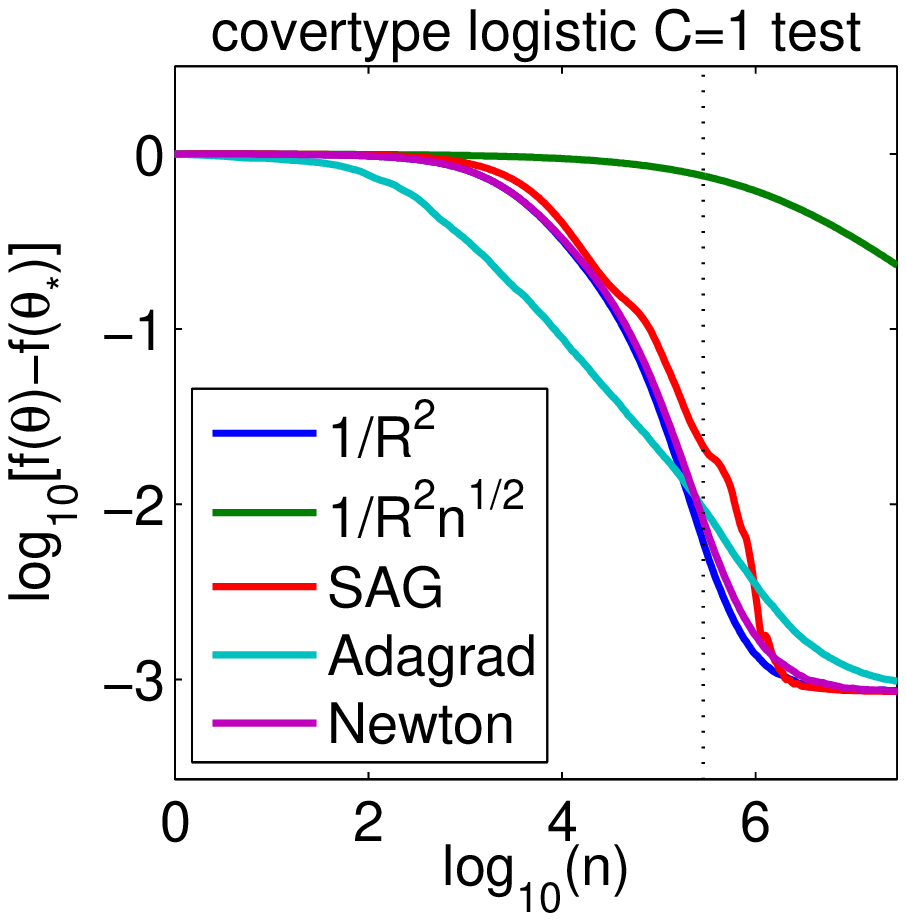}
\hspace*{-.25cm}
\includegraphics[scale=.43]{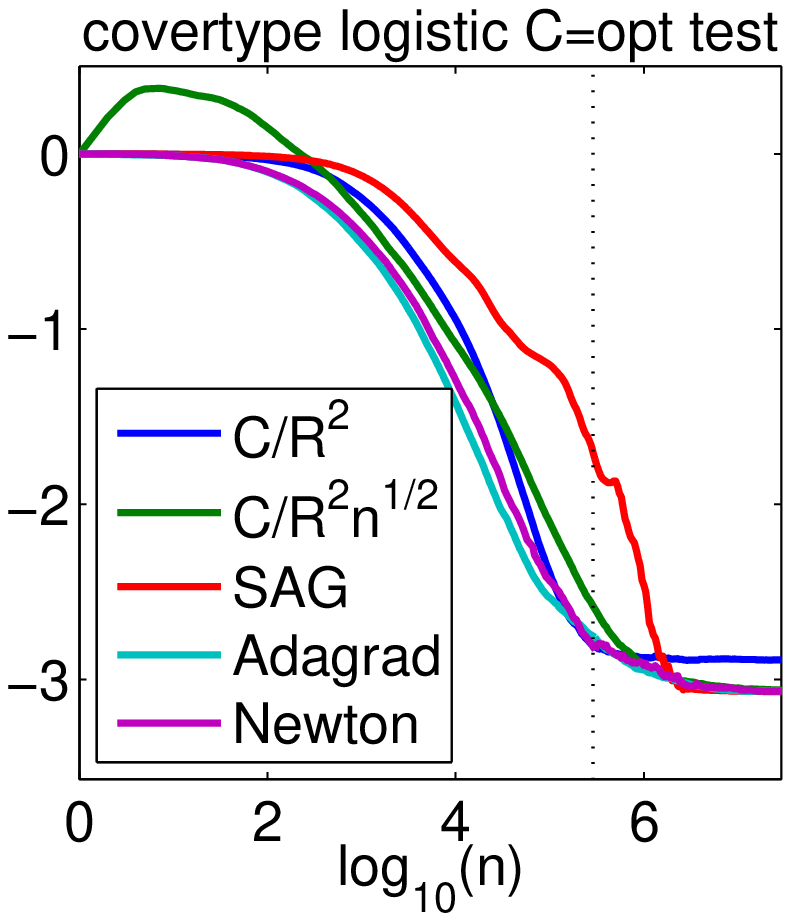}
\hspace*{-.5cm}

\hspace*{-.5cm}
\includegraphics[scale=.43]{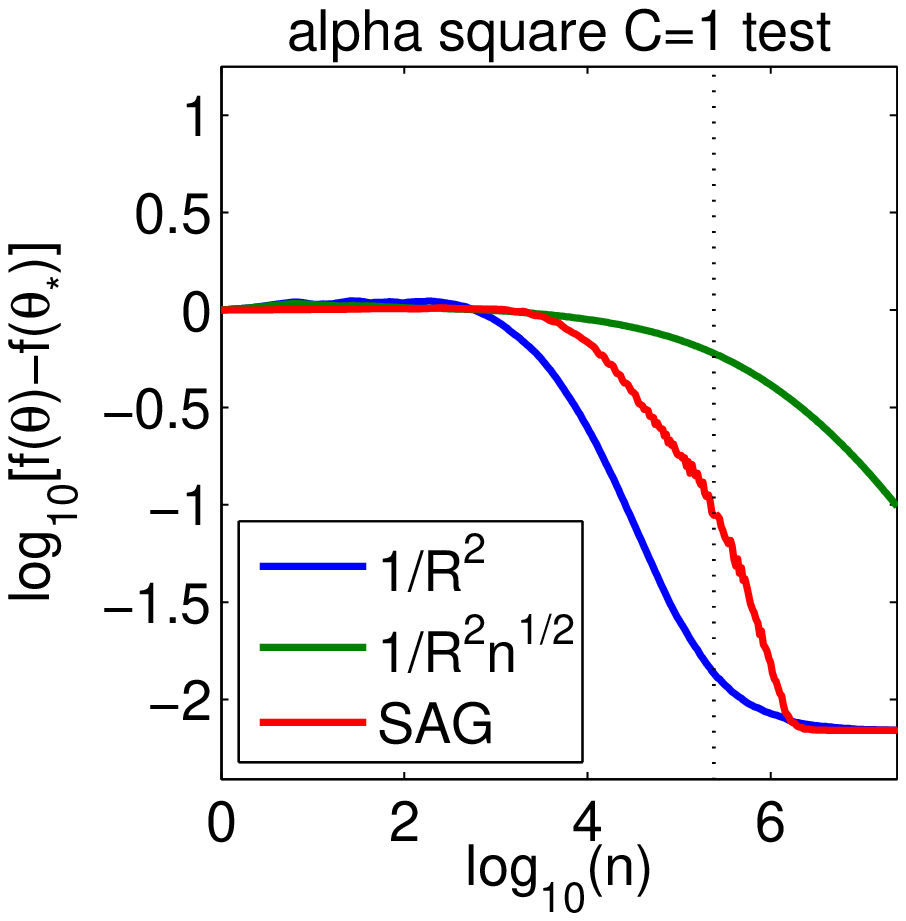}
\hspace*{-.25cm}
\includegraphics[scale=.43]{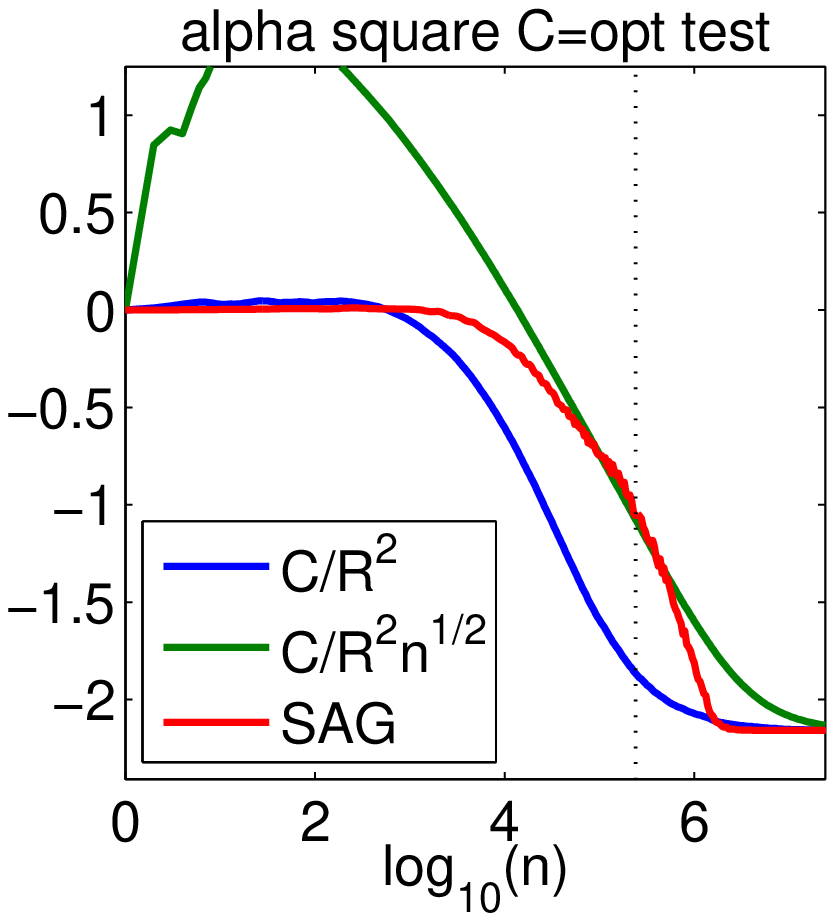}
\includegraphics[scale=.43]{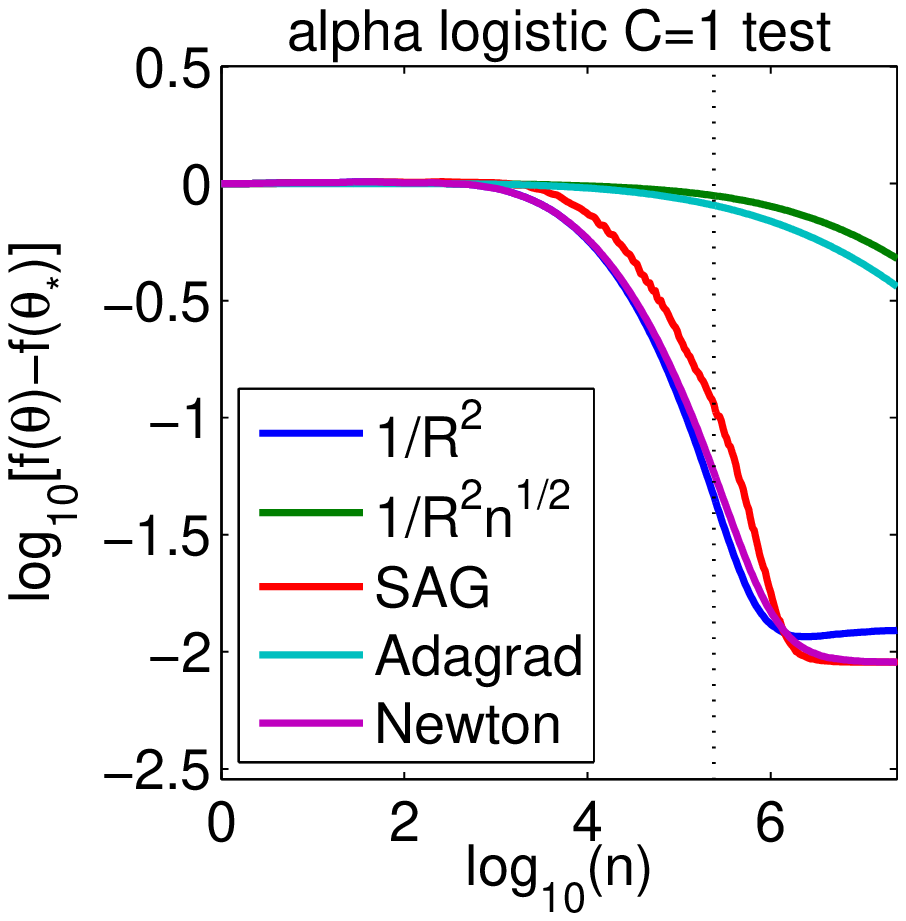}
\hspace*{-.25cm}
\includegraphics[scale=.43]{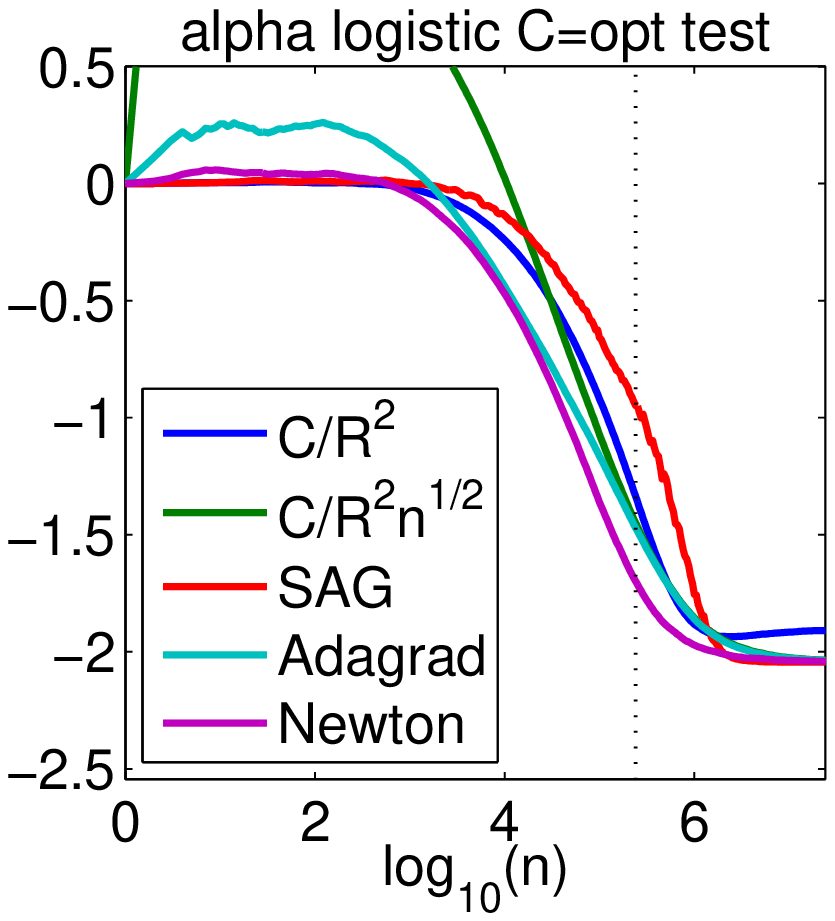}
\hspace*{-.5cm}

\end{center}

\caption{Test  performance for least-square regression (two left  plots) and  logistic regression (two right plots). From top to bottom:  \emph{covertype}, \emph{alpha}. Left: theoretical steps, right: steps optimized for performance after one effective pass through the data. Best seen in color. }
\label{fig:test1}

\end{figure}

\begin{figure}

\begin{center}
\hspace*{-.5cm}
\includegraphics[scale=.43]{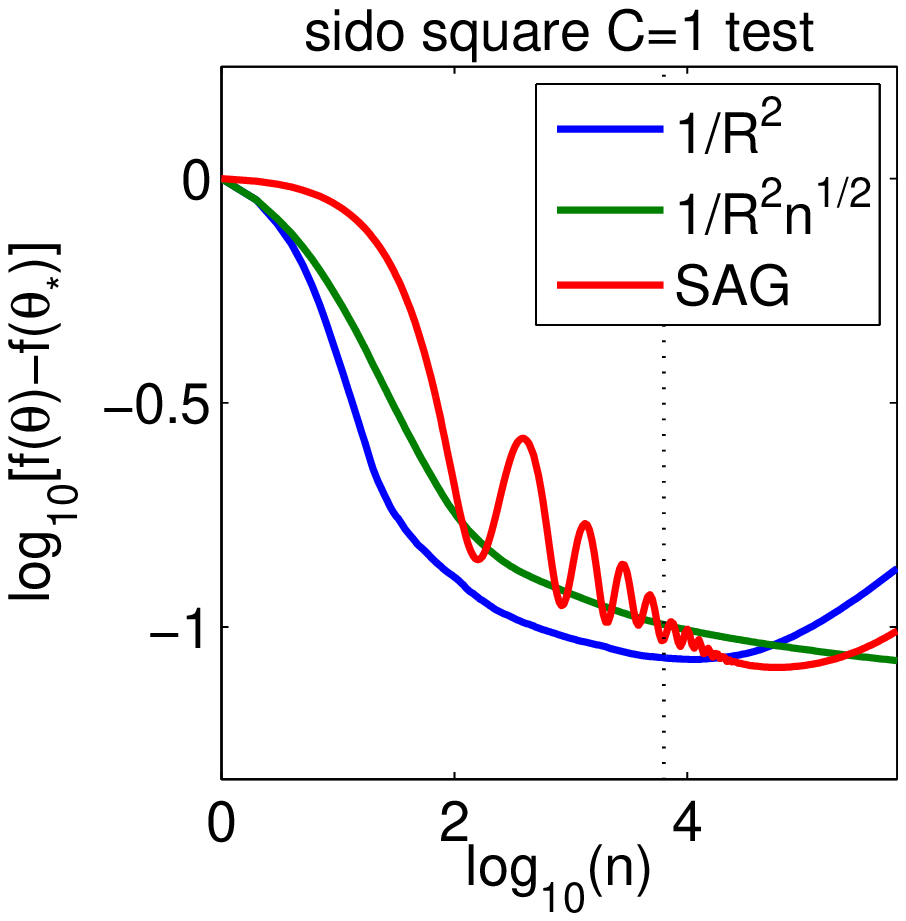}
\hspace*{-.25cm}
\includegraphics[scale=.43]{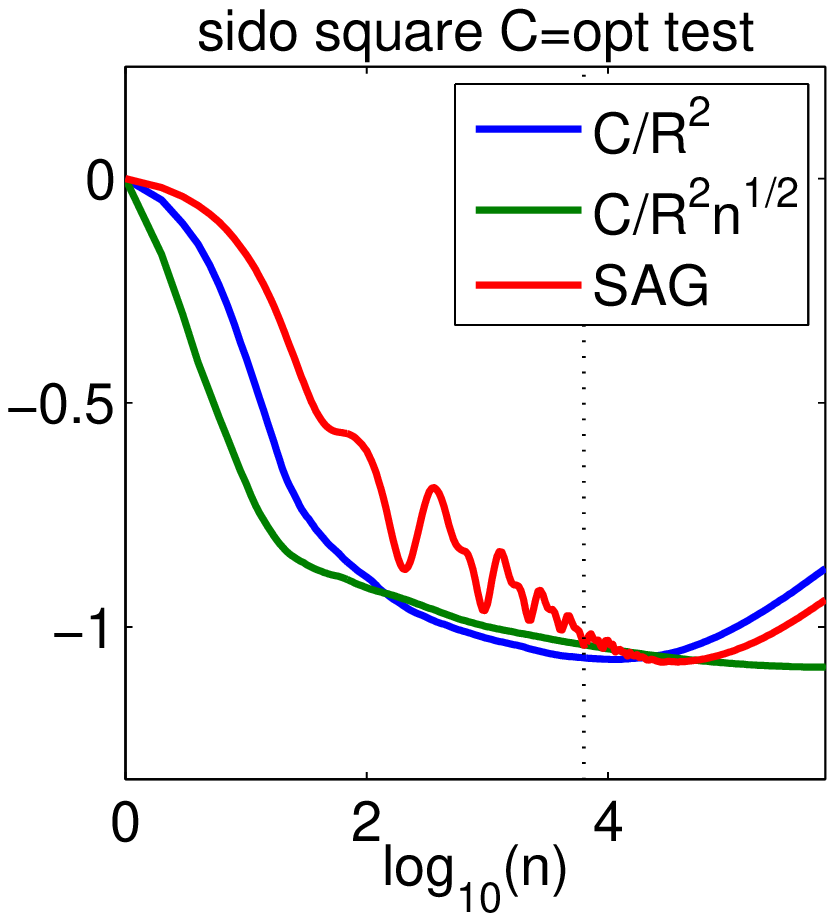}
\includegraphics[scale=.43]{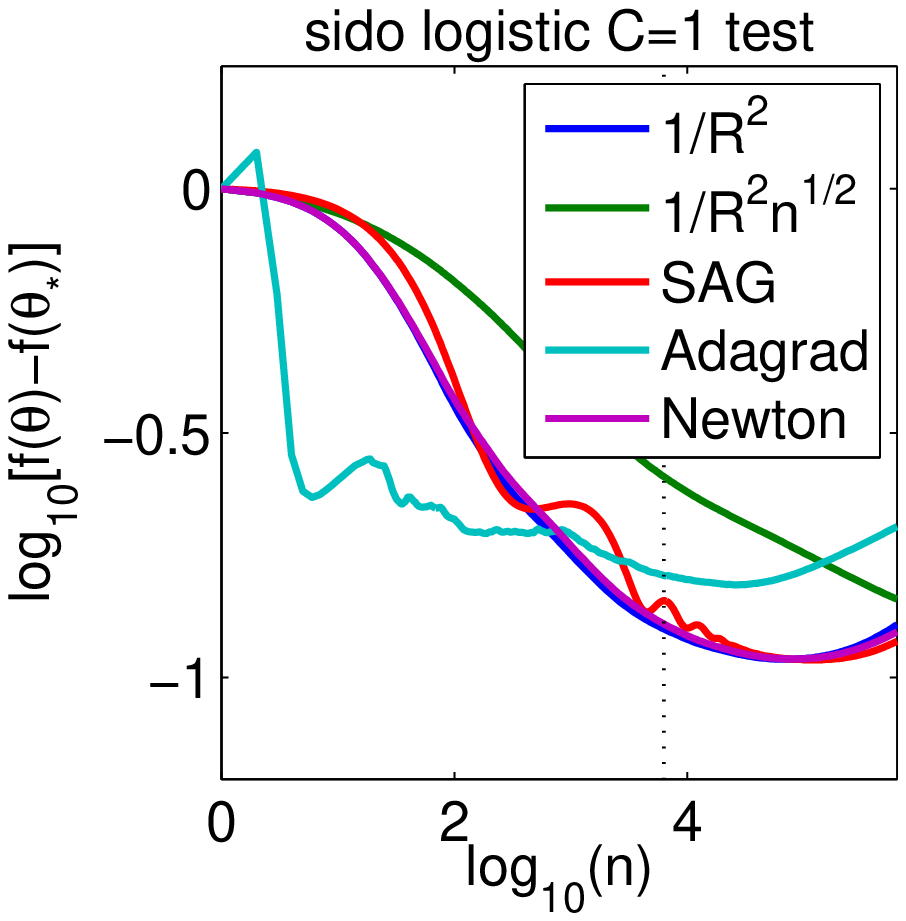}
\hspace*{-.25cm}
\includegraphics[scale=.43]{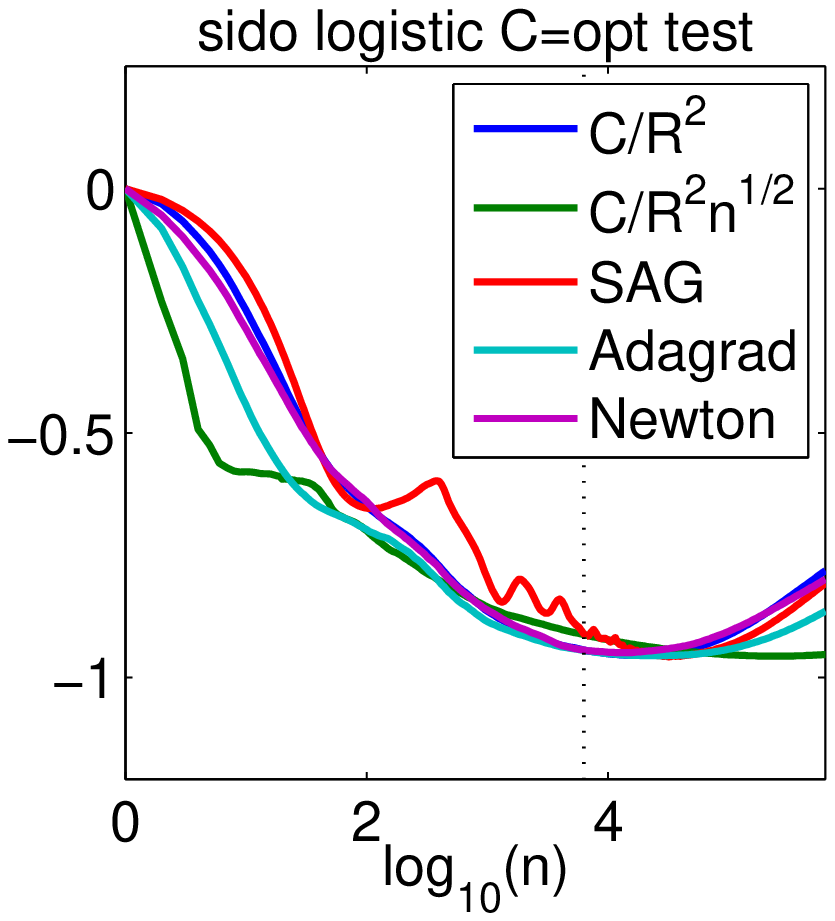}
\hspace*{-.5cm}

\hspace*{-.5cm}
\includegraphics[scale=.43]{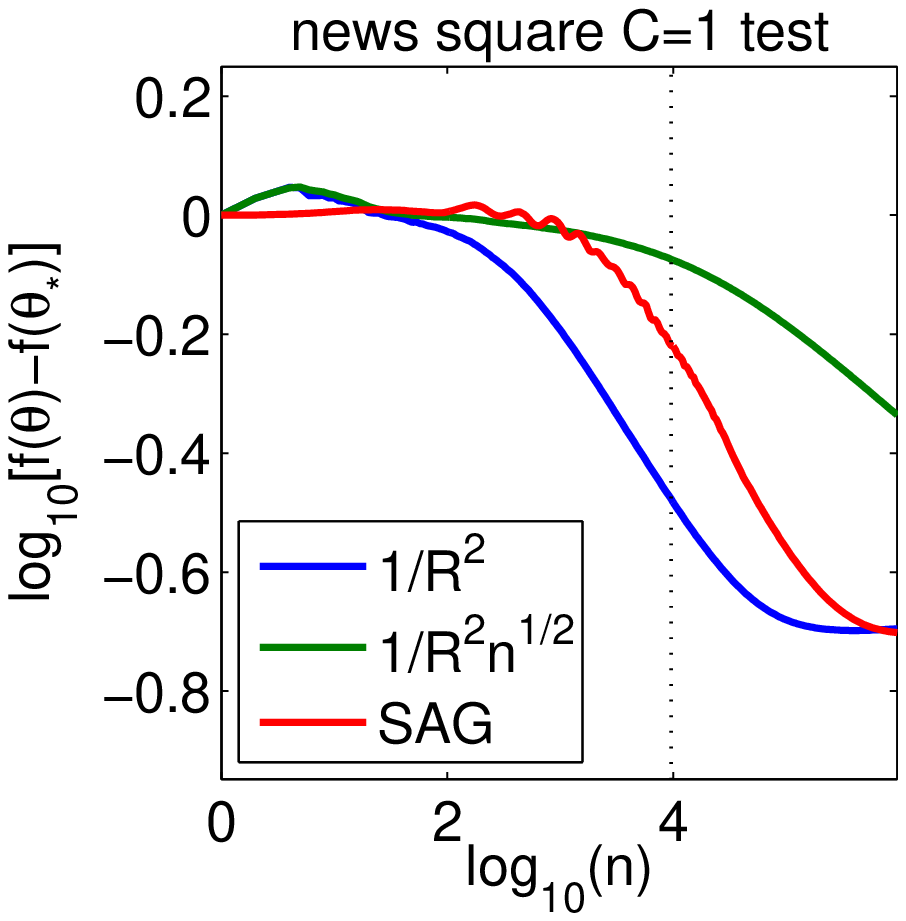}
\hspace*{-.25cm}
\includegraphics[scale=.43]{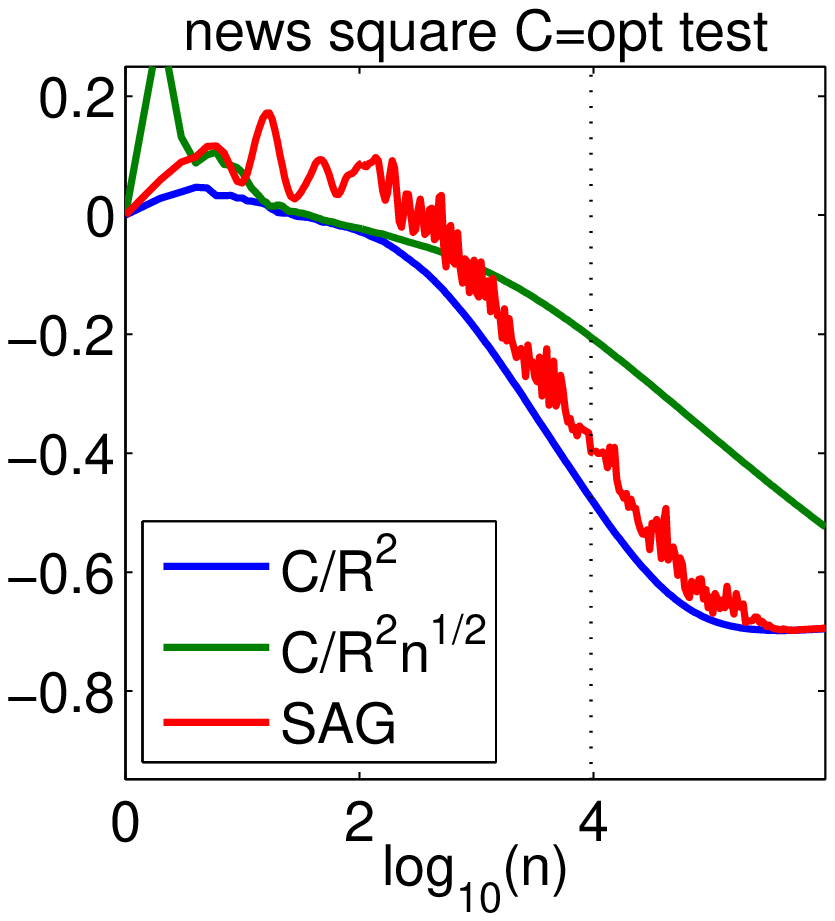}
\includegraphics[scale=.43]{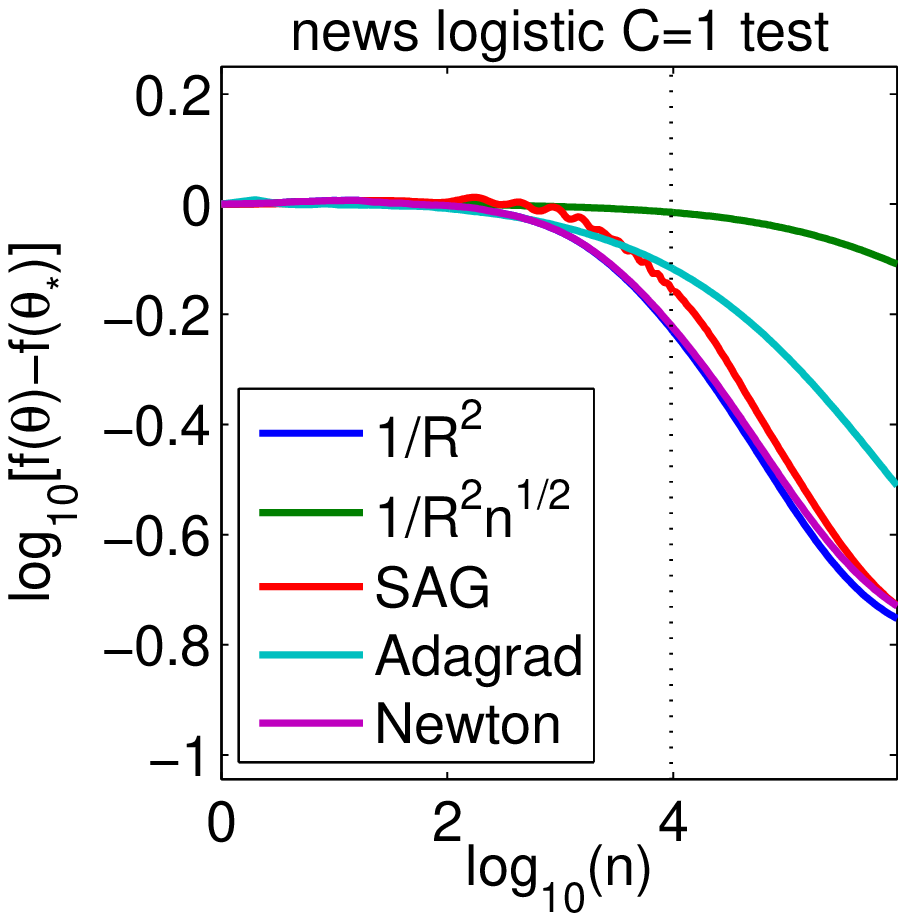}
\hspace*{-.25cm}
\includegraphics[scale=.43]{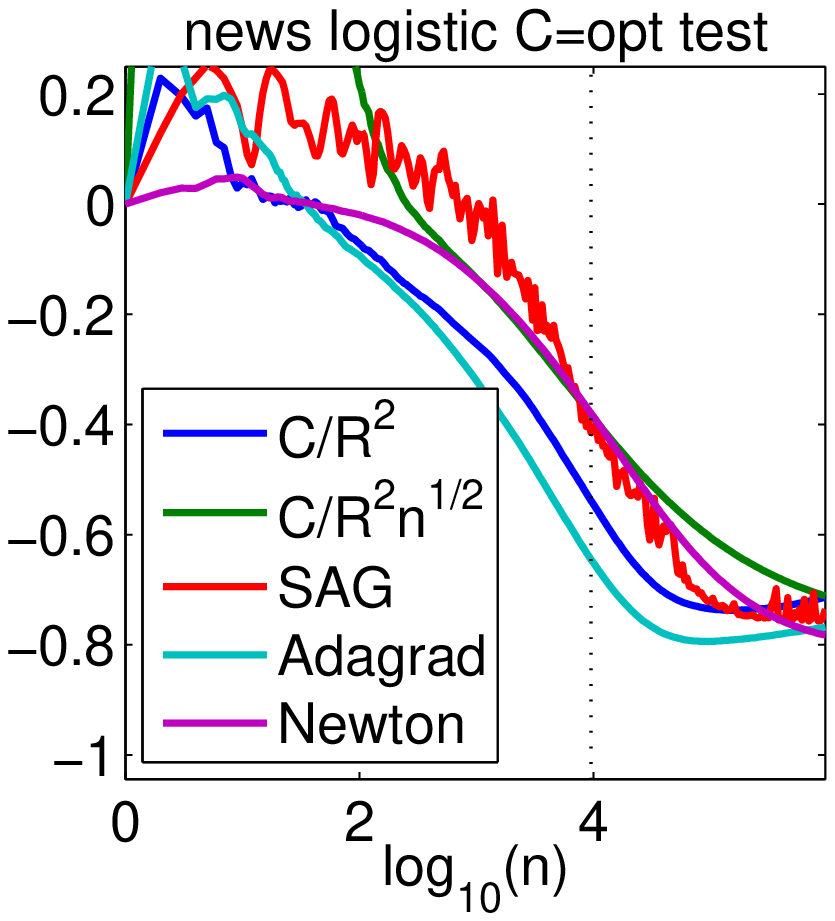}
\hspace*{-.5cm}

\end{center}

\caption{Test  performance for least-square regression (two left  plots) and  logistic regression (two right plots). From top to bottom: \emph{sido}, \emph{news}. Left: theoretical steps, right: steps optimized for performance after one effective pass through the data. Best seen in color. }
\label{fig:test2}

\end{figure}

\section{Conclusion}

In this paper, we have presented two stochastic approximation algorithms that can achieve rates of $O(1/n)$ for logistic and least-squares regression, without strong-convexity assumptions. Our analysis reinforces the key role of averaging in obtaining fast rates, in particular with large step-sizes.
Our work can naturally be extended in several ways:
(a) an analysis of the algorithm that updates the support point of the quadratic approximation at every iteration, (b) proximal extensions (easy to implement, but potentially harder to analyze); (c) adaptive ways to find the constant-step-size; (d) step-sizes that depend on the iterates to increase robustness, like in normalized LMS~\cite{nmls}, and (e) non-parametric analysis to improve our theoretical results for large values of $d$.

\subsection*{Acknowledgements}
 
This work was partially supported by the European Research Council (SIERRA Project 239993).
The authors would like to thank Simon Lacoste-Julien and Mark Schmidt  for discussions related to this work.

\newpage

\appendix

In the appendix we provide proofs of all three theorems, as well as additional experimental results (all training objectives, and two additional datasets \emph{quantum} and \emph{rcv1}).

\paragraph{Notations.} Throughout this appendix material, we are going to use the notation $\| X\|_p =\big[  \E ( \| X\|^p ) \big]^{1/p}$ for any random vector $X$ and real number $p \geqslant 1$. By Minkowski's inequality, we have the triangle inequality $\| X + Y \|_p \leqslant \| X\|_p + \| Y\|_p$ whenever the expression makes sense.

\appendix
\section{Proof of Theorem~\ref{theo:lms}}

We first denote by $\eta_n = \theta_n - \theta_\ast \in \H$ the deviation to $\theta_\ast$. Since we consider quadratic functions, it satisfies a simplified recursion:
\BEA
\nonumber \eta_n & = & \eta_{n-1} -  \gamma ( x_n \otimes x_n ) \theta_n + \gamma \xi_n \\
\label{eq:eta}
  & = &  \big( \idm -  \gamma x_n \otimes x_n \big)  \eta_{n-1}  + \gamma \xi_n.
\EEA
We also consider $\bar{\eta}_n = \frac{1}{n+1} \sum_{k=0}^n \eta_k = \bar{\theta}_n - \theta_\ast $ the averaged iterate.
We have
$f(\theta_n) - f(\theta_\ast) = \frac{1}{2} \langle \eta_n, H \eta_n \rangle$
and $f(\bar{\theta}_n) - f(\theta_\ast) = \frac{1}{2} \langle \bar{\eta}_n, H \eta_n \rangle$.

The crux of the proof is to consider the same recursion as \eq{eta}, but replacing $x_n \otimes x_n$ by its expectation $H$ (which is related to fixed design analysis in linear regression). This  is of course only an approximation, and thus one has to study the remainder term; it happens to satisfy a similar recursion, on which we can apply the same technique, and so on. This proof technique is taken from~\cite{aguech2000perturbation}. Here we push it to arbitrary orders with explicit constants for \emph{averaged} constant-step-size stochastic gradient descent.

\paragraph{Consequences of assumptions.}
Note that  Assumption \textbf{(A6)}  implies that $\E \| x_n\|^2 \leqslant R^2$
(indeed, taking the trace of $\E \big( \|x_n\|^2 x_n \otimes x_n \big) \preccurlyeq R^2 H$, we get $\E \| x_n \|^4 \leqslant R^2 \E \|x_n\|^2$, and we always have by Cauchy-Schwarz inequality, $\E \|x_n\|^2 \leqslant \sqrt{ \E \| x_n\|^4}
\leqslant R \sqrt{\E \|x_n\|^2 }$). This then implies that $\tr H \leqslant R^2$ and thus $H \preccurlyeq ( \tr H) \idm \preccurlyeq R^2 \idm$. Thus, whenever $\gamma \leqslant 1/R^2$, we have $\gamma H \preccurlyeq \idm$, for the order between positive definite matrices.

We denote by $\F_n$ the $\sigma$-algebra generated by $(x_1,z_1,\dots,x_n,z_n)$.
Both $\theta_n$ and $\bar{\theta}_{n}$ are $\F_n$-measurable.

\subsection{Two main lemmas}

The proof relies on two lemmas, one that provides a weak result essentially equivalent (but more specific and simpler because the step-size is constant) to non-strongly-convex results from~\cite{gradsto}, and one that replaces $x_n \otimes x_n$ by its expectation $H$ in \eq{eta}, which may then be seen as a non-asymptotic counterpart to the similar set-tup in~\cite{polyak1992acceleration}.

 \begin{lemma}
 \label{lemma:1}
Assume $(x_n,\xi_n) \in \H \times \H$ are $\mathcal{F}_n$-measurable for a sequence of increasing $\sigma$-fields $(\mathcal{F}_n)$, $n \geqslant 1$. Assume that $\E [ \xi_n | \F_{n-1}] = 0$,  $\E \big[  \| \xi_n\|^2 | \F_{n-1} \big] $ is finite and $\E  \big[ \big( \|x_n\|^2 x_n \otimes x_n \big)  | \F_{n-1} \big] \preccurlyeq R^2 H$, with $ \E \big[ x_n \otimes x_n \big | \F_{n-1} ] = H$ for all $n \geqslant 1$, for some $R>0$ and invertible operator $H$. Consider the recursion $\alpha_n = \big( \idm - \gamma x_n \otimes x_n \big)  \alpha_{n-1}  + \gamma \xi_n$, with $\gamma R^2 \leqslant 1$. Then:
$$
( 1 - \gamma R^2) \E \big[ \langle \bar{\alpha}_{n-1}, H \bar{\alpha}_{n-1} \rangle \big] + \frac{1}{2 n \gamma  } \E \| \alpha_n \|^2 \leqslant  \frac{1}{2 n \gamma   }  \| \alpha_0 \|^2 +  \frac{ \gamma}{ n}  \sum_{k=1}^n  \E \| \xi_k\|^2.
$$
\end{lemma}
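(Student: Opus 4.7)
The plan is to derive a one-step inequality for $\E\|\alpha_n\|^2$, telescope across $k=1,\dots,n$, and then convert the resulting sum over iterates into a bound on the averaged iterate via Jensen's inequality applied to the convex map $\theta \mapsto \langle \theta, H \theta \rangle$.

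First I would expand $\|\alpha_n\|^2 = \|(\idm - \gamma x_n \otimes x_n)\alpha_{n-1}\|^2 + 2\gamma \langle (\idm - \gamma x_n \otimes x_n)\alpha_{n-1}, \xi_n \rangle + \gamma^2 \|\xi_n\|^2$, take the conditional expectation given $\F_{n-1}$, and use $\E[\xi_n \mid \F_{n-1}]=0$ to kill the cross term (noting that $\alpha_{n-1}$ is $\F_{n-1}$-measurable but $\xi_n$ need not be independent of $x_n$, so one must keep $x_n$ inside the inner product). Using the identity $(x_n \otimes x_n)^2 = \|x_n\|^2 (x_n \otimes x_n)$ together with the structural assumption $\E[\|x_n\|^2 x_n \otimes x_n \mid \F_{n-1}] \preccurlyeq R^2 H$, I would obtain $\E[(\idm - \gamma x_n \otimes x_n)^2 \mid \F_{n-1}] \preccurlyeq \idm - \gamma(2 - \gamma R^2) H$, which yields the one-step recursion $\E[\|\alpha_n\|^2 \mid \F_{n-1}] \leqslant \|\alpha_{n-1}\|^2 - \gamma(2 - \gamma R^2)\langle \alpha_{n-1}, H \alpha_{n-1}\rangle + \gamma^2 \E[\|\xi_n\|^2 \mid \F_{n-1}]$.

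Taking total expectation, summing from $k=1$ to $n$ and telescoping gives $\E\|\alpha_n\|^2 + \gamma(2-\gamma R^2)\sum_{k=0}^{n-1} \E\langle \alpha_k, H \alpha_k\rangle \leqslant \|\alpha_0\|^2 + \gamma^2 \sum_{k=1}^n \E\|\xi_k\|^2$. Convexity of $\theta \mapsto \langle \theta, H\theta\rangle$ (since $H \succeq 0$) gives via Jensen's inequality $\langle \bar\alpha_{n-1}, H \bar\alpha_{n-1}\rangle \leqslant \tfrac{1}{n}\sum_{k=0}^{n-1}\langle \alpha_k, H \alpha_k\rangle$. Dividing by $2n\gamma$ and using the elementary inequalities $(2-\gamma R^2)/2 \geqslant 1 - \gamma R^2$ and $\gamma/(2n) \leqslant \gamma/n$ (valid because $\gamma R^2 \leqslant 1$) produces the claimed bound.

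The derivation is essentially mechanical once the right moment assumption is invoked; the one step that really matters is the reduction $\E[(\idm - \gamma x_n \otimes x_n)^2 \mid \F_{n-1}] \preccurlyeq \idm - \gamma(2-\gamma R^2) H$, where the quadratic residual in $\gamma$ is controlled by $R^2 H$ rather than by $R^2 \idm$. That is the role of the structural assumption $\E[\|x_n\|^2 x_n \otimes x_n \mid \F_{n-1}] \preccurlyeq R^2 H$: it keeps the contraction expressed in terms of $H$, which is essential for the downstream application when $H$ may have arbitrarily small eigenvalues. A crude bound $\|x_n\|^2 \leqslant R^2$ would instead produce an $R^2 \gamma^2 \idm$ term and destroy the usefulness of the lemma.
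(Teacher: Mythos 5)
Your overall architecture (one-step descent inequality, telescope, Jensen for the averaged iterate) matches the paper's, but there is a genuine gap in your one-step inequality: the cross term does not vanish. With your grouping, the cross term is $2\gamma\langle(\idm-\gamma\, x_n\otimes x_n)\alpha_{n-1},\xi_n\rangle = 2\gamma\langle\alpha_{n-1},\xi_n\rangle - 2\gamma^2\langle x_n,\alpha_{n-1}\rangle\langle x_n,\xi_n\rangle$. The first piece is killed by $\E[\xi_n\mid\F_{n-1}]=0$, but the second is not: the hypothesis gives only $\E[\xi_n\mid\F_{n-1}]=0$, not $\E[(x_n\otimes x_n)\xi_n\mid\F_{n-1}]=0$, and in the intended application $\E[\xi_n\mid x_n]\neq 0$ precisely because the model need not be well specified (this is flagged explicitly in assumption \textbf{(A4)}). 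So the recursion $\E[\|\alpha_n\|^2\mid\F_{n-1}]\leqslant\|\alpha_{n-1}\|^2-\gamma(2-\gamma R^2)\langle\alpha_{n-1},H\alpha_{n-1}\rangle+\gamma^2\E[\|\xi_n\|^2\mid\F_{n-1}]$ is not justified as written: you have silently discarded the term $-2\gamma^2\,\E\big[\langle x_n,\alpha_{n-1}\rangle\langle x_n,\xi_n\rangle\mid\F_{n-1}\big]$, which has no sign in general.

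The repair is cheap and is exactly what the paper does. Group the increment as $\gamma\big(\xi_n-(x_n\otimes x_n)\alpha_{n-1}\big)$, so that the linear cross term becomes $2\gamma\langle\alpha_{n-1},\,\xi_n-(x_n\otimes x_n)\alpha_{n-1}\rangle$, whose conditional expectation is exactly $-2\gamma\langle\alpha_{n-1},H\alpha_{n-1}\rangle$, and bound the quadratic term by $\|\xi_n-(x_n\otimes x_n)\alpha_{n-1}\|^2\leqslant 2\|\xi_n\|^2+2\|(x_n\otimes x_n)\alpha_{n-1}\|^2$; equivalently, keep your grouping and absorb the offending cross term by Young's inequality. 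Either way one lands on $\E[\|\alpha_n\|^2\mid\F_{n-1}]\leqslant\|\alpha_{n-1}\|^2-2\gamma(1-\gamma R^2)\langle\alpha_{n-1},H\alpha_{n-1}\rangle+2\gamma^2\,\E[\|\xi_n\|^2\mid\F_{n-1}]$, and your telescoping and Jensen steps then yield the stated bound with exactly the constants $\tfrac{1}{2n\gamma}$ and $\tfrac{\gamma}{n}$, with no slack to spare. Your remark that one must use $\E[\|x_n\|^2 x_n\otimes x_n\mid\F_{n-1}]\preccurlyeq R^2H$ rather than the crude bound $\|x_n\|^2\leqslant R^2$ is correct and is indeed the point of the lemma.
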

\begin{proof}
We follow the proof technique of~\cite{gradsto} (which relies only on smoothness) and get:
\BEAS
\| \alpha_{n} \|^2
& = & \| \alpha_{n-1}\|^2 + \gamma^2    \| \xi_n  - ( x_n \otimes x_n ) \alpha_{n-1} \|^2
+ 2 \gamma \langle \alpha_{n-1},  \xi_n  - ( x_n \otimes x_n ) \alpha_{n-1}  \rangle \\
& \leqslant & \| \alpha_{n-1}\|^2 +  \bigg\{ 2 \gamma^2  \| \xi_n\|^2 + 2 \gamma^2  \|  ( x_n \otimes x_n ) \alpha_{n-1} \|^2 \bigg\}
+ 2 \gamma \langle \alpha_{n-1},  \xi_n  - ( x_n \otimes x_n ) \alpha_{n-1}  \rangle .
\EEAS
By taking expectations, we obtain:
\BEAS
\E \big[ \| \alpha_n \|^2 | \F_{n-1} \big]
& \leqslant & 
 \| \alpha_{n-1}\|^2 + 2 \gamma^2   \| \xi_n\|^2 + 2 \gamma^2 
 \langle \alpha_{n-1},  \E \big[ \|x_n\|^2 x_n \otimes x_n \big]  \alpha_{n-1}  \rangle   - 2 \gamma \langle \alpha_{n-1},  H \alpha_{n-1}  \rangle 
\\
& \leqslant & 
 \| \alpha_{n-1}\|^2 + 2 \gamma^2   \| \xi_n\|^2 + 2 \gamma^2 
 R^2\langle \alpha_{n-1},  H \alpha_{n-1}  \rangle   - 2 \gamma \langle \alpha_{n-1},  H \alpha_{n-1}  \rangle 
\\
& = & 
 \| \alpha_{n-1}\|^2 + 2 \gamma^2   \| \xi_n\|^2 + 2\gamma^2   R^2\langle \alpha_{n-1},  H \alpha_{n-1}  \rangle   - 2 \gamma \langle \alpha_{n-1},  H \alpha_{n-1}  \rangle 
\\
& \leqslant & 
 \| \alpha_{n-1}\|^2 + 2 \gamma^2   \| \xi_n\|^2 -  2 \gamma ( 1 - \gamma R^2) \langle \alpha_{n-1},  H \alpha_{n-1}  \rangle.
\EEAS
By taking another expectation, we get
$$
\E \| \alpha_n \|^2 \leqslant 
 \E \| \alpha_{n-1}\|^2 + 2 \gamma^2  \E  \| \xi_n\|^2 -  2 \gamma ( 1 - \gamma R^2) \E \langle \alpha_{n-1},  H \alpha_{n-1}  \rangle.
$$
This leads to the desired result, because, by convexity,
$ \langle \bar{\alpha}_{n-1}, H \bar{\alpha}_{n-1} \rangle \leqslant \frac{1}{n} \sum_{k=0}^{n-1}  \langle  {\alpha}_{k}, H  {\alpha}_{k} \rangle$.\end{proof}

\begin{lemma}
 \label{lemma:2}
 Assume $ \xi_n \in \H  $ is $\mathcal{F}_n$-measurable for a sequence of increasing $\sigma$-fields $(\mathcal{F}_n)$, $n \geqslant 1$.
Assume $\E [ \xi_n | \F_{n-1} ] = 0$,  $\E \big[ \| \xi_n\|^2 \big]$ is finite, and for all $n \geqslant 1$, $ \E \big[ \xi_n \otimes \xi_n  \big]\preccurlyeq C$. Consider the recursion $\alpha_n = \big( \idm - \gamma H \big)  \alpha_{n-1}  + \gamma \xi_n$, with $\gamma H \preccurlyeq  \idm$ for some invertible $H$. Then:
\BEQ
\E [ {\alpha}_n \otimes {\alpha}_{n} ]    =  ( \idm - \gamma H)^n \alpha_0 \otimes \alpha_0
( \idm - \gamma H)^n
+\gamma^2 \sum_{k=1}^n ( \idm - \gamma H)^{n-k} C  ( \idm - \gamma H)^{n-k},
\EEQ
\BEQ
\E \big[ \langle \bar{\alpha}_{n-1}, H \bar{\alpha}_{n-1} \rangle \big]   \leqslant  \frac{1}{n \gamma} \| \alpha_0 \|^2  + \frac{ \tr CH^{-1} }{n} .
\EEQ
\end{lemma}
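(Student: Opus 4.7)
\medskip

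\noindent\textbf{Proof plan.} The recursion $\alpha_n = (I - \gamma H)\alpha_{n-1} + \gamma \xi_n$ is linear and time-homogeneous, so I first unroll it to obtain the closed form
\[
\alpha_n = (I - \gamma H)^n \alpha_0 + \gamma \sum_{k=1}^n (I - \gamma H)^{n-k} \xi_k.
\]
For the first claim, I would take the outer product $\alpha_n \otimes \alpha_n$ and take expectations. The cross terms between $\alpha_0$ and each $\xi_k$ vanish because $\E[\xi_k \mid \F_{k-1}] = 0$ (and $\alpha_0$ is deterministic); the cross terms $\E[\xi_j \otimes \xi_k]$ for $j < k$ also vanish by conditioning on $\F_{k-1}$ and using that $(I-\gamma H)^{n-j}\xi_j$ is $\F_{k-1}$-measurable. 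Using $\E[\xi_k \otimes \xi_k] \preccurlyeq C$ together with the assumption $\gamma H \preccurlyeq I$ (so that $(I-\gamma H)^{n-k}$ is self-adjoint and commutes with $H$) yields the stated identity directly.

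For the second claim, I would sum the explicit solution from $k=0$ to $n-1$. Using the finite-geometric-series identity $\sum_{k=0}^{m-1}(I - \gamma H)^k = \gamma^{-1} H^{-1} \bigl(I - (I - \gamma H)^m\bigr)$ (valid since $H$ is invertible and commutes with $(I-\gamma H)$), and swapping the order of summation in the noise term, I obtain
\[
n H \bar{\alpha}_{n-1} = \gamma^{-1}\bigl(I - (I - \gamma H)^n\bigr)\alpha_0 + \sum_{j=1}^{n-1}\bigl(I - (I - \gamma H)^{n-j}\bigr)\xi_j.
\]
I would then apply $H^{1/2} \cdot H^{-1}$ to form $\|H^{1/2} \bar{\alpha}_{n-1}\|^2 = \langle \bar{\alpha}_{n-1}, H\bar{\alpha}_{n-1}\rangle$, square, and take expectation. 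Once again the cross terms vanish by the martingale property, leaving a deterministic contribution from $\alpha_0$ and a sum of $n-1$ variance terms.

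The main task is then to control the two operator polynomials that appear. For the deterministic term, I would use Bernoulli's inequality $(I - (I-\gamma H)^n) \preccurlyeq n \gamma H$ (which holds eigenvalue-wise since $\gamma H \preccurlyeq I$) together with the trivial bound $(I - (I-\gamma H)^n) \preccurlyeq I$ to get
\[
\gamma^{-2} H^{-1}\bigl(I - (I-\gamma H)^n\bigr)^2 \preccurlyeq \gamma^{-2} H^{-1} \cdot n \gamma H = n\gamma^{-1} I,
\]
yielding a contribution of at most $n \gamma^{-1} \|\alpha_0\|^2$. For the noise term, each summand is $\operatorname{tr}\bigl(\E[\xi_j \otimes \xi_j] \cdot H^{-1}(I-(I-\gamma H)^{n-j})^2\bigr)$; using $(I-(I-\gamma H)^{n-j})^2 \preccurlyeq I$, the operator $H^{-1}(I-(I-\gamma H)^{n-j})^2 = H^{-1/2}(I-(I-\gamma H)^{n-j})^2 H^{-1/2}$ (by commutation) is positive semi-definite and bounded above by $H^{-1}$, so each summand is at most $\operatorname{tr}(C H^{-1})$. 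Summing the $n-1$ terms and dividing by $n^2$ gives exactly the stated bound.

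\medskip

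\noindent\textbf{Expected obstacle.} Nothing here is deep; the only delicate book-keeping is ensuring that the mixed expressions like $H^{-1}(I-(I-\gamma H)^m)^2$ are self-adjoint and positive semi-definite so that Loewner comparisons $\E[\xi_j \otimes \xi_j] \preccurlyeq C$ can be transferred inside the trace. This is where the commutation between $H$ and $(I-\gamma H)^m$, together with the step-size condition $\gamma H \preccurlyeq I$, plays its role, and it is also what makes the Bernoulli inequality available to reveal the $O(1/n)$ rate in the deterministic term.
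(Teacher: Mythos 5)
Your proposal is correct and follows essentially the same route as the paper's proof: unroll the linear recursion, use the martingale property to kill all cross terms, apply the geometric-series identity to the average, bound the noise summands by $\tr(CH^{-1})$ via $(I-(I-\gamma H)^{n-j})^2\preccurlyeq I$, and control the initial-condition term with $(1-(1-u)^n)^2\leqslant nu$ --- your ``Bernoulli inequality times trivial bound'' argument is exactly how the paper establishes that elementary scalar inequality in its auxiliary lemma. No gaps.
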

\begin{proof}
The proof relies on the fact that cost functions are quadratic and our recursions are thus linear, allowing to obtain $\alpha_n$ in closed form. 
The sequence $(\alpha_n)$ satisfies a linear recursion, from which we get, for all $n \geqslant 1$:
\BEAS
\alpha_n & = & ( \idm - \gamma H)^n \alpha_0 + \gamma \sum_{k=1}^n ( \idm - \gamma H)^{n-k} \xi_k,
\EEAS
which leads to the first result using classical martingale second moment expansions  (which amount to considering $\xi_i$, $i=1,\dots,n$ independent, so that the variance of the sum is the sum of variances).
Moreover, using the identity $\sum_{k=0}^{n-1} ( \idm - \gamma H)^k = \big( \idm - ( \idm - \gamma H)^n \big) \big(\gamma H \big)^{-1}$, we get:
\BEAS
\bar{\alpha}_{n-1}
& = & \frac{1}{n}\sum_{k=0}^{n-1} ( \idm - \gamma H)^k   \alpha_0
+ \frac{\gamma}{n} \sum_{k=1}^{n-1} \sum_{j=1}^k ( \idm - \gamma H)^{k  - j } \xi_j
\\
& = & \frac{1}{n} \big( \idm - ( \idm - \gamma H)^n \big) \big(\gamma H \big)^{-1} \alpha_0
+ \frac{\gamma}{n} \sum_{k=1}^{n-1} \sum_{j=1}^k ( \idm - \gamma H)^{k  - j } \xi_j
\\
& = & \frac{1}{n}\big( \idm - ( \idm - \gamma H)^n \big) \big(\gamma H \big)^{-1}  \alpha_0
+   \frac{\gamma}{n}\sum_{j=1}^{n-1}  \bigg( \sum_{k=j}^{n-1}  ( \idm - \gamma H)^{k  - j } \bigg) \xi_j
\\
& = & \frac{1}{n} \big( \idm - ( \idm - \gamma H)^n \big) \big(\gamma H \big)^{-1} \alpha_0
+   \frac{\gamma}{n}\sum_{j=1}^{n-1}  \bigg( \sum_{k=0}^{n-1-j}  ( \idm - \gamma H)^{k  } \bigg) \xi_j
\\
& = & \frac{1}{n} \big( \idm - ( \idm - \gamma H)^n \big) \big(\gamma H \big)^{-1} \alpha_0
+   \frac{\gamma}{n}\sum_{j=1}^{n-1}   \big( \idm - ( \idm - \gamma H)^{n-j} \big) \big(\gamma H \big)^{-1}  \xi_j.
\EEAS
We then get, using standard martingale square moment inequalities (which here also amount to considering $\xi_i$, $i=1,\dots,n$ independent, so that the variance of the sum is the sum of variances):
\BEAS
\E \langle \bar{\alpha}_{n-1} , H \bar{\alpha}_{n-1} \rangle
& = & \frac{1}{n \gamma} \langle
\alpha_0,   { \big[ \idm - ( \idm - \gamma H)^n \big]^2 } \big( n \gamma H\big)^{-1} \alpha_0 \rangle \\
& & \hspace*{3cm}
+  \frac{1}{n^2}
\sum_{j=1}^{n-1} \tr  \big(   {\idm - ( \idm - \gamma H)^{n-j} } \big)^2 H^{-1} C \\
 & \leqslant & \frac{1}{n \gamma} \| \alpha_0 \|^2  + \frac{1}{n} \tr H^{-1} C,
 \EEAS
because for all $u \in [0,1]$, $\frac{ (1 - (1 - u)^n )^2}{nu} \leqslant 1$ (see Lemma~\ref{lemma:easy} in \mysec{easy}), and the second term is the sum of terms which are all less than  $ \tr H^{-1} C$. 

Note that we may replace the term $ \frac{1}{n \gamma} \| \alpha_0 \|^2  $
by $\displaystyle \frac{1}{n^2 \gamma^2} \langle \alpha_0, H^{-1} \alpha_0 \rangle$, which is only interesting when $\langle \alpha_0, H^{-1} \alpha_0 \rangle$ is small.
\end{proof}

\subsection{Proof principle}
The proof relies on an expansion of $\eta_n$ and $\bar{\eta}_{n-1}$ as polynomials in $\gamma$ due to~\cite{aguech2000perturbation}. This expansion is done separately for the noise process (i.e., when assuming $\eta_0=0$) and for the noise-free process that depends only on the initial conditions (i.e., when assuming that $\sigma=0$).  The bounds may then be added. 

Indeed, we have $\eta_n = M_{1}^n \eta_0 + \gamma \sum_{k=1}^n M_{k+1}^n \xi_k$, with $M_{i}^j = ( \idm - \gamma x_j \otimes x_j ) \cdots  ( \idm - \gamma x_{i} \otimes x_{i} )$ and $M_i^{i-1} = \idm$, and thus
$\bar{\eta}_n = \frac{1}{n+1} \sum_{i=0}^{n} 
\bigg[ 
M_{1}^i\eta_0 + \gamma \sum_{k=1}^i M_{k+1}^i \xi_k
\bigg]
= \frac{1}{n+1} \sum_{i=0}^{n} 
M_{1}^i \eta_0 
+ \gamma \sum_{k=1}^n \bigg(
\sum_{i=k}^n M_{k+1}^i
\bigg) \xi_k,
$
leading to
$$\| H^{1/2} \bar{\eta}_n \|_p
\leqslant
\bigg\| \frac{1}{n+1} \sum_{i=0}^{n} 
M_{1}^j \eta_0 \bigg\|_p
+ \bigg\| \gamma \sum_{k=1}^n \bigg(
\sum_{i=k}^n M_{k+1}^i
\bigg) \xi_k \bigg\|_p,
$$
for any $p \geqslant 2$ for which it is defined: the left term depends only on initial conditions and the right term depends only on the noise process (note the similarity with bias-variance decompositions).

\subsection{Initial conditions}
\label{app:A4} 
In this section, we assume that $\xi_n$ is uniformly equal to zero, and that $\gamma R^2 \leqslant 1$.

We thus have $\eta_n = ( \idm - \gamma x_n \otimes x_n) \eta_{n-1}$ and thus
\BEAS
\| \eta_n\|^2 & = & \| \eta_{n-1}\|^2 - 2 \gamma \langle \eta_{n-1}, ( x_n \otimes x_n )
\eta_{n-1} \rangle + \gamma^2 \langle \eta_{n-1}, ( x_n \otimes x_n )^2
\eta_{n-1} \rangle .
\EEAS
By taking expectations (first given $\F_{n-1}$, then unconditionally), we get:
\BEAS
\E \| \eta_n\|^2 &  \leqslant & \E \| \eta_{n-1}\|^2 - 2 \gamma  \E \langle \eta_{n-1}, H
\eta_{n-1} \rangle + \gamma^2 R^2 \E \langle \eta_{n-1}, H
\eta_{n-1} \rangle \mbox{ using } \E \|x_n\|^2 x_n \otimes x_n \preccurlyeq R^2 H, \\
&  \leqslant & \E \| \eta_{n-1}\|^2 -   \gamma  \E \langle \eta_{n-1}, H
\eta_{n-1} \rangle \mbox{ using } \gamma R^2 \leqslant 1 ,
\EEAS
from which we obtain, by summing from $1$ to $n$ and using convexity (note that Lemma~\ref{lemma:1} could  be used directly as well):
$$
\E \langle \bar{\eta}_{n-1}, H \bar{\eta}_{n-1} \rangle
\leqslant \frac{\| \eta_0\|^2}{n \gamma}.
$$
Here, it would be interesting to explore conditions under which the initial conditions may be forgotten at a rate $O(1/n^2)$, as obtained by~\cite{gradsto} in the strongly convex case.

\subsection{Noise process}
\label{app:A5} 
In this section, we assume that $\eta_0 = \theta_0 - \theta_\ast = 0 $ and $\gamma R^2 \leqslant 1$ (which implies $\gamma H \preccurlyeq \idm$). Following~\cite{aguech2000perturbation}, we recursively define the sequences $(\eta_n^r)_{n \geqslant 0}$ for $r \geqslant 0$ (and their averaged counterparts~$\bar{\eta}_n^r$):
\BIT
\item[--] The sequence $(\eta_n^0)$ is defined as $\eta^0_0 = \eta_0  =  0 $ and for $n \geqslant 1$,
$\eta^0_n = ( \idm - \gamma H ) \eta^0_{n-1} + \gamma \xi_n$.

\item[--] The sequence $(\eta_n^r)$ is defined from $(\eta_n^{r-1})$ as $\eta_0^r=0$ and, for all $n \geqslant 1$: 
\BEQ
\label{eq:etarec}
\eta_n^r = ( \idm - \gamma H)\eta_{n-1}^r + \gamma ( H  -x_n \otimes x_n) \eta_{n-1}^{r-1}.
\EEQ
\EIT

\paragraph{Recursion for expansion.}
We now show that the sequence $\eta_n - \sum_{i=0}^r \eta_n^i$ then satisfies the following recursion, for any $r \geqslant 0$ (which is of the same type than $(\eta_n)$):
\BEQ
\label{eq:etar}
\eta_n - \sum_{i=0}^r \eta_n^i = ( \idm - \gamma x_n \otimes x_n) \bigg(
\eta_{n-1} - \sum_{i=0}^r \eta_{n-1}^r
\bigg) + \gamma( H - x_n \otimes x_n )\eta_{n-1}^r.
\EEQ

In order to prove \eq{etar} by recursion, we have, for $r=0$,
\BEAS
\eta_n - \eta^0_n & = &   ( \idm - \gamma x_n \otimes x_n )  \eta_{n-1}   - ( \idm - \gamma H) \eta_{n-1}^0  \\
& = &   ( \idm - \gamma x_n \otimes x_n )  ( \eta_{n-1}  - \eta_{n-1}^0  ) + \gamma( H - x_n \otimes x_n) \eta_{n-1}^0,\EEAS
and, to go from $r$ to $r+1$:
\BEAS
\eta_n - \sum_{i=0}^{r+1}\eta_n^i
& = & 
( \idm - \gamma x_n \otimes x_n) \bigg(
\eta_{n-1} - \sum_{i=0}^r \eta_{n-1}^i
\bigg) + \gamma( H - x_n \otimes x_n )\eta_{n-1}^r  \\
& & \hspace*{3cm}  - 
 ( \idm - \gamma H)\eta_{n-1}^{r+1} - \gamma ( H  -x_n \otimes x_n) \eta_{n-1}^{r} \\
 & = & ( \idm - \gamma x_n \otimes x_n) \bigg(
\eta_{n-1} - \sum_{i=0}^{r+1} \eta_{n-1}^i
\bigg) + \gamma ( H - x_n \otimes x_n) \eta_{n-1}^{r+1}.
\EEAS

\paragraph{Bound on covariance operators.}
 We now show that we also have a bound on the covariance operator of $\eta_{n-1}^r$, for any $r \geqslant 0$ and $n \geqslant 2$:
\BEQ
\label{eq:etanoise}
\E \big[ \eta_{n-1}^r \otimes \eta_{n-1}^r \big] \preccurlyeq \gamma^{r+1} R^{2r}\sigma^2  \idm.
\EEQ

In order to prove \eq{etanoise} by recursion, we get for $r=0$:
\BEAS
\E \big[  \eta_{n-1}^0 \otimes \eta_{n-1}^0 \big] & \preccurlyeq
&  \gamma^2 \sigma^2 \sum_{k=1}^{n-1} ( \idm - \gamma H)^{2n-2-2k} H \\
 & \preccurlyeq
&   \gamma^2 \sigma^2 \big( { \idm -  ( \idm - \gamma H)^{2n-2} }\big)
\big({ \idm - ( \idm - \gamma H)^2}\big)^{-1} H \\
 & =
&    \gamma^2 \sigma^2 \big( { \idm -  ( \idm - \gamma H)^{2n-2} }\big)
\big({ 2 \gamma H - \gamma^2 H^2 }\big)^{-1} H
\\
& \preccurlyeq & 
\gamma^2 \sigma^2 \big( { \idm -  ( \idm - \gamma H)^{2n-2} }\big)
\big({   \gamma H  }\big)^{-1} H
\preccurlyeq \gamma \sigma^2  \idm .\EEAS

In order to go from $r$ to $r+1$, we have, using  Lemma~\ref{lemma:2}
and the fact that $\eta_{k-1}^r$ and $x_k$ are independent:
 
\BEAS
& & \E \big[ \eta_{n-1}^{r+1} \otimes \eta_{n-1}^{r+1} \big]   \\
& \preccurlyeq
& \gamma^2  \E \bigg[\sum_{k=1}^{n-1} ( \idm - \gamma H)^{n-1-k}
( H - x_k \otimes x_k)
\E \big[ \eta_{k-1}^r \otimes \eta_{k-1}^r\big]
( H - x_k \otimes x_k)
 ( \idm - \gamma H)^{n-1-k} \bigg] \\
  & \preccurlyeq
& \gamma^{r+3} R^{2r}  \sigma^2   \E \bigg[ \sum_{k=1}^{n-1} ( \idm - \gamma H)^{n-1-k}
( H - x_k \otimes x_k)^2
 ( \idm - \gamma H)^{n-1-k}  \bigg]  \mbox{ using the result for } r, \\
  & \preccurlyeq
&\gamma^{r+3} R^{2r+2}  \sigma^2    \sum_{k=1}^{n-1} ( \idm - \gamma H)^{2n-2-2k} H
\mbox{ using } \E ( x_k \otimes x_k - H)^2 \preccurlyeq \E \| x_k \|^2 x_k \otimes x_k \preccurlyeq R^2 H,
\\
  & \preccurlyeq
& \gamma^{r+2} R^{2r+2}  \sigma^2    \idm
.
\EEAS

\paragraph{Putting things together.}
We may apply Lemma~\ref{lemma:1} to the sequence $\big( \eta_n - \sum_{i=0}^r \eta_n^i \big)$, to get
\BEAS
\E \bigg\langle  \bar{\eta}_{n-1} - \sum_{i=0}^r \bar{\eta}_{n-1}^i , H \big(\bar{\eta}_{n-1} - \sum_{i=0}^r \bar{\eta}_{n-1}^i \big) \bigg\rangle
& \leqslant &  \frac{1}{1-\gamma R^2}  
\frac{\gamma}{n} \sum_{k=2}^n \E \| ( H - x_k \otimes x_k) \eta_{k-1}^r \|^2 
\\
& \leqslant &  \frac{1}{1-\gamma R^2}  \gamma^{r+2} \sigma^2 R^{2r+4}  .
\EEAS

We may now apply Lemma~\ref{lemma:2} to \eq{etarec}, to get, with a noise process $\xi_n^r = ( H - x_n \otimes x_n) \eta_{n-1}^{r-1}$ which is such that
$$
\E \big[ \xi_n^r  \otimes \xi_n^r  \big] \preccurlyeq \gamma^{r} R^{2r}  \sigma^2  H,
$$
\BEAS
\E \langle \bar{\eta}_{n-1}^r, H \bar{\eta}_{n-1}^r
\rangle & \leqslant &  \frac{1}{n} \gamma^{r} R^{2r}  d \sigma^2     .
\EEAS

We thus get, using Minkowski's inequality (i.e., triangle inequality for the norms $\| \cdot \|_p$):
\BEAS
\big(\E \langle \bar{\eta}_{n-1}, H  \bar{\eta}_{n-1} \rangle \big)^{1/2}
& \leqslant & \big(\frac{1}{1-\gamma R^2}  \gamma^{r+2} \sigma^2 R^{2r+4}
\big)^{1/2}   + \frac{\sigma \sqrt{d}}{\sqrt{n}}\sum_{i=0}^r  \gamma^{i/2} R^i
\\
& \leqslant &  \big(\frac{1}{1-\gamma R^2}  \gamma^{r+2} \sigma^2 R^{2r+4}
\big)^{1/2}      +    \frac{\sigma \sqrt{d}}{\sqrt{n}}
  \frac{1 - ( \sqrt{ \gamma R^2 } )^{r+1}}{ 1- \sqrt{ \gamma R^2 }}.\EEAS

This implies that for any $ \gamma R^2 < 1$, we obtain, by letting $r$ tend to $+\infty$:
$$
\big( \E \langle \bar{\eta}_{n-1}, H  \bar{\eta}_{n-1} \rangle \big)^{1/2}
\leqslant   \frac{\sigma \sqrt{d}}{\sqrt{n}}
\frac{1  }{ 1- \sqrt{ \gamma R^2 }}
.
$$

\subsection{Final bound}
We can now take results from Appendices~\ref{app:A4} and \ref{app:A5}, to get
$$\big( \E \langle \bar{\eta}_{n-1}, H  \bar{\eta}_{n-1} \rangle \big)^{1/2}
\leqslant   \frac{\sigma \sqrt{d}}{\sqrt{n}}
\frac{1  }{ 1- \sqrt{ \gamma R^2 }}
+ \frac{\| \eta_0\|^2}{n \gamma},
$$
which leads to the desired result.

\subsection{Proof of Lemma~\ref{lemma:easy} }
\label{sec:easy}
In this section, we state and prove a simple lemma.
\begin{lemma}
\label{lemma:easy} 
For any $u \in [0,1]$ and $n >0$, $ { (1 - (1 - u)^n )^2}{} \leqslant nu$.
\end{lemma}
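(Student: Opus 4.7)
The plan is a two-line argument combining Bernoulli's inequality with the observation that squaring shrinks numbers in $[0,1]$.

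First, I would check that $a := 1 - (1-u)^n$ lies in $[0,1]$: since $u \in [0,1]$ we have $1-u \in [0,1]$, hence $(1-u)^n \in [0,1]$ (here $n$ is the positive integer iteration index used throughout the paper), and therefore $a \in [0,1]$. This immediately yields $a^2 \leqslant a$.

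Second, I would invoke Bernoulli's inequality $(1-u)^n \geqslant 1 - nu$, valid for $u \in [0,1]$ and $n \geqslant 1$. This rearranges to $a = 1 - (1-u)^n \leqslant nu$.

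Chaining the two bounds gives $(1-(1-u)^n)^2 = a^2 \leqslant a \leqslant nu$, which is the claim. There is no real obstacle here; the only subtlety worth noting is that Bernoulli's inequality in the form above requires $n \geqslant 1$, which holds in the application of Lemma~\ref{lemma:2} where $n$ is a positive integer iteration count.
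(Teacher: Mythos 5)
Your proof is correct and follows essentially the same route as the paper: both arguments combine the bound $1-(1-u)^n \leqslant 1$ with the bound $1-(1-u)^n \leqslant nu$ (which you cite as Bernoulli's inequality and the paper derives by integrating $n(1-u)^{n-1} \leqslant n$), then multiply them. Your remark that the second bound needs $n \geqslant 1$ is a fair observation that applies equally to the paper's own argument, and it is harmless since the lemma is only invoked with integer $n \geqslant 1$.
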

\begin{proof}
Since $u \in [0,1]$, we have, $1 - (1-u)^n \leqslant 1$. Moreover, 
$n(1-u)^{n-1} \leqslant n$, and by integrating between $0$ and $u$, we get $1-(1-u)^n \leqslant nu$. By multiplying the two previous inequalities, we get the desired result.
\end{proof}

 \section{Proof of Theorem~\ref{theo:lmsp}}

Throughout the proof, we use the notation for $X \in \H$ a random vector, and $p$ any \emph{real} number greater than $1$, $\| X\|_p = \big( \E \| X\|^p \big)^{1/p}$. We first recall the Burkholder-Rosenthal-Pinelis (BRP) inequality~\cite[Theorem 4.1]{pinelis}. Let $p \in \rb$, $p \geqslant 2$ and $(\mathcal{F}_n)_{n \geqslant 0}$ be a sequence of increasing $\sigma$-fields, and $(x_n)_{n \geqslant 1}$ an adapted sequence of elements of $\H$, such that
$\E \big[ x_n | \F_{n-1} \big] = 0$, and $\| x_n\|_p$ is finite. Then,
\BEA
\label{eq:brp}
\bigg\| \sup_{k\in\{1,\dots,n\}} \bigg\| \sum_{j=1}^k x_j  \bigg \| \bigg\|_p
& \leqslant &  \sqrt{p} \bigg\|
\sum_{k=1}^n \E \big[ \| x_k\|^2 | \F_{k-1} \big]
\bigg\|_{p/2}^{1/2}
+ p \bigg\| \sup_{k\in\{1,\dots,n\}} \| x_k \| \bigg\|_p \\
\nonumber & \leqslant &  \sqrt{p} \bigg\|
\sum_{k=1}^n \E \big[ \| x_k\|^2 | \F_{k-1} \big]
\bigg\|_{p/2}^{1/2}
+ p \bigg\| \sup_{k\in\{1,\dots,n\}} \| x_k \|^2 \bigg\|_{p/2}^{1/2}.
\EEA
 
 We use the same notations than the proof of Theorem~\ref{theo:lms}, and the same proof principle: (a) splitting the contributions of the initial conditions and the noise, (b) providing a direct argument for the initial condition, and (c) performing an expansion for the noise contribution.

\paragraph{Consequences of assumptions.}
Note that by Cauchy-Schwarz inequality, assumption \textbf{(A7)} implies for all $z,t \in \H$, $\E  \langle z, x_n \rangle^2 \langle t, x_n \rangle^2 \leqslant \kappa \langle z, H z \rangle \langle t, H t\rangle$. It in turn implies that for all positive semi-definite self-adjoint operators $M,N$,
$\E  \langle x_n,M  x_n \rangle \langle x_n,N  x_n \rangle \leqslant \kappa  \tr ( MH) \tr (NH)$.

\subsection{Contribution of initial conditions}
\label{sec:B1}

When the noise is assumed to be zero, we  have $\eta_n = ( \idm - \gamma x_n \otimes x_n) \eta_{n-1}$  almost surely, and thus, since $ 0 \preccurlyeq \gamma x_n \otimes x_n \preccurlyeq \idm$,  $\|\eta_n\| \leqslant \|\eta_0\|$ almost surely, and
\BEAS
\| \eta_n\|^2 & = & \| \eta_{n-1}\|^2 - 2 \gamma \langle \eta_{n-1}, ( x_n \otimes x_n )
\eta_{n-1} \rangle + \gamma^2 \langle \eta_{n-1}, ( x_n \otimes x_n )^2
\eta_{n-1} \rangle \\
 & \leqslant & \| \eta_{n-1}\|^2 - 2 \gamma \langle \eta_{n-1}, ( x_n \otimes x_n )
\eta_{n-1} \rangle + \gamma \langle \eta_{n-1}, ( x_n \otimes x_n )
\eta_{n-1} \rangle  \\
& & \hspace*{6cm}  \mbox{ using } \|x_n\|^2 \leqslant R^2 \mbox{ and } \gamma R^2 \leqslant 1, \\
& = & \| \eta_{n-1}\|^2 -  \gamma \langle \eta_{n-1}, ( x_n \otimes x_n )
\eta_{n-1} \rangle ,
\EEAS
which we may write as
$$
\| \eta_n\|^2 - \| \eta_{n-1}\|^2 + \gamma  
\langle \eta_{n-1}, H 
\eta_{n-1} \rangle \leqslant   \gamma \langle \eta_{n-1}, ( H -  x_n \otimes x_n )
\eta_{n-1} \rangle \defeq M_n.
$$
We thus have:
$$
A_n \defeq \|\eta_n\|^2 + \gamma \sum_{k=1}^n \langle \eta_{k-1}, H
\eta_{k-1} \rangle \leqslant \| \eta_0\|^2 + \sum_{k=1}^n M_k.
$$
Note that we have
\BEAS
\E [ M_n^2 | \F_{n-1} ] & \leqslant &  \E \big[
\gamma^2 \langle \eta_{n-1},  ( x_n \otimes x_n )\eta_{n-1} \rangle ^2  | \F_{n-1} \big]
\leqslant \gamma^2R^2 \| \eta_0\|^2  \langle \eta_{n-1},   H \eta_{n-1} \rangle,
 \EEAS
 and $|M_n| \leqslant \gamma \| \eta_0\|^2 R^2  $.
We may now apply the Burkholder-Rosenthal-Pinelis inequality in \eq{brp}, to get:
\BEAS
\big\| A_n  \big\|_p
& \leqslant & \| \eta_0\|^2 + \sqrt{p} \bigg\|
\sum_{k=1}^n \E \big[ M_k^2 | \F_{k-1} \big]
\bigg\|_{p/2}^{1/2}
+ p \bigg\| \sup_{k\in\{1,\dots,n\}} |M_k| \bigg\|_p \\
& \leqslant & \| \eta_0\|^2 + \gamma \sqrt{p} \bigg\| \| \eta_0 \|^2 R^2 
\sum_{k=1}^n  \langle \eta_{k-1},  H \eta_{k-1} \rangle
\bigg\|_{p/2}^{1/2}
+ p \gamma R^2  \| \eta_0 \|^2 \\
& \leqslant & \| \eta_0\|^2 + \gamma^{1/2} R \sqrt{p} \| \eta_0 \|  \big\|A_n 
\big\|_{p/2}^{1/2}
+ p \gamma R^2  \| \eta_0 \|^2 \\
& \leqslant & \| \eta_0\|^2 + \gamma^{1/2} R \sqrt{p} \| \eta_0 \|  \big\|A_n 
\big\|_{p}^{1/2}
+ p \gamma R^2  \| \eta_0 \|^2.
\EEAS
We have used above that (a) $\sum_{k=1}^n  \langle \eta_{k-1},  H \eta_{k-1} \rangle \leqslant \frac{A_n}{\gamma}$ and that (b) $\big\|A_n 
\big\|_{p/2} \leqslant \big\|A_n 
\big\|_{p}$.
This leads to
 $$ \bigg( \big\| A_n  \big\|_p^{1/2} - \frac{1}{2} \gamma^{1/2} R \sqrt{p} \| \eta_0 \| 
 \bigg)^2 \leqslant \| \eta_0\|^2 + \frac{5p}{4} \gamma R^2  \| \eta_0 \|^2,
$$
which leads to
$$  \big\| A_n  \big\|_p^{1/2} - \frac{1}{2} \gamma^{1/2} R \sqrt{p} \| \eta_0 \| 
  = \| \eta_0\|  \sqrt{ 1  + \frac{5p \gamma R^2}{4} },
$$
\BEAS
\| A_n  \big\|_p & \leqslant &  \| \eta_0 \|^2 \bigg( 2 + \frac{5p \gamma R^2}{2} + \frac{p \gamma R^2}{2}  \bigg) \leqslant \| \eta_0 \|^2 ( 2 + 3 p \gamma R^2 ).
\EEAS
Finally, we obtain, for any $p\geqslant 2$
$$
\big\| \big\langle \bar{\eta}_{n-1}, H \bar{\eta}_{n-1} \big\rangle 
\big\|_p \leqslant \frac{\| \eta_0 \|^2}{n \gamma } ( 2 + 3 p \gamma R^2 ),
$$
i.e., by a change of variable $p \rightarrow \frac{p}{2}$, for any $p \geqslant 4$, we get
$$
\big\|H^{1/2} \bar{\eta}_{n-1}
\big\|_{p} \leqslant \big\| \langle  \bar{\eta}_{n-1}, H \bar{\eta}_{n-1} \rangle \|_{p/2}^{1/2} =  \frac{\| \eta_0 \|}{\sqrt{n \gamma} } \sqrt{ 2 + \frac{3 p}{2} \gamma R^2}.
$$

By using monotonicity of norms, we get, for any $p \in  [2,4]$:
$$
\big\|H^{1/2} \bar{\eta}_{n-1} 
\big\|_{p} \leqslant  \big\|H^{1/2} \bar{\eta}_{n-1} 
\big\|_{4} \leqslant  \frac{\| \eta_0 \|}{\sqrt{n \gamma} } \sqrt{ 2 + 6 \gamma R^2}
\leqslant \frac{\| \eta_0 \|}{\sqrt{n \gamma} } \sqrt{ 2 + 3 p \gamma R^2},
$$
which  is also valid for $p>4$.

Note that the constants in the bound above could be improved by using a proof by recursion.

\subsection{Contribution of the noise}
\label{sec:B2}
We follow the same proof technique than for Theorem~\ref{theo:lms} and consider the expansion based on the sequences $(\eta_{n}^r)_{n}$, for $r \geqslant 0$. We need (a) bounds on $\eta_n^0$, (b) a recursion on the magnitude (in $\| \cdot \|_p$ norm) of $\eta_n^r$ and (c) a control of the error made in the expansions.

\paragraph{Bound on $\bar{\eta}_n^0$.}
We start by a lemma similar to Lemma~\ref{lemma:2} but for all moments.  This will show a bound for the sequence $\bar{\eta}_n^0$. 

\begin{lemma}
 \label{lemma:2moments_weak}
 Assume $ \xi_n \in \H  $ is $\mathcal{F}_n$-measurable for a sequence of increasing $\sigma$-fields $(\mathcal{F}_n)$, $n \geqslant 1$.
Assume $\E [ \xi_n | \F_{n-1} ] = 0$,  $\E \big[ \| \xi_n\|^2 | \F_{n-1} \big]$ is finite. 
Assume moreover that for all $n \geqslant 1$, $ \E \big[ \xi_n \otimes \xi_n | \F_{n-1} \big]\preccurlyeq C$ and $\|  \xi_n\|_p   \leqslant \tau  R $ almost surely
for some $p \geqslant 2$.

Consider the recursion $\alpha_n = \big( \idm - \gamma H \big)  \alpha_{n-1}  + \gamma \xi_n$, with $\alpha_0=0$ and $\gamma H \preccurlyeq  \idm$.  Let $p \in \rb$, $p \geqslant 2$. Then:
\BEQ
\| H^{1/2}\bar{\alpha}_{n-1} \|_p  \leqslant 
   \frac{\sqrt{p}}{\sqrt{n}} \sqrt{  \tr C H^{-1} }
+  \frac{ \sqrt{\gamma} p R \tau }{\sqrt{n}}.
\EEQ

\end{lemma}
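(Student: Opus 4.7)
The plan is to follow the skeleton of the proof of Lemma~\ref{lemma:2} but replace second-moment estimates by $L^p$-moment estimates via the Burkholder--Rosenthal--Pinelis (BRP) inequality \eq{brp}. Since the recursion is linear and $\alpha_0=0$, I unroll and average it exactly as in Lemma~\ref{lemma:2} (using the geometric-sum identity $\sum_{l=0}^{N-1}(\idm-\gamma H)^l = (\idm - (\idm-\gamma H)^N)(\gamma H)^{-1}$ and $\gamma H\preccurlyeq\idm$) to obtain the closed form
\BEAS
H^{1/2}\bar{\alpha}_{n-1} = \sum_{j=1}^{n-1} u_j,
\qquad u_j \defeq \frac{1}{n}\bigl(\idm - (\idm-\gamma H)^{n-j}\bigr) H^{-1/2}\xi_j,
\EEAS
where $(u_j)$ is an $\F_j$-martingale difference sequence. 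Applying BRP to $\sum_j u_j$ reduces the problem to controlling the conditional-variance sum $\sum_j \E[\|u_j\|^2\mid\F_{j-1}]$ and the maximal summand $\sup_j\|u_j\|$.

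For the conditional-variance sum, monotonicity of the trace together with $\E[\xi_j\otimes\xi_j\mid\F_{j-1}]\preccurlyeq C$ gives $\E[\|u_j\|^2\mid\F_{j-1}] \leq n^{-2}\tr\bigl[(\idm-(\idm-\gamma H)^{n-j})^2 H^{-1}C\bigr]$; summing over $j$ and using $(\idm-(\idm-\gamma H)^{n-j})^2 \preccurlyeq \idm$ (valid since $\gamma H\preccurlyeq\idm$) yields the deterministic bound $n^{-1}\tr(CH^{-1})$. Multiplying its $\|\cdot\|_{p/2}^{1/2}$-norm by the factor $\sqrt p$ from BRP produces the first term of the announced inequality. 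For the maximal summand, I invoke Lemma~\ref{lemma:easy} spectrally: for each eigenvalue $\lambda$ of $H$ (with $\gamma\lambda\leq 1$), the corresponding eigenvalue of $(\idm-(\idm-\gamma H)^{n-j})H^{-1/2}$ is $(1-(1-\gamma\lambda)^{n-j})/\sqrt{\lambda}\leq\sqrt{(n-j)\gamma}\leq\sqrt{n\gamma}$, so $\|u_j\|\leq\sqrt{\gamma/n}\,\|\xi_j\|$ pointwise. Reading the hypothesis $\|\xi_n\|_p\leq\tau R$ almost surely as the almost-sure bound $\|\xi_n\|\leq\tau R$ then gives $\|\sup_j\|u_j\|\|_p\leq\sqrt{\gamma/n}\,\tau R$, and multiplication by $p$ from BRP supplies the second announced term.

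The main subtlety is the treatment of the tail hypothesis: a pure $L^p$-bound on each $\xi_n$ would inflate the maximal-summand estimate through $\|\sup_{j\leq n}\|\xi_j\|\|_p\leq n^{1/p}\tau R$ and destroy the $n^{-1/2}$ scaling of the second term, so the clean form stated in the conclusion really requires the almost-sure reading of the hypothesis; that is the one place where the exact strength of the moment assumption is exploited. Everything else -- unrolling the recursion, applying BRP, and bounding the two BRP contributions -- is routine once the closed form and the spectral inequality of Lemma~\ref{lemma:easy} are in hand.
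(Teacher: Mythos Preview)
Your proof is essentially the paper's: unroll the linear recursion to obtain the martingale-difference representation $H^{1/2}\bar\alpha_{n-1}=\sum_j u_j$, apply the BRP inequality~\eq{brp}, bound the conditional-variance sum via $\E[\xi_j\otimes\xi_j\mid\F_{j-1}]\preccurlyeq C$ together with $(\idm-(\idm-\gamma H)^{n-j})^2\preccurlyeq\idm$, and control the maximal summand through the spectral estimate $(1-(1-\gamma\lambda)^{n-j})/\sqrt\lambda\le\sqrt{(n-j)\gamma}$ of Lemma~\ref{lemma:easy}. The paper carries out exactly these four steps, with the same operator-norm bound on $u_j$.

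Your attention to the tail hypothesis is well placed and in fact sharper than the paper. The paper bounds each $\|\beta_j\|_p$ individually from the $L^p$ assumption on $\xi_j$ and then plugs this into the BRP maximal term as if it controlled $\big\|\sup_j\|\beta_j\|\big\|_p$; under a pure $L^p$ bound that passage would cost an extra factor $n^{1/p}$. Reading the somewhat awkward phrase ``$\|\xi_n\|_p\le\tau R$ almost surely'' as the pointwise bound $\|\xi_n\|\le\tau R$ a.s., as you do, is exactly what delivers the stated constant cleanly, and is the most coherent interpretation of the ``almost surely'' qualifier.
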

\begin{proof} 
We have, from the proof of Lemma~\ref{lemma:2}:
\BEAS
\bar{\alpha}_{n-1}& = &    \frac{\gamma}{n}\sum_{j=1}^{n-1}  \big(   {\idm - ( \idm - \gamma H)^{n-j} }\big) \big({\gamma H}\big)^{-1} \xi_j \\
\| H^{1/2}\bar{\alpha}_{n-1} \|_p & \leqslant &  \frac{\gamma}{n} \bigg\| \sum_{j=1}^{n-1}  \big(   {\idm - ( \idm - \gamma H)^{n-j} }\big) \big({\gamma H}\big)^{-1}  H^{1/2}\xi_j  \bigg\|_p \\
& \leqslant &    \frac{\gamma}{n} \bigg\| \sum_{j=1}^{n-1}  \beta_j \bigg\|_p,
\EEAS
with $\beta_j =  \big(   {\idm - ( \idm - \gamma H)^{n-j} }\big) \big({\gamma H}\big)^{-1} H^{1/2} \xi_j $. We have
\BEAS
\sum_{j=1}^{n-1} \E [ \| \beta_j \|^2  | \F_{j-1} ]
& =  &
\sum_{j=1}^{n-1} \tr \E \big[ \xi_j \otimes \xi_j | \F_{j-1} \big] H \bigg(  \frac{\idm - ( \idm - \gamma H)^{n-j} }{\gamma H}\bigg) ^2 \\
& \leqslant & \gamma^{-2} \sum_{j=1}^{n-1}   \E \big[ \langle \xi_j, H^{-1} \xi_j \rangle | \F_{j-1} \big] \leqslant n \gamma^{-2} \tr CH^{-1}   ,
 \EEAS
and 
\BEAS 
\| \beta_j \|_p
& \leqslant &   \lambda_{\max} \big[   \big(   {\idm - ( \idm - \gamma H)^{n-j} }\big) \big({\gamma H}\big)^{-1} H^{1/2}  \big] 
\|  \xi_j \|_p  \\
& \leqslant &   \gamma^{-1/2} \|  \xi_j \|_p  \max_{ u \in (0,1] } \frac{ 1 - ( 1-u)^{n-j} }{u^{1/2}}  \\
& \leqslant &   \frac{ \sqrt{n-j}}{\sqrt{\gamma}} \|  \xi_j \|_p
\leqslant  \frac{ \tau R \sqrt{n}}{\sqrt{\gamma}} ,
\EEAS
using Lemma~\ref{lemma:easy} in \mysec{easy}, and assumption \textbf{(A7)}. 

Using  Burkholder-Rosenthal-Pinelis inequality in \eq{brp}, we  then obtain
$$
\bigg\| \sup_{k \in \{1,\dots,n-1\}} \| H^{1/2}\bar{\alpha}_{k} \|  \bigg\|_p   \leqslant 
    \frac{\sqrt{p}}{\sqrt{n} }  \sqrt{ \tr CH^{-1} } +  p \frac{ \sqrt{\gamma }}{\sqrt{n}}\sigma R,$$
leading to the desired result.
\end{proof}

 \paragraph{Bounds on $ {\eta}_n^0$.}
Following the same proof technique as above, we have
$$\eta_n^0 =   \gamma \sum_{j=1}^n
( \idm - \gamma H)^{n-j}  \xi_{j},$$
from which we get, for any positive semidefinite operator $M$ such that $\tr M = 1$, using BRP's inequality:
\BEAS
\bigg\| \sup_{k\in\{1,\dots,n\}} \big\| M^{1/2} \eta_{k}^0 \| \bigg\|_p
& \leqslant &  \sqrt{p} \gamma \sigma \bigg\|
\sum_{j=1}^n   \tr H   ( \idm - \gamma H)^{n-j} M  ( \idm - \gamma H)^{n-j} 
\bigg\|_{p/2}^{1/2}
+ p \gamma  \tau R \\
& \leqslant &  \sqrt{p} \gamma \sigma \bigg\|
\frac{1}{\gamma} \tr M
\bigg\|_{p/2}^{1/2}
+ p \gamma  \tau R
\\
& \leqslant &  \frac{1}{R}  \sqrt{ p \gamma R^2}
( \sigma + \tau  \sqrt{ p \gamma R^2}),
\EEAS
leading to
\BEQ
\label{eq:M}
\sup_{ \tr M = 1} \bigg\| \sup_{k\in\{1,\dots,n\}} \big\| M^{1/2} \eta_{k}^0 \| \bigg\|_p
\leqslant  \frac{1}{R}  \sqrt{ p \gamma R^2}
( \sigma + \tau  \sqrt{ p \gamma R^2}).
\EEQ

\paragraph{Recursion on bounds on $\eta_n^r$.} We introduce the following quantity to control
the deviations of~$\eta_n^r$:
$$A_{r} = \sup_{\tr M =1} \bigg\| \sup_{k\in\{1,\dots,n\}} \big\| M^{1/2} \eta_{k}^r \big\| \bigg\|_p.$$
We have from \eq{M}, $A_0 \leqslant\frac{1}{R}  \sqrt{ p \gamma R^2}
( \sigma + \tau  \sqrt{ p \gamma R^2})$.

Since $\eta_n^r = ( \idm - \gamma H)\eta_{n-1}^r + \gamma ( H  -x_n \otimes x_n) \eta_{n-1}^{r-1}$, for all $n \geqslant 1$, we have the closed form expression
$$
\eta_{n}^r  = \gamma \sum_{k=2}^n ( \idm - \gamma H)^{n-k}  ( H  -x_k \otimes x_k) \eta_{k-1}^{r-1},
$$
and we may use BRP's inequality in \eq{brp} to get, for any $M$ such that $\tr M = 1$:
\BEAS
A_{r}
& \leqslant & B + C ,
\EEAS
with 
\BEAS
B & = &   \sqrt{p}\gamma
\bigg\|
\sum_{k=2}^n \langle  \eta_{k-1}^{r-1},
\E \big[ 
 ( H  -x_k \otimes x_k)    ( \idm - \gamma H)^{n-k} M   ( \idm - \gamma H)^{n-k} ( H  -x_k \otimes x_k) \big]
\eta_{k-1}^{r-1} \rangle 
\bigg\|_{p/2}^{1/2} \\
& \leqslant & \sqrt{p}\gamma
\bigg\|
\sum_{k=2}^n \langle  \eta_{k-1}^{r-1},\E \big[
 ( x_k \otimes x_k)    ( \idm - \gamma H)^{n-k} M   ( \idm - \gamma H)^{n-k}   (x_k \otimes x_k) \big]
\eta_{k-1}^{r-1} \rangle 
\bigg\|_{p/2}^{1/2}\\
&& \mbox{ using } \E \tr N ( H  -x_k \otimes x_k) M ( H  -x_k \otimes x_k) 
\leqslant \E \tr N  (x_k \otimes x_k)    H   (x_k \otimes x_k ), \\
& \leqslant & \sqrt{p}\gamma
\bigg\|
\sum_{k=2}^n \kappa \langle  \eta_{k-1}^{r-1}, H \eta_{k-1}^{r-1} \rangle 
\tr H   ( \idm - \gamma H)^{n-k} M   ( \idm - \gamma H)^{n-k}  \bigg\|_{p/2}^{1/2}
\\
& & \mbox{ using the kurtosis property,} \\
& \leqslant & \sqrt{p}\gamma \sqrt{\kappa} A_{r-1}
\bigg( \sum_{k=2}^n  
\tr H   ( \idm - \gamma H)^{n-k} M   ( \idm - \gamma H)^{n-k} 
\bigg)^{1/2} \\
&&\mbox{ using }  \langle  \eta_{k-1}^{r-1}, H \eta_{k-1}^{r-1} \rangle 
\leqslant \sup_{ k \in \{1,\dots,n\}} \langle  \eta_{k-1}^{r-1}, H \eta_{k-1}^{r-1} \rangle ,\\
& \leqslant & \sqrt{p}\gamma \sqrt{\kappa}  RA_{r-1} \bigg( \frac{1}{\gamma} \tr M\bigg)^{1/2}  =  \sqrt{p \gamma R^2   \kappa} A_{r-1},
\EEAS
and
\BEAS
C & = & p \gamma \bigg\|
\sup_{k \in \{2,\dots, n\}}
\langle  \eta_{k-1}^{r-1},
  ( H  -x_k \otimes x_k)    ( \idm - \gamma H)^{n-k} M   ( \idm - \gamma H)^{n-k}   ( H  -x_k \otimes x_k)  
\eta_{k-1}^{r-1} \rangle 
\bigg\|_{p/2}^{1/2} \\
& \leqslant & 
p \gamma \bigg\|
\sup_{k \in \{2,\dots, n\}}
\langle  \eta_{k-1}^{r-1},
  ( x_k \otimes x_k )    ( \idm - \gamma H)^{n-k} M   ( \idm - \gamma H)^{n-k}   ( x_k \otimes x_k)
\eta_{k-1}^{r-1} \rangle 
\bigg\|_{p/2}^{1/2} \\
& & 
+ p \gamma \bigg\|
\sup_{k \in \{2,\dots, n\}}
\langle  \eta_{k-1}^{r-1},
 H    ( \idm - \gamma H)^{n-k} M   ( \idm - \gamma H)^{n-k}    H
\eta_{k-1}^{r-1} \rangle 
\bigg\|_{p/2}^{1/2} \\
&&\mbox{ using Minkowski's inequality},
\\
& \leqslant & 
+ p \gamma \bigg(
\sum_{k  = 2}^n
\E \big[
\langle  \eta_{k-1}^{r-1},
  ( x_k \otimes x_k)    ( \idm - \gamma H)^{n-k} M   ( \idm - \gamma H)^{n-k}   (  x_k \otimes x_k)  
\eta_{k-1}^{r-1} \rangle^{p/2} \big]
\bigg)^{1/p}
\\
& & + p \gamma R^2 A_{r-1}\\
& & \mbox{  bounding the supremum by a sum, and using }
 H    ( \idm - \gamma H)^{n-k} M   ( \idm - \gamma H)^{n-k}    H \preccurlyeq H^2,
 \\
& \leqslant &  p \gamma \bigg(
\sum_{k  = 2}^n
\E \big[ \langle  \eta_{k-1}^{r-1}, x_k \rangle^p
\langle  x_k,   ( \idm - \gamma H)^{n-k} M   ( \idm - \gamma H)^{n-k}   x_k  \rangle^{p/2} \big]
\bigg)^{1/p} + p \gamma R^2  A_{r-1}
\\
& \leqslant & p \gamma  RA_{r-1} \bigg(
\sum_{k  = 2}^n
\E \big[  
\langle  x_k,   ( \idm - \gamma H)^{n-k} M   ( \idm - \gamma H)^{n-k}     x_k  \rangle^{p/2} \big]
\bigg)^{1/p} + p \gamma R^2  A_{r-1}  \\
&& \mbox{ by conditioning with respect to } x_k, \\
& \leqslant &
  p \gamma RA_{r-1}  \bigg(
\sum_{k  = 2}^n
\E \big[  
\langle  x_k,   ( \idm - \gamma H)^{n-k} M   ( \idm - \gamma H)^{n-k}    x_k  \rangle (R^2 \tr M)^{p/2-1} \big]
\bigg)^{1/p} + p \gamma R^2   A_{r-1}  \\
\\
&&\mbox{bounding } \langle  x_k,   ( \idm - \gamma H)^{n-k} M   ( \idm - \gamma H)^{n-k}    x_k  \rangle  \mbox{ by }  R^2 \tr M,\\
& \leqslant &  p \gamma R^2   A_{r-1}   
+ p \gamma RA_{r-1}   \bigg(
(R^2)^{p/2-1}
\sum_{k  = 2}^n
\tr H   ( \idm - \gamma H)^{n-k} M   ( \idm - \gamma H)^{n-k} 
\bigg)^{1/p}
\\
& \leqslant &   p \gamma R^2   A_{r-1}  + p \gamma RA_{r-1}   \bigg(
(R^2)^{p/2-1} R^2 \gamma^{-1}
\bigg)^{1/p} =    p \gamma R^2   A_{r-1}  + p (\gamma R^2)^{1-1/p} RA_{r-1} .\EEAS

This implies that
$ A_r  
\leqslant A_0 \big( \sqrt{p \gamma R^2   \kappa}  + p \gamma R^2   + p (\gamma R^2)^{1-1/p} \big)^r $, which in turn implies, from \eq{M},
\BEA
\label{eq:AA}
 A_r  
\leqslant  \sqrt{ p \gamma R^2} ( \sigma + \tau  \sqrt{ p \gamma R^2}  )
 \big( \sqrt{p \gamma R^2   \kappa}  + p \gamma R^2   + p (\gamma R^2)^{1-1/p} \big)^r .
\EEA

The condition on $\gamma$ will come from the requirement that $\sqrt{p \gamma R^2   \kappa}  + p \gamma R^2   + p (\gamma R^2)^{1-1/p} < 1$.

\paragraph{Bound on $\| H^{1/2} \bar{\eta}_{n-1}^r \|$.}

We have the closed-form expression:
\BEAS
\bar{\eta}_{n-1}^{r} & = &  \frac{\gamma}{n} \sum_{j=2}^{n-1} \frac{\idm - ( \idm - \gamma H)^{n-j} }{\gamma H} ( H - x_j \otimes x_j) \eta_{j-1}^{r-1},
\EEAS
 leading to, using BRP's inequality in \eq{brp}, similar arguments than in the previous bounds,  $\big(  {\idm - ( \idm - \gamma H)^{n-j} } \big)^2 \big({\gamma H} \big)^{-2} H \preccurlyeq  H^{-1} \gamma$ and 
 $\big(  {\idm - ( \idm - \gamma H)^{n-j} } \big)^2 \big({\gamma H} \big)^{-2} H \preccurlyeq  \frac{n}{\gamma} \idm$:
\BEAS
& & \| H^{1/2} \bar{\eta}_{n-1}^{r} \|_p \\
& \leqslant &  \frac{\gamma \sqrt{p}}{n}
\bigg\|
\sum_{j=2}^{n-1} \langle \eta_{j-1}^{r-1}, \E \bigg[  ( H - x_j \otimes x_j) 
\big(  {\idm - ( \idm - \gamma H)^{n-j} } \big)^2 \big({\gamma H} \big)^{-2}  H
 ( H - x_j \otimes x_j) \bigg] \eta_{j-1}^{r-1} \rangle
\bigg\|_{p/2}^{1/2} \\
& & + \frac{\gamma  {p}}{n} \bigg\|
\sup_{ j \in \{2,\dots,n-1\}} \langle \eta_{j-1}^{r-1},  ( H - x_j \otimes x_j) 
\big(  {\idm - ( \idm - \gamma H)^{n-j} } \big)^2 \big({\gamma H} \big)^{-2} H
 ( H - x_j \otimes x_j)  \eta_{j-1}^{r-1} \rangle
\bigg\|_{p/2}^{1/2}\\
& \leqslant &  \frac{  \gamma \sqrt{p}}{n}
\bigg\|
\sum_{j=2}^{n-1} \langle \eta_{j-1}^{r-1}, \E \bigg[  ( H - x_j \otimes x_j) \gamma^{-2} H^{-1}
 ( H - x_j \otimes x_j) \bigg] \eta_{j-1}^{r-1} \rangle
\bigg\|_{p/2}^{1/2} \\
& & + \frac{\gamma  {p}}{n} \bigg\|
\sup_{ j \in \{2,\dots,n-1\}} \langle \eta_{j-1}^{r-1},    H
\frac{n}{\gamma}
 H \eta_{j-1}^{r-1} \rangle
\bigg\|_{p/2}^{1/2}\\
& & +
\frac{\gamma  {p}}{n} \bigg\|
\sup_{ j \in \{2,\dots,n-1\}} \langle \eta_{j-1}^{r-1},  (  x_j \otimes x_j) 
\big(  {\idm - ( \idm - \gamma H)^{n-j} } \big)^2 \big({\gamma H} \big)^{-2} H
 (  x_j \otimes x_j)  \eta_{j-1}^{r-1} \rangle
\bigg\|_{p/2}^{1/2}
\\
& \leqslant &  \frac{  \sqrt{p}}{n}
\bigg\|
\sum_{j=2}^{n-1} \kappa \langle  \eta_{j-1}^{r-1}, H\eta_{j-1}^{r-1} \rangle   \tr  H^{-1} H 
\bigg\|_{p/2}^{1/2} +  \frac{\sqrt{\gamma} {p} R^2 }{\sqrt{n} }     A_{r-1} 
\\
& & 
+
\frac{\gamma  {p}}{n} \bigg(
\sum_{j=2}^{n-1}  \E \bigg[ \langle \eta_{j-1}^{r-1},   x_j \rangle^p 
\langle x_j,
\big(  {\idm - ( \idm - \gamma H)^{n-j} } \big)^2 \big({\gamma H} \big)^{-2} H
 x_j \rangle^{p/2} \bigg] 
\bigg )^{1/p}
\\
 & \leqslant &  \frac{  \sqrt{p \kappa d }}{\sqrt{n}} R A_{r-1}
+ \frac{\sqrt{\gamma} {p} R^2 }{\sqrt{n} }     A_{r-1}  \\
& & 
+
\frac{\gamma  {p}}{n} RA_{r-1} \bigg(
\sum_{j=2}^{n-1}  \E \bigg[  \langle x_j,
\big(  {\idm - ( \idm - \gamma H)^{n-j} } \big)^2 \big({\gamma H} \big)^{-2}  H
 x_j \rangle^{p/2} \bigg] 
\bigg )^{1/p}
\\
 & \leqslant &   \frac{\sqrt{p}}{\sqrt{n}}  R  A_{r-1}  
( \sqrt{ \gamma p R^2} + \sqrt{\kappa d } ) +
\frac{\gamma  {p}}{n} R A_{r-1} \bigg(
\sum_{j=2}^{n-1}  \frac{d}{\gamma^2} 
\big(\frac{n}{\gamma}R^2 \big)^{p/2-1}
\bigg )^{1/p}
\\
& \leqslant &    \frac{\sqrt{p}}{\sqrt{n}}  R A_{r-1}  
( \sqrt{ \gamma p R^2} + \sqrt{\kappa d } ) 
+
\frac{\gamma  {p}}{n} RA_{r-1} 
n^{1/2} \gamma^{-1/2 - 1/p} R^{1 - 2 / p} d^{1/p}\\
& = &    \frac{\sqrt{p}}{\sqrt{n}}  R A_{r-1}  
( \sqrt{ \gamma p R^2} + \sqrt{\kappa d } ) 
+
\frac{   {p}}{\sqrt{n}} RA_{r-1} ( \gamma R^2)^{1/2 - 1 / p} d^{1/p} \\
& = &    \frac{\sqrt{p}}{\sqrt{n}}  R  A_{r-1}  
\bigg[ \sqrt{ \gamma p R^2} + \sqrt{\kappa d }  + \sqrt{p} ( \gamma R^2)^{1/2 - 1 / p} d^{1/p}\bigg]\\
& \leqslant &    \frac{\sqrt{p}}{\sqrt{n}}   
  \sqrt{ p \gamma R^2}( \sigma + \tau  \sqrt{ p \gamma R^2} )
( \sqrt{ \gamma p R^2} + \sqrt{\kappa d }  + \sqrt{p}d^{1/p} ( \gamma R^2)^{1/2 - 1 / p} \big)
\big( \sqrt{p \gamma R^2   \kappa}  + p \gamma R^2   + p (\gamma R^2)^{1-1/p} \big)^{r-1}
,
\EEAS
using \eq{AA}.

We may then impose a restriction on $\gamma R^2$, i.e., 
$  \gamma R^2   \leqslant   \frac{1}{\alpha\kappa p } $ with $\alpha > 1$.
We then have
\BEAS
\sqrt{p \gamma R^2   \kappa}  + p \gamma R^2   + p (\gamma R^2)^{1-1/p} 
& \leqslant & \frac{1}{\sqrt{\alpha}} + \frac{1}{\alpha} +  ( \alpha p )^{-1+1/p}  \\
& \leqslant & \frac{1}{\sqrt{\alpha}} + \frac{1}{\alpha} + ( \alpha )^{-1 } (\alpha p)^{1/p} \\
& \leqslant & \frac{1}{\sqrt{\alpha}} + \frac{1}{\alpha} + ( \alpha )^{-1 } (\alpha 2)^{1/2} \mbox{ if } \alpha > 2,\\
& = & \frac{1}{\sqrt{\alpha}} + \frac{1}{\alpha} + \sqrt{\frac{2}{\alpha}}.
\EEAS
With $\alpha = 12$, we obtain a bound of $0.781 \leqslant \frac{8}{10}$ above.

This leads to the bound
\BEA
\nonumber \| H^{1/2} \bar{\eta}_{n-1}^{r} \|_p
& \leqslant &
\frac{\sqrt{p}}{\sqrt{n}}   \sqrt{ p \gamma R^2}
\big( \sigma + \frac{\tau}{\sqrt{12} \sqrt{\kappa}} \big)
( \frac{1}{ \sqrt{12 \kappa} } + \sqrt{\kappa d }  + 
d^{1/p} \sqrt{p} \big( \frac{1}{12p} \big)^{ ( 1/2-1/p)/(1-1/p)}\big)
\big(  8/10 \big)^{r-1}\\
\nonumber& \leqslant &
\frac{\sqrt{p}}{\sqrt{n}}   \sqrt{ p \gamma R^2}
\tau ( 1 + \frac{1}{\sqrt{12}} )
\sqrt{\kappa d } ( \frac{1}{\sqrt{12} } +  1 + \sup_{p \geqslant 2} \sqrt{p} \big( \frac{1}{12p} \big)^{ ( 1/2-1/p)/(1-1/p)}\big)
\big(  8/10 \big)^{r-1}\\
\nonumber& \leqslant &
\frac{\sqrt{p}}{\sqrt{n}}   \sqrt{ p \gamma R^2}
\tau ( 1 + \frac{1}{\sqrt{12}} )
\sqrt{\kappa d } ( \frac{1}{\sqrt{12} } +  1 + \sqrt{2} \big)
\big(  8/10 \big)^{r-1}\\
\label{eq:BB} & \leqslant & \frac{7}{2}
\frac{\sqrt{p}}{\sqrt{n}}   \sqrt{ p \gamma R^2}
\tau\sqrt{\kappa d }  \big(  8/10 \big)^{r-1}.
\EEA

\paragraph{Bound on $\| H^{1/2} ( \bar{\eta}_{n-1} - \sum_{i=0}^r \bar{\eta}_{n-1}^i )  \|_p$.}
From \eq{etar} and the fact that $ 0 \preccurlyeq \idm - \gamma x_n  \otimes x_n \preccurlyeq \idm$ almost surely, we get:
\BEAS
\nonumber
  \bigg\| \eta_n - \sum_{i=0}^r \eta_n^i\bigg\|_p
& \leqslant & 
 \bigg\| \eta_{n-1} - \sum_{i=0}^r \eta_{n-1}^i\bigg\|_p
+ \gamma  \big\| 
( H - x_n \otimes x_n ) \eta_{n-1}^r \big\|_p \\
\nonumber& \leqslant & 
 \bigg\| \eta_{n-1} - \sum_{i=0}^r \eta_{n-1}^i\bigg\|_p
+ \gamma  R \big\| \langle x_n , \eta_{n-1}^r \rangle\big\|_p \\
& \leqslant & 
 \bigg\| \eta_{n-1} - \sum_{i=0}^r \eta_{n-1}^i\bigg\|_p
+ \gamma  R^2 A_r .
\EEAS
This implies that
\BEA
\label{eq:CC} \bigg\| H^{1/2} ( \bar{\eta}_{n-1} - \sum_{i=0}^r \bar{\eta}_{n-1}^i )  \bigg\|_p
& \leqslant & R  \bigg\| \eta_n - \sum_{i=0}^r \eta_n^i\bigg\|_p
\leqslant n\gamma  R^3 A_r .
\EEA

\paragraph{Putting things together.}
We get by combining Lemma~\ref{lemma:2moments_weak} with \eq{BB} and \eq{CC} and then  letting $r$ tends to infinity,
\BEAS
\| H^{1/2} \bar{\eta}_{n-1}^{r} \|_p
& \leqslant & 
 \sum_{i=1}^r 
\big\|  H^{1/2} \bar{\eta}_{n-1}^i  \big\|_p + \big\|  H^{1/2} \bar{\eta}_{n-1}^0 \big\|_p  + \bigg\| H^{1/2} ( \bar{\eta}_{n-1} - \sum_{i=0}^r \bar{\eta}_{n-1}^i )  \bigg\|_p \\
& \leqslant & \bigg\{ \frac{7}{2}
\frac{\sqrt{p}}{\sqrt{n}}   \sqrt{ p \gamma R^2}
\tau
\sqrt{\kappa d } \frac{1 - (8/10)^r }{1 - 8/10} \bigg\}
+ \bigg\{ \frac{\sqrt{p d} \sigma}{\sqrt{n}}  
+  \frac{ \sqrt{\gamma} p R \tau }{\sqrt{n}} \bigg\}  + O ( (8/10)^r)\\
& \leqslant &  \frac{\sqrt{p d} \sigma}{\sqrt{n}}  
+  \frac{\sqrt{p}}{\sqrt{n}}   \sqrt{ p \gamma R^2}
\tau
\sqrt{\kappa d } ( 1 + \frac{7}{2} \frac{10}{2} ) \leqslant \frac{\sqrt{p d} \sigma}{\sqrt{n}}  
+  18.5 \frac{\sqrt{p}}{\sqrt{n}}   \sqrt{ p \gamma R^2}
\tau
\sqrt{\kappa d } \\
& \leqslant & \frac{\sqrt{p d} \sigma}{\sqrt{n}}  
+  \frac{18.5}{\sqrt{12}} \frac{\sqrt{pd} \tau }{\sqrt{n}}   \leqslant 
 \frac{\sqrt{p d} (\sigma + 6 \tau)}{\sqrt{n}}  \leqslant  7 \frac{\sqrt{p d} \tau}{\sqrt{n}}  . \EEAS

\subsection{Final bound}
For $\gamma \leqslant \frac{1}{12 \kappa p R^2}$, we obtain, from the last equations of \mysec{B1} and \mysec{B2},
\BEAS
\| H^{1/2} \bar{\eta}_{n-1}^{r} \|_p
& \leqslant &
7 \frac{\sqrt{p d} \tau}{\sqrt{n}} + \frac{\| \eta_0 \|}{\sqrt{n \gamma} } \sqrt{ 2 + 3 p \gamma R^2} \\
& \leqslant &
\frac{\sqrt{p}}{\sqrt{n}}
\bigg( 7 \sqrt{d} \tau + R \| \eta_0\| \sqrt{ 3 + \frac{2}{\gamma p R^2} } \bigg).
\EEAS

Moreover, when $\gamma =  \frac{1}{12 \kappa p R^2}$, we have:
 \BEAS
\| H^{1/2} \bar{\eta}_{n-1}^{r} \|_p
 & \leqslant &
7 \frac{\sqrt{p d} \tau}{\sqrt{n}} + \frac{\| \eta_0 \|}{\sqrt{n } }   \sqrt{ 12 \kappa p R^2} \sqrt{ 2 +  \frac{1}{4}}
\\
& \leqslant &
7 \frac{\sqrt{p d} \tau}{\sqrt{n}} + \frac{6 R \| \eta_0 \|}{\sqrt{n } }   \sqrt{   \kappa p } 
= \frac{ \sqrt{p}}{\sqrt{n}}  \bigg(
 7 \sqrt{d} \tau + 6 \sqrt{ \kappa} R \| \theta_0 - \theta_\ast\|
\bigg).
\EEAS

 \subsection{Proof of Corollary~\ref{cor:bound}}

 We have from the previous proposition, for $\gamma \leqslant \frac{1}{12 \kappa p R^2}$:
 \BEAS
 \big( \E \big| f( \bar{\theta}_{n-1} ) - f(\theta_\ast) \big|^p \big)^{1/p}
 & \leqslant  & \frac{  1 }{2 n }  \bigg(
 \sqrt{p} \big[  7 \tau \sqrt{d}  +  R \| \theta_0 - \theta_\ast\| \sqrt{ 3} \big]  +   R \| \theta_0 - \theta_\ast\| \sqrt{ \frac{2}{\gamma  R^2} }
\bigg)^2 \\
& \leqslant  & \frac{  1 }{2 n }  \bigg(
 \sqrt{p}\  \square  +   \triangle   \sqrt{ \frac{1}{\eta} }
\bigg)^2 ,
\EEAS
with $\eta = 12 \gamma \kappa R^2 \leqslant 1/p $, and $\square =  7 \tau \sqrt{d}  +  R \| \theta_0 - \theta_\ast\| \sqrt{ 3} $ and $ \triangle = R \| \theta_0 - \theta_\ast\| \sqrt{24 \kappa  }$.

This leads to, using Markov's inequality:
\BEAS
\P \bigg(
f( \bar{\theta}_{n-1} ) - f(\theta_\ast)  \geqslant \frac{t}{2n} 
\bigg) \leqslant \bigg(
\frac{\sqrt{p}\  \square  +   \triangle   \sqrt{ 1/\eta }}{\sqrt{t}}
\bigg)^{2p}.
\EEAS
 This leads to, with $p = \frac{1}{\eta}$,
 \BEAS
\P \bigg(
f( \bar{\theta}_{n-1} ) - f(\theta_\ast)  \geqslant \frac{t}{2n} 
\bigg) \leqslant \bigg(
\frac{  (\square  +   \triangle)^2   }{ {\eta t}}
\bigg)^{1/\eta}.
\EEAS
This leads to
\BEQ
\P \bigg(
f( \bar{\theta}_{n-1} ) - f(\theta_\ast)  \geqslant \frac{t}{2n} \big[ 7 \tau \sqrt{d} + R \| \theta_0 - \theta_\ast\| ( \sqrt{3} + \sqrt{24 \kappa} ) \big]^2
\bigg) \leqslant \bigg(
\frac{ 1}  { 12 \gamma \kappa R^2 t }
\bigg)^{1/(12 \gamma \kappa R^2 )}.
\EEQ

Thus the large deviations decay as power of $t$, with a power that decays as $1/( 12 \gamma \kappa R^2 )$. If $\gamma$ is small, the deviations are lighter.

In order to get the desired result, we simply take $t = \frac{1}{12 \gamma \kappa R^2} \delta^{- 12 \kappa \gamma R^2}$.

\section{Proof of Theorem \ref{theo:log}}

The proof relies mostly on properties of approximate Newton steps: $\theta_1 \defeq \tilde{\theta}_n$ is an approximate minimizer of $f$, and  $\theta_3 \defeq \zeta_n$ is an approximate minimizer of the  associated quadratic problem.

In terms of convergence rates, $\theta_1$ will be $(1/\sqrt{n})$-optimal, while $\theta_3$ will be $(1/n)$-optimal for the quadratic problem because of previous results on averaged LMS. A classical property is that a single Newton step   squares the error. Therefore, the full Newton step should have an error which is the square of the one of $\theta_1$, i.e., $O(1/n)$. Overall, since $\theta_3$ approaches the full Newton step with rate $O(1/n)$, this makes a bound of $O(1/n)$.

 In \mysec{appnew}, we provide a general \emph{deterministic} result on the Newton step, while in \mysec{sto}, we combine with two stochastic approximation results, making the informal reasoning above more precise.

\subsection{Approximate Newton step}
\label{sec:appnew}
In this section, we study the effect of an approximate Newton step. We consider $\theta_1 \in \H$, the Newton iterate $\theta_2 = \theta_1 - f''(\theta_1)^{-1} f'(\theta_1)$, and an approximation $\theta_3$ of $\theta_2$. In the next proposition, we provide a bound on $f(\theta_3) -f(\theta_\ast)$, under different conditions, whether $\theta_1$ is close to optimal for~$f$, and/or $\theta_3$ is close to optimal for the quadratic approximation around $\theta_1$ (i.e., close to $\theta_2$). 
\eq{always} corresponds to the least-favorable situations where both errors are small, while \eq{almost1} and \eq{almost2} consider cases where $\theta_1$ is sufficiently good.
See proof in \mysec{proofprop1}. These three cases are necessary for the probabilistic control.

\begin{proposition}[Approximate Newton step]
\label{prop:approx_newton}
Assume \textbf{(B3-4)}, and $\theta_1, \theta_2, \theta_3 \in \H$ such that   $f(\theta_1) - f(\theta_\ast) = \varepsilon_1$,  $\theta_2 = \theta_1 - f''(\theta_1)^{-1} f'(\theta_1)$ and $ \frac{1}{2} \langle \theta_3 - \theta_2 , f''(\theta_1)(\theta_3 - \theta_2) \rangle=  \varepsilon_2$. 
Then, if $t^2 = \varepsilon_1 \kappa \rho$,
\BEQ
\label{eq:always}
%f(\theta_3) - f(\theta_\ast) 
%\leqslant \varepsilon_1   +  \sqrt{  2 \rho  \varepsilon_2} e^{ \sqrt{ \kappa \rho}d_1/2}   +\sqrt{\rho}  e^{ \sqrt{ \kappa \rho}d_1} \| H^{-1/2}f'(\theta_1) \|   .
 f(\theta_3) - f(\theta_\ast) 
\leqslant \varepsilon_1   +  \sqrt{  2 \rho  \varepsilon_2} e^{ \sqrt{3 + t^2} t /  2}   +\sqrt{\rho}  e^{  \sqrt{3 + t^2} t   }  ( \sqrt{3} + 2 t ) \sqrt{\varepsilon_1}   .
\EEQ
If $ t = \sqrt{ \varepsilon_1 \kappa \rho} \leqslant 1/16 $, then
\BEQ
\label{eq:almost1}
f(\theta_3) - f(\theta_\ast) \leqslant  57
 \kappa \rho \varepsilon_1^2
+  2\sqrt{ \rho \varepsilon_2} .\EEQ
Moreover, if $ t = \sqrt{ \varepsilon_1 \kappa \rho} \leqslant 1/16 $ and $\varepsilon_2 \kappa \rho \leqslant 1/16$, then
\BEQ
\label{eq:almost2}
f(\theta_3) - f(\theta_\ast) \leqslant  57
 \kappa \rho \varepsilon_1^2
+ 
12
\varepsilon_2 .
\EEQ
\end{proposition}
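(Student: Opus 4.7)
My plan is to combine a Taylor expansion of $f$ around $\theta_1$ at $\theta_3$ with the generalized self-concordance that follows from $|\ell'''| \leqslant \ell''$. Writing $Q_{\theta_1}$ for the quadratic model of $f$ at $\theta_1$, the basic decomposition is
\[
f(\theta_3) - f(\theta_\ast) = \big[f(\theta_3) - Q_{\theta_1}(\theta_3)\big] + \big[Q_{\theta_1}(\theta_3) - Q_{\theta_1}(\theta_2)\big] + \big[Q_{\theta_1}(\theta_2) - f(\theta_\ast)\big].
\]
The middle bracket equals $\varepsilon_2$ by definition, and the third equals $\varepsilon_1 - \tfrac12 \lambda^2$ where $\lambda^2 = \langle f'(\theta_1), f''(\theta_1)^{-1} f'(\theta_1)\rangle$ is the squared Newton decrement. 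The first bracket is a third-order remainder, which is where self-concordance enters.

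The first step is to establish the scalar self-concordance: for any direction $v \in \H$, the function $g(t) = f(\theta + t v)$ satisfies $|g'''(t)| \leqslant R \|v\|\, g''(t)$, using $|\ell'''| \leqslant \ell''$ together with $\|x_n\| \leqslant R$ almost surely. Integrating this yields bilateral comparisons $e^{-R\|v\| |t|} g''(0) \leqslant g''(t) \leqslant e^{R\|v\| |t|} g''(0)$ and, by further integration, the third-order remainder bound $|f(\theta_3) - Q_{\theta_1}(\theta_3)| \lesssim \|\theta_3 - \theta_1\|^2_{f''(\theta_1)} \cdot \psi(R\|\theta_3 - \theta_1\|)$ for an explicit function $\psi$. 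I will control $\|\theta_3 - \theta_1\|_{f''(\theta_1)}$ by $\lambda + \sqrt{2\varepsilon_2}$ using the triangle inequality together with $\|\theta_2 - \theta_1\|_{f''(\theta_1)} = \lambda$ and $\|\theta_3 - \theta_2\|_{f''(\theta_1)} = \sqrt{2\varepsilon_2}$.

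Next I need to convert the function-value gap $\varepsilon_1$ into bounds on both $\lambda$ and on $\|\theta_1 - \theta_\ast\|$ so that the constants $\rho$ and $\kappa$ appear correctly. Running the self-concordance argument along the segment from $\theta_\ast$ to $\theta_1$ gives two-sided control of $h''(s)$ along that segment in terms of $h''(0) = \|\theta_1 - \theta_\ast\|^2_H$, which via $\varepsilon_1 = \int_0^1 (1-s) h''(s)\, ds$ yields $\|\theta_1 - \theta_\ast\|^2_H \leqslant C \varepsilon_1$ modulo exponential factors in $R\|\theta_1 - \theta_\ast\|$. The bound $\E[x \otimes x] \preccurlyeq \rho H$ then lets me translate between the Euclidean norm of $\theta_1 - \theta_\ast$ and its $H$-norm (picking up a factor of $\rho$), and the kurtosis-type condition (\ref{eq:kappalog}) will control the $f''(\theta_1)$-norm in terms of $H$-norm (picking up the factor $\kappa$). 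This yields $\lambda^2 \lesssim \kappa \rho \varepsilon_1$, which is precisely the quantity $t^2$ in the statement.

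Finally I will combine these ingredients. Plugging the bounds for $\lambda$ and for the third-order remainder into the decomposition gives (\ref{eq:always}) directly, where the exponential factors $e^{\sqrt{3+t^2}\,t}$ come from the self-concordance correction $\psi$ evaluated at $\|\theta_3 - \theta_1\|_{f''(\theta_1)} \asymp t + \sqrt{2\varepsilon_2 \kappa \rho}$. When $t \leqslant 1/16$, the exponentials are bounded by small constants, and the $-\tfrac12 \lambda^2$ in the decomposition cancels the first-order $\sqrt{\rho\varepsilon_1}$ piece of the remainder, leaving only the quadratic $(\kappa\rho)\varepsilon_1^2$ piece, which gives (\ref{eq:almost1}); if additionally $\varepsilon_2 \kappa \rho \leqslant 1/16$ the $\sqrt{\rho \varepsilon_2}$ term further collapses to a linear $\varepsilon_2$ term, producing (\ref{eq:almost2}). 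The main obstacle is the third paragraph above: deriving a sharp enough bound relating $\varepsilon_1$ to $\lambda^2$ with the correct dependence on $\kappa \rho$ (and no spurious factors of $\lambda_{\min}(H)$), since this is what powers the quadratic-in-$\varepsilon_1$ rate that distinguishes a genuine Newton-step analysis from a crude bound.
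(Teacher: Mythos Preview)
Your quadratic-model decomposition
\[
f(\theta_3)-f(\theta_\ast)=\big[f(\theta_3)-Q_{\theta_1}(\theta_3)\big]+\varepsilon_2+\big[\varepsilon_1-\tfrac12\lambda^2\big]
\]
is correct, but it does not produce the bounds \eqref{eq:always} and \eqref{eq:almost1} as stated, and the route the paper takes is quite different.

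\textbf{The $\varepsilon_2$-dependence.} In \eqref{eq:always} and \eqref{eq:almost1} the dependence on $\varepsilon_2$ is $\sqrt{\rho\varepsilon_2}$, with exponential factors depending only on $t$ and \emph{not} on $\varepsilon_2$. These two bounds must hold for arbitrarily large $\varepsilon_2$ (they are applied in the proof of Theorem~\ref{theo:log} on the event where $\varepsilon_2$ is not controlled). Your decomposition contributes a linear $\varepsilon_2$ from the middle bracket, and the third-order remainder $f(\theta_3)-Q_{\theta_1}(\theta_3)$ picks up a factor of the form $e^{\sqrt{\kappa\rho}\,d_{13}}$ with $d_{13}\gtrsim\sqrt{\varepsilon_2}$, so your bound blows up exponentially in $\sqrt{\varepsilon_2}$. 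The paper avoids this entirely by not using a Taylor remainder at $\theta_3$: it uses the crude Lipschitz estimate $f(\theta_3)-f(\theta_2)\leqslant\sqrt{\rho}\,\|H^{1/2}(\theta_3-\theta_2)\|$ (Prop.~\ref{prop:values_few}, coming from $|\ell'|\leqslant 1$), which immediately gives $\sqrt{2\rho\varepsilon_2}\,e^{\sqrt{\kappa\rho}d_1/2}$. It then bounds $f(\theta_2)-f(\theta_\ast)$ separately.

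\textbf{The $\varepsilon_1^2$ term and the claimed cancellation.} Your plan for \eqref{eq:almost1}--\eqref{eq:almost2} is that ``$-\tfrac12\lambda^2$ cancels the first-order $\sqrt{\rho\varepsilon_1}$ piece of the remainder''. But the remainder $f(\theta_3)-Q_{\theta_1}(\theta_3)$ is genuinely third-order in $\|\theta_3-\theta_1\|_{f''(\theta_1)}$; there is no $\sqrt{\varepsilon_1}$ piece in it. What you would actually need is the \emph{lower} bound $\lambda^2\geqslant 2\varepsilon_1-O(\kappa\rho\varepsilon_1^2)$, so that $\varepsilon_1-\tfrac12\lambda^2\leqslant O(\kappa\rho\varepsilon_1^2)$; you have not indicated how to get this, and it is not among the standard self-concordance inequalities you list. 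The paper sidesteps this issue by never forming $\varepsilon_1-\tfrac12\lambda^2$: for \eqref{eq:almost1}--\eqref{eq:almost2} it runs the classical quadratic-convergence argument for Newton's method (Prop.~\ref{prop:effect_newton} shows the Newton decrement at $\theta_2$ is bounded by $\kappa\rho$ times the square of that at $\theta_1$), converts this to a bound on $d_2=\|H^{1/2}(\theta_2-\theta_\ast)\|$ via Prop.~\ref{prop:asd}, then to $d_3$ by the triangle inequality, and finally to $f(\theta_3)-f(\theta_\ast)$ via Corollary~\ref{prop:value_from_opt}. This is where the $57\kappa\rho\varepsilon_1^2$ comes from.

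A smaller point: the self-concordance constant you use is $R\|v\|$, whereas the proof needs the weighted version $\sqrt{\kappa\rho}\,\|H^{1/2}v\|$ derived from assumption~\textbf{(B4)}; you gesture at this in your third paragraph, but it should be the form used from the outset.
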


Note that in the favorable situation in \eq{almost1}, we get error of the form $O(\varepsilon_1^2 + \varepsilon_2)$. It essentially suffices now to show that in our set-up, in a probabilistic sense to be determined,
$\varepsilon_1 = O(1/\sqrt{n})$ and $\varepsilon_2 = O(1/ {n})$, while controlling the unfavorable situations.

\subsection{Stochastic analysis}
\label{sec:sto}

We consider the following two-step algorithm:
\BIT
\item[--] Starting from any initialization $\theta_0$, run $n$ iterations of averaged stochastic gradient descent to  get $\theta_1$,
\item[--] Run from $\theta_1$ $n$ steps of LMS on the quadratic approximation around $\theta_1$, to get $\theta_3$, which is an approximation of the Newton step $\theta_2$.
\EIT

We consider the events
$$A_1 = \bigg\{  f(\theta_1) - f(\theta_\ast) \leqslant \frac{1}{16^2} (\kappa \rho)^{-1}  \bigg\}
= \bigg\{ \varepsilon_1 \leqslant \frac{1}{16^2} (\kappa \rho)^{-1}  \bigg\} $$

and $$A_2  = \bigg\{ \frac{1}{2} \langle \theta_3 - \theta_2 , f''(\theta_1)(\theta_3 - \theta_2) \rangle\leqslant  \frac{1}{16} (\kappa \rho)^{-1}  \bigg\}
= \bigg\{ \varepsilon_2 \leqslant  \frac{1}{16} (\kappa \rho)^{-1}  \bigg\}.$$

We denote by $\mathcal{G}_1$ the $\sigma$-field generated by the first $n$ observations (the ones used to define $\theta_1$).
We have, by separating all events, i.e.,  using $1 = 1_{A_1} 1_{A_2} + 1_{A_1} 1_{A_2^{\sf c} }  + 1_{A_1^{\sf c}}$:
\BEA
\nonumber &&\E \big[ f(\theta_3) - f(\theta_\ast)  \big| \mathcal{G}_1 \big ] \\
\nonumber & = &  \E \big[ 1_{A_1} 1_{A_2} \big( f(\theta_3) - f(\theta_\ast)  \big) \big|  \mathcal{G}_1 \big ]
+ \E \big[ 1_{A_1}  1_{A_2^{\sf c}} \big( f(\theta_3) - f(\theta_\ast)  \big) \big|  \mathcal{G}_1 \big ]
+ \E \big[ 1_{A_1^{\sf c}} \big( f(\theta_3) - f(\theta_\ast)  \big) \big|  \mathcal{G}_1 \big ]
 \\
\nonumber & \leqslant &  \E \big[ 1_{A_1} 1_{A_2} \big( 57
 \kappa \rho \varepsilon_1^2
+ 
12
\varepsilon_2  \big) \big|  \mathcal{G}_1 \big ]
+ \E \big[ 1_{A_1}  1_{A_2^{\sf c}} \big(57
 \kappa \rho \varepsilon_1^2
+  2\sqrt{  \rho \varepsilon_2}  \big) \big|  \mathcal{G}_1 \big ] \\
\nonumber & & 
+ \E \big[ 1_{A_1^{\sf c}} \big(
\varepsilon_1   +  \sqrt{  2 \rho  \varepsilon_2} e^{ \sqrt{3 + t^2} t /  2}   +\sqrt{\rho}  e^{  \sqrt{3 + t^2} t   }  ( \sqrt{3} + 2 t ) \sqrt{\varepsilon_1}   \big) \big|  \mathcal{G}_1 \big ] \mbox{ using Prop.~\ref{prop:approx_newton},}
 \\
\nonumber & \leqslant & 57
 \kappa \rho \varepsilon_1^2
 + 12  \E \big[  1_{A_1} \varepsilon_2   \big|  \mathcal{G}_1 \big ]
+ \E \big[ 1_{A_1}  1_{A_2^{\sf c}} \big(   2\sqrt{ \rho \varepsilon_2}  \big) \big|  \mathcal{G}_1 \big ] \\
\nonumber& & 
+   1_{A_1^{\sf c}} \bigg(\varepsilon_1   +  \E \big[ \sqrt{\varepsilon_2} | \mathcal{G}_1 \big] \sqrt{  2 \rho  } e^{ \sqrt{3 + t^2}t/  2}   +\sqrt{\rho}  e^{  \sqrt{3 + t^2} t   }  ( \sqrt{3} + 2 t ) \sqrt{\varepsilon_1}   \bigg) 
\\
\nonumber& \leqslant & 57
 \kappa \rho \varepsilon_1^2
 + 12 \times 1_{A_1} \E \big[  \varepsilon_2   \big|  \mathcal{G}_1 \big ]
+ 2  \sqrt{ \rho} 1_{A_1}  \sqrt{\P(A_2^{\sf c}| \mathcal{G}_1)}  \sqrt{ \E \big[ \varepsilon_2 \big|  \mathcal{G}_1 \big] } \mbox{ using Cauchy-Schwarz inequality},\\
\nonumber& & 
+   1_{A_1^{\sf c}} \bigg(\varepsilon_1   +  \E \big[ \sqrt{\varepsilon_2} | \mathcal{G}_1 \big] \sqrt{  2 \rho  } e^{ \sqrt{3 + t^2}t/  2}   +\sqrt{\rho}  e^{  \sqrt{3 + t^2} t   }  ( \sqrt{3} + 2 t ) \sqrt{\varepsilon_1}   \bigg) 
\\
\nonumber& =  & 57
 \kappa \rho \varepsilon_1^2
 + 12 \times 1_{A_1} \E \big[  \varepsilon_2   \big|  \mathcal{G}_1 \big ]
+  2\sqrt{  \rho} 1_{A_1}  \sqrt{\P( \{ \varepsilon_2 \geqslant \frac{1}{16 \kappa \rho} | \mathcal{G}_1)}  \sqrt{ \E \big[ \varepsilon_2 \big|  \mathcal{G}_1 \big] } \\
\nonumber& & 
+   1_{A_1^{\sf c}} \bigg(\varepsilon_1   +  \E \big[ \sqrt{\varepsilon_2} | \mathcal{G}_1 \big] \sqrt{  2 \rho  } e^{ \sqrt{3 + t^2}t/  2}   +\sqrt{\rho}  e^{  \sqrt{3 + t^2} t   }  ( \sqrt{3} + 2 t ) \sqrt{\varepsilon_1}   \bigg)  \\
\nonumber& \leqslant & 57
 \kappa \rho \varepsilon_1^2
 + 12 \times 1_{A_1} \E \big[  \varepsilon_2   \big|  \mathcal{G}_1 \big ]
+ 2  \sqrt{  \rho} 1_{A_1}  \sqrt{ 16 \kappa \rho \E (  \varepsilon_2  | \mathcal{G}_1) }   \sqrt{ \E \big[ \varepsilon_2 \big|  \mathcal{G}_1 \big] } 
\mbox{ using Markov's inequality}, \\
\nonumber& & 
+   1_{A_1^{\sf c}} \bigg(\varepsilon_1   +  \E \big[ \sqrt{\varepsilon_2} | \mathcal{G}_1 \big] \sqrt{  2 \rho  } e^{ \sqrt{3 + t^2}t/  2}   +\sqrt{\rho}  e^{  \sqrt{3 + t^2} t   }  ( \sqrt{3} + 2 t ) \sqrt{\varepsilon_1}   \bigg) 
\\
\nonumber& = & 57
 \kappa \rho \varepsilon_1^2
 +   1_{A_1} \E \big[  \varepsilon_2   \big|  \mathcal{G}_1 \big ]
 ( 12 + 2 \sqrt{\rho} \sqrt{16 \kappa \rho} )
 \\
\label{eq:newA} & & 
+   1_{A_1^{\sf c}} \bigg(\varepsilon_1   +  \E \big[ \sqrt{\varepsilon_2} | \mathcal{G}_1 \big] \sqrt{  2 \rho  } e^{ \sqrt{3 + t^2}t/  2}   +\sqrt{\rho}  e^{  \sqrt{3 + t^2} t   }  ( \sqrt{3} + 2 t ) \sqrt{\varepsilon_1}   \bigg) .
\EEA

We now need to control $\varepsilon_2$, i.e., the error made by the LMS algorithm started from $\theta_1$.

\paragraph{LMS on the second-order Taylor approximation.}
We consider the quadratic approximation around  $\theta_1 \in \H$, and write is as an expectation, i.e.,
\BEAS g(\theta) & =  &  f(\theta_1) + \langle f'(\theta_1), \theta- \theta_1 \rangle
+ \frac{1}{2} \langle \theta - \theta_1 , f''(\theta_1) ( \theta - \theta_1) \rangle
\\
& = &  f(\theta_1) + \E \big[
\big\langle \ell'(y, \langle x, \theta_1\rangle ) x, \theta- \theta_1 \big\rangle
\big]
+ \frac{1}{2} \E \big[ \big\langle
\theta - \theta_1 , \ell''(y, \langle x,  \theta_1 \rangle ) x \otimes x ( \theta - \theta_1)
\big\rangle \big]
\\
& = &  f(\theta_1) +
\big\langle  \E \big[ \ell'(y, \langle x, \theta_1 \rangle ) x \big], \theta- \theta_1 \big\rangle
+ \frac{1}{2}  \big\langle
\theta - \theta_1 , \E \big[ \ell''(y, \langle x,  \theta_1 \rangle ) x \otimes x \big]( \theta - \theta_1)
\big\rangle .
\EEAS
We consider $\tilde{x}_n = \sqrt{ \ell''(y_n, \langle x_n,  \theta_1 \rangle )}  x_n$ and $\tilde{z}_n =
- \ell'(y_n, \langle x_n, \theta_1 \rangle ) x_n $, so that
\BEAS
 g(\theta) & =  & f(\theta_1) + \E \bigg[  \frac{1}{2} \langle \theta - \theta_1 , \tilde{x}_n \rangle^2  -  \langle\tilde{z}_n , \theta - \theta_1 \rangle \bigg].
 \EEAS
 We denote by $\theta_2 = \theta_1 - f''(\theta_1)^{-1} f'(\theta_1)$ the output of the Newton step, i.e., the global minimizer of~$g$, and $\tilde{\xi}_n = \tilde{z}_n - \langle \theta_2 - \theta_1, \tilde{x}_n \rangle \tilde{x}_n$ the residual.

 We have $\E \tilde{\xi}_n = 0$,  $\E \big[ \tilde{x}_n \otimes \tilde{x}_n \big]  = f''(\theta_1)$,
 and, for any $ z \in \H$:
 \BEAS
\big(  \E \big[ \langle z, \tilde{\xi}_n \rangle \big]^2 \big)^{1/2}
 & = &  \bigg( \E  \big[ \big(  \ell''(y_n, \langle x_n,  \theta_1 \rangle )  \langle \theta_2 - \theta_1, x_n \rangle + \ell'(y_n, \langle x_n,  \theta_1 \rangle \big) 
\langle z, x_n \rangle  
 \big]^2 \bigg)^{1/2}
 \\
  & \leqslant &  \bigg( \E  \big[ \big(  \ell''(y_n, \langle x_n,  \theta_1 \rangle )  \langle \theta_2 - \theta_1, x_n \rangle  
  \langle z , x_n \rangle  
 \big]^2 \bigg)^{1/2}
 + \bigg( \E  \big[   \ell'(y_n, \langle x_n,  \theta_1 \rangle  
\langle z, x_n \rangle  
 \big]^2 \bigg)^{1/2}
 \\
 & & \mbox{using the triangle inequality,}\\
& \leqslant &  \sqrt{\kappa} \sqrt{ \langle z, f''(\theta_1) z \rangle 
\langle \theta_2 - \theta_1 , f''(\theta_1) (\theta_2 - \theta_1)  \rangle }  + \bigg( \E  \big[   
\langle z, x_n \rangle  
 \big]^2 \bigg)^{1/2} \\
& \leqslant &  \sqrt{\kappa} \sqrt{ \langle z, f''(\theta_1) z \rangle 
\langle \theta_2 - \theta_1 , f''(\theta_1) (\theta_2 - \theta_1)  \rangle }  + 
\sqrt{\rho} \sqrt{ \langle z, f''(\theta_\ast) z \rangle } \\
& \leqslant & 
\sqrt{ \langle z, f''(\theta_1) z \rangle }
\bigg[
 \sqrt{\kappa} \| f''(\theta_1)^{-1/2} f'(\theta_1) \| + \sqrt{\rho} e^{\sqrt{ \kappa \rho} d_1/2}
\bigg]
 ,
  \EEAS
 where we denote  $d_{1}^2 = \langle \theta_1 - \theta_\ast, H( \theta_1 - \theta_\ast ) \rangle$, and we have used assumption \textbf{(B4)} ,   $| \ell' | \leqslant 1$
 and Prop.~\ref{prop:hessians} relating $H$ and $f''(\theta_1)$. This leads to
$$
\E \big[ \tilde{\xi}_n \otimes \tilde{\xi}_n \big] \preccurlyeq 
\bigg[
 \sqrt{\kappa} \| f''(\theta_1)^{-1/2} f'(\theta_1) \| + \sqrt{\rho} e^{\sqrt{ \kappa \rho} d_1/2}
\bigg]^2  f''(\theta_1) .
$$
Thus, we have:
\BIT
\item[--]
$\E \big[ \tilde{\xi}_n \otimes \tilde{\xi}_n \big] \preccurlyeq 
\sigma^2  f''(\theta_1) $ with $\sigma = 
 \sqrt{\kappa} \| f''(\theta_1)^{-1/2} f'(\theta_1) \| + \sqrt{\rho} e^{\sqrt{ \kappa \rho} d_1/2}$.
%\item[--] $\| \tilde{\xi}_n \|_4 \leqslant R + R \big(
%\E \big[ \langle \theta_2 - \theta_1, x_n \rangle^4 \ell''(x_n, \theta_1)^4 \big]
%\big)^{1/4}
%\leqslant R +  R \frac{\kappa^{1/4}}{2} \sqrt{  \langle \theta_2 - \theta_1, f''(\theta_1) (
%\theta_2 - \theta_1) \rangle } $, 
%which implies that we may take $\tau = 2 +  {\kappa^{1/4}} \sqrt{  \langle \theta_2 - \theta_1, f''(\theta_1) (
%\theta_2 - \theta_1) \rangle} $ in order to have $\| \tilde{\xi}_n \|_4 \leqslant \tau \frac{R}{2}$.
\item[--] $\| x_n\|^2 \leqslant R^2 / 4 $   almost surely.
\EIT

We may thus apply the previous results, i.e., Theorem~\ref{theo:lms}, to obtain with the LMS algorithm a $\theta_3 \in \H$ such that, with $\gamma = \frac{1}{R^2}$:
\BEAS
\E \big[ \varepsilon_2 | \mathcal{G}_1 \big]
& \leqslant &  \frac{2}{n} 
\bigg[\sqrt{d} 
\sqrt{\kappa} \| f''(\theta_1)^{-1/2} f'(\theta_1) \| + \sqrt{d}  \sqrt{\rho} e^{\sqrt{ \kappa \rho} d_1/2}
+ \frac{R}{2} \| f''(\theta_1)^{-1} f'(\theta_1) \| 
\bigg]^2
\\
& \leqslant &  \frac{2}{n} 
\bigg[\sqrt{d} 
\sqrt{\kappa} \| H^{-1/2} f'(\theta_1) \| e^{\sqrt{ \kappa \rho} d_1/2}
 + \sqrt{d}  \sqrt{\rho} e^{\sqrt{ \kappa \rho} d_1/2}
+ \frac{R}{2} e^{\sqrt{ \kappa \rho} d_1}
 \| H^{-1} f'(\theta_1) \| 
\bigg]^2 \\
& & \mbox{ using Prop.~\ref{prop:hessians} }, \\
& \leqslant &  \frac{2}{n} 
\bigg[\sqrt{d} 
\sqrt{\kappa} ( \sqrt{3} + 2 \sqrt{\kappa \rho \varepsilon_1}) \sqrt{\varepsilon_1}
  e^{\sqrt{ \kappa \rho} d_1/2}
 + \sqrt{d}  \sqrt{\rho} e^{\sqrt{ \kappa \rho} d_1/2}
+ \frac{R}{2} e^{\sqrt{ \kappa \rho} d_1}
\frac{ e^{ \sqrt{ \kappa \rho} d_1}
- 1 }{ \sqrt{ \kappa \rho} d_1} \| \theta_1 - \theta_\ast\|
\bigg]^2 \\
& & \mbox{ using Prop.~\ref{prop:unweighted} and \eq{DE} from \mysec{proofprop1}}.
\EEAS
Thus, $\E \big[ \varepsilon_2 | \mathcal{G}_1 \big] \leqslant
\frac{2}{n} \bigg[
R \| \theta_1 - \theta_\ast\|  \triangle_2(t)  + \triangle_3(t) \sqrt{ d\rho}
\bigg]^2 $,
with  increasing functions
$$\triangle_2(t)= \frac{1}{2} e^{ \sqrt{3+t^2} t} \frac{ e^{\sqrt{3+t^2} t}-1}{\sqrt{3+t^2} t} ,$$
$$\triangle_3(t)=  \big[ ( \sqrt{3} + 2 t) t + 1 \big] e^{ \sqrt{3+t^2} t / 2},$$
which are such that $\triangle_2(t) \leqslant 0.6$   and $\triangle_3(t) \leqslant 1.2$ if $t \leqslant 1 / 16$.

% MATLAB CODE
% t=1/16; .5 * exp( sqrt(3+t^2)*t) * ( exp( sqrt(3+t^2) *t) - 1 ) / ( sqrt(3+t^2)*t)
% ((sqrt(3) + 2*t)*t+1)* exp( sqrt(3+t^2)*t/2) 

We then get from \eq{newA}:
\BEA
\nonumber
 & & \E \big[ f(\theta_3) - f(\theta_\ast)  \big| \mathcal{G}_1 \big ] \\
\nonumber& \leqslant &  
57
 \kappa \rho \varepsilon_1^2
 +     \frac{2}{n}     ( 12 + 2 \sqrt{\rho} \sqrt{16 \kappa \rho} )
 \big[
 0.6 R \| \theta_1 - \theta_\ast\| + \sqrt{ d \rho} 1.2
 \big]^2
 \\
\nonumber& & 
+   1_{A_1^{\sf c}} \bigg(\varepsilon_1   +  \sqrt{ \frac{2}{n}}\big( 
R\| \theta_1 - \theta_\ast\|  \triangle_2(t)  + \triangle_3(t)\sqrt{ d \rho}
\big)  \sqrt{  2 \rho  } e^{ \sqrt{3 + t^2}t/  2}   +\sqrt{\rho}  e^{  \sqrt{3 + t^2} t   }  ( \sqrt{3} + 2 t ) \sqrt{\varepsilon_1}   \bigg) 
\\
\nonumber& \leqslant &  
57
 \kappa \rho \varepsilon_1^2
 +     \frac{1}{n}     ( 12 + 8 \sqrt{\rho} \sqrt{  \kappa \rho} )
 \big( 6 d\rho + \frac{3}{2} R^2\| \theta_1 - \theta_\ast\|^2 \big)
 \\
\nonumber& &
+   1_{A_1^{\sf c}} \bigg(  \sqrt{\varepsilon_1} \big[ \frac{\sqrt{t}}{ \sqrt{ \kappa \rho}}  
 +\sqrt{\rho}  e^{  \sqrt{3 + t^2} t   }  ( \sqrt{3} + 2 t ) 
\big] +  \sqrt{ \frac{2}{n}}\big( 
R\| \theta_1 - \theta_\ast\|  \triangle_2(t)  + \triangle_3(t)\sqrt{ d \rho}
\big)  \sqrt{  2 \rho  } e^{ \sqrt{3 + t^2}t/  2}     \bigg) \\
\nonumber&  =  &  
57
 \kappa \rho \varepsilon_1^2
 +     \frac{12}{n}     ( 3 + 2 \sqrt{\rho} \sqrt{  \kappa \rho} )
 \big( 2  d\rho + \frac{1}{2} R^2\| \theta_1 - \theta_\ast\|^2 \big)
 \\
\nonumber& &
+   1_{A_1^{\sf c}} \bigg(  \sqrt{\rho \varepsilon_1} \big[ \frac{\sqrt{t}}{ \rho \sqrt{ \kappa }}  
 +   e^{  \sqrt{3 + t^2} t   }  ( \sqrt{3} + 2 t ) 
\big] +  \sqrt{ \frac{4 \rho }{n}} 
 \triangle_2(t) e^{ \sqrt{3 + t^2}t/  2}  R\| \theta_1 - \theta_\ast\| 
 +  \sqrt{ \frac{4 \rho }{n}} 
 \triangle_3(t) e^{ \sqrt{3 + t^2}t/  2}   \sqrt{ \rho d}
    \bigg) 
\\
\nonumber & \leqslant &  
57
 \kappa \rho \varepsilon_1^2
 +     \frac{12}{n}     ( 3 + 2 \sqrt{\rho} \sqrt{  \kappa \rho} )
 \big( 2  d\rho + \frac{1}{2} R^2\| \theta_1 - \theta_\ast\|^2 \big)
 \\
 \label{eq:DD} & &
+   1_{A_1^{\sf c}} \bigg(  \sqrt{\rho \varepsilon_1}  \triangle_4(t) +  \sqrt{ \frac{  \rho }{n}} 
  R\| \theta_1 - \theta_\ast\|  \triangle_5(t)
 +   \sqrt{ \frac{  \rho }{n}} 
  \sqrt{\rho d} \triangle_6(t) \bigg),
\EEA
with (using $\rho \geqslant 4$):
\BEAS
 \triangle_4(t) & = &   \frac{\sqrt{t}}{  4}  
 +   e^{  \sqrt{3 + t^2} t   }  ( \sqrt{3} + 2 t )  \leqslant 5 \exp( 2 t^2 )
\\
  \triangle_5(t) & = &  2  e^{  \sqrt{3 + t^2} t /2   }   \Delta_2(t) \leqslant  4 \exp( 3 t^2 )\\
   \triangle_6(t) & = &  2  e^{  \sqrt{3 + t^2} t /2   }   \Delta_3(t) \leqslant  6 \exp( 3 t^2 ).
\EEAS
The last inequalities may be checked graphically.

By taking expectations and using $\E | XYZ| \leqslant  (\E |X|^2)^{1/2} (\E |X|^4)^{1/4}  (\E |X|^4)^{1/4} $, this leads to, from \eq{DD}:
 \BEA
 & & \nonumber \E \big[ f(\theta_3) - f(\theta_\ast)    \big ] \\
\nonumber & \leqslant &  
57
 \kappa \rho \E \big[ \varepsilon_1^2 \big]
 +     \frac{12}{n}     ( 3 + 2 \sqrt{\rho} \sqrt{  \kappa \rho} )
 \big( 2 d \rho+ \frac{1}{2} \E \big[   R^2\| \theta_1 - \theta_\ast\|^2 \big] \big)
 \\
\nonumber & & 
+   \sqrt{\rho} \sqrt{ \P ( A_1^{\sf c}) } \bigg(  \big( \E \big[ \varepsilon_1^2 \big] \big)^{1/4} \big( \E \big[  \triangle_4(t) \big]^4 \big)^{1/4} \\
\label{eq:close}
& &  +  \sqrt{ \frac{  1 }{n}} 
  \big( \E \big[ R^4 \| \theta_1 - \theta_\ast\|^4 \big] \big)^{1/4}  \big( \E \big[  \triangle_5(t) \big]^4 \big)^{1/4}
 +   \sqrt{ \frac{  1 }{n}} 
  \sqrt{\rho d}  \big( \E \big[  \triangle_6(t) \big]^4 \big)^{1/4}  \bigg).
  \EEA
  We now need to use bounds on the behavior of the first $n$ steps of regular  averaged stochastic gradient descent.
  
\paragraph{Fine results on  averaged stochastic gradient descent.}
In order to get error bounds on $\theta_1$, we run $n$ steps of averaged stochastic gradient descent with constant-step size $\gamma = 1 / ( 2 R^2 \sqrt{n} )$.
We need the following bounds from~\cite[Appendix E and Prop.~1]{bach2013adaptivity}:
\BEAS
\E [ (  f(\theta_1) - f(\theta_\ast) )^2 ] & \leqslant &  \frac{1}{n}  (  R^2 \| \theta_0 - \theta_\ast\|^2 + \frac{3}{4})^2\\
\E [ (  f(\theta_1) - f(\theta_\ast) )^3 ] & \leqslant &  \frac{1}{n^{3/2}}  (  R^2 \| \theta_0 - \theta_\ast\|^2 + \frac{3}{2})^3\\
\E \big[ R^4 \| \theta_1 - \theta_\ast\|^4  \big] & \leqslant &  (  R^2 \| \theta_0 - \theta_\ast\|^2 + \frac{3}{4})^2\\
\E \big[ R^2 \| \theta_1 - \theta_\ast\|^2  \big] & \leqslant &     R^2 \| \theta_0 - \theta_\ast\|^2 +  \frac{1}{4}\\
\P \bigg[  f(\theta_1) - f(\theta_\ast) \geqslant \frac{1}{16^2} (\kappa \rho)^{-1}  \bigg] & \leqslant & 
16^6 (\kappa \rho)^3 \E \big[  f(\theta_1) - f(\theta_\ast) \big]^3 \mbox{ using Markov's inequality},\\
& \leqslant & 
16^6 (\kappa \rho)^3 \frac{1}{n^{3/2}}  (  R^2 \| \theta_0 - \theta_\ast\|^2 + \frac{3}{2})^3.
\EEAS

However, we need a finer control of the deviations in order to bound quantities of the form $e^{ \alpha\varepsilon_1}$. In \mysec{control}, extending results 
from~\cite{bach2013adaptivity}, we show in Prop.~\ref{prop:exp} that if $ \frac{ \alpha   ( 10 + 2R^2 \| \theta_0 - \theta_\ast\|^2)  }{\sqrt{n}} \leqslant \frac{1}{2e }$, then $E e^{ \alpha ( f(\theta_1) - f(\theta_\ast) )} \leqslant 1$.

\paragraph{Putting things together.}
From \eq{close}, we then get, if $ \frac{ 6   ( 10 + 2R^2 \| \theta_0 - \theta_\ast\|^2)  }{\sqrt{n}} \leqslant \frac{1}{2e }$:
  \BEAS
& & \nonumber \E \big[ f(\theta_3) - f(\theta_\ast)    \big ] \\
  & \leqslant &  
57
 \kappa \rho  \frac{1}{n}  (  R^2 \| \theta_0 - \theta_\ast\|^2 + \frac{3}{4})^2  +     \frac{12}{n}     ( 3  + 2 \sqrt{\rho} \sqrt{  \kappa \rho} )
 \big(  2 d \rho + \frac{1}{4} +  \frac{1}{2}   R^2\| \theta_0 - \theta_\ast\|^2  \big)
 \\
& & 
+   \sqrt{\rho} \sqrt{16^6 (\kappa \rho)^3 \frac{1}{n^{3/2}}  (  R^2 \| \theta_0 - \theta_\ast\|^2 + \frac{3}{2})^3
  }  
\\
& & \times   \bigg(  5    \big( 
 \frac{1}{n}  (  R^2 \| \theta_0 - \theta_\ast\|^2 + \frac{3}{4})^2 \big)^{1/4}  +   4 \sqrt{ \frac{  1 }{n}} 
  \big(   (  R^2 \| \theta_0 - \theta_\ast\|^2 + \frac{3}{4})^2\big)^{1/4}   
 +   6  \sqrt{ \frac{  1 }{n}} 
  \sqrt{\rho d}     \bigg) 
  \EEAS
  Using the notation $D =  (  R^2 \| \theta_0 - \theta_\ast\|^2 + \frac{3}{2})$, we obtain:
 \BEAS
& & \nonumber \E \big[ f(\theta_3) - f(\theta_\ast)    \big ] \\
& \leqslant &  
57
 \kappa \rho  \frac{1}{n}  D^2  +     \frac{12}{n}     (3+ 2 \sqrt{\rho} \sqrt{  \kappa \rho} )
 \big( 2 d \rho+  \frac{D}{2}    \big)
 \\
& & 
+   \sqrt{\rho} \sqrt{16^6 (\kappa \rho)^3 \frac{1}{n^{3/2}}  D^3
  }  
 \times   \bigg(  5    \big( 
 \frac{1}{n}  D^2 \big)^{1/4}  +   4 \sqrt{ \frac{  1 }{n}} 
  \big(  D^2\big)^{1/4}   
 +   6  \sqrt{ \frac{  1 }{n}} 
  \sqrt{\rho d}     \bigg) \\
 & = &  
57
 \kappa \rho  \frac{1}{n}  D^2  +     \frac{12}{n}     (3+ 2 \sqrt{\rho} \sqrt{  \kappa \rho} )
 \big( 2 d \rho +  \frac{D}{2}    \big)
 \\
& & 
+   \sqrt{\rho} 16^3   (\kappa \rho)^{3/2} \frac{1}{n^{3/4}}  D^{3/2}
 \times   \bigg(  5    
 \frac{1}{n^{1/4} }  D^{1/2}  +   4  { \frac{  1 }{n^{1/2}}} 
   D^{1/2}  
 +   6   { \frac{  1 }{\sqrt{n}}} 
  \sqrt{\rho d}     \bigg) \\
 & \leqslant &  
 \frac{\kappa^{3/2} \rho^2}{n}
 \bigg[ 
\frac{57}{4}
  D^2  +     12    (\frac{3}{16} + \frac{2}{4} )
 \big( 2 d \rho +  \frac{D}{2}    \big)
+    16^3  D^{3/2}
 \times   \bigg(  5    
  D^{1/2}  +   4  { \frac{  1 }{n^{1/4}}} 
   D^{1/2}  
 +   6   { \frac{  1 }{n^{1/4}}} 
  \sqrt{\rho d}     \bigg) \bigg]\\
  & & \hspace*{4cm} \mbox{ using } \rho \geqslant 4 \mbox{ and } \kappa \geqslant 1, \\
   & \leqslant &  
 \frac{\kappa^{3/2} \rho^2 D^2 }{n}
 \bigg[ 
\frac{57}{4}
  1 +     12    (\frac{3}{16} + \frac{2}{4} )
 \big( 2 d \rho\frac{4}{9} +  \frac{1}{2} \frac{2}{3}    \big)
+    16^3  
 \times   \bigg(  5    
   +   4   
 +   6   { \frac{  1 }{n^{1/4}}}  \frac{\sqrt{2}}{\sqrt{3}}
  \sqrt{\rho d}     \bigg) \bigg] \mbox{ using } D \geqslant \frac{3}{2},\\
& \leqslant &  
 \frac{\kappa^{3/2} \rho^2 D^2    }{n}
 \bigg[   36881  +  20067 \frac{\sqrt{\rho d}}{n^{1/4}   } + 17 d \rho \bigg]
 \leqslant  \frac{\kappa^{3/2} \rho^3 d   }{n}  56965 (    R^2 \| \theta_0 - \theta_\ast\|^2 + \frac{3}{2})^2  \\
 & \leqslant & \frac{\kappa^{3/2} \rho^3 d   }{n}    (    16 R  \| \theta_0 - \theta_\ast\| + 19)^4  .
\EEAS

The condition $ \frac{ 6   ( 10 + 2R^2 \| \theta_0 - \theta_\ast\|^2)  }{\sqrt{n}} \leqslant \frac{1}{2e }$ is implied
by $n \geqslant ( 19  +  9 R \| \theta_0 - \theta_\ast\| )^4$.

\section{Higher-order bounds for stochastic gradient descent}
\label{sec:control}

In this section, we provide high-order bounds for averaged stochastic gradient for logistic regression. The first proposition gives a finer result than~\cite{bach2013adaptivity}, with a simpler proof, while the second proposition is new.

\begin{proposition}
Assume  \textbf{(B1-4)}. Consider the  stochastic gradient recursion
$ \theta_n = \theta_{n-1} - \gamma \ell'(y_n, \langle \theta_{n-1},x_n \rangle ) x_n$ and its averaged version $\bar{\theta}_{n-1}$. 
We have, for all real $p\geqslant 1$,
\BEQ
\big\|
f(\bar{\theta}_{n-1}) - f(\theta_\ast) 
\big\|_p \leqslant \frac{17 \gamma R^2}{2} ( \sqrt{p  } + \frac{p}{\sqrt{n}} )^2
+ \frac{1}{\gamma n}  \| \theta_{0 }- \theta_\ast \|^2
\EEQ
\BEQ
\big\|
  \| \theta_n - \theta_\ast  \|^2
\big\|_p \leqslant {17 \gamma^2 R^2 n }  ( \sqrt{p  } + \frac{p}{\sqrt{n}} )^2
+ 2  \| \theta_{0 }- \theta_\ast \|^2.
\EEQ
\end{proposition}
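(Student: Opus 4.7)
The starting point is the standard energy identity for the update
$\theta_n = \theta_{n-1} - \gamma g_n$ with $g_n = \ell'(y_n,\langle \theta_{n-1},x_n\rangle) x_n$. Since $|\ell'|\leqslant 1$ and $\|x_n\|\leqslant R$ by \textbf{(B3-4)}, we have $\|g_n\|\leqslant R$ almost surely, so
\[
U_n \defeq \|\theta_n-\theta_\ast\|^2 \leqslant U_{n-1} - 2\gamma\langle \theta_{n-1}-\theta_\ast, g_n\rangle + \gamma^2 R^2.
\]
Decomposing $g_n = f'(\theta_{n-1}) + m_n$ with martingale increment $m_n = g_n - \E[g_n|\F_{n-1}]$, and using convexity
$\langle \theta_{n-1}-\theta_\ast, f'(\theta_{n-1})\rangle \geqslant f(\theta_{n-1})-f(\theta_\ast)$, then summing from $k=1$ to $n$, I would obtain
\[
U_n + 2\gamma\sum_{k=1}^n \big(f(\theta_{k-1})-f(\theta_\ast)\big) \leqslant U_0 + n\gamma^2 R^2 - 2\gamma M_n, \qquad M_n \defeq \sum_{k=1}^n \langle \theta_{k-1}-\theta_\ast, m_k\rangle.
\]
Since every term on the left is non-negative, both $U_n$ and $2\gamma n \big(f(\bar\theta_{n-1})-f(\theta_\ast)\big)$ (by convexity of $f$) are bounded by the right-hand side.

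Next I would control $M_n$ via the scalar BRP inequality in~\eqref{eq:brp}. Because $\|m_k\|\leqslant 2R$ a.s.\ and $\mathrm{Cov}(m_k|\F_{k-1}) \preccurlyeq R^2 I$, we have $|\langle \theta_{k-1}-\theta_\ast, m_k\rangle|\leqslant 2R\sqrt{U_{k-1}}$ and $\E[\langle \theta_{k-1}-\theta_\ast, m_k\rangle^2 |\F_{k-1}]\leqslant R^2 U_{k-1}$. Writing $V_n = \sup_{k\leqslant n} U_k$ and bounding $\sum_{k=1}^n U_{k-1}\leqslant n V_n$ as well as $\sup_k U_{k-1}\leqslant V_n$ yields
\[
\big\|\sup_{k\leqslant n}|M_k|\big\|_p \leqslant \sqrt{p}\, R\sqrt{n}\,\|V_n\|_p^{1/2} + 2pR\,\|V_n\|_p^{1/2} = R\big(\sqrt{pn} + 2p\big)\|V_n\|_p^{1/2}.
\]

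Taking the sup over $k\leqslant n$ in the energy bound and combining with the previous display produces the self-bounding inequality
\[
\|V_n\|_p \leqslant U_0 + n\gamma^2 R^2 + 2\gamma R\big(\sqrt{pn}+2p\big)\|V_n\|_p^{1/2}.
\]
Solving this quadratic in $W=\|V_n\|_p^{1/2}$ and using $\sqrt{a+b+c}\leqslant \sqrt{a}+\sqrt{b}+\sqrt{c}$ plus $(\sqrt{pn}+2p)^2\leqslant 4n(\sqrt{p}+p/\sqrt{n})^2$ (since $(a+2b)^2\leqslant 4(a+b)^2$) delivers the second bound
\[
\|\|\theta_n-\theta_\ast\|^2\|_p \leqslant \|V_n\|_p \leqslant 17\gamma^2 R^2 n\,(\sqrt{p}+p/\sqrt{n})^2 + 2\|\theta_0-\theta_\ast\|^2,
\]
after absorbing the residual $2n\gamma^2 R^2$ term into the main one using $(\sqrt{p}+p/\sqrt{n})^2\geqslant 1$. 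For the first bound, the convexity inequality gives
$f(\bar\theta_{n-1}) - f(\theta_\ast) \leqslant \frac{1}{2\gamma n}(U_0+n\gamma^2 R^2-2\gamma M_n)$; taking the $\|\cdot\|_p$ norm, plugging in the BRP bound on $\|M_n\|_p/n$ together with the bound on $\|V_n\|_p^{1/2}$ just derived, and using an AM--GM step of the form $\frac{R}{\sqrt{n}}(\sqrt{p}+2p/\sqrt{n})\sqrt{U_0}\leqslant \frac{U_0}{2\gamma n}+\frac{\gamma R^2}{2}(\sqrt{p}+2p/\sqrt{n})^2$ yields the claimed $\frac{17\gamma R^2}{2}(\sqrt{p}+p/\sqrt{n})^2 + \frac{U_0}{\gamma n}$ form. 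The main obstacle is the book-keeping of constants in the self-bounding inequality: one must apply BRP in the scalar form, use $\|V_n\|_{p/2}\leqslant \|V_n\|_p$ safely, and then solve the quadratic tightly enough that the final coefficient does not blow up past $17$; an explicit recursion on $p$ (rather than collapsing everything into one quadratic) may be needed to achieve exactly the stated constants.
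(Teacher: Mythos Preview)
Your approach is essentially identical to the paper's: the same energy identity, the same BRP martingale bound, and the same self-bounding quadratic inequality solved for a sup quantity. The one organizational difference is that the paper defines a single process
\[
A_k \defeq \|\theta_0-\theta_\ast\|^2 + k\gamma^2 R^2 + \sum_{j=1}^k M_j,
\]
which simultaneously dominates \emph{both} $\|\theta_k-\theta_\ast\|^2$ and $2\gamma k\big(f(\bar\theta_{k-1})-f(\theta_\ast)\big)$; it then applies BRP to $\sup_k A_k$ and solves the quadratic in $B=\|\sup_k A_k\|_p$ once, so that both conclusions of the proposition are read off directly from the bound on $B$ (divide by $2\gamma n$ for the first, use $\|\theta_n-\theta_\ast\|^2\leqslant A_n$ for the second). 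You instead bound $V_n=\sup_k \|\theta_k-\theta_\ast\|^2$ via BRP, get the second conclusion, and then go back and redo the $\|M_n\|_p$ estimate plus an AM--GM step to recover the first conclusion. This works, but the extra round is exactly what makes the constant-tracking delicate in your write-up; using the single process $A_k$ avoids it and makes the constant $17$ fall out cleanly (for $p\geqslant 2$, then monotonicity of $\|\cdot\|_p$ handles $p\in[1,2]$).
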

\begin{proof}
Following \cite{bach2013adaptivity}, we   have the recursion:
\BEAS
2 \gamma\big[  f(\theta_{n-1}) - f(\theta_\ast)  \big] +   \| \theta_{n }- \theta_\ast \|^2
& \leqslant &   \| \theta_{n-1 }- \theta_\ast \|^2
+ \gamma^2   R^2 + M_n,
\EEAS
     with 
      $$
      M_n =  -  2 \gamma  \langle
 \theta_{n-1} - \theta_\ast ,  f_n'(\theta_{n-1}) -  f'(\theta_{n-1}) \rangle.$$

This leads to
$$ 2\gamma n  f \bigg( \frac{1}{n} \sum_{k=1}^{n} \theta_{k-1} \bigg) - 2\gamma n f(\theta^\ast)  +\| \theta_n - \theta_\ast \|^{2}  \leqslant A_n,$$
with
$
A_n =  \| \theta_{0 }- \theta_\ast \|^2
+ n \gamma^2   R^2 + \sum_{k=1}^n M_k.
$ Note that $\E(M_k | \F_{k-1})=0$ and 
$
 |M_k|   \leqslant    4 \gamma R \| \theta_{k-1} - \theta_\ast\| \leqslant 4\gamma R A_{k-1}^{1/2}
$ almost surely. We may now use BRP's inequality in \eq{brp} to get:
\BEAS
\bigg\| \sup_{k \in \{0,\dots, n\} } A_k \bigg\|_p
& \leqslant & \| \theta_{0 }- \theta_\ast \|^2
+ n \gamma^2   R^2 
+ \sqrt{p} \bigg\|
16 \gamma^2 R^2 \sum_{k=1}^n \| \theta_{k-1} - \theta_\ast \|^2
\bigg\|_{p/2}^{1/2} \\
&& 
+ p \bigg\| \sup_{k \in \{1,\dots, n\} } 4 \gamma R \| \theta_{k-1} - \theta_\ast\|  \bigg\|_p
\\
& \leqslant & \| \theta_{0 }- \theta_\ast \|^2
+ n \gamma^2   R^2 
+ \sqrt{p} 4 \gamma R  \sqrt{n}  \bigg\|
  \sup_{k \in \{0,\dots, n-1\} } A_k
\bigg\|_{p/2}^{1/2} \\
& &
+ p 4 \gamma R \bigg\| \sup_{k \in \{0,\dots, n-1\} }  A_{k}^{1/2}  \bigg\|_p
\\
& \leqslant & \| \theta_{0 }- \theta_\ast \|^2
+ n \gamma^2   R^2 
+ 4 \gamma R     \bigg\|
  \sup_{k \in \{0,\dots, n-1\} } A_k
\bigg\|_{p/2}^{1/2} \big( \sqrt{p n } + p \big)
.\EEAS
Thus if $B= \bigg\| \sup_{k \in \{0,\dots, n\} } A_k \bigg\|_p$, we have
$$
B \leqslant  \| \theta_{0 }- \theta_\ast \|^2
+ n \gamma^2   R^2 
+   4 \gamma R   B^{1/2}  \big( \sqrt{p n } + p \big) .$$
By solving this quadratic inequality, we get:
$$
\big( B^{1/2}  - 2 \gamma R ( \sqrt{p n } + p) \big)^2
\leqslant  \| \theta_{0 }- \theta_\ast \|^2
+ n \gamma^2   R^2 
+   4 \gamma^2 R^2  \big( \sqrt{p n } + p \big)^2 $$
$$
 B^{1/2}   \leqslant 2 \gamma R ( \sqrt{p n } + p)  
+ \sqrt{  \| \theta_{0 }- \theta_\ast \|^2
+ n \gamma^2   R^2 
+   4 \gamma^2 R^2  \big( \sqrt{p n } + p \big)^2} $$
\BEAS
 B    & \leqslant &  8 \gamma^2 R^2 ( \sqrt{p n } + p)^2
+ 2 \| \theta_{0 }- \theta_\ast \|^2
+ 2 n \gamma^2   R^2 
+   8 \gamma^2 R^2  \big( \sqrt{p n } + p \big)^2
\\ 
&\leqslant &  16 \gamma^2 R^2 ( \sqrt{p n } + p)^2
+ 2 \| \theta_{0 }- \theta_\ast \|^2
+ 2 n \gamma^2   R^2  \\
&\leqslant &  17 \gamma^2 R^2 ( \sqrt{p n } + p)^2
+ 2 \| \theta_{0 }- \theta_\ast \|^2.
\EEAS
The previous statement leads to the desired result if $p \geqslant 2$. For $p \in [1,2]$, we may bound it by the value at $p=2$, and a direct calculation shows that the bound is still correct.
\end{proof}

\begin{proposition}
\label{prop:exp}
Assume  \textbf{(B1-4)}. Consider the  stochastic gradient recursion
$ \theta_n = \theta_{n-1} - \gamma \ell'(y_n, \langle \theta_{n-1},x_n \rangle ) x_n$ and its averaged version $\bar{\theta}_{n-1}$. 
If  $\displaystyle \frac{ \alpha e   ( 10 + 2R^2 \| \theta_0 - \theta_\ast\|^2)  }{\sqrt{n}} \leqslant \frac{1}{2}$, then
$ \E  \exp \big(  \alpha \big[ f(\bar{\theta}_{n-1}) - f(\theta_\ast)  \big] \big) 
\leqslant 1$.
\end{proposition}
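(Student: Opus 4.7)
The plan is to convert the polynomial moment bound of the preceding proposition into an exponential moment bound via a Taylor expansion of $\exp(\alpha Z)$ together with Stirling's inequality, applied termwise. Write $Z \defeq f(\bar{\theta}_{n-1}) - f(\theta_\ast) \geqslant 0$, and note that because $Z \geqslant 0$, $\E[Z^k] = \|Z\|_k^k$ for every integer $k \geqslant 1$.

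First, I would specialize the preceding proposition to the step-size $\gamma = 1/(2R^2\sqrt{n})$ used in Theorem~\ref{theo:log}, obtaining, for all $p \geqslant 1$,
\[
\|Z\|_p \leqslant \frac{17}{4\sqrt{n}}\bigl(\sqrt{p} + p/\sqrt{n}\bigr)^2 + \frac{2R^2\|\theta_0-\theta_\ast\|^2}{\sqrt{n}}.
\]
For integer $k$ in the range $1 \leqslant k \leqslant n$, the parenthesis is bounded by $4k$, so $\|Z\|_k \leqslant (A k + B)/\sqrt{n}$ with $A$ a numeric constant and $B = 2R^2\|\theta_0-\theta_\ast\|^2$. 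Combining this with Stirling's $k! \geqslant (k/e)^k$ yields the key termwise estimate
\[
\frac{\alpha^k \E[Z^k]}{k!} \leqslant \left(\frac{e\alpha \|Z\|_k}{k}\right)^k \leqslant \left(\frac{e\alpha (A + B/k)}{\sqrt{n}}\right)^k \leqslant \left(\frac{e\alpha\,(10 + 2R^2\|\theta_0 - \theta_\ast\|^2)}{\sqrt{n}}\right)^k \leqslant 2^{-k},
\]
where in the last two inequalities I use the hypothesis of the proposition. Summing over $1 \leqslant k \leqslant n$ gives a geometric series whose total contribution to $\E \exp(\alpha Z)$ is at most $1$.

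Second, I would deal with the high-order tail $k > n$, where the parenthesis in the moment bound is dominated by the $p^2/n$ piece and the crude simplification $(\sqrt{p}+p/\sqrt{n})^2 \leqslant 4p$ fails. Here I would fall back on the deterministic one-step bound $\|\theta_k - \theta_{k-1}\| \leqslant \gamma\|x_k\|\,|\ell'(y_k, \langle \theta_{k-1}, x_k\rangle)| \leqslant \gamma R$ (a consequence of $\|x\|\leqslant R$ and $|\ell'|\leqslant 1$), which telescopes to $\|\bar{\theta}_{n-1}-\theta_\ast\| \leqslant \|\theta_0-\theta_\ast\| + n\gamma R$, and hence to an almost-sure bound $Z \leqslant O(\sqrt{n})$ under the chosen step-size. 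Combined with $\alpha = O(1/\sqrt{n})$, this renders the tail $\sum_{k > n} \alpha^k \E[Z^k]/k!$ negligible and absorbs it into the leading contribution, completing the proof.

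The main obstacle is the high-index tail $k > n$: the low-index geometric argument is essentially immediate once the moment bound is converted through Stirling, but controlling $k > n$ requires the almost-sure bound above (or a second-moment split on whether the ``bias'' term $B/\sqrt{n}$ or the ``variance'' term $Ak/\sqrt{n}$ dominates $\|Z\|_k$). A related bookkeeping step, pinning down the exact numerical constant so that $A + B$ fits inside $10 + 2R^2\|\theta_0-\theta_\ast\|^2$, also requires careful termwise accounting rather than the uniform bound $A + B/k \leqslant A + B$ that I used above.
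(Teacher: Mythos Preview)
Your proposal is correct and follows essentially the same route as the paper: moment bound from the previous proposition, Stirling's inequality $k!\geqslant c(k/e)^k$, termwise bounding of the Taylor series by a geometric series, and the deterministic bound $\|\bar\theta_{n-1}-\theta_\ast\|\leqslant\|\theta_0-\theta_\ast\|+n\gamma R$ (from $|\ell'|\leqslant 1$, $\|x_n\|\leqslant R$) to control large moments. The only difference is organizational: rather than splitting the sum at $k=n$ and treating the tail separately, the paper first \emph{merges} the almost-sure bound with the $p$-th moment bound to obtain a single estimate $\|Z\|_p\leqslant \frac{p}{\sqrt n}(10+2R^2\|\theta_0-\theta_\ast\|^2)$ valid for \emph{all} $p\geqslant 1$, and then runs the geometric series once. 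Your two flagged obstacles are resolved exactly as you anticipated: the numerical constant $10$ comes from restricting to $p\leqslant n/4$ so that $(\sqrt p+p/\sqrt n)^2\leqslant\frac{9}{4}p$, and the sharper Stirling $p!\geqslant 2(p/e)^p$ supplies the extra factor of $2$ needed to land at $\leqslant 1$ rather than $\leqslant 2$.
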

\begin{proof}
Using that almost surely, $\| \bar{\theta}_{n-1} - \theta_\ast\| \leqslant \| \theta_0 - \theta_\ast\| + n \gamma R$ we obtain that  almost surely
$
f(\bar{\theta}_{n-1} ) - f(\theta_\ast)
\leqslant  R\| \theta_0 - \theta_\ast\| + n \gamma R^2
$.

Moreover, from the previous proposition,  we have for $p \leqslant \frac{n}{4}$,
$$\big\|
f(\bar{\theta}_{n-1}) - f(\theta_\ast) 
\big\|_p \leqslant \frac{17 \gamma R^2}{2}  \frac{9}{4}p
+ \frac{1}{\gamma n}  \| \theta_{0 }- \theta_\ast \|^2.$$
For $\gamma = \frac{1}{ 2R^2 \sqrt{n}}$, we get:
$$\big\|
f(\bar{\theta}_{n-1}) - f(\theta_\ast) 
\big\|_p \leqslant \frac{10 p}{\sqrt{n}}
+ \frac{2R^2}{\sqrt{n}}  \| \theta_{0 }- \theta_\ast \|^2,$$
and
$$
f(\bar{\theta}_{n-1} ) - f(\theta_\ast)
\leqslant  R\| \theta_0 - \theta_\ast\| + \frac{\sqrt{n}}{2} \mbox{ almost surely}.
$$
This leads to the bound valid for all $p$:
$$\big\|
f(\bar{\theta}_{n-1}) - f(\theta_\ast) 
\big\|_p \leqslant \frac{ p}{\sqrt{n}}
( 10 +  2R^2  \| \theta_{0 }- \theta_\ast \|^2).$$
 We then get
 \BEAS
\E  \exp \big(  \alpha \big[ f(\bar{\theta}_{n-1}) - f(\theta_\ast)  \big] \big) 
& = & 
\sum_{k=0}^\infty \frac{ \alpha^p}{p!} \E [ | f(\bar{\theta}_{n-1}) - f(\theta_\ast) 
 |^p] \\
& \leqslant & 
\sum_{k=0}^\infty \frac{ \alpha^p}{p!}  \frac{p^p}{n^{p/2}} \bigg(   
 10 +  2R^2  \| \theta_{0 }- \theta_\ast \|^2   \bigg)^p\\
& \leqslant & 
\sum_{k=0}^\infty \frac{ \alpha^p}{2 (p/e)^p}  \frac{p^p}{n^{p/2}} \bigg(   
 10 +  2R^2  \| \theta_{0 }- \theta_\ast \|^2   \bigg)^p \mbox{ using Stirling's formula},\\
& \leqslant & 
\frac{1}{2} \sum_{k=0}^\infty \frac{(e\alpha)^p}{n^{p/2}} \bigg(   
 10 +  2R^2  \| \theta_{0 }- \theta_\ast \|^2   \bigg)^p\\
& \leqslant &  \frac{1}{2} \frac{1}{1 - 1/2} = 1 \mbox{ if } \frac{ \alpha e   ( 10 + 2R^2 \| \theta_0 - \theta_\ast\|^2)  }{\sqrt{n}} \leqslant \frac{1}{2}.
\EEAS
\end{proof}

 \section{Properties of self-concordance functions}
In this section, we review various properties of self-concordant functions, that will prove useful in proving Theorem~\ref{theo:log}. All these properties rely on bounding the third-order derivatives by second-order derivatives. More precisely, from assumptions \textbf{(B3-4)}, we have for any $\theta, \delta, \eta \in \H$,
where $f^{(r)}[\delta_1,\dots,\delta_k]$ denotes the $k$-th order differential of $f$:
\BEAS
f'''(\theta)[\delta,\delta,\eta]
& = & \E \big[ \ell'''(y_n, \langle \theta,x_n \rangle)  \langle \delta, x_n \rangle^2 
\langle \eta, x_n \rangle \big] \\
| f'''(\theta)[\delta,\delta,\eta] |
& \leqslant & \E \big[ \ell''(y_n, \langle \theta,x_n \rangle) \langle \delta, x_n \rangle^2 
| \langle \eta, x_n \rangle | \big] \\
& \leqslant & \sqrt{ \E \big[ \ell''(y_n, \langle \theta,x_n \rangle)^2  \langle \delta, x_n \rangle ^4\big] }  \sqrt{ \E \big[    \langle \eta, x_n \rangle^2\big] } 
\mbox{ using Cauchy-Schwarz},
\\
& \leqslant & \sqrt{\kappa \rho }  {f''(\theta)[\delta,\delta] }\sqrt{ \langle \eta, H \eta \rangle } \mbox{ using the two assumptions}.   \EEAS

\subsection{Global Taylor expansions}
\label{sec:selfc1}

In this section, we derive global non-asymptotic Taylor expansions for self-concordant functions, which show  that they  behave similarly to like quadratic functions.

The following proposition shows that having a small excess risk $f(\theta) -f (\theta_\ast) $ implies that the weighted distance to optimum 
$ \langle \theta - \theta_\ast, H ( \theta - \theta_\ast) \rangle $ is small. Note that for quadratic functions, these two quantities are equal and that throughout this section, we always consider norms weighted by the matrix $H$ (Hessian at optimum).

\begin{proposition}[Bounding weighted distance to optimum from function values]
\label{prop:wd}
Assume \textbf{(B3-4)}. Then, for any $\theta \in \H$:
\BEQ
\label{eq:value}
  \langle \theta - \theta_\ast, H ( \theta - \theta_\ast) \rangle  \leqslant 3 \big[ f(\theta) -f (\theta_\ast) \big] + \kappa \rho \big[ f(\theta) -f (\theta_\ast) \big]^2.
\EEQ
\end{proposition}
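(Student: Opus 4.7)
My plan is to reduce the claim to a one-dimensional calculus inequality by parametrizing along the segment from $\theta_\ast$ to $\theta$. Set $\delta = \theta - \theta_\ast$, $r^2 = \langle \delta, H\delta\rangle$, and define $h(t) = f(\theta_\ast + t\delta) - f(\theta_\ast)$ for $t \in [0,1]$. Then $h(0) = 0$, $h'(0) = 0$ (since $\theta_\ast$ minimizes $f$), and $h''(0) = \langle\delta, f''(\theta_\ast)\delta\rangle = r^2$. Applying the self-concordance bound stated just above the proposition with the same $\delta$ in both "second-order" slots and $\eta = \delta$ in the last slot yields
\[
|h'''(t)| \;=\; \bigl|f'''(\theta_\ast + t\delta)[\delta,\delta,\delta]\bigr| \;\leqslant\; \sqrt{\kappa\rho}\, r\, h''(t) \;=\; \alpha\, h''(t),
\]
where $\alpha \eqdef \sqrt{\kappa\rho}\,r$.

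From this differential inequality, $(\ln h''(t))' \geqslant -\alpha$, so $h''(t) \geqslant r^2 e^{-\alpha t}$. Integrating twice (using $h(0)=h'(0)=0$) gives the key lower bound
\[
h(1) \;=\; \int_0^1 (1-u)\, h''(u)\, du \;\geqslant\; r^2 \int_0^1 (1-u) e^{-\alpha u}\, du \;=\; r^2\, g(\alpha), \qquad g(\alpha) \eqdef \frac{e^{-\alpha} + \alpha - 1}{\alpha^2}.
\]
With this in hand the right-hand side of the target inequality becomes
\[
3\,h(1) + \kappa\rho\, h(1)^2 \;\geqslant\; r^2\bigl[\,3 g(\alpha) + \alpha^2 g(\alpha)^2\,\bigr],
\]
using $h(1) \geqslant r^2 g(\alpha) \geqslant 0$ and $\kappa\rho = \alpha^2/r^2$. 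So it suffices to verify the scalar inequality $3 g(\alpha) + \alpha^2 g(\alpha)^2 \geqslant 1$ for all $\alpha \geqslant 0$.

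This last step is the main obstacle and is where all the real work hides. Viewed as a quadratic in $g$, the inequality is equivalent to $g(\alpha) \geqslant \bigl(\sqrt{9+4\alpha^2} - 3\bigr)/(2\alpha^2)$, which, after multiplying through by $2\alpha^2$ and rearranging, becomes $2 e^{-\alpha} + 2\alpha + 1 \geqslant \sqrt{9 + 4\alpha^2}$. I would square both sides (both are positive) to reduce this to the polynomial-exponential inequality $F(\alpha) \eqdef e^{-2\alpha} + (2\alpha+1) e^{-\alpha} + \alpha - 2 \geqslant 0$. For $\alpha \geqslant 2$ positivity is immediate from $\alpha - 2 \geqslant 0$ and the other summands being nonnegative. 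For $\alpha \in [0,2)$ I would verify that $F(0)=F'(0)=0$ and that the auxiliary function $G(\alpha) = 2\alpha + 1 + e^{-\alpha} - (2-\alpha)e^{\alpha}$ satisfies $G(0)=G'(0)=0$ with $G''(\alpha)= e^{-\alpha} + \alpha e^{\alpha} > 0$, forcing $G \geqslant 0$ and hence $F \geqslant 0$ throughout. This closes the proof: combining the bound $h(1) \geqslant r^2 g(\alpha)$ with $3 g(\alpha) + \alpha^2 g(\alpha)^2 \geqslant 1$ gives exactly $\langle \delta, H\delta\rangle = r^2 \leqslant 3[f(\theta) - f(\theta_\ast)] + \kappa\rho [f(\theta) - f(\theta_\ast)]^2$.
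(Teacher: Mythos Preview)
Your proof is correct and follows essentially the same route as the paper: parametrize along the segment, use the self-concordance bound to get $h''(t)\geqslant r^2 e^{-\alpha t}$, integrate twice to reach the same key lower bound $f(\theta)-f(\theta_\ast)\geqslant (e^{-\alpha}+\alpha-1)/(\kappa\rho)$, and then reduce everything to the scalar inequality $\alpha^2 \leqslant 3\kappa(\alpha)+\kappa(\alpha)^2$ with $\kappa(\alpha)=e^{-\alpha}+\alpha-1$ (the paper phrases this as $\kappa^{-1}(v)\leqslant\sqrt{3v+v^2}$, which is the same thing). The only difference is in how that final calculus fact is checked: the paper differentiates once and reduces to $1-e^{-u}\geqslant u/\sqrt{u^2+9/4}$, which it leaves as an exercise, whereas you square and observe that $e^{\alpha}F(\alpha)=G(\alpha)$ is convex with $G(0)=G'(0)=0$; your verification is in fact more complete than the paper's.
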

\begin{proof}
Let $\varphi: t \mapsto f\big[\theta_\ast + t (\theta - \theta_\ast) \big]$.
Denoting $d = \sqrt{ \langle \theta - \theta_\ast, f''(\theta_\ast) ( \theta - \theta_\ast) \rangle }$, we have:
\BEAS
|\varphi'''(t)| & \leqslant &  \E \big[ \ell'''(y, \langle \theta_\ast + t (\theta - \theta_\ast) , x \rangle)|\langle \theta - \theta_\ast, x \rangle|^3\big]  \\
& \leqslant & \E \big[ \ell''(y, \langle \theta_\ast + t (\theta - \theta_\ast) , x \rangle) \langle \theta - \theta_\ast, x \rangle^2\big] \sqrt{ \kappa \rho }d
= \sqrt{\kappa \rho} d \varphi''(t),
\EEAS
from which we obtain $\varphi''(t) \geqslant \varphi''(0) e^{-\sqrt{\kappa \rho}dt}$. Following~\cite{bach2010self}, by integrating twice (and noting that $\varphi'(0)=0$ and $\varphi''(0) = d^2$), we get
\BEAS
f(\theta) = \varphi(1) & \geqslant & \varphi(0) + \varphi''(0) \frac{1}{S^2 d^2 }
\big( e^{-\sqrt{\kappa \rho}d} +\sqrt{\kappa \rho} d - 1\big)\\
& \geqslant & f(\theta_\ast)  +  \frac{1}{\kappa \rho  }
\big( e^{-\sqrt{\kappa \rho}d} +\sqrt{\kappa \rho} d - 1\big).
\EEAS
Thus $$    e^{-\sqrt{\kappa \rho}d} + \sqrt{\kappa \rho} d - 1 \leqslant \kappa \rho \big[ f(\theta) -f (\theta_\ast) \big].
$$
The function $\kappa: u \mapsto e^{-u}+u-1$ is an increasing bijection from $\rb_+$ to itself. Thus this implies $d \leqslant \frac{1}{\sqrt{\kappa \rho}} \kappa^{-1} \bigg(
\kappa \rho \big[ f(\theta) -f (\theta_\ast) \big] \bigg)$. We  show below that $\kappa^{-1}(v) \leqslant \sqrt{3v + v^2}$, leading to the desired result.

The identity $\kappa^{-1}(v) \leqslant \sqrt{3v + v^2}$ is equivalent to
$e^{-u} + u - 1 \geqslant \sqrt{ u^2 + \alpha^2} - \alpha$, for $\alpha = \frac{3}{2}$. It then suffices to show that 
$1-e^{-u} \geqslant  \frac{u}{ \sqrt{ u^2 + \alpha^2} }$. This can be shown by proving the monotonicity of $u \mapsto 
e^{-u} + u - 1 - \sqrt{ u^2 + \alpha^2} + \alpha$, and we leave this exercise to the reader.
\end{proof}

The next proposition shows that Hessians between two points which are close in weighted distance are close to each other, for the order between positive semi-definite matrices.

\begin{proposition}[Expansion of Hessians]
\label{prop:hessians}
Assume \textbf{(B3-4)}. Then, for any $\theta_1,\theta_2 \in \H$:
\BEQ
f''(\theta_1) e^{\sqrt{\kappa \rho} \sqrt{ \langle \theta_2 - \theta_1, H ( \theta_2 - \theta_1) \rangle }} \succcurlyeq
 f''(\theta_2) \succcurlyeq f''(\theta_1) e^{-\sqrt{\kappa \rho} \sqrt{ \langle \theta_2 - \theta_1, H ( \theta_2 - \theta_1) \rangle } }
,
\EEQ
\BEQ
\big \|
f''(\theta_1)^{-1/2} f''(\theta_2) f''(\theta_1)^{-1/2} - \idm
\big\|_{\rm op} \leqslant e^{\sqrt{\kappa \rho} \sqrt{ \langle \theta_2 - \theta_1, H ( \theta_2 - \theta_1) \rangle } }- 1.
\EEQ
\end{proposition}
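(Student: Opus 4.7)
The plan is to apply the pointwise self-concordance inequality derived just above the proposition, namely
\[
|f'''(\theta)[\delta,\delta,\eta]|\ \leqslant\ \sqrt{\kappa\rho}\, f''(\theta)[\delta,\delta]\, \sqrt{\langle \eta, H\eta\rangle},
\]
along the line segment joining $\theta_1$ and $\theta_2$. For a fixed but arbitrary $u\in\H$, I would introduce the scalar function
\[
\psi(t)\ \defeq\ \big\langle u, f''\big(\theta_1+t(\theta_2-\theta_1)\big)u\big\rangle, \qquad t\in[0,1],
\]
and compute $\psi'(t) = f'''\big(\theta_1+t(\theta_2-\theta_1)\big)[u,u,\theta_2-\theta_1]$.

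Applying the self-concordance bound above with $\delta=u$ and $\eta=\theta_2-\theta_1$ gives $|\psi'(t)|\leqslant \sqrt{\kappa\rho}\, d\, \psi(t)$, where $d=\sqrt{\langle \theta_2-\theta_1,H(\theta_2-\theta_1)\rangle}$. Hence $|(\log\psi)'(t)|\leqslant \sqrt{\kappa\rho}\,d$, and integrating from $0$ to $1$ yields the two-sided bound
\[
e^{-\sqrt{\kappa\rho}\,d}\,\langle u, f''(\theta_1)u\rangle\ \leqslant\ \langle u, f''(\theta_2)u\rangle\ \leqslant\ e^{\sqrt{\kappa\rho}\,d}\,\langle u, f''(\theta_1)u\rangle.
\]
Since this holds for every $u\in\H$, it translates directly into the operator inequality claimed in the first displayed equation of the proposition.

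For the second display, I would conjugate by $f''(\theta_1)^{-1/2}$ on both sides to get
\[
e^{-\sqrt{\kappa\rho}\,d}\,\idm\ \preccurlyeq\ f''(\theta_1)^{-1/2} f''(\theta_2) f''(\theta_1)^{-1/2}\ \preccurlyeq\ e^{\sqrt{\kappa\rho}\,d}\,\idm,
\]
so the eigenvalues of the symmetric operator $f''(\theta_1)^{-1/2} f''(\theta_2) f''(\theta_1)^{-1/2}-\idm$ all lie in $[e^{-\sqrt{\kappa\rho}\,d}-1,\, e^{\sqrt{\kappa\rho}\,d}-1]$. Since $1-e^{-x}\leqslant e^{x}-1$ for $x\geqslant 0$, the operator norm is controlled by $e^{\sqrt{\kappa\rho}\,d}-1$, giving the second claim.

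I don't foresee a real obstacle: the whole argument is a one-variable Gronwall-type integration of $(\log\psi)'$, and the only nontrivial ingredient, the mixed-$H$-and-$f''(\theta)$ self-concordance inequality, is handed to us in the paragraph just preceding the proposition (which uses Cauchy--Schwarz together with assumption \textbf{(B4)} to pass from $\ell'''$ control to $\ell''$ control and then to the covariance $H$). The mildest subtlety is simply making sure $d$ above matches the weighted distance appearing in the proposition statement, which it does by construction since $H=f''(\theta_\ast)$.
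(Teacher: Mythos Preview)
Your proposal is correct and matches the paper's proof essentially line for line: the paper also fixes a vector (called $z$ there), defines $\psi(t)=\langle z, f''(\theta_1+t(\theta_2-\theta_1))z\rangle$, bounds $|\psi'(t)|\leqslant \sqrt{\kappa\rho}\,d_{12}\,\psi(t)$ via the self-concordance inequality, and integrates to get $\psi(0)e^{-\sqrt{\kappa\rho}d_{12}}\leqslant\psi(1)\leqslant\psi(0)e^{\sqrt{\kappa\rho}d_{12}}$. Your treatment of the operator-norm inequality is in fact slightly more explicit than the paper's, which simply states that the first display ``implies the desired results.''
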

\begin{proof}
Let $z \in \H$ and $\psi(t) = z^\top f''( \theta_1 + t (\theta_2- \theta_1) ) z $. We have:
\BEAS
|\psi'(t)| & = & |f'''( \theta_1 + t (\theta_2 - \theta_1) ) [z,z,\theta_2 - \theta_1] |\\
& \leqslant & f''( \theta_1 + t (\theta_2 - \theta_1) ) [z,z]  \sqrt{\kappa \rho}  d  = \psi(t) \sqrt{\kappa \rho} d,
\EEAS
with $d_{12} =  \sqrt{ \langle \theta_2 - \theta_1, H ( \theta_2 - \theta_1) \rangle }$.
Thus $ \psi(0) e^{\sqrt{\kappa \rho} d_{12} t} \geqslant \psi(t) \geqslant \psi(0) e^{-\sqrt{\kappa \rho} d_{12} t}$. This implies, for $t=1$, that
$$
 f''(\theta_1) e^{\sqrt{\kappa \rho} d_{12}} \succcurlyeq
 f''(\theta_2) \succcurlyeq f''(\theta_1) e^{-\sqrt{\kappa \rho} d_{12}}
,
$$
which implies the desired results.   $\| \cdot \|_{\rm op}$ denotes the operator norm (largest singular value).
\end{proof}

The following proposition gives an approximation result bounding the first order expansion of gradients by the first order expansion of function values.
\begin{proposition}[Expansion of gradients]
Assume \textbf{(B3-4)}. Then, for any $\theta_1,\theta_2 \in \H$ and $\Delta \in \H$:
\BEQ
\langle \Delta, f'(\theta_2) - f'(\theta_1) - f''(\theta_1) ( \theta_2 - \theta_1) \rangle
\leqslant \sqrt{\kappa \rho} \langle \Delta, H \Delta \rangle^{1/2}
\big[ f(\theta_2) - f(\theta_1) - \langle f'(\theta_1) ,\theta_2- \theta_1\rangle
\big].
\EEQ
\end{proposition}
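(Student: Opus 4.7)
The plan is to reduce to a one-dimensional argument along the segment from $\theta_1$ to $\theta_2$, which is the standard device for turning third-derivative bounds into first- and second-order comparisons.

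Concretely, I would introduce the scalar function $h(t) = \langle \Delta, f'(\theta_1 + t(\theta_2 - \theta_1)) \rangle$ on $t \in [0,1]$. Then $h(0) = \langle \Delta, f'(\theta_1)\rangle$, $h'(0) = \langle \Delta, f''(\theta_1)(\theta_2-\theta_1)\rangle$, and $h''(t) = f'''(\theta_1 + t(\theta_2-\theta_1))[\Delta, \theta_2-\theta_1, \theta_2-\theta_1]$. The key identity is the Taylor-with-integral-remainder formula
\[
h(1) - h(0) - h'(0) = \int_0^1 (1-t)\, h''(t)\, dt,
\]
whose left-hand side is exactly $\langle \Delta, f'(\theta_2) - f'(\theta_1) - f''(\theta_1)(\theta_2-\theta_1)\rangle$.

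Next I would apply the pointwise third-derivative bound derived at the start of the section, namely
\[
|f'''(\theta)[\delta,\delta,\eta]| \leqslant \sqrt{\kappa\rho}\, f''(\theta)[\delta,\delta]\, \sqrt{\langle \eta, H\eta\rangle},
\]
used with $\delta = \theta_2-\theta_1$ and $\eta = \Delta$. This yields
\[
|h''(t)| \leqslant \sqrt{\kappa\rho}\, \sqrt{\langle \Delta, H\Delta\rangle}\, f''(\theta_1+t(\theta_2-\theta_1))[\theta_2-\theta_1,\theta_2-\theta_1].
\]
Plugging this into the integral remainder and pulling out the constants gives an upper bound of $\sqrt{\kappa\rho}\,\sqrt{\langle\Delta,H\Delta\rangle}$ times $\int_0^1 (1-t)\, f''(\theta_1+t(\theta_2-\theta_1))[\theta_2-\theta_1,\theta_2-\theta_1]\,dt$.

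Finally, I would recognize that integral as the second-order Taylor remainder for $f$ along the same segment: defining $\varphi(t) = f(\theta_1+t(\theta_2-\theta_1))$, one has $\varphi''(t) = f''(\theta_1+t(\theta_2-\theta_1))[\theta_2-\theta_1,\theta_2-\theta_1]$ and
\[
\int_0^1 (1-t)\varphi''(t)\,dt = \varphi(1) - \varphi(0) - \varphi'(0) = f(\theta_2) - f(\theta_1) - \langle f'(\theta_1), \theta_2-\theta_1\rangle,
\]
which closes the bound. No real obstacle arises: the only mild subtlety is keeping the sign right when removing the absolute value (the inequality is stated one-sided, which is fine because one can apply the absolute-value bound and then drop the modulus on the left), and noting that the integrand $(1-t)\varphi''(t)$ is nonnegative so the integral equals the quoted Bregman divergence of $f$.
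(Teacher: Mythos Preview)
Your proposal is correct and is essentially the same argument as the paper's: both reduce to the segment $t\mapsto \theta_1+t(\theta_2-\theta_1)$, bound the second derivative of the resulting scalar function via the third-derivative inequality $|f'''(\theta)[\delta,\delta,\eta]|\leqslant\sqrt{\kappa\rho}\,f''(\theta)[\delta,\delta]\,\langle\eta,H\eta\rangle^{1/2}$ with $\delta=\theta_2-\theta_1$, $\eta=\Delta$, and then integrate. The only cosmetic difference is that the paper works with $\varphi(t)=h(t)-h(0)-t\,h'(0)$ (so $\varphi(0)=\varphi'(0)=0$) while you keep $h$ and use the Taylor integral remainder explicitly; since $\varphi''=h''$, the two are identical, and your write-up is in fact clearer about why the remaining integral equals the Bregman remainder $f(\theta_2)-f(\theta_1)-\langle f'(\theta_1),\theta_2-\theta_1\rangle$.
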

\begin{proof}
Let $\varphi(t) = \langle \Delta, f'(\theta_1 + t(\theta_2 - \theta_1)) - f'(\theta_1) - t f''(\theta_1) ( \theta_2 - \theta_1) \rangle$. We have
$\varphi'(t) = \langle \Delta,  f''(\theta_1 + t(\theta_2 - \theta_1))(\theta_1 - \theta_2 \rangle
 - \langle \Delta,  f''(\theta_1  )(\theta_1 - \theta_2 ) \rangle
$ and
$|\varphi''(t) |=| f'''(\theta_1 + t(\theta_2 - \theta_1))[\theta_2 - \theta_1,\theta_2 - \theta_1,\Delta]| \leqslant  \sqrt{\kappa \rho}  \langle \Delta, H \Delta \rangle^{1/2}
\langle \theta_1 - \theta_2,  f''(\theta_1 + t(\theta_2 - \theta_1))(\theta_1 - \theta_2)\rangle  $. This leads to
$$
\langle \Delta, f'(\theta_2) - f'(\theta_1) - f''(\theta_1) ( \theta_2 - \theta_1) \rangle
\leqslant \sqrt{\kappa \rho}  \langle \Delta, H \Delta \rangle^{1/2} 
\big[ f(\theta_2) - f(\theta_1) - \langle f'(\theta_1) ,\theta_2 - \theta_1\rangle
\big] .
$$
Note that one may also use the bound
\BEAS
|\varphi'(t)| & \leqslant & \| \theta_1 - \theta_2 \| \langle \Delta, f''(\theta_1)^2 \Delta \rangle ^{1/2} \big[ e^{ t \sqrt{\kappa \rho}  \| H^{1/2} ( \theta_1  - \theta_2) \|}
-1 \big], 
\EEAS
leading to 
\BEA
\nonumber & & \langle \Delta, f'(\theta_2) - f'(\theta_1) - f''(\theta_1) ( \theta_2 - \theta_1) \rangle
\\
& \leqslant &
 \| \theta_1 - \theta_2 \| \langle \Delta, f''(\theta_1)^2 \Delta \rangle ^{1/2} 
 \frac{ e^{   \sqrt{\kappa \rho} \| H^{1/2} ( \theta_1  - \theta_2) \|}
 - 1 -   \sqrt{\kappa \rho} \| H^{1/2} ( \theta_1  - \theta_2) \|}{  \sqrt{\kappa \rho}  \| H^{1/2} ( \theta_1  - \theta_2) \|}.
\EEA
\end{proof}

The following proposition considers a global Taylor expansion of function values. Note that when $ {\kappa \rho}    { \langle \theta_2 - \theta_1, H ( \theta_2 - \theta_1) \rangle }$ tends to zero, we obtain \emph{exactly} the second-order Taylor expansion. For more details, see~\cite{bach2010self}. This is followed by a corrolary that upper bounds excess risk by distance to optimum (this is thus the other direction than~Prop.~\ref{prop:wd}).

\begin{proposition}[Expansion of function values]
\label{prop:functionvalues}
Assume \textbf{(B3-4)}. Then, for any $\theta_1,\theta_2 \in \H$ and $\Delta \in \H$:
\BEA
\nonumber
\!\!\!\!\!\!\!\!\!\! & & 
 f (\theta_2) - f (\theta_1) - \langle f'(\theta_1) , \theta_2 - \theta_1  \rangle \\
\!\!\!\!\!\!\!\!\!\!& \!\!\!\!\! \leqslant \!\!\!\!\! &   \langle \theta_2 - \theta_1, f''(\theta_1) (\theta_2 - \theta_1) \rangle \frac{e^{\sqrt{\kappa \rho}  \sqrt{ \langle \theta_2 - \theta_1, H ( \theta_2 - \theta_1) \rangle }} - 1 - 
\sqrt{\kappa \rho}  \sqrt{ \langle \theta_2 - \theta_1, H ( \theta_2 - \theta_1) \rangle }}{  {\kappa \rho}    { \langle \theta_2 - \theta_1, H ( \theta_2 - \theta_1) \rangle }} .
\label{eq:functionvalues}\EEA
\end{proposition}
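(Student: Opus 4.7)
The plan is to reduce the statement to a one-dimensional problem by restricting $f$ to the segment joining $\theta_1$ and $\theta_2$, and then apply the cubic-over-quadratic control of $f'''$ derived from assumption \textbf{(B4)} exactly as in the proofs of Propositions~\ref{prop:wd} and~\ref{prop:hessians} above. Concretely, set $\varphi(t) = f\bigl(\theta_1 + t(\theta_2 - \theta_1)\bigr)$ for $t \in [0,1]$, and write $a \eqdef \sqrt{\kappa\rho}\,d$ where $d = \sqrt{\langle\theta_2-\theta_1,H(\theta_2-\theta_1)\rangle}$. Then $\varphi'(0) = \langle f'(\theta_1),\theta_2-\theta_1\rangle$ and $\varphi''(0) = \langle \theta_2-\theta_1, f''(\theta_1)(\theta_2-\theta_1)\rangle$, which is the quantity appearing as the leading factor on the right-hand side.

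First I would apply the third-derivative bound at the vector $\delta = \eta = \theta_2 - \theta_1$ exactly as at the start of Section on self-concordance properties, which gives
\[
|\varphi'''(t)| = \bigl| f'''(\theta_1+t(\theta_2-\theta_1))[\theta_2-\theta_1,\theta_2-\theta_1,\theta_2-\theta_1]\bigr| \leqslant \sqrt{\kappa\rho}\,d\,\varphi''(t) = a\,\varphi''(t),
\]
since the $H$-norm of the direction $\theta_2 - \theta_1$ in the last slot is exactly $d$. This is a first-order differential inequality on $\varphi''$; integrating it gives $\varphi''(t) \leqslant \varphi''(0)\,e^{a t}$ for all $t\in[0,1]$, in perfect analogy with the proof of Proposition~\ref{prop:hessians}.

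Next I would use the standard second-order integral remainder identity
\[
\varphi(1) - \varphi(0) - \varphi'(0) = \int_0^1 (1-t)\,\varphi''(t)\,\rmd t,
\]
which, combined with the exponential bound above, yields
\[
f(\theta_2) - f(\theta_1) - \langle f'(\theta_1),\theta_2-\theta_1\rangle \leqslant \varphi''(0) \int_0^1 (1-t)\,e^{a t}\,\rmd t.
\]
A direct computation (integration by parts) gives $\int_0^1 (1-t)\,e^{a t}\,\rmd t = (e^a - 1 - a)/a^2$, and substituting $a^2 = \kappa\rho\,d^2$ produces exactly \eq{functionvalues}.

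The argument is essentially routine once the self-concordance inequality $|\varphi'''| \leqslant a\,\varphi''$ is in hand; the only mild subtlety is ensuring the bound on $|\varphi'''|$ is expressed in terms of the $H$-norm of $\theta_2-\theta_1$ (i.e.~at $\theta_\ast$) rather than the $f''(\theta_1)$-norm, which is precisely what assumption \textbf{(B4)} supplies via the Cauchy--Schwarz step used in the preamble of this section. No further obstacle should arise.
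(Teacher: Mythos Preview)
Your proposal is correct and follows essentially the same approach as the paper: restrict $f$ to the segment, use the self-concordance bound $|\varphi'''(t)| \leqslant \sqrt{\kappa\rho}\,d\,\varphi''(t)$ to get $\varphi''(t) \leqslant \varphi''(0)e^{at}$, and then integrate twice. The only cosmetic difference is that the paper defines $\varphi$ with the linear part already subtracted and says ``integrating twice'' where you write out the Taylor remainder $\int_0^1 (1-t)\varphi''(t)\,\rmd t$ explicitly; these amount to the same computation.
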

\begin{proof}
Let $\varphi(t) = f \big[ \theta_1 + t(\theta_2 - \theta_1) \big] - f(\theta_1) - t  \langle f'(\theta_1) , \theta_2 - \theta_1  \rangle$. We have
$\varphi'(t) =  \langle f'\big[ \theta_1 + t(\theta_2 - \theta_1) \big] , \theta_2 - \theta_1  \rangle-    \langle f'(\theta_1) , \theta_2 - \theta_1  \rangle
$ and $\varphi''(t) = \langle \theta_2 - \theta_1,  f''\big[ \theta_1 + t(\theta_2 - \theta_1) \big] ( \theta_2 - \theta_1) \rangle$. Moreover,
$\varphi'''(t) \leqslant \sqrt{\kappa \rho}  \varphi''(t) \sqrt{ \langle \theta_2 - \theta_1, H ( \theta_2 - \theta_1) \rangle }$, leading to
$\varphi''(t) \leqslant e^{\sqrt{\kappa \rho}  t \sqrt{ \langle \theta_2 - \theta_1, H ( \theta_2 - \theta_1) \rangle }} \varphi''(0) $. Integrating twice between 0 and 1 leads to the desired result.
\end{proof}

\begin{corollary}[Excess risk]
\label{prop:value_from_opt}
Assume \textbf{(B3-4)}, and $\theta_1 \in \H$ and $\theta_2 = \theta_1 - f''(\theta_1)^{-1} f'(\theta_1)$. Then
\BEQ
f(\theta) - f(\theta^\ast) \leqslant \frac{ e^{\sqrt{\kappa \rho} d} - \sqrt{\kappa \rho} d - 1}{ {\kappa \rho} },
\EEQ
where $d = \sqrt{ \langle \theta - \theta_\ast, H( \theta - \theta_\ast) \rangle}$.\end{corollary}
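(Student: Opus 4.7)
The plan is to read this as a direct corollary of Proposition~\ref{prop:functionvalues} (expansion of function values), applied at the optimum. Despite the statement naming $\theta_1, \theta_2$, the bound is purely in terms of a point $\theta$ and its weighted distance $d$ to $\theta_\ast$, so the Newton-step apparatus is only context; the real content is the one-point inequality.

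My first step is to set $\theta_1 = \theta_\ast$ and $\theta_2 = \theta$ in Proposition~\ref{prop:functionvalues}. By optimality, $f'(\theta_\ast)=0$, so $\langle f'(\theta_1), \theta_2-\theta_1\rangle = 0$. Moreover, by definition $H = f''(\theta_\ast) = f''(\theta_1)$, so
\[
\langle \theta_2-\theta_1, f''(\theta_1)(\theta_2-\theta_1)\rangle = \langle \theta-\theta_\ast, H(\theta-\theta_\ast)\rangle = d^2.
\]
Substituting into \eq{functionvalues} gives
\[
f(\theta) - f(\theta_\ast) \;\leqslant\; d^2 \cdot \frac{e^{\sqrt{\kappa\rho}\,d} - 1 - \sqrt{\kappa\rho}\,d}{\kappa\rho\, d^2} \;=\; \frac{e^{\sqrt{\kappa\rho}\,d} - \sqrt{\kappa\rho}\,d - 1}{\kappa\rho},
\]
which is exactly the claim.

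The only genuine step is verifying that Proposition~\ref{prop:functionvalues} may be instantiated at $\theta_1 = \theta_\ast$, which requires assumptions \textbf{(B3-4)} to hold at $\theta_\ast$ (they do, since they are uniform in $\theta$), and that the weighted norm appearing inside the exponential in \eq{functionvalues} coincides with $d$; this is automatic because the weighting matrix there is $H = f''(\theta_\ast)$. There is no real obstacle: the inequality is essentially the converse to Proposition~\ref{prop:wd}, obtained by integrating the self-concordance differential inequality $\varphi'''(t) \leqslant \sqrt{\kappa\rho}\, d\, \varphi''(t)$ along the segment $[\theta_\ast,\theta]$ in the opposite direction (upward rather than downward on $\varphi''$), which has already been carried out in the proof of Proposition~\ref{prop:functionvalues}. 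Thus the corollary requires no additional work beyond quoting that proposition and simplifying.
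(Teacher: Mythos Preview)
Your proof is correct and matches the paper's own argument exactly: the paper's proof is the single line ``Applying Prop.~\ref{prop:functionvalues} to $\theta_2 = \theta$ and $\theta_1 = \theta_\ast$, we get the desired result.'' Your elaboration of why the linear term vanishes ($f'(\theta_\ast)=0$) and why both the quadratic form and the weighted norm in the exponential reduce to $d^2$ and $d$ (since $f''(\theta_\ast)=H$) is precisely the computation behind that sentence.
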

\begin{proof}
Applying Prop.~\ref{prop:functionvalues} to $\theta_2 = \theta$ and $\theta_1 = \theta_\ast$, we get the desired result.
\end{proof}

The following proposition looks at a similar type of bounds than Prop.~\ref{prop:functionvalues}; it is weaker when $\theta_2$ and $\theta_1$ are close (it does not converge to the second-order Taylor expansion), but stronger for large values (it does not grow exponentially fast).

\begin{proposition}[Bounding function values with fewer assumptions]
\label{prop:values_few}
Assume \textbf{(B3-4)}, and $\theta1, \theta_2 \in \H$. Then
\BEQ
f(\theta_2) - f(\theta_1) \leqslant   \sqrt{\rho} \| H^{1/2} ( \theta_1- \theta_2) \|.
\EEQ
\end{proposition}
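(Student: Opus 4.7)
The proposition is called ``with fewer assumptions,'' which suggests the proof should avoid the self-concordance inequality $|\ell'''| \leqslant \ell''$ entirely and rely only on the bound $|\ell'(y,\hat{y})| \leqslant 1$ from \textbf{(B3)} together with the ``weak'' part of \textbf{(B4)}: $\E[x_n \otimes x_n] \preccurlyeq \rho H$. So I would not bother with any third-derivative or Hessian-growth machinery here.

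The plan is to parametrize the segment between the two points and integrate the directional derivative. Define $\varphi : [0,1] \to \mathbb{R}$ by $\varphi(t) = f\big(\theta_1 + t(\theta_2 - \theta_1)\big)$, so that $f(\theta_2) - f(\theta_1) = \int_0^1 \varphi'(t)\,\mathrm{d}t$. Writing $\Delta = \theta_2 - \theta_1$, I have
\[
\varphi'(t) \;=\; \E\!\left[ \ell'\big(y_n,\langle x_n,\theta_1+t\Delta\rangle\big)\,\langle x_n,\Delta\rangle\right].
\]
Applying $|\ell'| \leqslant 1$ and then Cauchy--Schwarz in $\mathbb{P}$-expectation yields $|\varphi'(t)| \leqslant \E\big[|\langle x_n,\Delta\rangle|\big] \leqslant \sqrt{\E\langle x_n,\Delta\rangle^2}= \sqrt{\langle \Delta, \E[x_n\otimes x_n]\,\Delta\rangle}$. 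The Löwner bound $\E[x_n\otimes x_n] \preccurlyeq \rho H$ then gives $|\varphi'(t)| \leqslant \sqrt{\rho}\,\|H^{1/2}\Delta\|$, uniformly in $t$. Integrating over $t \in [0,1]$ produces the claimed inequality.

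There is no real obstacle: the bound on $\varphi'(t)$ does not depend on $t$, so the integration is trivial, and both $|\ell'| \leqslant 1$ and $\E[x_n\otimes x_n] \preccurlyeq \rho H$ are explicitly listed in the hypotheses. The only modest subtlety worth remarking on in the write-up is that the argument works for \emph{any} pair $(\theta_1,\theta_2)$ without any smallness assumption on $\|H^{1/2}(\theta_1-\theta_2)\|$, which is precisely why this bound is useful as a crude global substitute when the sharper exponential bound of Prop.~\ref{prop:functionvalues} would be too large to tolerate (e.g.\ on the bad event $A_1^{\sf c}$ in the proof of Theorem~\ref{theo:log}).
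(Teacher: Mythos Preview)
Your proof is correct and is essentially identical to the paper's own argument: parametrize the segment, bound $|\varphi'(t)|$ via $|\ell'|\leqslant 1$, Cauchy--Schwarz, and $\E[x_n\otimes x_n]\preccurlyeq \rho H$, then integrate. The paper's proof is terser (it jumps straight to $|\varphi'(t)|\leqslant \sqrt{\rho}\|H^{1/2}(\theta_1-\theta_2)\|$), but your explicit Cauchy--Schwarz step is exactly the intended justification.
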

\begin{proof}
Let $\varphi(t) = f(\theta_1 + t( \theta_2- \theta_1) ) - f(\theta_1)$. We have
$|\varphi'(t)| = |\E \ell'(y_n, \langle x_n, \theta1 + t( \theta_2- \theta_1) \rangle )
\langle \theta_2- \theta_1t, x_n \rangle  | \leqslant \sqrt{ \rho } \| H^{1/2} ( \theta_1 - \theta_2) \|$. Integrating between 0 and 1 leads to the desired result.
\end{proof}

\subsection{Analysis of Newton step}
\label{sec:selfc2}
Self-concordance has been traditionally used in the analysis of Newton's method~(see~\cite{boyd,self}). In this section, we adapt classical results to our specific notion of self-concordance (see also~\cite{bach2010self}). A key quantity is the so-called ``Newton decrement'' at a certain point $\theta_1$, equal to
$\langle f'(\theta_1), f''(\theta_1)^{-1} f'(\theta_1) \rangle$, which governs the convergence behavior of Newton methods (this is the quantity which is originally shown to be quadratically convergent). In this paper, we consider a slightly different version where the Hessian is chosen to be the one at $\theta_\ast$, i.e., $\langle f'(\theta_1), H^{-1} f'(\theta_1) \rangle$.

The following proposition shows how a full Newton step improves the Newton decrement (by taking a square).

\begin{proposition}[Effect of Newton step on Newton decrement]
\label{prop:effect_newton}
Assume \textbf{(B3-4)}, and $\theta_1 \in \H$ and $\theta_2 = \theta_1 - f''(\theta_1)^{-1} f'(\theta_1)$. Then
\BEQ
\langle f'(\theta_2), H^{-1} f'(\theta_2) \rangle
\leqslant
 {\kappa \rho}   e^{2 \sqrt{\kappa \rho} d_1}  
\langle f'(\theta_1),  H^{-1}   f'(\theta_1) \rangle^2
\bigg(
\frac{ e^{\sqrt{\kappa \rho}  d_{12}} - \sqrt{\kappa \rho}  d_{12} - 1 }{ {\kappa \rho} d_{12}^2}\bigg)^2,
\EEQ
where $d_{12}^2 =  { \langle \theta_2 - \theta_1, H ( \theta_2  - \theta_1 ) \rangle}
\leqslant e^{\sqrt{\kappa \rho}  d_1} \langle f'(\theta_1) , H^{-1} f'(\theta_1)\rangle$ 
and $d_1 =  \langle \theta_1 - \theta_\ast, H ( \theta_1  - \theta_\ast ) \rangle^{1/2}$.
\end{proposition}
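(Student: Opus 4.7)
The plan is to exploit the defining identity of the Newton step, namely $f'(\theta_1) + f''(\theta_1)(\theta_2-\theta_1) = 0$, which rewrites $f'(\theta_2)$ as the \emph{residual} of the first-order approximation of the gradient at $\theta_1$. The whole bound then reduces to controlling this residual in the $H^{-1}$-weighted norm, which is exactly what the earlier self-concordance lemmas deliver.

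Concretely, I would set $\Delta = H^{-1} f'(\theta_2)$ and apply the ``Expansion of gradients'' proposition (the one bounding $\langle \Delta,\, f'(\theta_2)-f'(\theta_1)-f''(\theta_1)(\theta_2-\theta_1)\rangle$) to the pair $(\theta_1,\theta_2)$. By the Newton-step identity, the left-hand side collapses to $\langle f'(\theta_2), H^{-1} f'(\theta_2)\rangle$, while the $\langle \Delta,H\Delta\rangle^{1/2}$ factor on the right becomes $\langle f'(\theta_2), H^{-1} f'(\theta_2)\rangle^{1/2}$. Cancelling one square root yields
\[
\langle f'(\theta_2), H^{-1} f'(\theta_2)\rangle^{1/2} \leqslant \sqrt{\kappa\rho}\,\bigl[f(\theta_2) - f(\theta_1) - \langle f'(\theta_1), \theta_2-\theta_1\rangle\bigr].
\]

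Next I would bound the bracket by Proposition~\ref{prop:functionvalues} (expansion of function values), giving
\[
f(\theta_2)-f(\theta_1)-\langle f'(\theta_1),\theta_2-\theta_1\rangle \leqslant \langle \theta_2-\theta_1, f''(\theta_1)(\theta_2-\theta_1)\rangle \cdot \frac{e^{\sqrt{\kappa\rho}\, d_{12}} - \sqrt{\kappa\rho}\, d_{12} - 1}{\kappa\rho\, d_{12}^2}.
\]
Because $\theta_2 - \theta_1 = -f''(\theta_1)^{-1} f'(\theta_1)$, the quadratic factor simplifies to the Newton decrement $\langle f'(\theta_1), f''(\theta_1)^{-1} f'(\theta_1)\rangle$. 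To convert this decrement (weighted by $f''(\theta_1)^{-1}$) into one weighted by $H^{-1}$, I would invoke Proposition~\ref{prop:hessians}, which gives $f''(\theta_1)^{-1} \preccurlyeq H^{-1} e^{\sqrt{\kappa\rho}\, d_1}$, introducing one factor of $e^{\sqrt{\kappa\rho}\, d_1}$. Squaring the whole inequality then produces the claimed $e^{2\sqrt{\kappa\rho}\, d_1}$ prefactor together with the squared Newton decrement and the squared bracket in $d_{12}$.

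The auxiliary bound on $d_{12}^2$ follows from the same Hessian comparison used in reverse: $H \preccurlyeq f''(\theta_1)\, e^{\sqrt{\kappa\rho}\, d_1}$, so
\[
d_{12}^2 = \langle \theta_2-\theta_1, H(\theta_2-\theta_1)\rangle \leqslant e^{\sqrt{\kappa\rho}\, d_1}\,\langle f'(\theta_1), f''(\theta_1)^{-1} f'(\theta_1)\rangle,
\]
which is then related to $\langle f'(\theta_1), H^{-1} f'(\theta_1)\rangle$ via Proposition~\ref{prop:hessians} again. The only slightly delicate step is tracking the exponentials $e^{\sqrt{\kappa\rho}\, d_1}$ carefully so that the final bound has exactly the claimed exponent; once one fixes the convention ``swap $f''(\theta_1)$ for $H$ costs one $e^{\sqrt{\kappa\rho}\, d_1}$ factor,'' the counting is routine. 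No step involves new analytical ideas beyond the self-concordance tools already established in \mysec{selfc1}.
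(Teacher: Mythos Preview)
Your proposal is correct and follows essentially the same route as the paper: apply the ``Expansion of gradients'' proposition (which, combined with the Newton identity $f'(\theta_1)+f''(\theta_1)(\theta_2-\theta_1)=0$, reduces to a bound on $\langle \Delta, f'(\theta_2)\rangle$), bound the function-value residual by Proposition~\ref{prop:functionvalues}, rewrite the resulting quadratic term as the Newton decrement $\langle f'(\theta_1), f''(\theta_1)^{-1} f'(\theta_1)\rangle$, swap $f''(\theta_1)^{-1}$ for $H^{-1}$ via Proposition~\ref{prop:hessians}, and finally square. The only cosmetic difference is that the paper keeps $\Delta$ generic and ``optimizes over $\Delta$'' at the end, whereas you fix $\Delta = H^{-1} f'(\theta_2)$ from the start and cancel a square root---these are the same step.
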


\begin{proof}
When applying the two previous propositions to  the Newton step $\theta_2 = \theta_1 - f''(\theta_1)^{-1} f'(\theta_1)$, we get:
\BEAS
\langle \Delta, f'(\theta_2)  \rangle
& \leqslant & \sqrt{\kappa \rho}  \langle \Delta,H \Delta \rangle^{1/2}
\langle f'(\theta_1),  f''(\theta_1)^{-1}   f'(\theta_1) \rangle
\frac{ e^{\sqrt{\kappa \rho}  d_{12}} - \sqrt{\kappa \rho}  d_{12} - 1 }{ {\kappa \rho}   d_{12}^2}\\
& \leqslant & \sqrt{\kappa \rho} S e^{\sqrt{\kappa \rho} d_1} \langle \Delta,H \Delta \rangle^{1/2}
\langle f'(\theta_1),  H^{-1}   f'(\theta_1) \rangle
\frac{ e^{\sqrt{\kappa \rho}  d_{12}} - \sqrt{\kappa \rho}  d_{12} - 1 }{ {\kappa \rho} d_{12}^2}.
\EEAS
We then optimize with respect to  $\Delta$ to obtain the desired result.
\end{proof}

The following proposition shows how the Newton decrement is upper bounded by a function of the excess risk.

\begin{proposition}[Newton decrement]
\label{prop:decrement}
Assume \textbf{(B3-4)}, and $\theta_1 \in \H$, then,
\BEQ
\langle f'(\theta_1) , H^{-1} f'(\theta_1)\rangle
  \leqslant   \bigg(
\frac{1}{2}  \sqrt{\kappa \rho}  \Delta_1
+ \sqrt{
   d_1^2
+ \sqrt{\kappa \rho}  d_1  \Delta_1
+ \frac{1}{4}  {\kappa \rho}  \Delta_1^2
}
\bigg)^2,
\EEQ
with $d_1  = \sqrt{ \langle \theta_1 - \theta_\ast, H( \theta_1 - \theta_\ast) \rangle}$
and $\Delta_1 = f(\theta_1) - f(\theta_\ast)$.
\end{proposition}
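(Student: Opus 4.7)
The target quantity $u^2 := \langle f'(\theta_1), H^{-1}f'(\theta_1)\rangle$ is a squared $H^{-1}$-norm of the gradient, and we already have a clean linear bound relating $f'(\theta_1)$, $H(\theta_1-\theta_\ast)$ and the excess risk $\Delta_1$ through the ``Expansion of gradients'' proposition in \mysec{selfc1}. My plan is to invoke that proposition with $\theta_\ast$ in the role of its first argument and $\theta_1$ in the role of its second argument. Using $f'(\theta_\ast)=0$ and $f''(\theta_\ast)=H$, the stated inequality reduces, for every test direction $\Delta\in\H$, to
\BEQ
\langle \Delta,\; f'(\theta_1) - H(\theta_1-\theta_\ast)\rangle \;\leqslant\; \sqrt{\kappa\rho}\,\langle \Delta, H\Delta\rangle^{1/2}\,\Delta_1.
\EEQ

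I would then specialize to $\Delta = H^{-1}f'(\theta_1)$. This choice turns $\langle \Delta, f'(\theta_1)\rangle$ and $\langle \Delta, H\Delta\rangle$ into $u^2$, while the remaining cross term $\langle \Delta, H(\theta_1-\theta_\ast)\rangle$ becomes $\langle f'(\theta_1), \theta_1-\theta_\ast\rangle$. One Cauchy--Schwarz step in the $H$-inner product handles this cross term:
\BEQ
\langle f'(\theta_1), \theta_1-\theta_\ast\rangle \;=\; \langle H^{-1/2}f'(\theta_1),\, H^{1/2}(\theta_1-\theta_\ast)\rangle \;\leqslant\; u\, d_1.
\EEQ
Combining the two displays yields the scalar inequality $u^2 \leqslant u\,d_1 + \sqrt{\kappa\rho}\,\Delta_1\,u$, i.e.\ (since $u\geqslant 0$)
\BEQ
u \;\leqslant\; d_1 + \sqrt{\kappa\rho}\,\Delta_1.
\EEQ

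To bring this to the exact form stated in the proposition, I would use the purely algebraic identity
\BEQ
d_1 + \sqrt{\kappa\rho}\,\Delta_1 \;=\; \tfrac{1}{2}\sqrt{\kappa\rho}\,\Delta_1 + \sqrt{\bigl(d_1 + \tfrac{1}{2}\sqrt{\kappa\rho}\,\Delta_1\bigr)^{2}} \;=\; \tfrac{1}{2}\sqrt{\kappa\rho}\,\Delta_1 + \sqrt{d_1^{2} + \sqrt{\kappa\rho}\,d_1\Delta_1 + \tfrac{1}{4}\kappa\rho\,\Delta_1^{2}},
\EEQ
and then square both sides to recover the announced upper bound on $u^2 = \langle f'(\theta_1), H^{-1}f'(\theta_1)\rangle$.

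I do not expect a substantive obstacle: the only nontrivial input, the self-concordant gradient expansion, is already proved earlier, and the rest is choice of test direction, one application of Cauchy--Schwarz, and the solution of a quadratic. The only place where one must be careful is the orientation of the gradient-expansion inequality --- taking $\theta_\ast$ (and not $\theta_1$) as the base point is what makes $f''(\cdot)=H$ and collapses the Bregman divergence on the right-hand side to exactly $\Delta_1$; swapping the roles would produce a bound involving $f''(\theta_1)$ rather than the fixed operator $H$, and the statement would no longer match.
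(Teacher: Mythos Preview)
Your proof is correct and follows essentially the same route as the paper: both apply the ``Expansion of gradients'' proposition with base point $\theta_\ast$ and test direction $\Delta = H^{-1}f'(\theta_1)$ to obtain the inequality $u^2 \leqslant \sqrt{\kappa\rho}\,\Delta_1\, u + \langle f'(\theta_1),\theta_1-\theta_\ast\rangle$ with $u=\langle f'(\theta_1),H^{-1}f'(\theta_1)\rangle^{1/2}$.

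The one genuine difference is in how the cross term $\langle f'(\theta_1),\theta_1-\theta_\ast\rangle$ is handled. The paper applies the gradient expansion a second time (now with $\Delta = \theta_1-\theta_\ast$) to get $\langle f'(\theta_1),\theta_1-\theta_\ast\rangle \leqslant d_1^2 + \sqrt{\kappa\rho}\,d_1\Delta_1$, and then solves the resulting quadratic in $u$. You instead use a single Cauchy--Schwarz step, $\langle f'(\theta_1),\theta_1-\theta_\ast\rangle \leqslant u\,d_1$, which makes the inequality linear in $u$ and gives directly $u \leqslant d_1 + \sqrt{\kappa\rho}\,\Delta_1$. Your final algebraic identity then shows that the paper's displayed bound is in fact exactly $(d_1 + \sqrt{\kappa\rho}\,\Delta_1)^2$, so the two arguments yield the \emph{same} numerical bound; yours is slightly shorter and makes transparent that the radicand in the statement is a perfect square.
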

\begin{proof}
We may   bound the Newton decrement
 as follows:
\BEA
\nonumber
\langle f'(\theta_1) , H^{-1} f'(\theta_1)\rangle
& \leqslant &
\langle f'(\theta_1)   - H( \theta_1 - \theta_\ast), H^{-1} f'(\theta_1)\rangle
+ \langle   H( \theta_1 - \theta_\ast), H^{-1} f'(\theta_1)\rangle
\\
\label{eq:decrement} & \leqslant &
  \sqrt{\kappa \rho}  \big[ f(\theta_1)- f(\theta_\ast) \big] \langle f'(\theta_1) , H^{-1} f'(\theta_1)\rangle^{1/2}
+\langle f'(\theta_1) , \theta_1 - \theta_\ast\rangle
.
\EEA
This leads to
\BEAS
\langle f'(\theta_1) , H^{-1} f'(\theta_1)\rangle
& \leqslant & \bigg(
\frac{1}{2}  \sqrt{\kappa \rho}  \big[ f(\theta_1)- f(\theta_\ast) \big]
+ \sqrt{
\langle f'(\theta_1) , \theta_1 - \theta_\ast\rangle
+ \frac{1}{4}  {\kappa \rho}  \big[ f(\theta_1)- f(\theta_\ast) \big]^2
}
\bigg)^2.
\EEAS
Moreover,
\BEAS
\langle f'(\theta_1) , \theta_1 - \theta_\ast\rangle
& = & 
\langle H ( \theta_1 - \theta_\ast) , \theta_1 - \theta_\ast\rangle
+ \langle f'(\theta_1) - H ( \theta_1 - \theta_\ast), \theta_1 - \theta_\ast\rangle \\
& \leqslant & 
\langle H ( \theta_1 - \theta_\ast) , \theta_1 - \theta_\ast\rangle
+ \sqrt{\kappa \rho}  \langle \theta_1 -\theta_\ast, H(\theta_1 - \theta_\ast) \rangle^{1/2} \big[ f(\theta_1) - f(\theta_\ast) \big] \\
& \leqslant & 
d_1^2
+ \sqrt{\kappa \rho}  d_1  \big[ f(\theta_1) - f(\theta_\ast) \big].
\EEAS
Overall, we get
\BEAS
&&\langle f'(\theta_1) , H^{-1} f'(\theta_1)\rangle \\
& \leqslant & \bigg(
\frac{1}{2}  \sqrt{\kappa \rho}  \big[ f(\theta_1)- f(\theta_\ast) \big]
+ \sqrt{
   d_1^2
+ \sqrt{\kappa \rho}  d_1  \big[ f(\theta_1) - f(\theta_\ast) \big]
+ \frac{1}{4}  {\kappa \rho}  \big[ f(\theta_1)- f(\theta_\ast) \big]^2
}
\bigg)^2.
\EEAS
\end{proof}

The following proposition provides a bound on a quantity which is not the Newton decrement. Indeed, this is (up to the difference in the Hessians), the norm of the Newton step. This will be key in the following proofs.

\begin{proposition}[Bounding gradients from unweighted distance to optimum]
\label{prop:unweighted}
Assume \textbf{(B3-4)}, and $\theta_1 \in \H$, then,
\BEQ
\| H^{-1} f'(\theta_1) 
\| \leqslant 
\frac{ e^{ \sqrt{\kappa \rho}d_1 } -1 }{\sqrt{\kappa \rho}d_1} \| \theta_1 - \theta_\ast\|
,\EEQ
with $d_1  = \sqrt{ \langle \theta_1 - \theta_\ast, H( \theta_1 - \theta_\ast) \rangle}$.
\end{proposition}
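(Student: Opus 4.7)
The plan is to reduce the claim to the stronger gradient-expansion bound already derived inside the proof of the ``Expansion of gradients'' proposition and then choose a clever test direction so that $H^{-1} f'(\theta_1)$ appears naturally. Set $\delta = \theta_1 - \theta_\ast$. Since $f'(\theta_\ast)=0$ and $f''(\theta_\ast)=H$, the quantity $f'(\theta_1) - H\delta$ is exactly a first-order Taylor remainder of $f'$ centered at $\theta_\ast$, which is the setting of the expansion-of-gradients argument.

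First, apply the sharper bound derived in the proof of the expansion-of-gradients proposition (the one coming from integrating $|\varphi'(t)|\leq \|\theta_1-\theta_2\|\,\langle \Delta, f''(\theta_1)^2\Delta\rangle^{1/2}[e^{t\sqrt{\kappa\rho}\,d_{12}}-1]$) with the roles ``$\theta_1\!\to\!\theta_\ast$'' and ``$\theta_2\!\to\!\theta_1$''. Since $d_{12}=d_1$, $\|\theta_2-\theta_1\|=\|\delta\|$, and $f''(\theta_\ast)=H$, this yields, for every test vector $\Delta\in\H$,
\begin{equation*}
\langle \Delta,\, f'(\theta_1) - H\delta\rangle \;\leq\; \|\delta\|\,\langle \Delta, H^2\Delta\rangle^{1/2}\cdot\frac{e^{\sqrt{\kappa\rho}\,d_1}-1-\sqrt{\kappa\rho}\,d_1}{\sqrt{\kappa\rho}\,d_1}.
\end{equation*}

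Second, specialize $\Delta = H^{-1}v$ for an arbitrary $v\in\H$. The right-hand side collapses because $\langle H^{-1}v, H^2 H^{-1}v\rangle = \|v\|^2$, while the left-hand side becomes $\langle v,\, H^{-1}f'(\theta_1) - \delta\rangle$. Replacing $v$ by $-v$ gives the matching lower bound, and taking the supremum over $\|v\|=1$ yields
\begin{equation*}
\bigl\|H^{-1}f'(\theta_1) - \delta\bigr\| \;\leq\; \|\delta\|\cdot\frac{e^{\sqrt{\kappa\rho}\,d_1}-1-\sqrt{\kappa\rho}\,d_1}{\sqrt{\kappa\rho}\,d_1}.
\end{equation*}
Finally, the triangle inequality $\|H^{-1}f'(\theta_1)\|\leq \|\delta\|+\|H^{-1}f'(\theta_1)-\delta\|$ combines the two terms into
$\|\delta\|\bigl(1+\tfrac{e^{\sqrt{\kappa\rho}\,d_1}-1-\sqrt{\kappa\rho}\,d_1}{\sqrt{\kappa\rho}\,d_1}\bigr)=\|\delta\|\cdot\tfrac{e^{\sqrt{\kappa\rho}\,d_1}-1}{\sqrt{\kappa\rho}\,d_1}$, which is exactly the claimed bound.

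The only delicate point is recognizing that the readily available bound $g(t)\preccurlyeq e^{\sqrt{\kappa\rho}\,t d_1}H$ on Hessians does \emph{not} directly control $\|H^{-1}g(t)\delta\|$ (the operator $H^{-1}$ can inflate eigenvectors in badly-conditioned directions), so one cannot simply integrate $\|H^{-1}g(t)\delta\|$ in $t$. The substitution $\Delta=H^{-1}v$ is the trick that sidesteps this by testing only along directions that $H$ neutralizes, and it is this step that makes the argument go through cleanly while producing the exact constant $\tfrac{e^{\sqrt{\kappa\rho}\,d_1}-1}{\sqrt{\kappa\rho}\,d_1}$.
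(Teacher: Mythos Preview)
Your argument is correct and is essentially identical to the paper's proof: the paper also splits via the triangle inequality $\|H^{-1}f'(\theta_1)\|\leqslant\|\theta_1-\theta_\ast\|+\|H^{-1}[f'(\theta_1)-H(\theta_1-\theta_\ast)]\|$ and bounds the second term by $\|\theta_1-\theta_\ast\|\cdot\frac{e^{\sqrt{\kappa\rho}d_1}-1-\sqrt{\kappa\rho}d_1}{\sqrt{\kappa\rho}d_1}$ using the alternative gradient-expansion bound. The paper states this last inequality without spelling out the passage from the duality form to the norm form; your substitution $\Delta=H^{-1}v$ is exactly the natural way to fill that step in.
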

\begin{proof}
We have:
\BEAS
\| H^{-1} f'(\theta_1) 
\|
& \leqslant & \| H^{-1} \big[ f'(\theta_1) - H( \theta_1 - \theta_\ast )\big]
\|
+ \| H^{-1} \big[  H( \theta_1 - \theta_\ast )\big]
\| \\ 
& \leqslant & \|  \theta_1 - \theta_\ast \|
\bigg( 1 +  \frac{ e^{ \sqrt{\kappa \rho}d_1 } -1 - \sqrt{\kappa \rho}d_1  }{\sqrt{\kappa \rho}d_1}
\bigg).
\EEAS
\end{proof}

The next proposition shows that having a small Newton decrement implies that the weighted distance to optimum is small.

\begin{proposition}[Weighted distance to optimum]
\label{prop:asd}
Assume \textbf{(B3-4)}.
If we have $ {  \sqrt{ \kappa \rho} e^{ \sqrt{ \kappa \rho} d}   \langle f'(\theta),H^{-1} f'(\theta) \rangle^{1/2}} \leqslant \frac{1}{2}$, 
with $d = \sqrt{ \langle \theta - \theta_\ast, H ( \theta - \theta_\ast) \rangle}$, then
$$d  
 \leqslant 4 e^{\sqrt{ \kappa \rho} d }    \langle f'(\theta),H^{-1} f'(\theta) \rangle^{1/2}.$$
\end{proposition}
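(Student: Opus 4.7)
The plan is to reduce to a one-dimensional problem by restricting $f$ to the segment between $\theta_\ast$ and $\theta$, and then exploit the self-concordance inequality stated at the start of Section on properties of self-concordant functions. Write $\varphi(t) = f(\theta_\ast + t(\theta - \theta_\ast))$. Then $\varphi(0) = f(\theta_\ast)$, $\varphi'(0) = 0$ (since $\theta_\ast$ minimizes $f$), and $\varphi''(0) = \langle \theta - \theta_\ast, H(\theta - \theta_\ast)\rangle = d^2$. Applying the third-derivative bound $|f'''(\phi)[\delta,\delta,\eta]| \leq \sqrt{\kappa\rho}\, f''(\phi)[\delta,\delta]\,\sqrt{\langle \eta, H\eta\rangle}$ with $\phi = \theta_\ast + t(\theta - \theta_\ast)$ and $\delta = \eta = \theta - \theta_\ast$ gives $|\varphi'''(t)| \leq \sqrt{\kappa\rho}\, d\, \varphi''(t)$.

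A standard Gronwall argument, writing $|\tfrac{d}{dt}\log\varphi''(t)| \leq \sqrt{\kappa\rho}\,d$, then yields $\varphi''(t) \geq d^{2} e^{-\sqrt{\kappa\rho}\,d\,t}$. Integrating from $0$ to $1$ and using $\varphi'(0) = 0$, I would obtain
\[
\varphi'(1) \;\geq\; \frac{d\,(1 - e^{-\sqrt{\kappa\rho}\,d})}{\sqrt{\kappa\rho}}.
\]
On the other hand, $\varphi'(1) = \langle f'(\theta), \theta - \theta_\ast\rangle$, and Cauchy-Schwarz applied to $\langle H^{-1/2} f'(\theta), H^{1/2}(\theta - \theta_\ast)\rangle$ bounds this above by $N\,d$, where $N \defeq \langle f'(\theta), H^{-1} f'(\theta)\rangle^{1/2}$. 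Dividing by $d$ (the case $d=0$ is trivial) gives the key intermediate inequality
\[
1 - e^{-\sqrt{\kappa\rho}\,d} \;\leq\; \sqrt{\kappa\rho}\,N.
\]

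From here the proposition is a short manipulation. Multiplying through by $e^{\sqrt{\kappa\rho}\,d}$ and using $e^{x}-1 \geq x$ yields directly $\sqrt{\kappa\rho}\,d \leq \sqrt{\kappa\rho}\,e^{\sqrt{\kappa\rho}\,d}\,N$, i.e.\ $d \leq e^{\sqrt{\kappa\rho}\,d}\,N$, which is already stronger than the claimed bound $d \leq 4 e^{\sqrt{\kappa\rho}\,d}\,N$. The hypothesis $\sqrt{\kappa\rho}\,e^{\sqrt{\kappa\rho}\,d}\,N \leq 1/2$ is used to ensure $\sqrt{\kappa\rho}\,N < 1$ (so one may legitimately pass to $-\log(1-\sqrt{\kappa\rho}\,N)$ and use $-\log(1-x)\leq 2x$ on $[0,1/2]$ to get the cleaner consequence $d \leq 2N$); the looser constant $4$ in the statement allows slack that is presumably convenient when this bound is plugged into subsequent estimates in the Newton-step analysis.

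The only slightly delicate step is the Gronwall passage from the cubic/quadratic ratio inequality to the two-sided exponential control on $\varphi''$; everything else is an integration and Cauchy-Schwarz. No additional machinery beyond the global third-derivative inequality from Section on self-concordance is needed.
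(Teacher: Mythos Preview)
Your argument is correct and in fact yields a sharper result than the paper's. The paper proceeds indirectly: it fixes an arbitrary unit direction $\Delta$ (in the $H$-norm), lower-bounds $f(\theta+t\Delta)$ using the self-concordance inequality, and shows that whenever the hypothesis holds one can find an explicit radius $t_0 = \frac{2s}{\sqrt{\kappa\rho}(1-s)}$ with $s \leqslant \sqrt{\kappa\rho}\,e^{\sqrt{\kappa\rho}d}N$ such that $f(\theta+t_0\Delta)\geqslant f(\theta)$ for every $\Delta$; since $\theta_\ast$ lies in the sublevel set $\{f\leqslant f(\theta)\}$, this forces $d\leqslant t_0\leqslant 4e^{\sqrt{\kappa\rho}d}N$. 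The hypothesis $s\leqslant 1/2$ is genuinely needed there to invoke an auxiliary inequality from~\cite{bach2010self}.

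Your route is more direct: restrict to the single segment from $\theta_\ast$ to $\theta$, integrate the Gronwall bound $\varphi''(t)\geqslant d^2 e^{-\sqrt{\kappa\rho}dt}$ once, and compare $\varphi'(1)$ to the Cauchy--Schwarz bound $Nd$. This yields $1-e^{-\sqrt{\kappa\rho}d}\leqslant \sqrt{\kappa\rho}N$ and hence $d\leqslant e^{\sqrt{\kappa\rho}d}N$ \emph{unconditionally}, with constant $1$ instead of $4$. Your observation that the hypothesis additionally gives $\sqrt{\kappa\rho}N\leqslant 1/2$ and therefore the explicit bound $d\leqslant 2N$ is also correct, though not required for the proposition as stated. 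The paper's sublevel-set argument is the classical self-concordance device (it does not presuppose knowledge of where $\theta_\ast$ sits), whereas your argument exploits $f'(\theta_\ast)=0$ from the outset; here that is legitimate and buys both simplicity and a better constant.
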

\begin{proof}
For any $\Delta \in \H$ such that $\langle \Delta, H \Delta \rangle = 1$,  and $t \geqslant 0$, we have, following the same reasoning than for Prop.~\ref{prop:functionvalues}:
\BEAS
f(\theta  + t\Delta) & \geqslant & f(\theta) + t \langle \Delta,  f'(\theta) \rangle
+  \langle \Delta, f''(\theta) \Delta\rangle \frac{e^{ -vt} + vt - 1}{v^2} \\
& \geqslant & f(\theta) + 
\frac{\langle \Delta, f''(\theta) \Delta\rangle}{v^2}
\bigg[
e^{-vt} - 1 + tv \big(
1 - s \big)
\bigg] 
\EEAS
with $v = \sqrt{ \kappa \rho} \sqrt{ \langle \Delta , H \Delta \rangle }   = \sqrt{ \kappa \rho}$  and
$$s = \frac{  v  | \langle \Delta,  f'(\theta) \rangle|}{   \langle \Delta, f''(\theta) \Delta\rangle}
\leqslant  \frac{  \sqrt{ \kappa \rho}    \langle f'(\theta),  f''(\theta)^{-1} f'(\theta) \rangle^{1/2}}{   \langle \Delta, f''(\theta) \Delta\rangle^{1/2}}
\leqslant  {  \sqrt{ \kappa \rho}  e^{ \sqrt{ \kappa \rho} d}   \langle f'(\theta),H^{-1} f'(\theta) \rangle^{1/2}}.$$

It is shown in~\cite{bach2010self} that if $s  \in [0,1)$, then
$$
e^{-2s /(1-s)} + ( 1- s) 2s (1-s)^{-1} -1 \geqslant 0.
$$
This implies that  if $s \leqslant 1/2$, for $t = \frac{  2 \sqrt{ \kappa \rho}^{-1}s }{1-s}$, $f(\theta_2 + t \Delta) \geqslant f(\theta_2)$. Thus,
\BEQ
\label{eq:d2}
d = \sqrt{ \langle \theta  - \theta_\ast, H(\theta  - \theta_\ast) \rangle} \leqslant
t \leqslant
 {  4 \sqrt{ \kappa \rho}^{-1}s }
 \leqslant 4 e^{\sqrt{ \kappa \rho}d }    \langle f'(\theta),H^{-1} f'(\theta) \rangle^{1/2}.
 \EEQ
 Note that the quantity $d$ appears twice in the result above.
\end{proof}

 \subsection{Proof of Prop.~\ref{prop:approx_newton}}
 In this section, we prove Prop.~\ref{prop:approx_newton} using tools from self-concordance analysis. These tools are described in the previous Sections~\ref{sec:selfc1} and~\ref{sec:selfc2}. In order to understand the proof, it is preferable to read these sections first.
 
 \label{sec:proofprop1}
 
  We use the notation  $t^2 = \kappa \rho \varepsilon_1$. We then get
$d_1^2 \defeq\langle \theta_1 - \theta_\ast, H(\theta_1 - \theta_\ast) \rangle \leqslant  ( 3 + t^2 ) \varepsilon_1$ from Prop.~\ref{prop:wd}.

 \paragraph{Proof of   \eq{always}.}
  We have, from Prop.~\ref{prop:values_few},
 \BEA
\nonumber  f(\theta_3) - f(\theta_\ast)
 & \leqslant &  f(\theta_2) - f(\theta_\ast) +  \sqrt{\rho} \| H^{1/2} ( \theta_3- \theta_2) \|    \\
 \label{eq:AS}
 & \leqslant &  f(\theta_2) - f(\theta_\ast) +  \sqrt{  2 \rho  \varepsilon_2} e^{ \sqrt{ \kappa \rho}d_1/2}    .
\EEA
Moreover, we have, also
from Prop.~\ref{prop:values_few},
$  f(\theta_2) - f(\theta_\ast) \leqslant  \sqrt{\rho} \| H^{1/2} f''(\theta_1)^{-1} f'(\theta_1) \|$, and using Prop.~\ref{prop:hessians}, we get
\BEQ
\label{eq:AT}
 f(\theta_2) - f(\theta_\ast) \leqslant  e^{ \sqrt{\kappa \rho} d_1 } \sqrt{\rho} \| H^{-1/2}  f'(\theta_1) \|.
 \EEQ

We may now use Prop.~\ref{prop:decrement} and use the bound:
\BEA
\nonumber
\langle f'(\theta_1) , H^{-1} f'(\theta_1)\rangle
& \leqslant & \bigg(
\frac{1}{2}  \sqrt{\kappa \rho} \varepsilon_1
+ \sqrt{
(3+t^2) \varepsilon_1
+ \sqrt{\kappa \rho} \sqrt{ (3+t^2) \varepsilon_1}  \varepsilon_1+ \frac{1}{4}  {\kappa \rho}  \varepsilon_1^2
}
\bigg)^2 \\
\nonumber
& \leqslant & \bigg(
\frac{1}{2}  t \sqrt{\varepsilon_1}
+   \sqrt{
 (3+t^2)\varepsilon_1
+ t \sqrt{(3+t^2) }  \varepsilon_1+ \frac{1}{4}  t^2 \varepsilon_1
}
\bigg)^2 \\
\label{eq:AD}& = & \bigg(
\frac{1}{2}  t  
+  \sqrt{
(3+t^2)
+ t \sqrt{(3+t^2)  }   + \frac{1}{4}  t^2  
}
\bigg)^2  \varepsilon_1  \defeq \square_1(t)^2 \varepsilon_1.
\EEA
A simple plot shows that for all $t>0$, 
\BEQ
\label{eq:DE}
\square_1(t) = \frac{1}{2}  t  
+  \sqrt{
(3+t^2)
+ t \sqrt{(3+t^2)  }   + \frac{1}{4}  t^2  
} \leqslant \sqrt{3} + 2 t.
\EEQ
Combining with \eq{AS} and \eq{AT}, we get
$$ f(\theta_3) - f(\theta_\ast)
  \leqslant   e^{ \sqrt{3+t^2} t} \sqrt{\rho \varepsilon_1} ( \sqrt{ 3 } + 2t) +  \sqrt{  2 \rho  \varepsilon_2} e^{\sqrt{3+t^2} t/2} ,
  $$
which is  exactly \eq{always}.

\paragraph{Proof of \eq{almost1} and \eq{almost2}.} For these two inequalities, the starting point is the same.
Using \eq{AD} (i.e., the Newton decrement at $\theta_1$), we first show that  the distances $d_{12}$ and $d_2$ are bounded.
Using $f''(\theta_1) \succcurlyeq e^{ -\sqrt{  \kappa \rho }d_1  }  H$ (Prop.~\ref{prop:hessians}):
$$d_{12}^2 \leqslant e^{ \sqrt{  \kappa \rho }d_1  } \langle f'(\theta_1) , H^{-1} f'(\theta_1)\rangle
\leqslant 
 e^{ t \sqrt{3+t^2} } 
  \square_1(t)^2 \varepsilon_1 \defeq  \square_2(t)^2 \varepsilon_1   ,
 $$
 and thus
  $$d_2 \leqslant d_1 + d_{12} \leqslant  \bigg[  \sqrt{3+t^2} + 
    \square_2(t)
   \bigg] \sqrt{\varepsilon_1}.
  $$

Now, we can bound the Newton decrement at $\theta_2$, using Prop.~\ref{prop:effect_newton}:
\BEAS
\langle f'(\theta_2) , H^{-1} f'(\theta_2)\rangle
& \leqslant & \kappa \rho e^{ 2 \sqrt{\kappa \rho} d_1 } 
\langle f'(\theta_1) , H^{-1} f'(\theta_1)\rangle^2 \bigg(\frac{e^{ \sqrt{\kappa \rho } d_{12} } -  \sqrt{\kappa \rho } d_{12}
- 1 }{ (\sqrt{\kappa \rho } d_{12} )^2}\bigg)^2
 \\
 & \leqslant & 
 \kappa \rho  \varepsilon_1^2 \square_2(t)^4
 \bigg(\frac{e^{ t \square_2(t)} -  t \square_2(t)
- 1 }{ (t \square_2(t) )^2}\bigg)^2
\defeq  \square_3(t) \kappa \rho  \varepsilon_1^2.   \EEAS

Thus, using Prop.~\ref{prop:asd}, if $
\kappa \rho e^{ 2 \sqrt{ \kappa \rho} d_2} 
\langle f'(\theta_2) , H^{-1} f'(\theta_2)\rangle
\leqslant
t^4  e^{2  t  [  \sqrt{3+t^2} + 
    \square_2(t)
   ]  }
\square_3(t)   \leqslant \frac{1}{4}$,
then
$$d_2 \leqslant 4 e^{ \sqrt{\kappa \rho}  d_2} 
\sqrt{\square_3(t) \kappa \rho  \varepsilon_1^2} 
\leqslant
 4 e^{  t  [  \sqrt{3+t^2} + 
    \square_2(t)
   ] } 
\sqrt{\square_3(t) \kappa \rho  \varepsilon_1^2} 
\defeq 
 \square_4(t) \sqrt{ \kappa \rho \varepsilon_1^2}.$$
We then have
\BEAS
d_3 = \sqrt{ \langle \theta_3 - \theta_\ast, H ( \theta_3 -\theta_\ast ) \rangle}
& \leqslant & 
\sqrt{ \langle \theta_3 - \theta_2, H ( \theta_3 -\theta_2 ) \rangle}
+\sqrt{ \langle \theta_2 - \theta_\ast, H ( \theta_2 -\theta_\ast ) \rangle}
= d_{23} + d_2 \\
& \leqslant & \square_4(t) \sqrt{ \kappa \rho \varepsilon_1^2}
+ \sqrt{ 2 \varepsilon_2 }  e^{ t\sqrt{ 3 + t^2 } /2}
\\
d_3 \sqrt{\kappa \rho}
& \leqslant 
& \square_4(t) t^2 
+ \sqrt{ 2 \varepsilon_2  \kappa \rho}  e^{ t\sqrt{ 3 + t^2 } /2}
\leqslant \square_4(t) t^2 
+ \sqrt{ 2 u^2}  e^{ t\sqrt{ 3 + t^2 } /2}
\EEAS
where $\varepsilon_2  \kappa \rho \leqslant u^2$.

We now have two separate paths to obtain \eq{almost1} and \eq{almost2}. 

 If we assume that $\varepsilon_2$ is bounded, i.e., with $t = 1/16$ and $u=1/4$, then, one can check computationally that  we obtain $d_3 \sqrt{\kappa \rho}  \leqslant 0.41$ and thus
$b=0.576$ below:
\BEAS
f(\theta_3) - f(\theta^\ast) 
& \leqslant &  \frac{ e^{\sqrt{\kappa \rho} d_3} - \sqrt{\kappa \rho} d_3 - 1}{ {\kappa \rho} } \mbox{ using Prop.~\ref{prop:value_from_opt}}, \\
& \leqslant & d_3^2 \max_{ \alpha \in [0 ,  0.41 } \frac{e^{\alpha} - \alpha - 1 }{\alpha^2}
\leqslant 0.576
\big( \square_4(t) \sqrt{ \kappa \rho \varepsilon_1^2}
+ \sqrt{ 2 \varepsilon_2 }  e^{ t\sqrt{ 3 + t^2 } /2}
\big)^2
\\
& \leqslant &  
0.576 (1+1/c)   \square_4(t)^2 \kappa \rho \varepsilon_1^2
+ 2 \times 0.576 (1+c) e^{ t\sqrt{ 3 + t^2 } } \varepsilon_2  \\
& \leqslant &  57
 \kappa \rho \varepsilon_1^2
+ 
12
\varepsilon_2 , \mbox{ with } c=8.1, \EEAS
which is exactly \eq{almost2}. 

%% MATLAB CODE
%t = 1/16; u = 1/4;
%S1 = .5*t + sqrt( ( 3+t^2) + t* sqrt(3+t^2) + t^2 / 4);
%S2 = S1 * exp( t* sqrt(3+t^2) / 2);
%S3 = S2^4 * ( exp( t*S2) - t*S2 - 1 )^2 ./ ( t*S2)^4;
%S4 = 4 * exp( t* ( sqrt(3+t^2) + S2 ) ) * sqrt( S3);
%check = t^4 * exp( 2 * t* ( sqrt(3+t^2) + S2 ) ) * S3
%bound1 = S4 * t^2 + sqrt( 2 * u^2) * exp( t*sqrt(3+t^2)/2)
%bound2 = (exp(bound1) -1 - bound1) / bound1^2
%c = 8.1
%a1 = bound2 * (1+1/c) * S4^2 
%a2 = bound2 * 2 * (1+c) * exp( t*sqrt(3+t^2))

If we only assume $\varepsilon_1$ bounded, then we have (from the beginning of the proof):
\BEAS
  f(\theta_3) - f(\theta_\ast)
 & \leqslant &  f(\theta_2) - f(\theta_\ast) +  \sqrt{\rho} \| H^{1/2} ( \theta_3- \theta_2) \|    \\
 & \leqslant &  57
 \kappa \rho \varepsilon_1^2
 +  \sqrt{  2 \rho  \varepsilon_2}  e^{ t \sqrt{3+t^2} / 2}   \leqslant    57
 \kappa \rho \varepsilon_1^2
 +  2 \sqrt{    \rho  \varepsilon_2}  ,
\EEAS
because we may use the earlier reasoning with $\varepsilon_3=0$. This is \eq{almost1}.

 \newpage

 \section{Additional experiments}
 
 In Table~\ref{tab:data2}, we describe the datasets we  have used in experiments and where they were downloaded from.

\urlstyle{same}

\begin{table}
\caption{Datasets used in our experiments \label{tab:data2}. We report the proportion of non-zero entries.}
\begin{center}
\begin{tabular}{|l|r|r|r|l|}
\hline
Name & $d$ & $n$ & sparsity &  \\
\hline
\emph{quantum} & 79 & 50 000 &    100 \% & {\small\url{osmot.cs.cornell.edu/kddcup/} }\\
\emph{covertype} & 55 & 581 012 &   100 \% &  {\small\url{www.csie.ntu.edu.tw/~cjlin/libsvmtools/datasets/}} \\
\emph{alpha} & 501 & 500 000 &   100 \% & {\small\url{ftp://largescale.ml.tu-berlin.de/largescale/}}\\
\emph{sido} &   4 933& 12 678 &    10 \% & {\small\url{www.causality.inf.ethz.ch/}}\\
\emph{rcv1} &         47 237& 20 242 &   0.2 \%  & {\small\url{www.csie.ntu.edu.tw/~cjlin/libsvmtools/datasets/}}\\
\emph{news} &       1 355 192 & 19 996  &   0.03 \% &   {\small\url{www.csie.ntu.edu.tw/~cjlin/libsvmtools/datasets/}}\\
\hline
\end{tabular}
\end{center}
\end{table}

 In \myfig{addtest}, we provide similar results than in \mysec{experiments}, for two additional datasets, \emph{quantum} and \emph{rcv1}, while in \myfig{train1}, \myfig{train2} and \myfig{train3}, we provide training objectives for all methods. 
 We can make the following observations:
 \BIT
 \item[--] For non-sparse datasets, SAG manages to get the smallest training error, confirming the results of~\cite{sag}.
 \item[--] For the high-dimensional sparse datasets, constant step-size SGD is performing best (note that as shown in \mysec{logistic}, it is not converging to the optimal value in general, this happens notably for the \emph{alpha} dataset).
 \EIT
 
\begin{figure}

\begin{center}
 
 \hspace*{-.5cm}
\includegraphics[scale=.43]{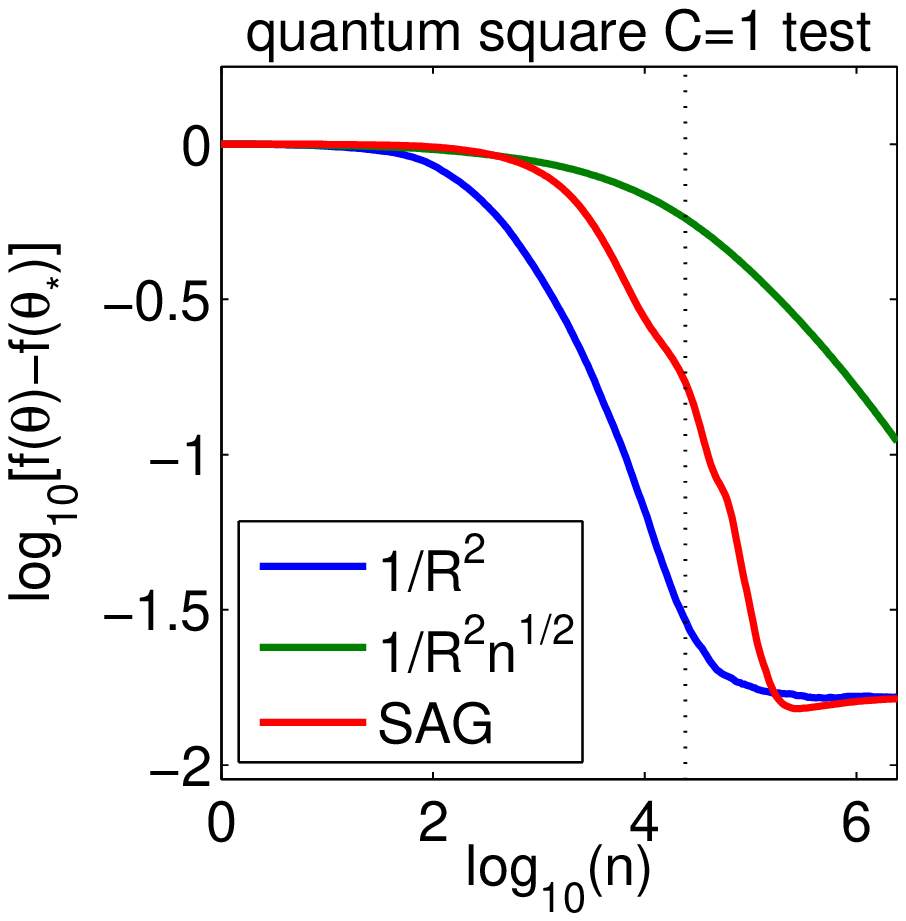}
\hspace*{-.25cm}
\includegraphics[scale=.43]{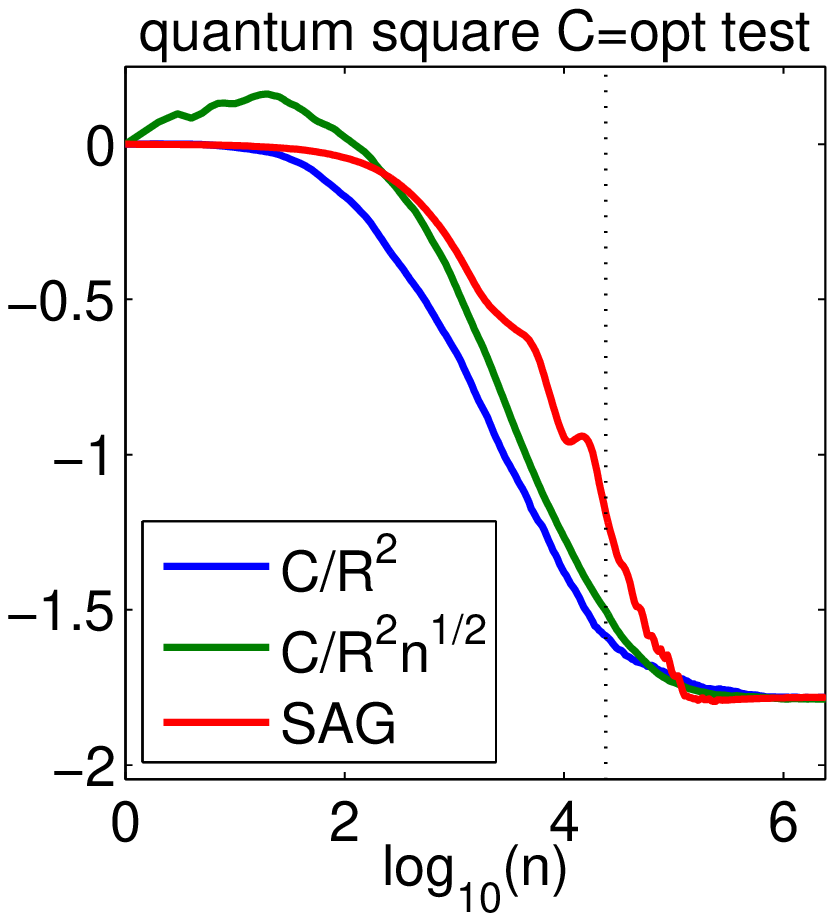}
\includegraphics[scale=.43]{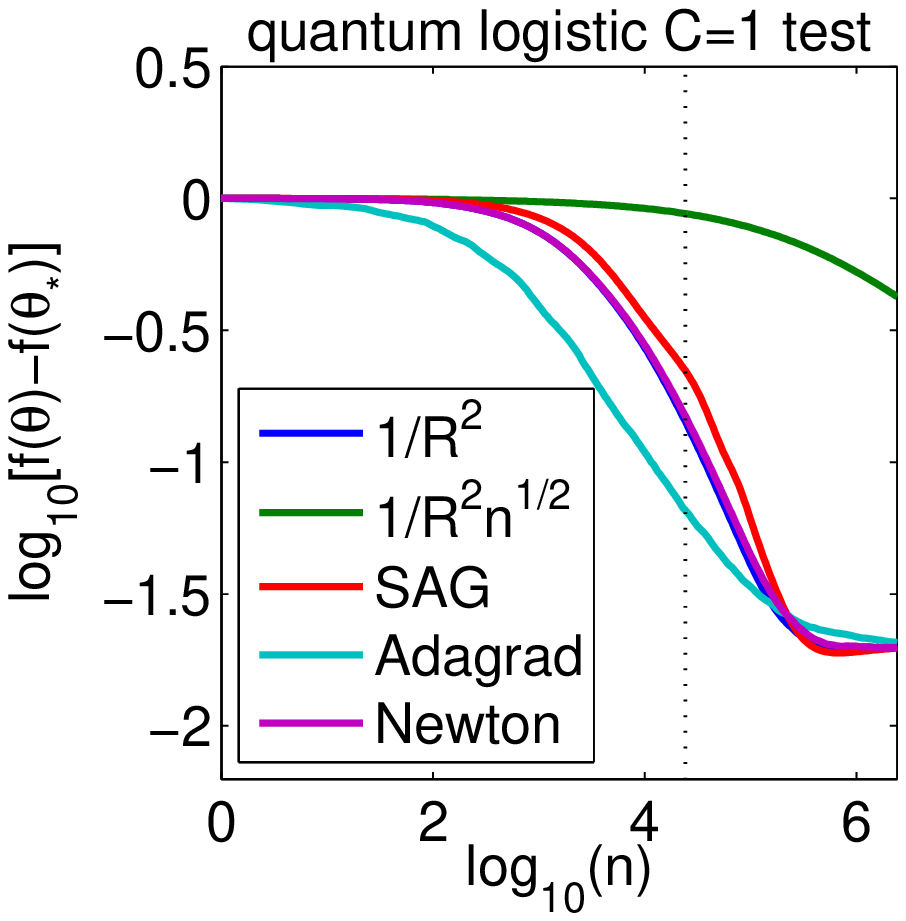}
\hspace*{-.25cm}
\includegraphics[scale=.43]{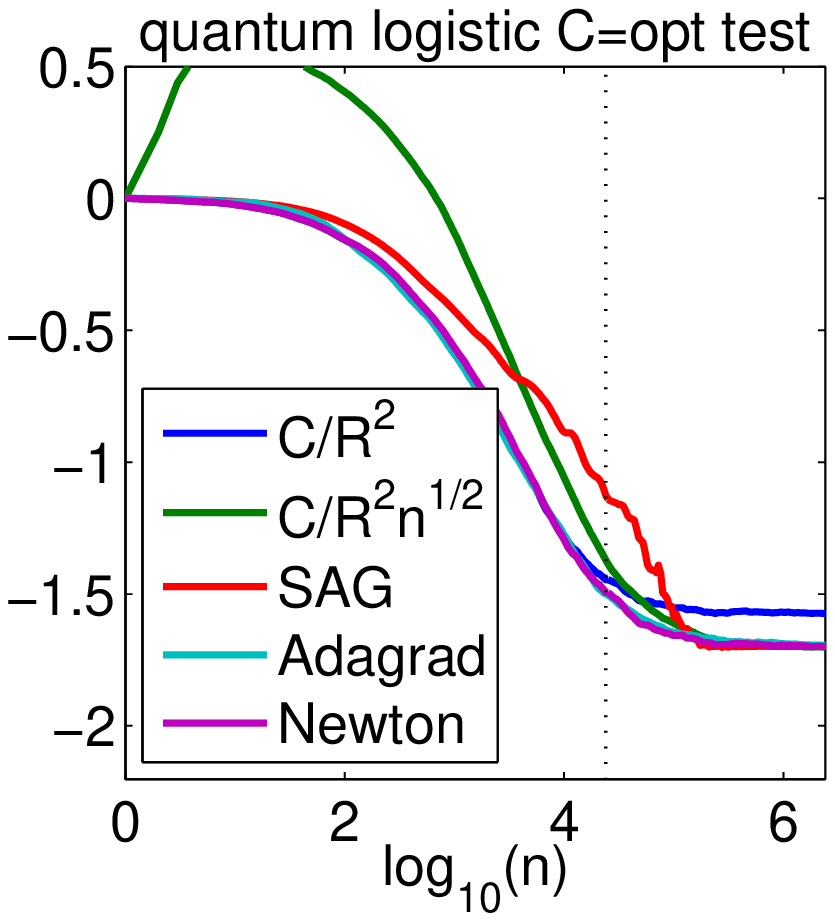}
\hspace*{-.5cm}

\hspace*{-.5cm}
\includegraphics[scale=.43]{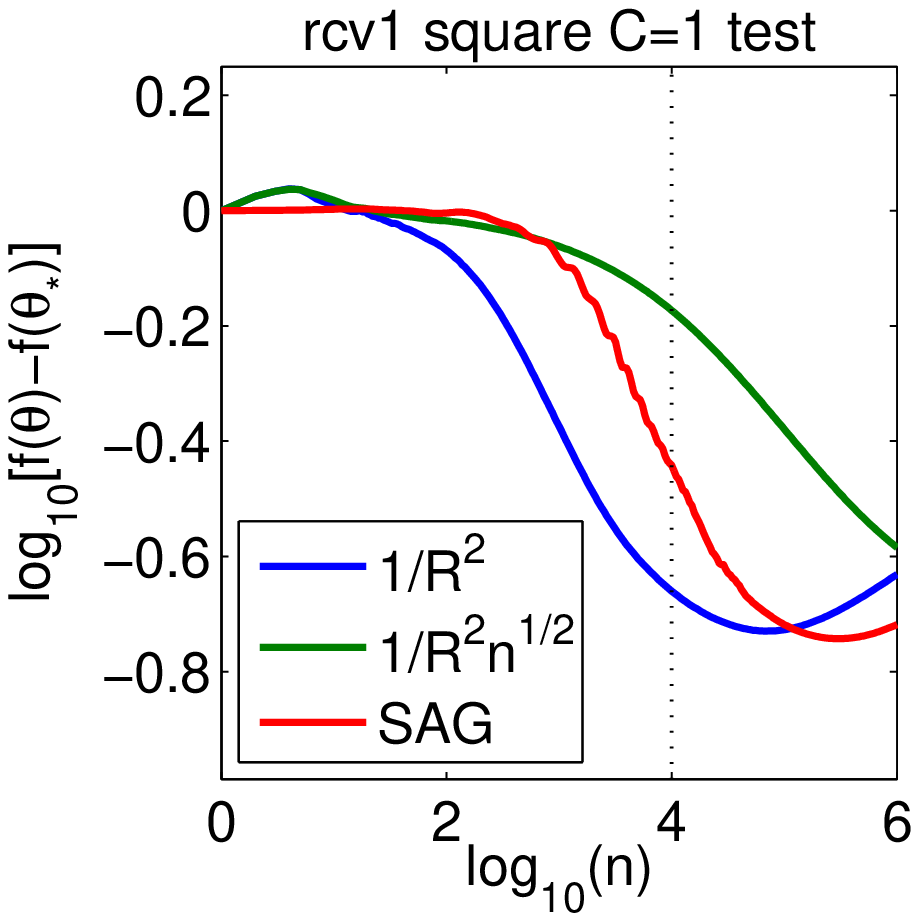}
\hspace*{-.25cm}
\includegraphics[scale=.43]{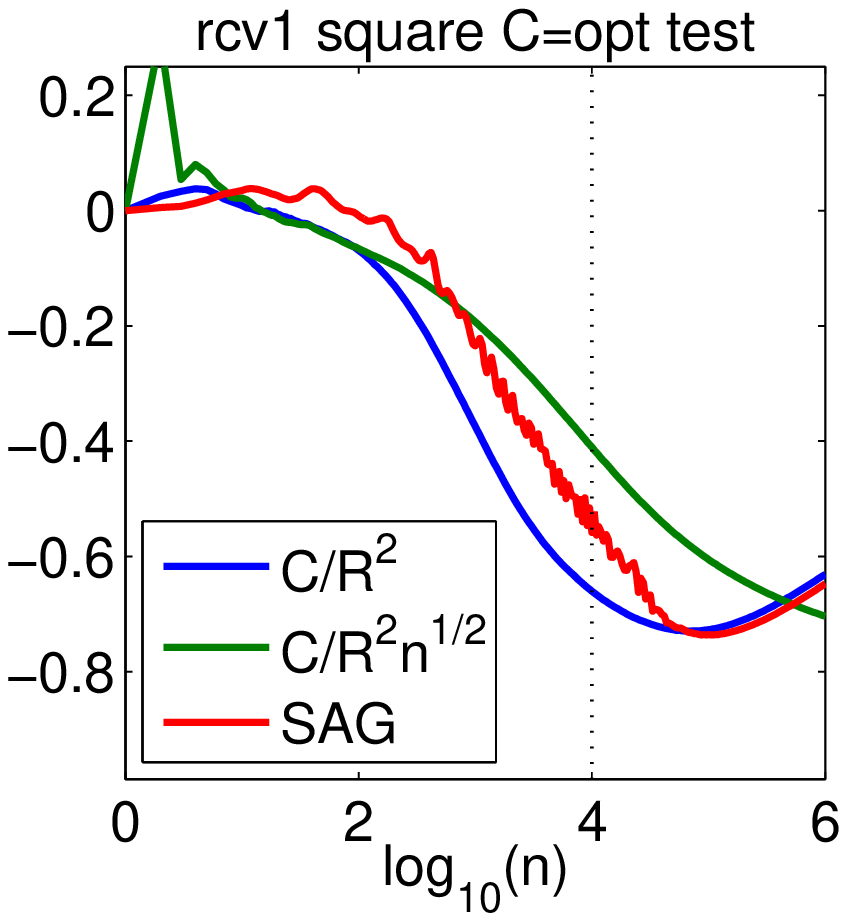}
\includegraphics[scale=.43]{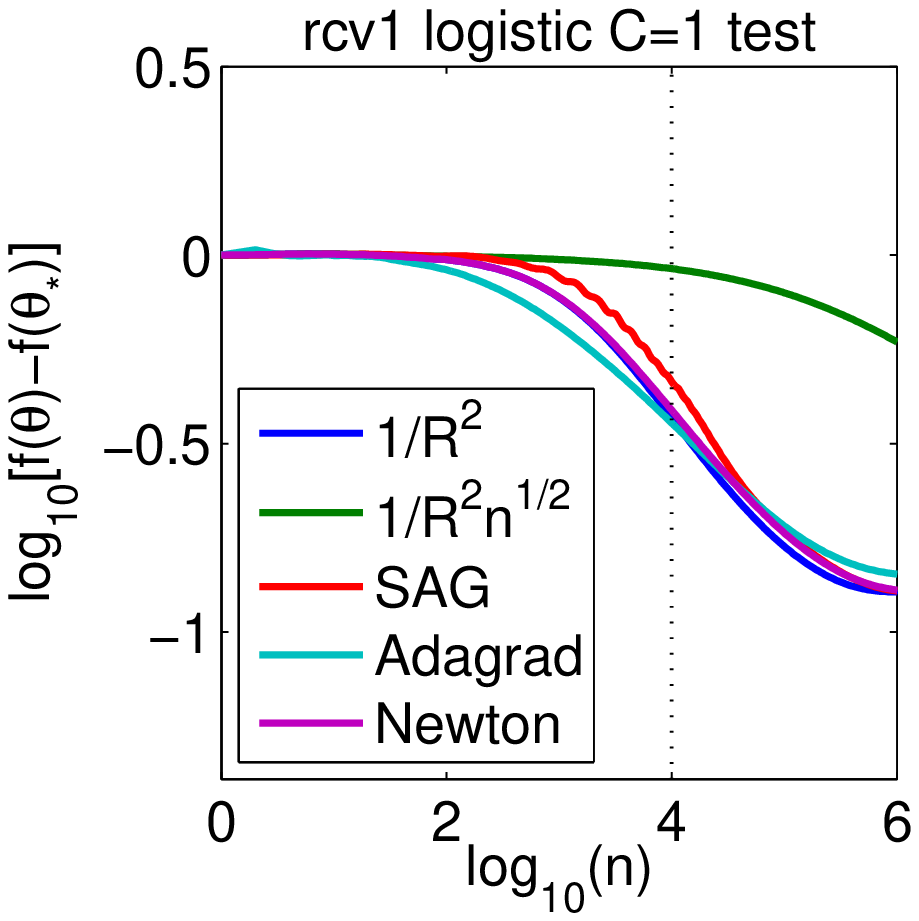}
\hspace*{-.25cm}
\includegraphics[scale=.43]{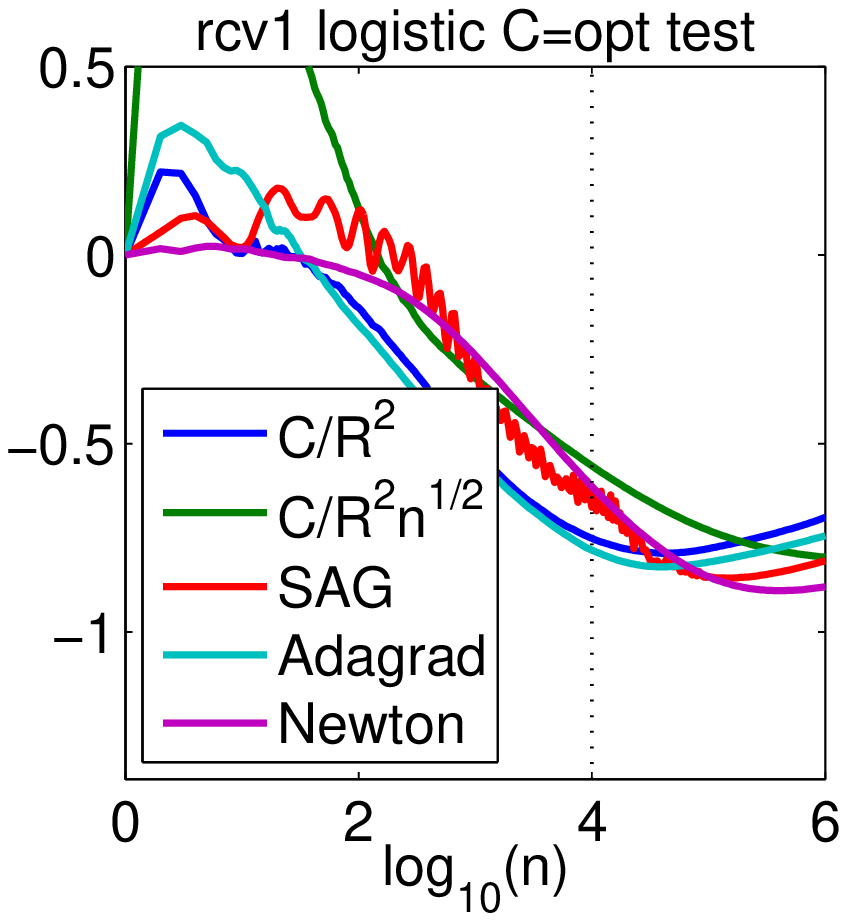}
\hspace*{-.5cm}

\end{center}

\caption{Test  performance for least-square regression (two left  plots) and  logistic regression (two right plots). From top to bottom:  \emph{quantum}, \emph{rcv1}. Left: theoretical steps, right: steps optimized for performance after one effective pass through the data. }
\label{fig:addtest}
\end{figure}

\begin{figure}

\begin{center}

\hspace*{-.5cm}
\includegraphics[scale=.43]{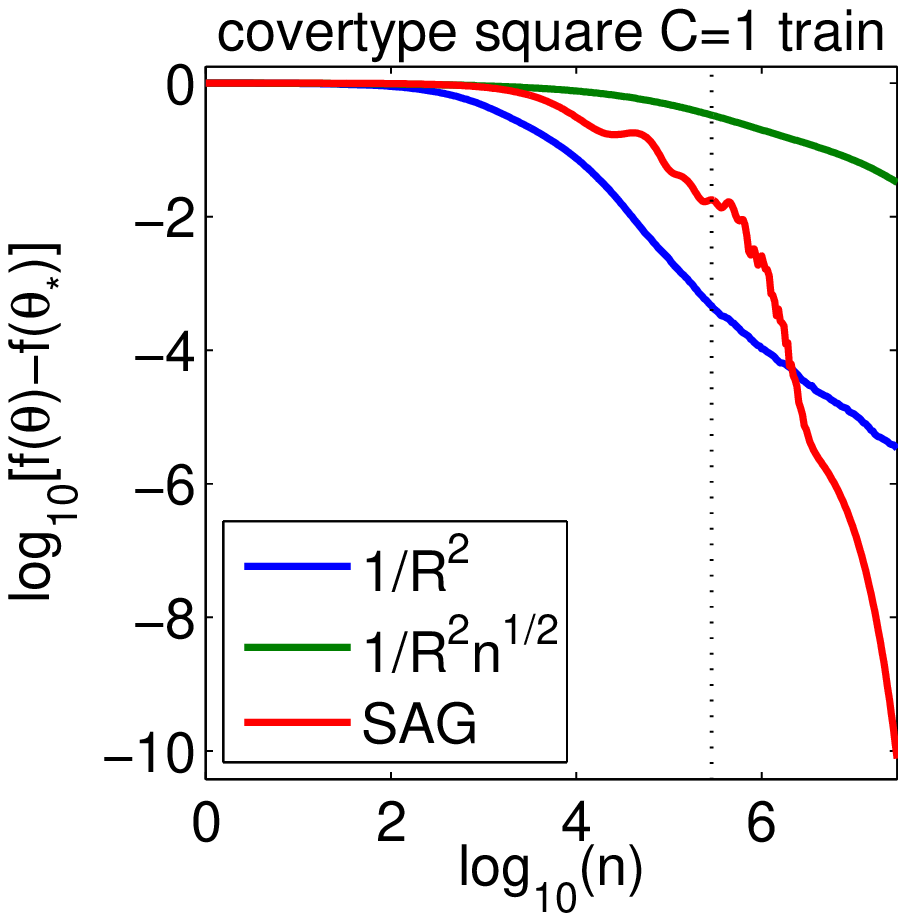}
\hspace*{-.25cm}
\includegraphics[scale=.43]{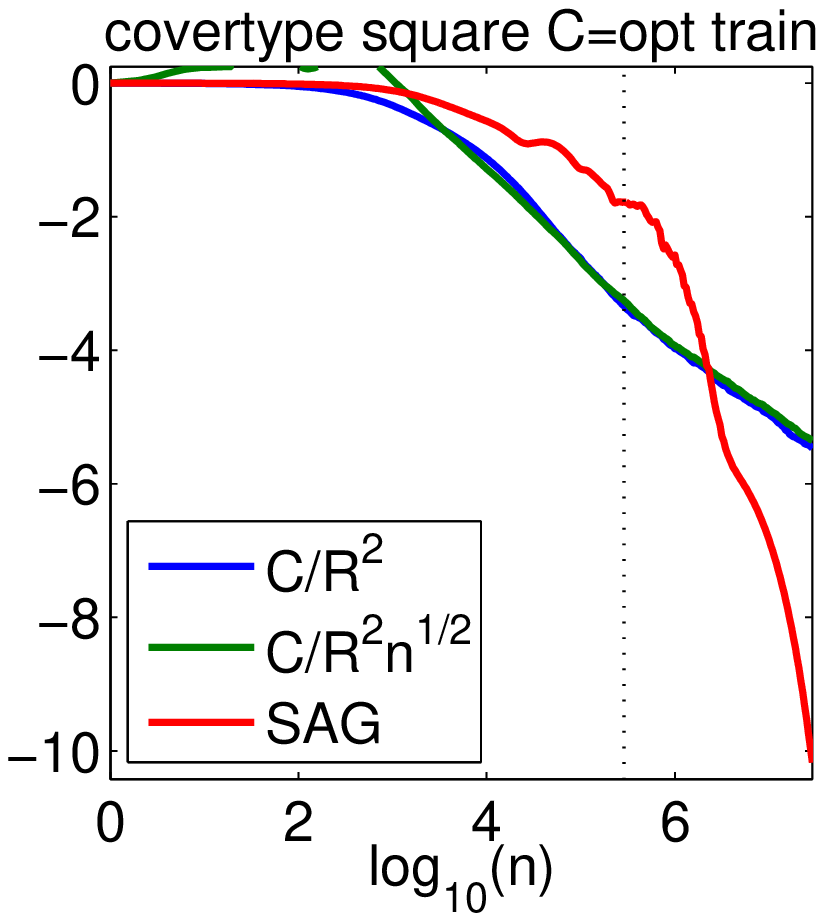}
\includegraphics[scale=.43]{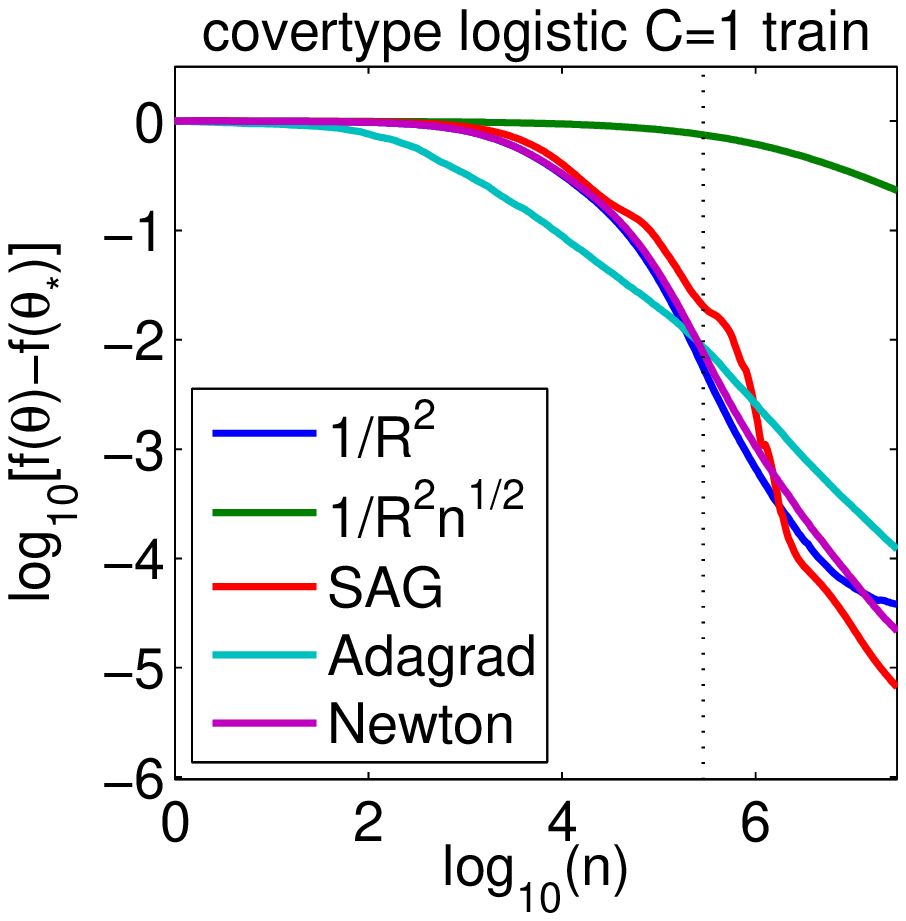}
\hspace*{-.25cm}
\includegraphics[scale=.43]{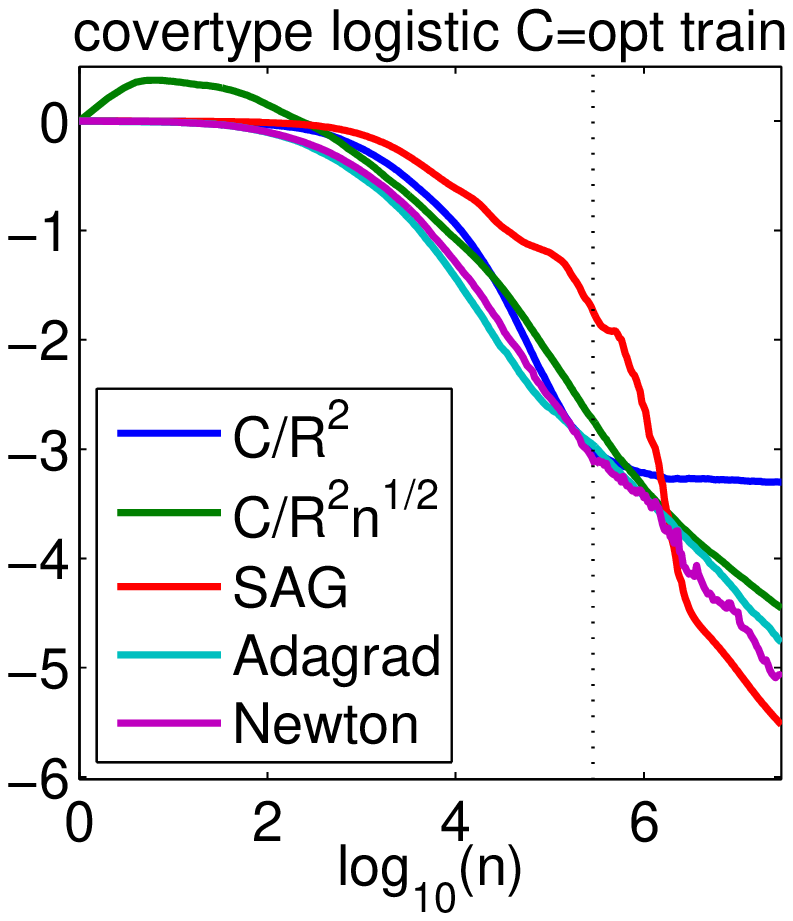}
\hspace*{-.5cm}

\hspace*{-.5cm}
\includegraphics[scale=.43]{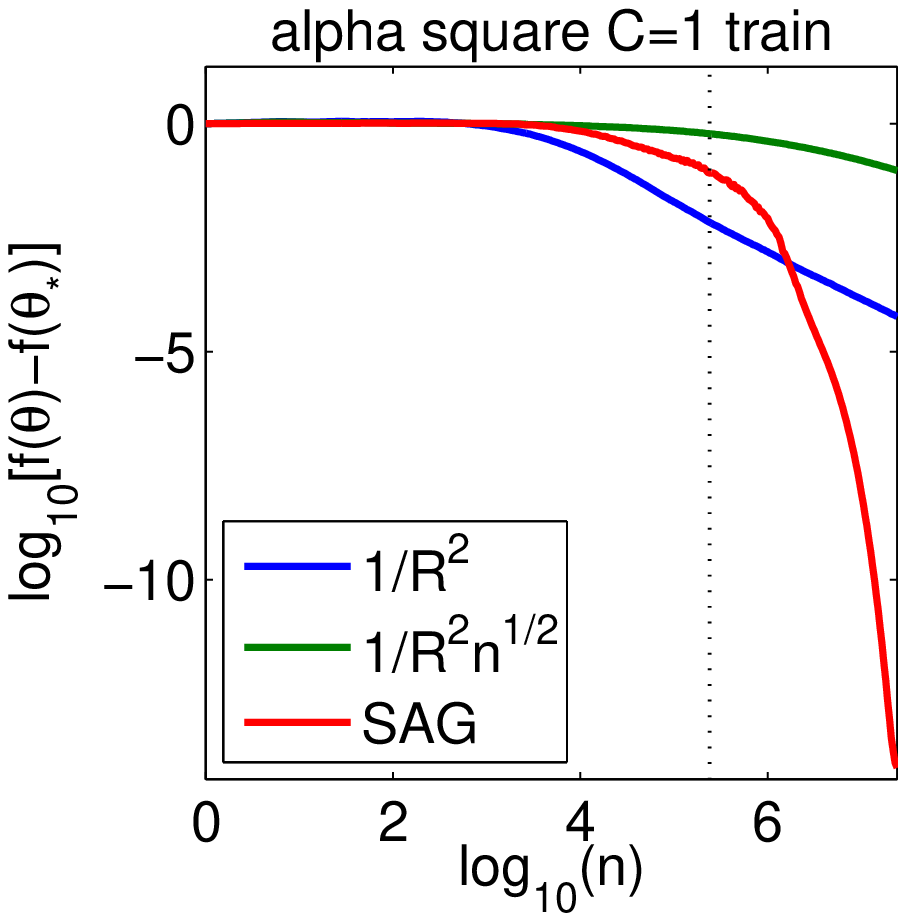}
\hspace*{-.25cm}
\includegraphics[scale=.43]{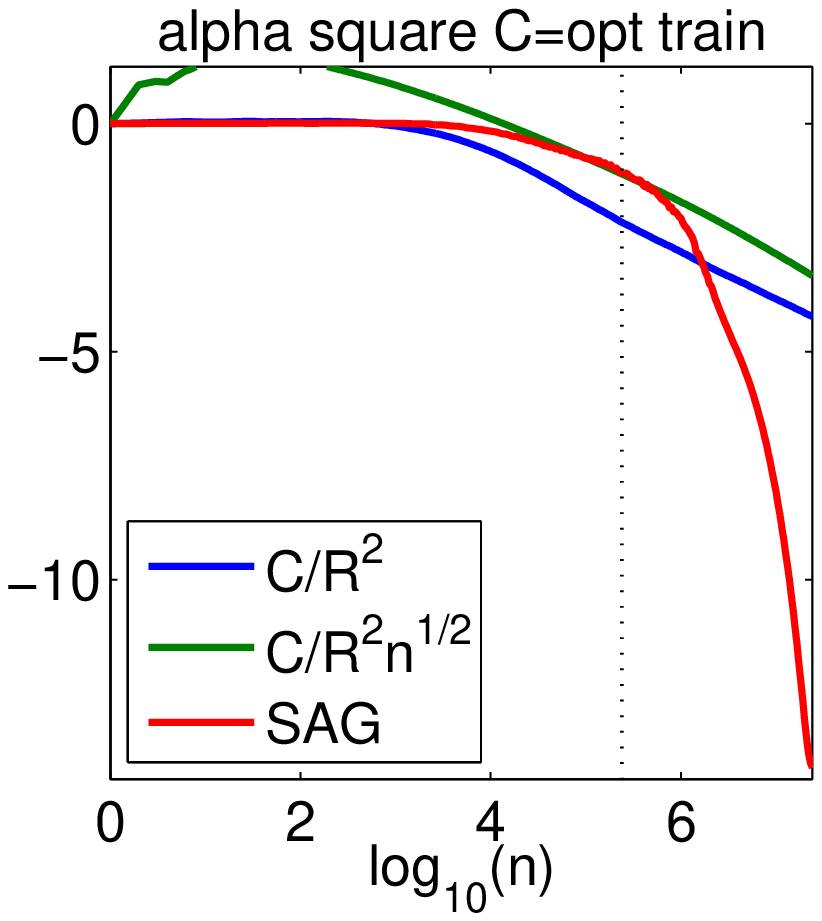}
\includegraphics[scale=.43]{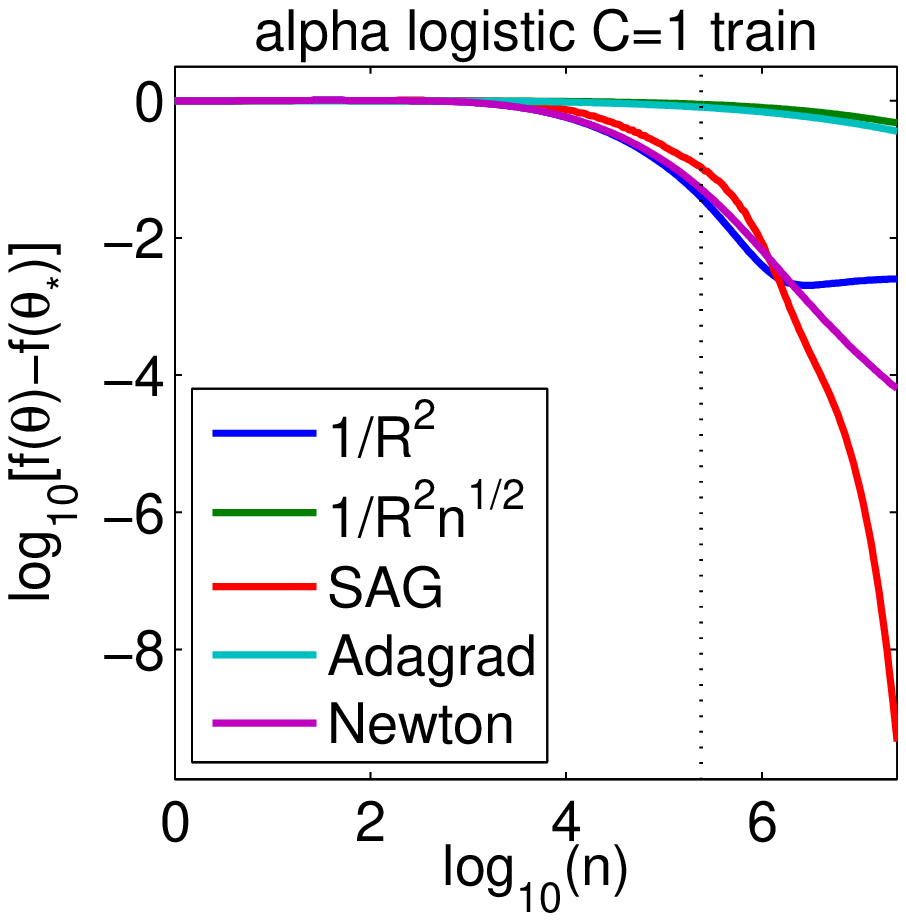}
\hspace*{-.25cm}
\includegraphics[scale=.43]{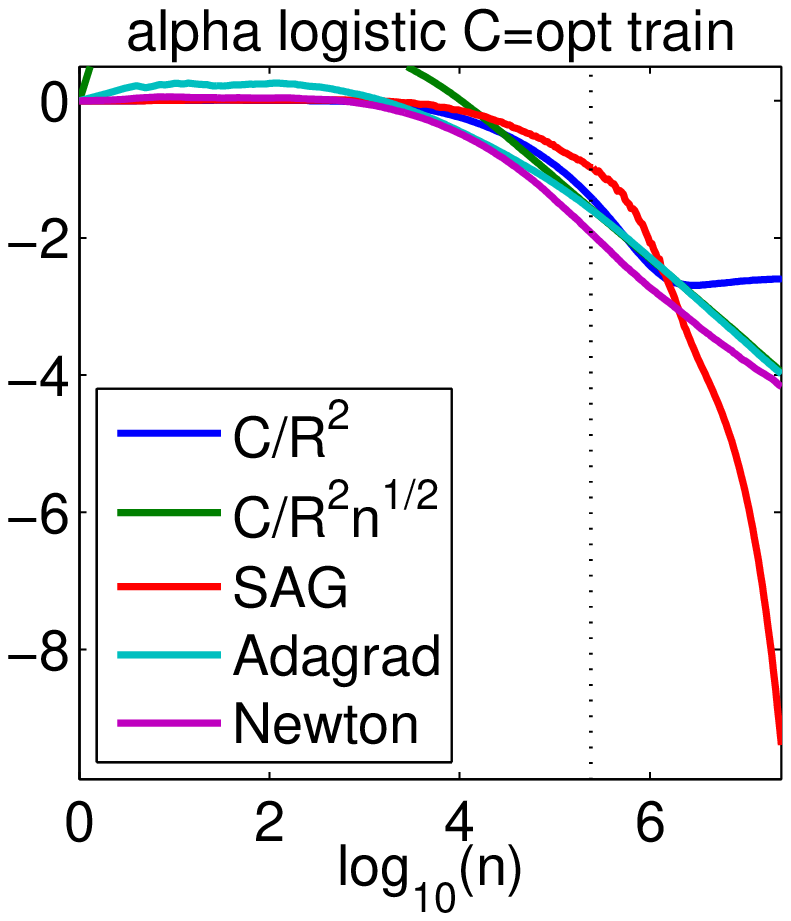}
\hspace*{-.5cm}

\end{center}

\caption{Training objective for least square regression (two left plots) and logistic regression (two right plots).  From top to bottom:  \emph{covertype}, \emph{alpha}. Left: theoretical steps, right: steps optimized for performance after one effective pass through the data. }
\label{fig:train1}
\end{figure}

\begin{figure}

\begin{center}
\hspace*{-.5cm}
\includegraphics[scale=.43]{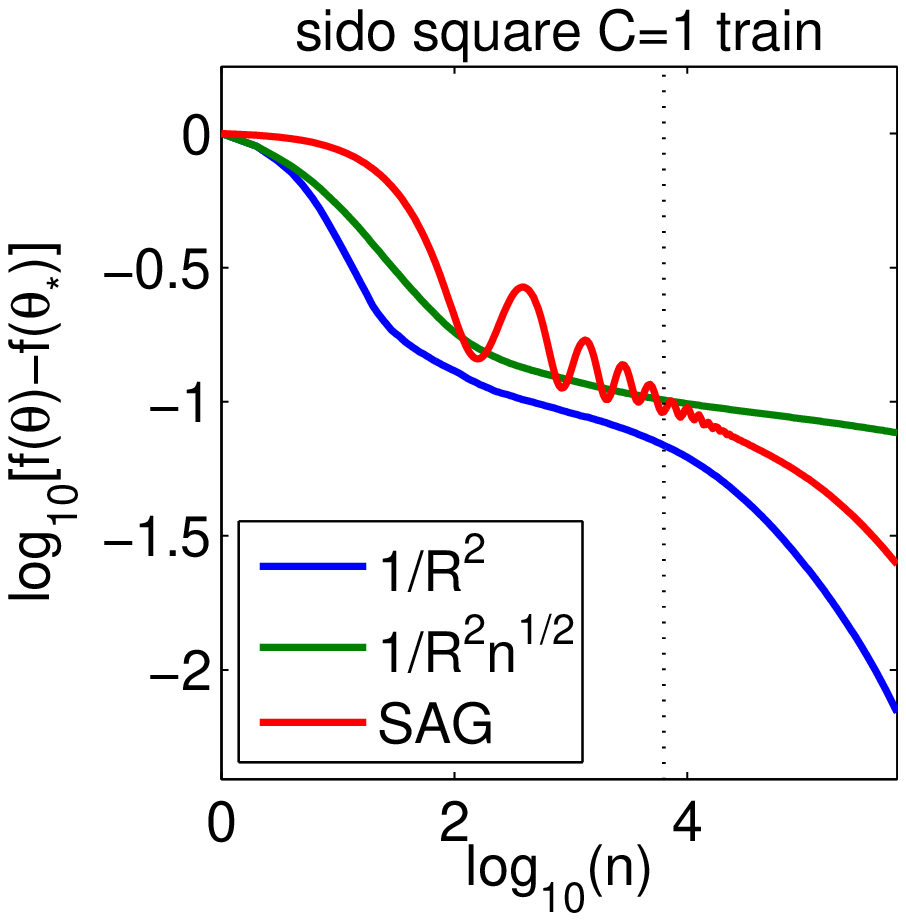}
\hspace*{-.25cm}
\includegraphics[scale=.43]{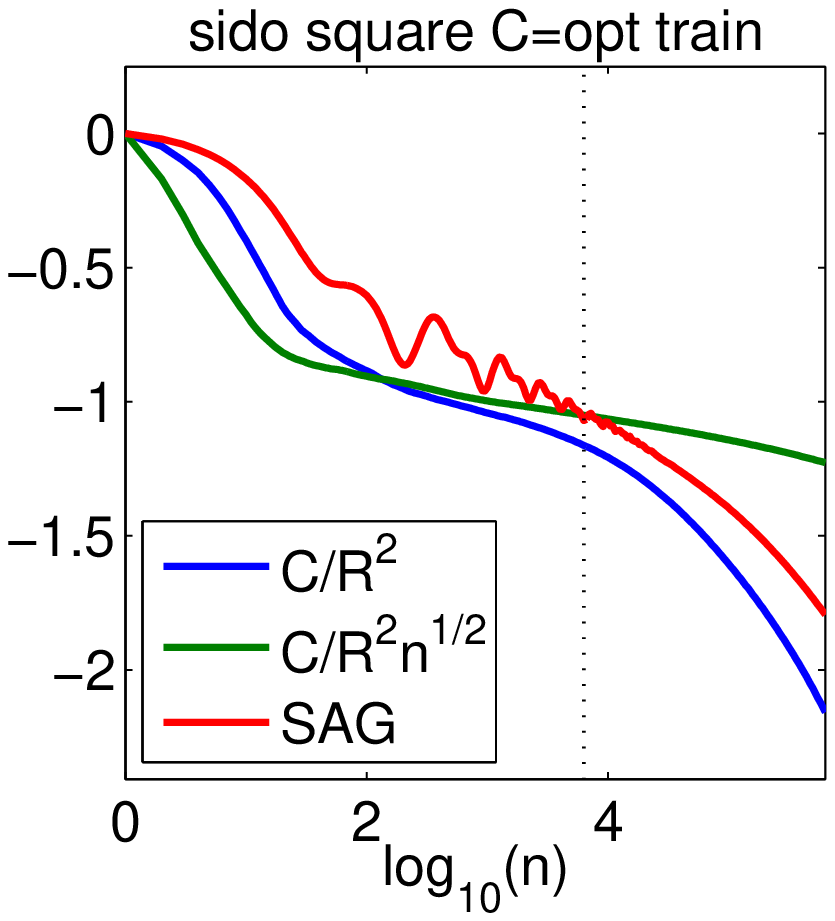}
\includegraphics[scale=.43]{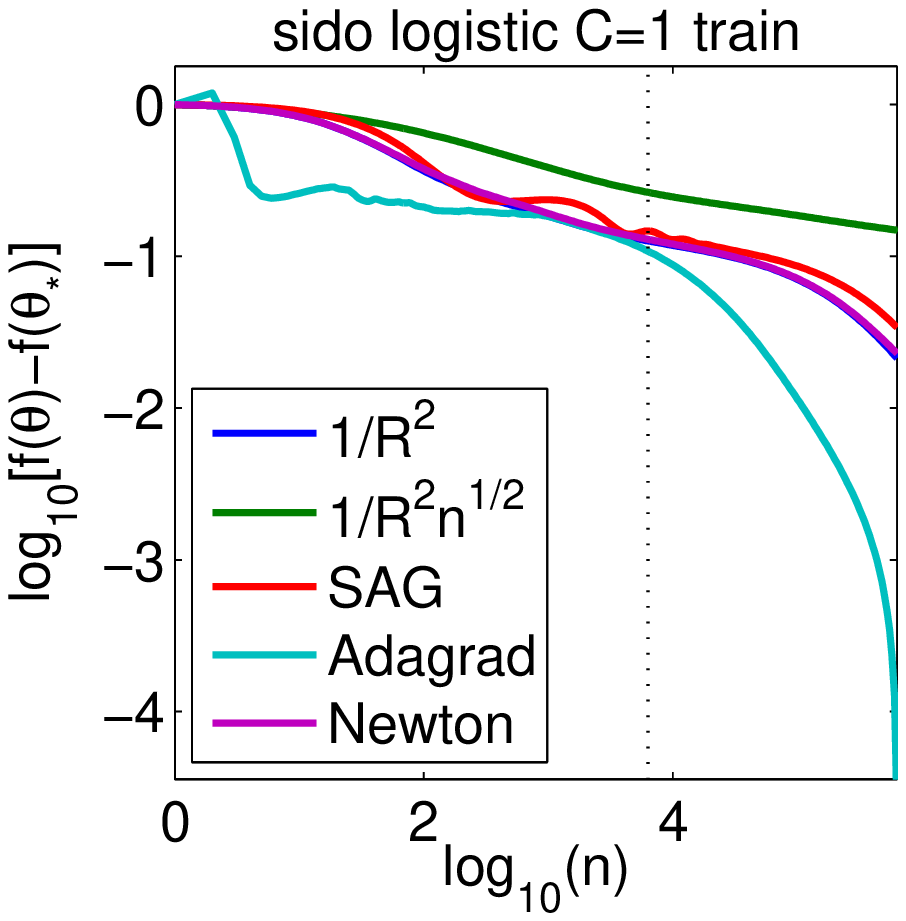}
\hspace*{-.25cm}
\includegraphics[scale=.43]{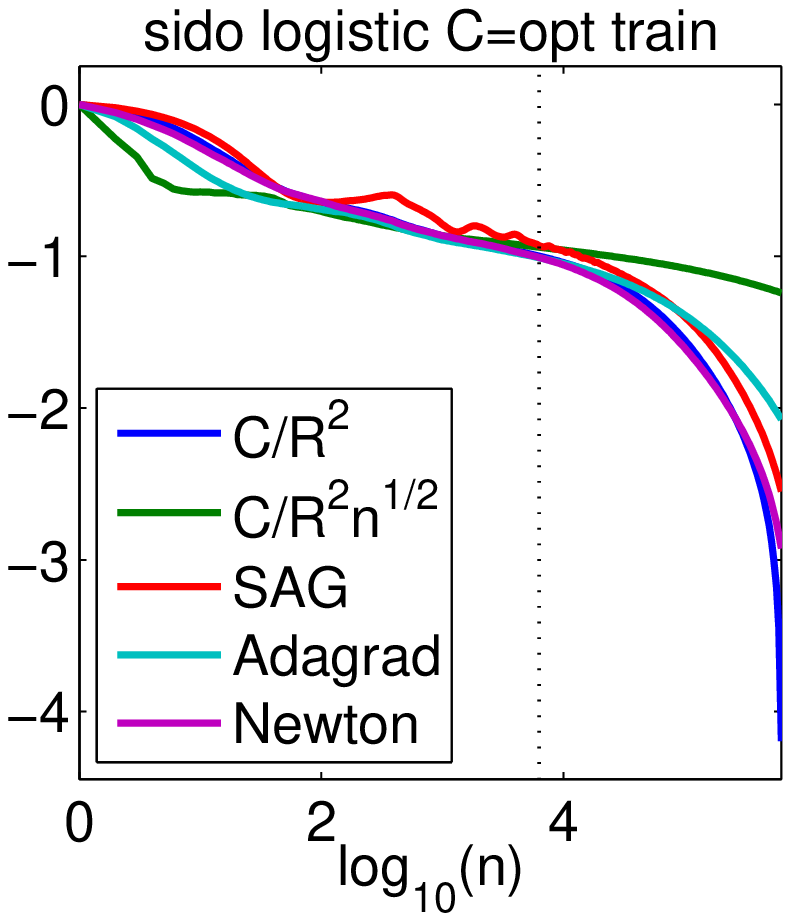}
\hspace*{-.5cm}

\hspace*{-.5cm}
\includegraphics[scale=.43]{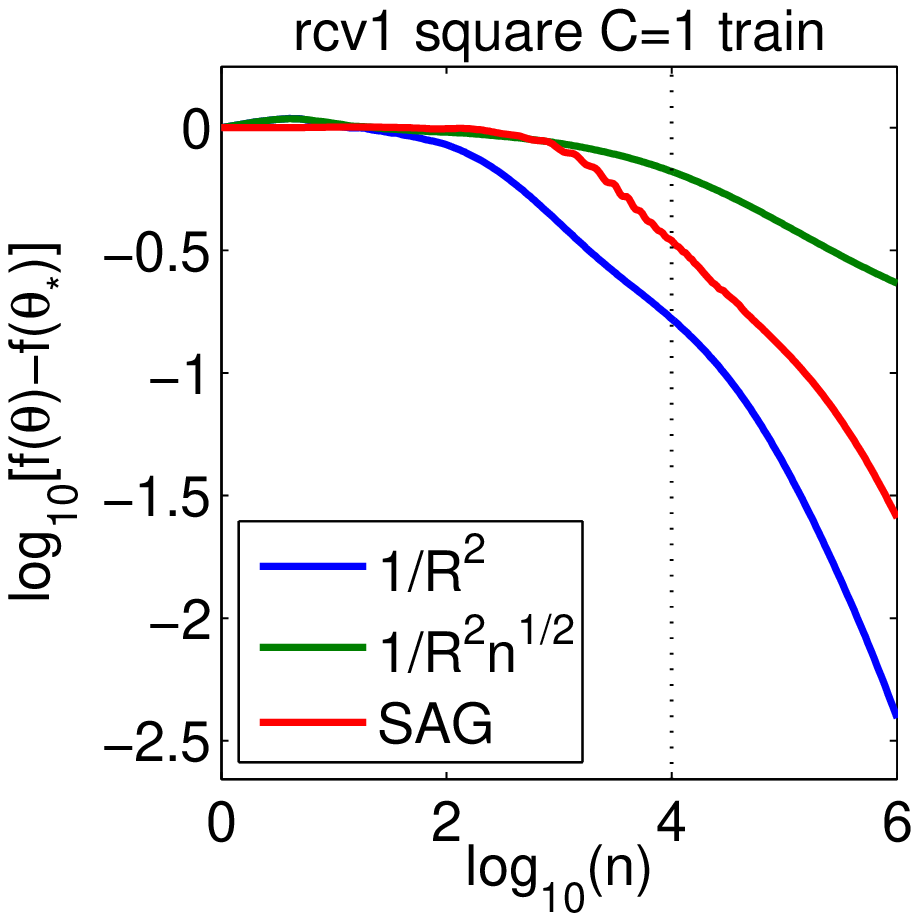}
\hspace*{-.25cm}
\includegraphics[scale=.43]{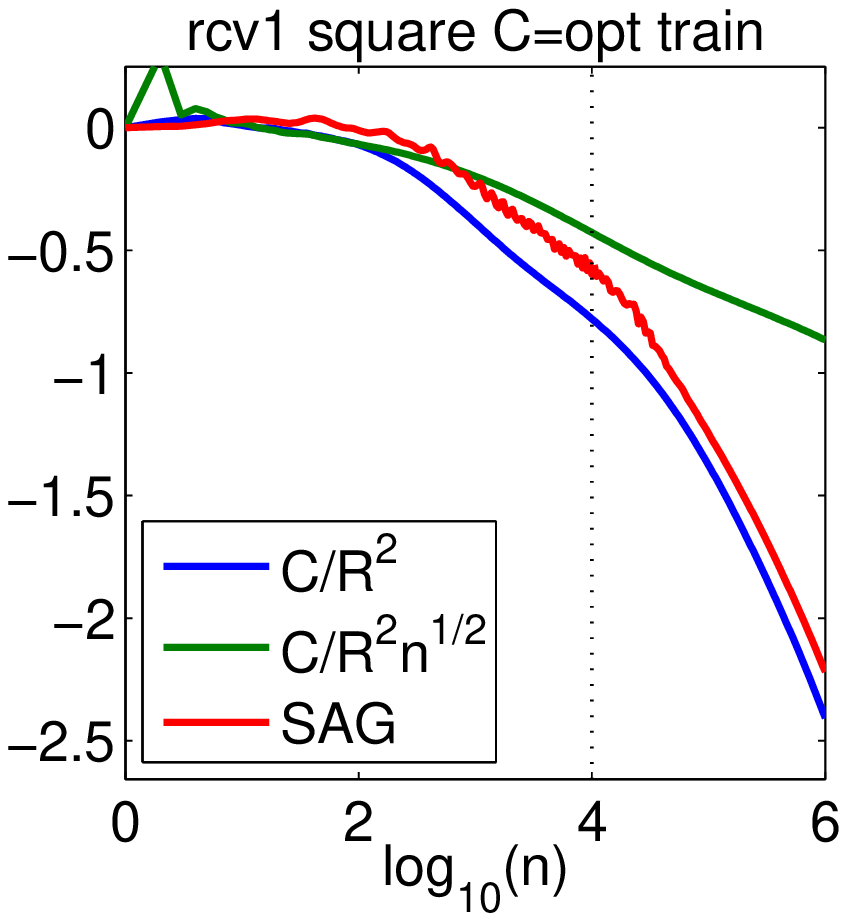}
\includegraphics[scale=.43]{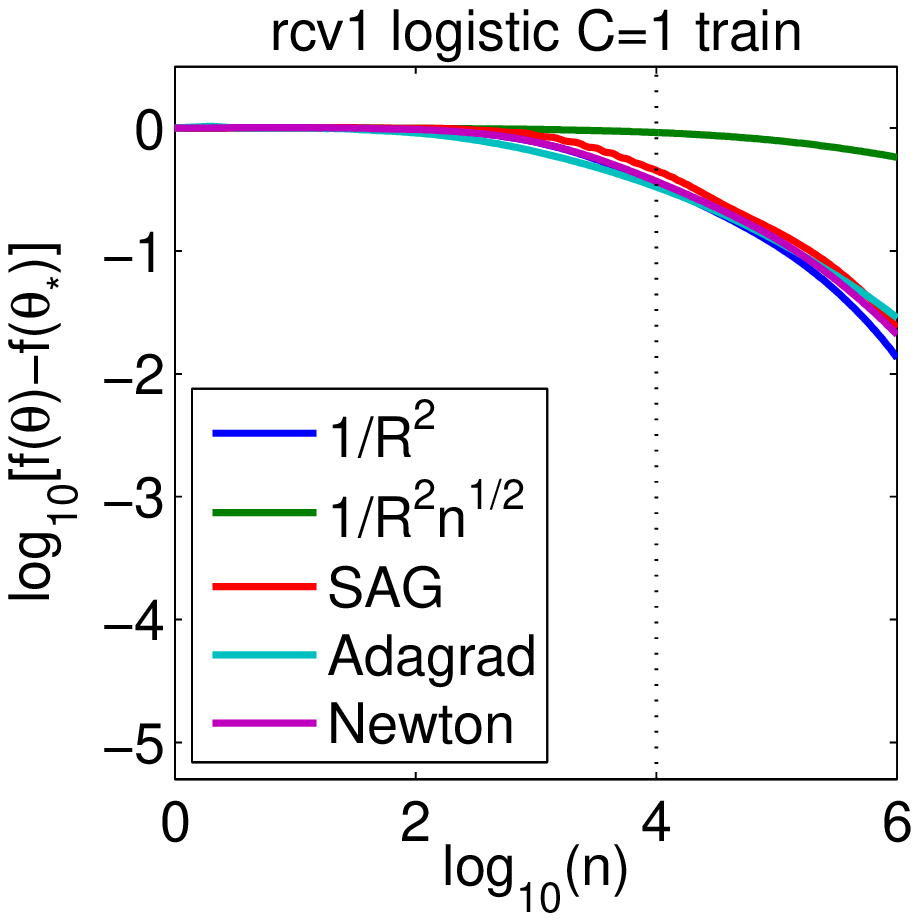}
\hspace*{-.25cm}
\includegraphics[scale=.43]{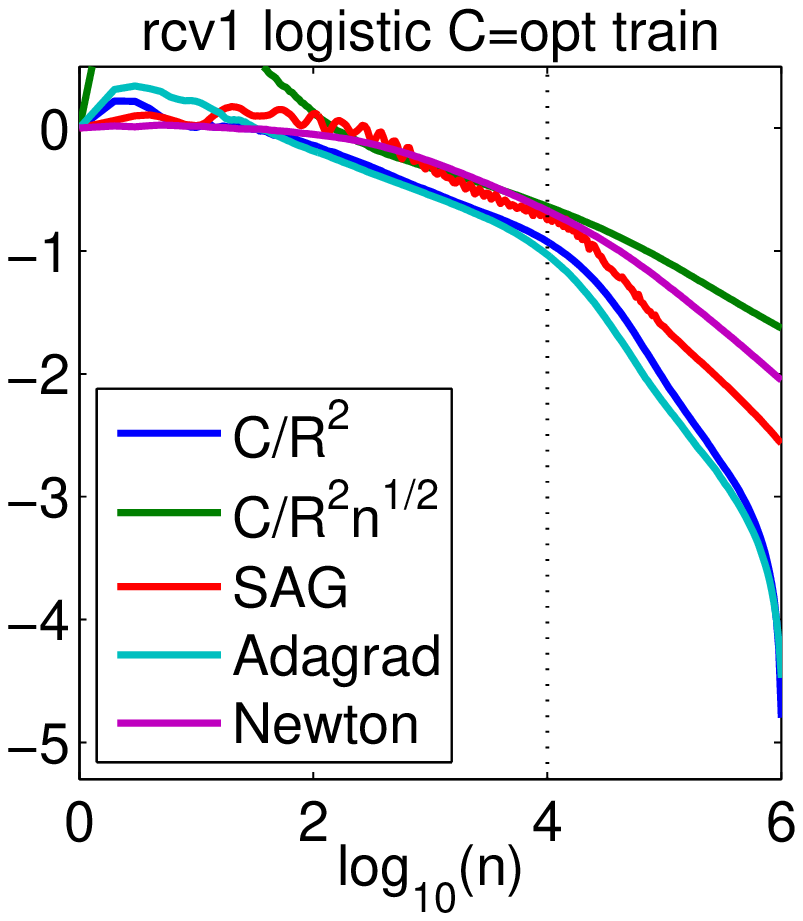}
\hspace*{-.5cm}

\hspace*{-.5cm}
\includegraphics[scale=.43]{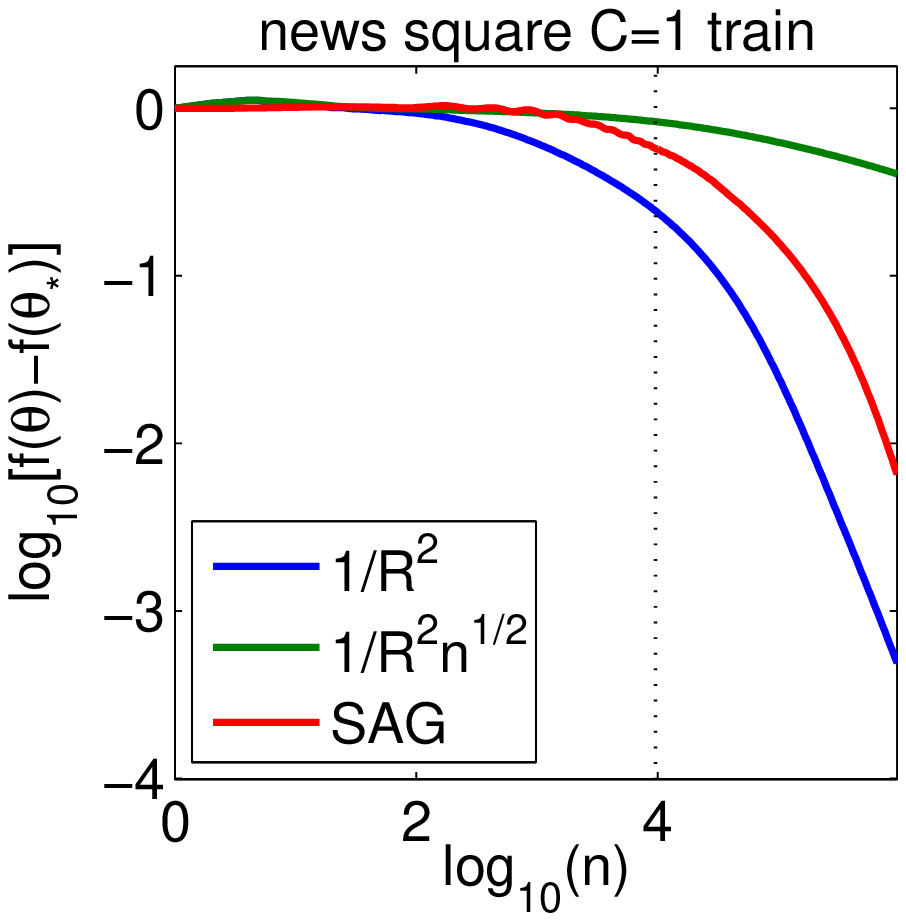}
\hspace*{-.25cm}
\includegraphics[scale=.43]{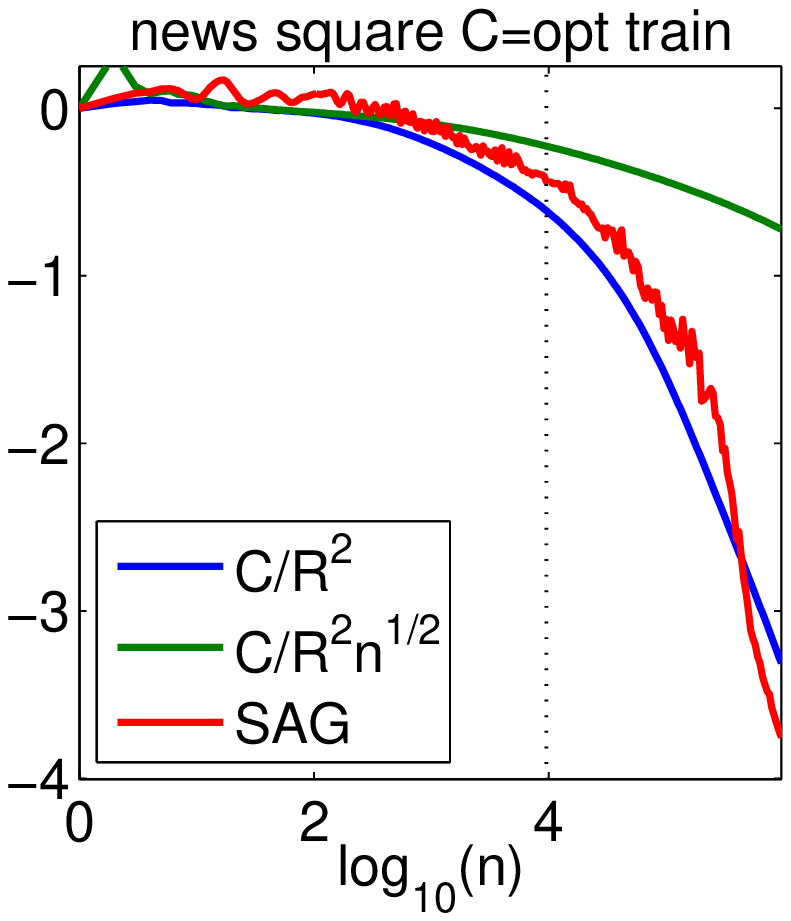}
\includegraphics[scale=.43]{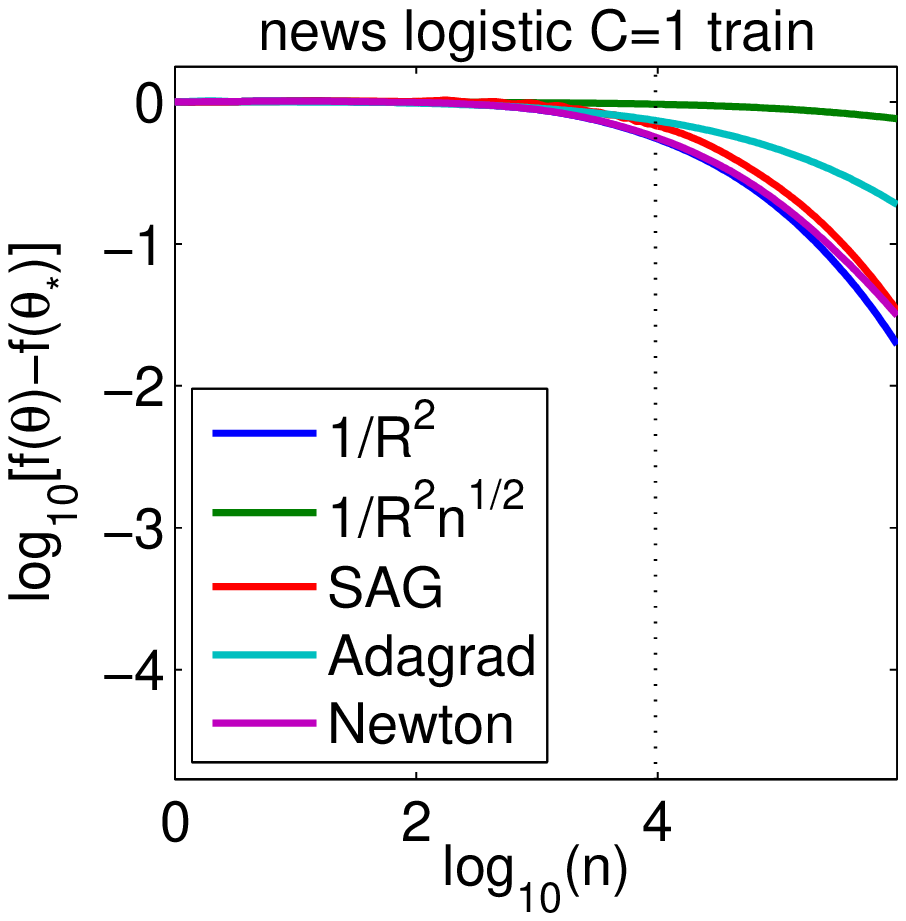}
\hspace*{-.25cm}
\includegraphics[scale=.43]{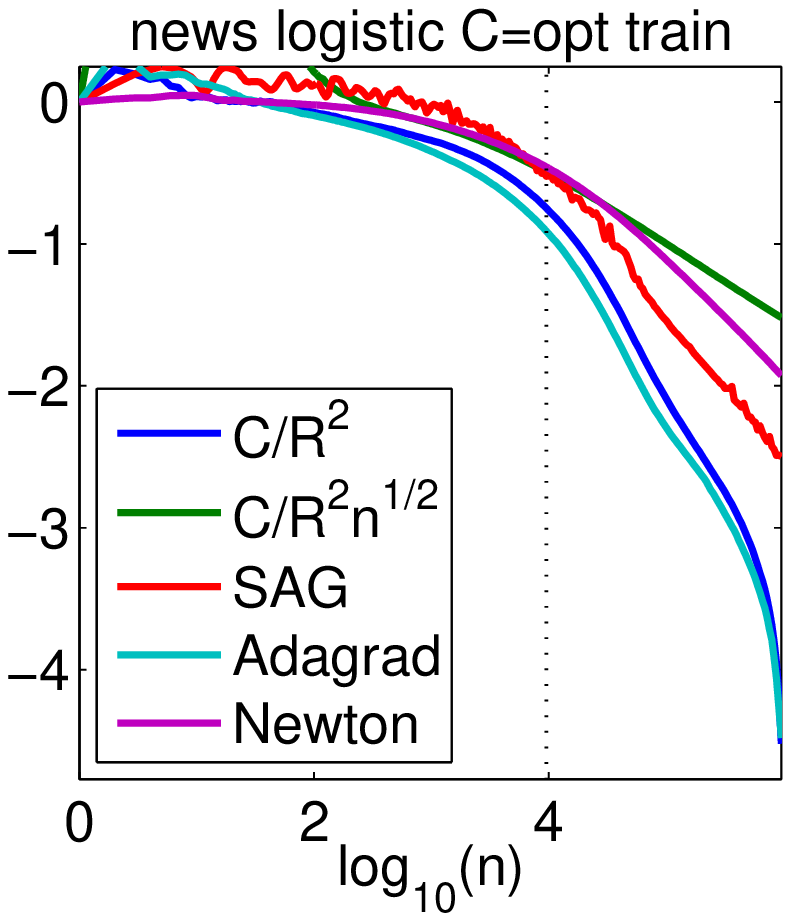}
\hspace*{-.5cm}

\end{center}

\caption{Training objective for least square regression (two left plots) and logistic regression (two right plots).  From top to bottom: \emph{sido}, \emph{news}. Left: theoretical steps, right: steps optimized for performance after one effective pass through the data. }
\label{fig:train2}
\end{figure}

\begin{figure}

\begin{center}

\hspace*{-.5cm}
\includegraphics[scale=.43]{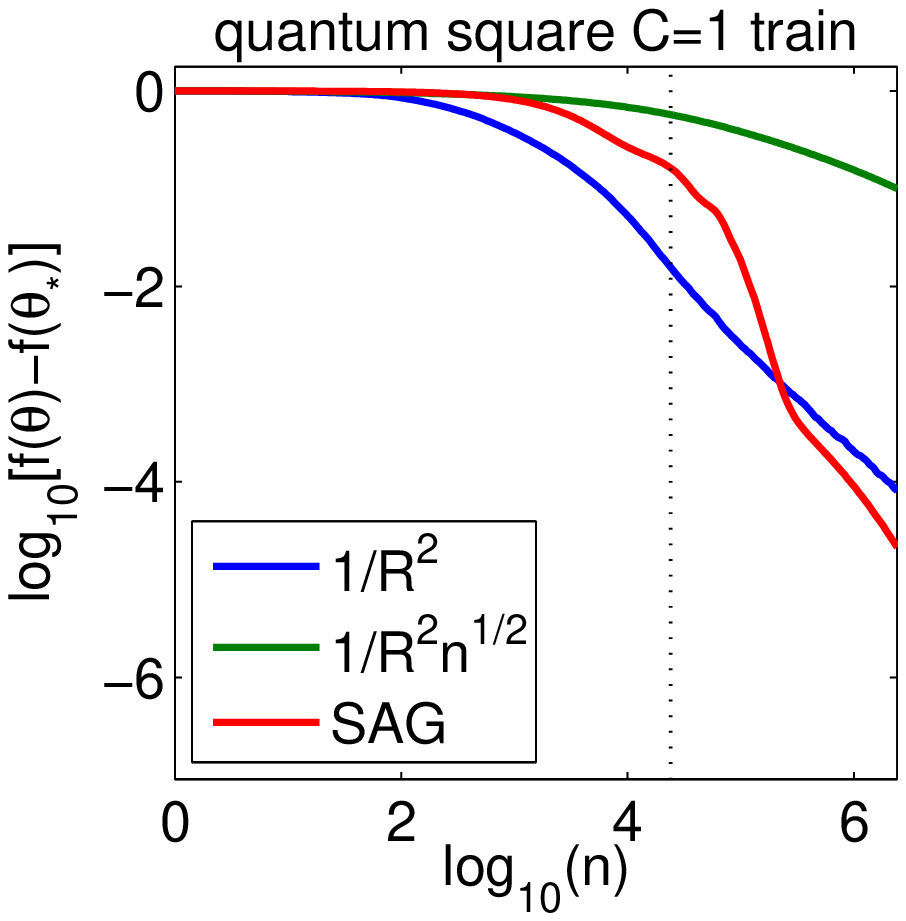}
\hspace*{-.25cm}
\includegraphics[scale=.43]{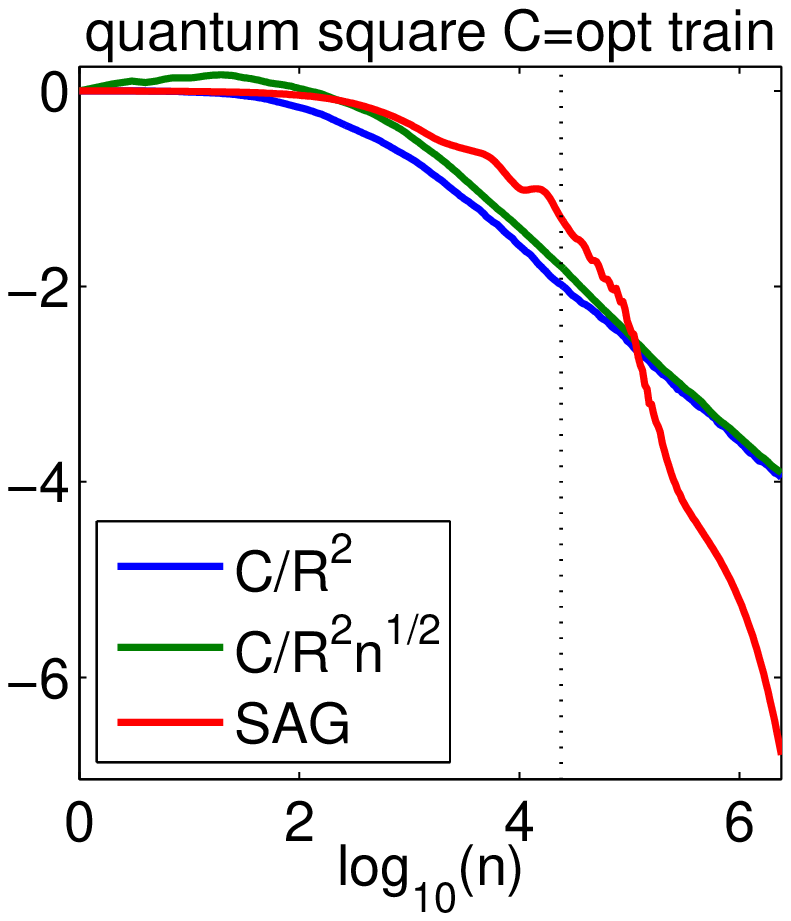}
\includegraphics[scale=.43]{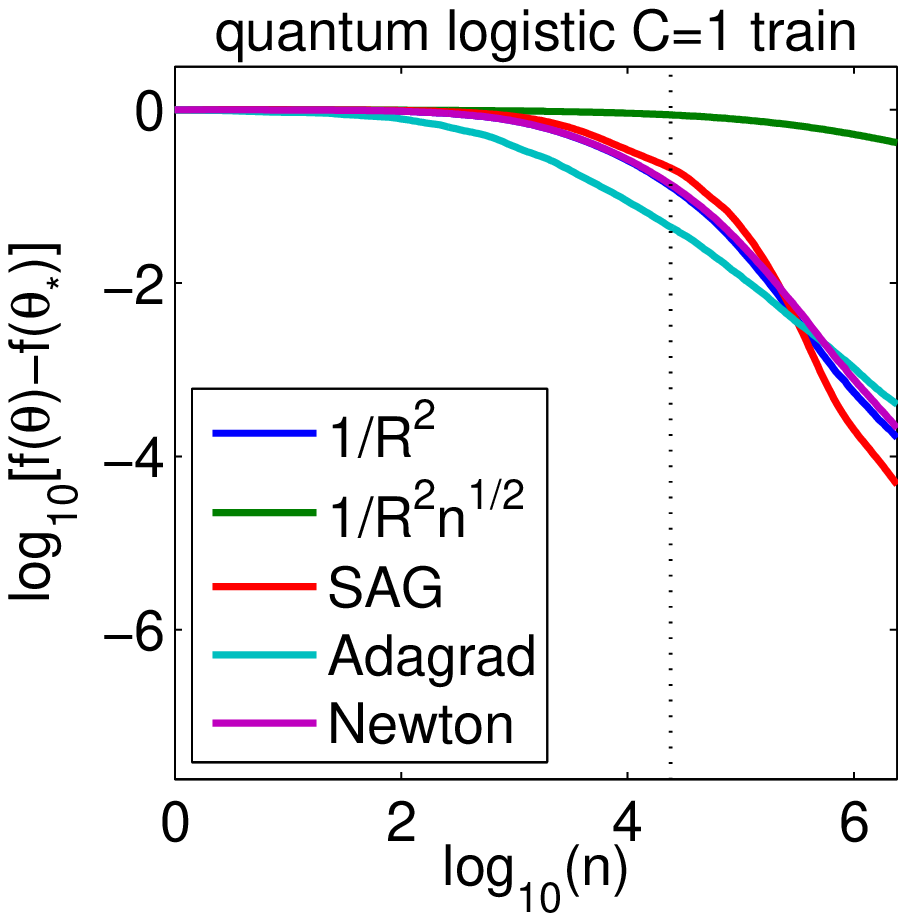}
\hspace*{-.25cm}
\includegraphics[scale=.43]{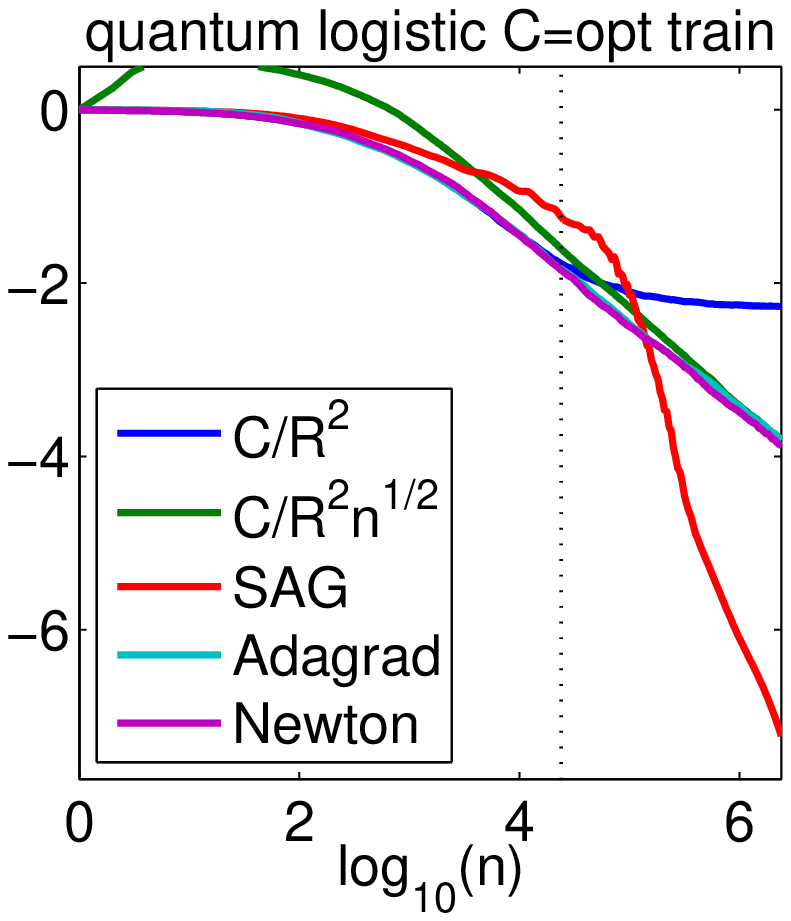}
\hspace*{-.5cm}

 \hspace*{-.5cm}
\includegraphics[scale=.43]{final_square_comparison_rcv1_train_theory_step_reduced.eps}
\hspace*{-.25cm}
\includegraphics[scale=.43]{final_square_comparison_rcv1_train_best_step_reduced.eps}
\includegraphics[scale=.43]{final_logistic_comparison_rcv1_train_theory_step_reduced.eps}
\hspace*{-.25cm}
\includegraphics[scale=.43]{final_logistic_comparison_rcv1_train_best_step_reduced.eps}
\hspace*{-.5cm}
 
\end{center}

\caption{Training objective for least square regression (two left plots) and logistic regression (two right plots). From top to bottom: \emph{quantum}, \emph{rcv1}. Left: theoretical steps, right: steps optimized for performance after one effective pass through the data. }
\label{fig:train3}
\end{figure}

\bibliography{twilight}

       \end{document}